\definecolor{bred}{rgb}{0.8,0,0}
\definecolor{Gray}{gray}{0.85}
\setlist[enumerate]{label={\upshape(\roman*)}}
\DeclareMathOperator{\re}{Re}
\DeclareMathOperator{\im}{Im}
\DeclareMathOperator{\argmin}{arg\,min}
\DeclareMathOperator{\linspan}{span}
\DeclareMathOperator{\supp}{supp}
\newtheorem{theorem}{Theorem}[section]
\newtheorem{proposition}[theorem]{Proposition}
\newtheorem{lemma}[theorem]{Lemma}
\newtheorem{corollary}[theorem]{Corollary}
\newtheorem{remark}[theorem]{Remark}
\newtheorem{definition}[theorem]{Definition}
\newtheorem{example}[theorem]{Example}
\newtheorem{assumption}[theorem]{Assumption}
\newcolumntype{L}[1]{>{\raggedright\let\newline\\\arraybackslash\hspace{0pt}}m{#1}}
\newcolumntype{C}[1]{>{\centering\let\newline\\\arraybackslash\hspace{0pt}}m{#1}}
\newcolumntype{R}[1]{>{\raggedleft\let\newline\\\arraybackslash\hspace{0pt}}m{#1}}
\newcommand\reallywidehat[1]{%
	\savestack{\tmpbox}{\stretchto{%
			\scaleto{%
				\scalerel*[\widthof{\ensuremath{#1}}]{\kern.1pt\mathchar"0362\kern.1pt}%
				{\rule{0ex}{\textheight}}
			}{\textheight}%
		}{2.4ex}}%
	\stackon[-6.9pt]{#1}{\tmpbox}%
}
\newcommand\reallywidetilde[1]{%
	\savestack{\tmpbox}{\stretchto{%
			\scaleto{%
				\scalerel*[\widthof{\ensuremath{#1}}]{\kern.1pt\mathchar"0366\kern.1pt}%
				{\rule{0ex}{\textheight}}
			}{\textheight}%
		}{2.4ex}}%
	\stackon[-6.9pt]{#1}{\tmpbox}%
}
\newcommand{\labeltext}[3][]{%
	\@bsphack%
	\csname phantomsection\endcsname
	\def\tst{#1}%
	\def\labelmarkup{\emph}
	\def\refmarkup{}%
	\ifx\tst\empty\def\@currentlabel{\refmarkup{#2}}{\label{#3}}%
	\else\def\@currentlabel{\refmarkup{#1}}{\label{#3}}\fi%
	\@esphack%
	\labelmarkup{#2}
}
\begin{document}

\title[Universal approximation property of random feature models]{Universal approximation property of Banach space-valued\\random feature models including random neural networks}

\author[]{Ariel Neufeld}

\address{Nanyang Technological University, Division of Mathematical Sciences, 21 Nanyang Link, Singapore}
\email{ariel.neufeld@ntu.edu.sg}

\author[]{Philipp Schmocker}

\address{ETH Zurich, Department of Mathematics, R\"amistrasse 101, Z\"urich, Switzerland}
\email{philipp.schmocker@math.ethz.ch}

\date{\today}
\thanks{The first author was supported by the Nanyang Assistant Professorship Grant (NAP Grant) \emph{Machine Learning based Algorithms in Finance and Insurance}. The second author was partly supported by the FinsureTech Hub of ETH Zurich.}
\keywords{Random feature learning, random neural networks, machine learning, supervised learning, universal approximation, approximation rates, reservoir computing, law of large numbers}

\begin{abstract}
	We introduce a Banach space-valued extension of random feature learning, a data-driven supervised machine learning technique for large-scale kernel approximation. By randomly initializing the feature maps, only the linear readout needs to be trained, which reduces the computational complexity substantially. Viewing random feature models as Banach space-valued random variables, we prove a universal approximation result in the corresponding Bochner space. Moreover, we derive approximation rates and an explicit algorithm to learn an element of the given Banach space by such models.
	
	The framework of this paper includes random trigonometric/Fourier regression and in particular random neural networks which are single-hidden-layer feedforward neural networks whose weights and biases are randomly initialized, whence only the linear readout needs to be trained. For the latter, we can then lift the universal approximation property of deterministic neural networks to random neural networks, even within function spaces over non-compact domains, e.g., weighted spaces, $L^p$-spaces, and (weighted) Sobolev spaces, where the latter includes the approximation of the (weak) derivatives.
	
	In addition, we analyze when the training costs for approximating a given function grow polynomially in both the input/output dimension and the reciprocal of a pre-specified tolerated approximation error. Furthermore, we demonstrate in a numerical example the empirical advantages of random feature models over their deterministic counterparts.
\end{abstract}

\maketitle

\vspace{-0.3cm}

\section{Introduction}

The \emph{random feature model} is an architecture for the data-driven approximation of functions between finite dimensional Euclidean spaces, which was introduced by Rahimi and Recht in \cite{rahimi07,rahimi08,rahimi08b} building on earlier instances in \cite{barron93,igelnik95,neal96,williams96}. It can be seen as one of the simplest supervised machine learning technique: By randomly initializing the inner parameters of the model, only the linear readout needs to be trained, which reduces the computational complexity substantially. In this paper, we introduce a Banach space-valued extension of this architecture, which returns for every random initialization the corresponding model as an element of the given Banach space, allowing us to learn infinite dimensional objects with random features. Some examples include, but are not limited to, random trigonometric/Fourier regression (see \cite{rahimi07,avron17}), kernel regression tasks (see \cite{aronszajn50,rahimi07,bach17b}), Gaussian processes (see \cite{neal96,williams96,rasmussen06,chen21}), random neural networks (see \cite{huang06,gonon20,gonon21}), operator-valued kernels (see \cite{zhang09,micchelli05,carmeli10,alvarez12}), and random operator learning (see \cite{brault16,nelsen21,nelsen24}).

Originally, random feature learning was introduced to overcome the computational limitations of traditional kernel methods. These kernel methods map the input data into a high-dimensional feature space to capture the nonlinear input/output relation. Even though the explicit form of this feature map is often unknown, one can still compute the Gram matrix whose entries are given as the inner products of features between all pairs of data points. These inner products can be efficiently calculated using the ``kernel trick'', even for infinite dimensional feature spaces, but the computational costs increase quadratically in the number of samples. This motivated Rahimi and Recht to explore kernel approximation through random features (see \cite{rahimi07}) and to extend this approach to shallow architectures (see \cite{rahimi08,rahimi08b}). Indeed, by replacing the optimization of the non-linear feature maps with randomization, the explicit calculation of the Gram matrix can be avoided, which reduces the computational complexity and enables the application of kernel-based methods to large-scale datasets.

Subsequently, different works have contributed to the mathematical theory of random feature learning. Rahimi and Recht established in \cite{rahimi07,rahimi08,rahimi08b} the connection to reproducing kernel Hilbert spaces (RKHS) and proved the approximation rate $\mathcal{O}(1/\sqrt{N})$, where $N \in \mathbb{N}$ denotes the number of features. Subsequently, \cite{rudi17} showed that $N := \mathcal{O}(\sqrt{J} \ln(J))$ random features lead to an $L^2$-generalization error of $\mathcal{O}(1/J^{1/4})$ when approximating functions between Euclidean spaces, where $J \in \mathbb{N}$ denotes the number of samples. Moreover, \cite{carratino18} learned random features with the stochastic gradient descent algorithm instead of least squares, while \cite{jacot18} connected the infinite-width case to neural tangent kernels (see also \cite{chizat24} for an extension to deep linear neural networks). In addition, \cite{chen24} studied the generalization error of random features in a non-asymptotic setting by using random matrix theory, while \cite{mei22,mei23} showed precise asymptotics of the generalization error including the double descent curve as well as a sharp generalization error under a hypercontractivity assumption.

Our first contribution consists of a comprehensive \emph{universal approximation theorem} for Banach space-valued random feature models, presented in Theorem~\ref{ThmUAT}. In contrast to traditional kernel approximation, this result ensures the convergence of random features to any (random) element of the given Banach space. Indeed, by assuming that the deterministic feature maps are universal (i.e.~the linear span of their image can approximate any element of the Banach space), we can apply the strong law of large numbers for Banach space-valued random variables (see \cite[Theorem~3.1.10]{hytoenen16}) to lift the universality to random feature models. We apply this framework to the following three instances of random feature learning: Random trigonometric features, random Fourier regression, and random neural networks.

Random neural networks are single-hidden-layer feed-forward neural networks whose weights and biases inside the activation function are randomly initialized (see the work \cite{huang06} on extreme learning machines and in particular the work \cite{gonon20} on random neural networks with ReLU activation). By training only the linear readout, one avoids the non-convex optimization problem for training deterministic neural networks (caused by the training of the weights and biases inside the activation function, see \cite[p.~282]{goodfellow16}) and one can replace the computationally expensive backpropagation (see \cite[p.~13]{montavon12}) by, e.g., the more efficient least squares method. Using the universal approximation property of deterministic neural networks (first proven in \cite{cybenko89,hornik89}, see also \cite{leshno93,chen95,pinkus99,cuchiero23,neufeld24}), we obtain a universal approximation theorem for random neural networks which significantly generalizes the results in \cite{gonon20} from the case of ReLU activation function and $L^2$-spaces (resp.~$C^0$-spaces) to more general non-polynomial activation functions and more general function spaces over non-compact domains, e.g., weighted spaces, $L^p$-spaces, and (weighted) Sobolev spaces over unbounded domains, where the latter \emph{includes the approximation of the (weak) derivatives}.

Our second contribution are \emph{approximation rates} for learning a (possibly infinite dimensional) element of the given Banach space by a random feature model, presented in Theorem~\ref{ThmAR}. To this end, we assume that the element belongs to a specific Barron space in order to represent it as expectation of the random features (see also \cite{barron93,rahimi08b,klusowski16,bach17,e20,e22}). Then, by using a symmetrization argument with Rademacher averages and the concept of Banach space types, we obtain the desired approximation rates. In $L^2$-spaces, these rates allow us then to derive a generalization error for learning via the least squares method.

As a corollary, we obtain approximation rates and generalization errors for learning a function by a random neural network, which turn out to be similar to the approximation rates for \emph{deterministic} neural networks (see e.g.~\cite{barron92,barron93,darken93,mhaskar95,darken97,kurkova12,bgkp17,siegel20,neufeld24}). To this end, we use the ridgelet transform (see \cite{candes98}) and its distributional extension (see \cite{sonoda17}) to represent the function to be approximated as expectation of a random neuron. This approach generalizes the approximation rates and generalization errors in \cite[Section~4.2]{gonon20} from random neural networks with ReLU activation to more general activation functions and by including the approximation of the (weak) derivatives. In addition, we analyze the situation when random neural networks overcome the curse of dimensionality in the sense that the computational costs (measured as number of neurons) grow polynomially in both the input/output dimensions and the reciprocal of a pre-specified tolerated approximation error.

The theoretical foundations of this paper are also relevant in scientific computing. In particular, random neural networks have been successfully applied for solving partial differential equations (PDEs) in mathematical physics (see \cite{dwivedi20,dong21,wang23}), for quantum neural networks and quantum reservoirs \cite{jacquier23}, for solving Black-Scholes-type PDE in mathematical finance \cite{gonon21}, for optimal stopping \cite{herrera21}, for learning the hedging strategy via chaos expansion \cite{neufeld22}, for solving path-dependent PDEs in the context of rough volatility \cite{zuric23}, for pricing American options \cite{krach23}, and for solving non-linear parabolic PDEs in finance \cite{wu24}. Moreover, during the revision process, \cite{deryck25} recently established approximation bounds for random neural networks with sigmoid or $\tanh$ activation functions in Sobolev spaces, with applications to high-dimensional PDEs, while \cite{liao26} applied random features to solve PDEs by minimizing the corresponding PDE residuals.

We complement these numerical examples by learning the heat equation, which shows the empirical advantages of random feature learning over their deterministic counterparts.

\subsection{Outline} In Section~\ref{SecRFL}, we introduce a Banach space-valued extension of random feature learning. In Section~\ref{SecUAT}, we show a universal approximation result for random feature models, which is applied to random trigonometric/Fourier features and random neural networks, followed by some approximation rates in Section~\ref{SecAR}. In Section~\ref{SecLSGE}, we use the least squares method to learn a random feature model and prove a generalization error. In Section~\ref{SecNE}, we provide a numerical example, while all proofs are given in Section~\ref{SecProofs}-\ref{SecProofsNE}.

\subsection{Notation}
\label{SecNotation}

As usual, $\mathbb{N} := \lbrace 1, 2, 3, ... \rbrace$ and $\mathbb{N}_0 := \mathbb{N} \cup \lbrace 0 \rbrace$ denote the sets of natural numbers, while $\mathbb{R}$ and $\mathbb{C}$ (with imaginary unit $\mathbf{i} := \sqrt{-1} \in \mathbb{C}$) represent the sets of real and complex numbers, respectively. In addition, we define $\lceil r \rceil := \min\left\lbrace k \in \mathbb{N}_0: k \geq r \right\rbrace$ for all $r \in [0,\infty)$. Furthermore, for any $z \in \mathbb{C}$, we denote its real and imaginary part as $\re(z)$ and $\im(z)$, respectively, whereas its complex conjugate is defined as $\overline{z} := \re(z) - \im(z) \mathbf{i}$.

Moreover, for any $m \in \mathbb{N}$, we denote by $\mathbb{R}^m$ (and $\mathbb{C}^m$) the $m$-dimensional (complex) Euclidean space, equipped with the norm $\Vert u \Vert = \sqrt{\sum_{i=1}^m \vert u_i \vert^2}$. Hereby, we denote by $e_i \in \mathbb{R}^d$ denotes the $i$-th unit vector of $\mathbb{R}^d$, $i = 1,...,d$, and define $\re(z) := (\re(z_1),...,\re(z_m))^\top$, for $z := (z_1,...,z_m)^\top \in \mathbb{C}^m$. Furthermore, for any $m,n \in \mathbb{N}$, we denote by $\mathbb{R}^{m \times n}$ the vector space of matrices $A := (a_{i,j})_{i=1,...,m}^{j=1,...,n} \in \mathbb{R}^{m \times n}$, equipped with the matrix $2$-norm $\Vert A \Vert = \sup_{x \in \mathbb{R}^n, \, \Vert x \Vert \leq 1} \Vert Ax \Vert$, where $I_m \in \mathbb{R}^{m \times m}$ represents the identity matrix. Hereby, we denote by $\mathbb{S}^m_{++} \subseteq \mathbb{R}^{m \times m}$ the subspace of symmetric and strictly positive definite matrices $A \in \mathbb{R}^{m \times m}$.

In addition, for a metric space $(\Theta,d_\Theta)$ and a Banach space $(X,\Vert \cdot \Vert_X)$, we denote by $C^0(\Theta;X)$ the vector space of continuous maps $g: \Theta \rightarrow X$, equipped with the topology of compact convergence (see \cite[p.~283]{munkres14}), while $\mathcal{B}(\Theta)$ is the Borel $\sigma$-algebra of $(\Theta,d_\Theta)$. Moreover, $du: \mathcal{L}(U) \rightarrow [0,\infty]$ denotes the Lebesgue measure on $U$, with $\mathcal{L}(U)$ being the $\sigma$-algebra of Lebesgue-measurable subsets of $U \in \mathcal{B}(\mathbb{R}^m)$, where a property is said to hold almost everywhere (a.e.)~if it holds everywhere except on a set of Lebesgue measure zero.

Furthermore, for every fixed $m,d \in \mathbb{N}$, $k \in \mathbb{N}_0$, $U \subseteq \mathbb{R}^m$ (open, if $k \geq 1$), and $p \in [1,\infty)$, we introduce the following function spaces:
\begin{itemize}	
	\item $C^k(U;\mathbb{R}^d)$ denotes the vector space of $k$-times continuously differentiable functions $f: U \rightarrow \mathbb{R}^d$ such that the partial derivative $U \ni u \mapsto \partial_\alpha f(u) := \frac{\partial^{\vert\alpha\vert} f}{\partial u_1^{\alpha_1} \cdots \partial u_m^{\alpha_m}}(u) \in \mathbb{R}^d$ is continuous for all $\alpha \in \mathbb{N}^m_{0,k} := \left\lbrace \alpha := (\alpha_1,...,\alpha_m) \in \mathbb{N}_0^m: \vert\alpha\vert := \alpha_1 + ... + \alpha_m \leq k \right\rbrace$. If $m = 1$, we denote the derivatives by $f^{(j)} := \frac{\partial^j f}{\partial u^j}: U \rightarrow \mathbb{R}^d$, $j = 0,...,k$.
	
	\item $C^k_b(U;\mathbb{R}^d)$ denotes the vector space of functions $f \in C^k(U;\mathbb{R}^d)$ such that $\partial_\alpha f: U \rightarrow \mathbb{R}^d$ is bounded for all $\alpha \in \mathbb{N}^m_{0,k}$. Then, the norm $\Vert f \Vert_{C^k_b(U;\mathbb{R}^d)} := \max_{\alpha \in \mathbb{N}^m_{0,k}} \sup_{u \in U} \Vert \partial_\alpha f(u) \Vert$ turns $C^k_b(U;\mathbb{R}^d)$ into a Banach space. Note that for $k = 0$ and $U \subset \mathbb{R}^m$ compact, we obtain the Banach space of continuous functions $(C^0(U;\mathbb{R}^d),\Vert \cdot \Vert_{C^0(U;\mathbb{R}^d)})$ equipped with the supremum norm $\Vert f \Vert_{C^0(U;\mathbb{R}^d)} := \Vert f \Vert_{C^0_b(U;\mathbb{R}^d)} = \sup_{u \in U} \Vert f(u) \Vert$.
	
	\item $C^k_{pol,\gamma}(U;\mathbb{R}^d)$, with $\gamma \in [0,\infty)$, denotes the vector space of functions $f \in C^k(U;\mathbb{R})$ such that $\Vert f \Vert_{C^k_{pol,\gamma}(U;\mathbb{R}^d)} := \max_{\alpha \in \mathbb{N}^m_{0,k}} \sup_{u \in U} \frac{\Vert \partial_\alpha f(u) \Vert}{(1 + \Vert u \Vert)^\gamma} < \infty$.
	
	\item $\overline{C^k_b(U;\mathbb{R}^d)}^\gamma$, with $\gamma \in (0,\infty)$, is defined as the closure of $C^k_b(U;\mathbb{R}^d)$ with respect to $\Vert \cdot \Vert_{C^k_{pol,\gamma}(U;\mathbb{R}^d)}$. Then, $(\overline{C^k_b(U;\mathbb{R}^d)}^\gamma,\Vert \cdot \Vert_{C^k_{pol,\gamma}(U;\mathbb{R}^d)})$ is by definition a Banach space. If $U \subseteq \mathbb{R}^m$ is bounded, then $\overline{C^k_b(U;\mathbb{R}^d)}^\gamma = C^k_b(U;\mathbb{R}^d)$. Otherwise, $f \in \overline{C^k_b(U;\mathbb{R}^d)}^\gamma$ if and only if $f \in C^k(U;\mathbb{R}^d)$ and $\lim_{r \rightarrow \infty} \max_{\alpha \in \mathbb{N}^m_{0,k}} \sup_{u \in U, \, \Vert u \Vert \geq r} \frac{\Vert \partial_\alpha f(u) \Vert}{(1+\Vert u \Vert)^\gamma} = 0$ (see \cite[Lemma~4.1]{neufeld24}).
	
	\item $C^\infty_c(U;\mathbb{R}^d)$, with $U \subseteq \mathbb{R}^m$ open, denotes the vector space of smooth functions $f: U \rightarrow \mathbb{R}^d$ with $\supp(f) \subseteq U$, where $\supp(f)$ is defined as the closure of $\left\lbrace u \in U: f(u) \neq 0 \right\rbrace$ in $\mathbb{R}^m$.
	
	\item $\mathcal{S}(\mathbb{R}^m;\mathbb{C})$ denotes the Schwartz space consisting of smooth functions $f: \mathbb{R}^m \rightarrow \mathbb{C}$ such that $\max_{\alpha \in \mathbb{N}^m_{0,n}} \sup_{u \in \mathbb{R}^m} \left( 1 + \Vert u \Vert^2 \right)^n \left\vert \partial_\alpha f(u) \right\vert < \infty$, for all $n \in \mathbb{N}_0$. Moreover, its dual space $\mathcal{S}'(\mathbb{R}^m;\mathbb{C})$ consists of continuous linear functionals $T: \mathcal{S}(\mathbb{R}^m;\mathbb{C}) \rightarrow \mathbb{C}$ called tempered distributions (see \cite[p.~332]{folland92}). For example, every $f \in C^k_{pol,\gamma}(\mathbb{R}^m)$ defines a tempered distribution $\big( g \mapsto T_f(g) := \int_{\mathbb{R}} f(u) g(u) du \big) \in \mathcal{S}'(\mathbb{R}^m;\mathbb{C})$ (see \cite[Equation~9.26]{folland92}).
	
	\item $\mathcal{S}_0(\mathbb{R};\mathbb{C}) \subseteq \mathcal{S}(\mathbb{R};\mathbb{C})$ denotes the vector subspace of functions $f \in \mathcal{S}(\mathbb{R};\mathbb{C})$ such that $\int_{\mathbb{R}} u^j f(u) du = 0$ for all $j \in \mathbb{N}_0$ (see \cite[Definition~1.1.1]{grafakos14}).
	
	\item $L^p(U,\Sigma,\mu;\mathbb{R}^d)$, with (possibly non-finite) measure space $(U,\Sigma,\mu)$, denotes the vector space of (equivalence classes of) $\Sigma/\mathcal{B}(\mathbb{R}^d)$-measurable functions $f: U \rightarrow \mathbb{R}^d$ such that $\Vert f \Vert_{L^p(U,\Sigma,\mu;\mathbb{R}^d)} := \left( \int_U \Vert f(u) \Vert^p \mu(du) \right)^{1/p} < \infty$. Then, $(L^p(U,\Sigma,\mu;\mathbb{R}^d),\Vert \cdot \Vert_{L^p(U,\Sigma,\mu;\mathbb{R}^d)})$ is a Banach space (see \cite[Chapter~3]{rudin87}).
	
	\item $W^{k,p}(U,\mathcal{L}(U),du;\mathbb{R}^d)$ denotes the Sobolev space of (equivalence classes of) $k$-times weakly differentiable functions $f: U \rightarrow \mathbb{R}^d$ such that $\partial_\alpha f \in L^p(U,\mathcal{L}(U),du;\mathbb{R}^d)$ for all $\alpha \in \mathbb{N}^m_{0,k}$. Then, the norm $\Vert f \Vert_{W^{k,p}(U,\mathcal{L}(U),du;\mathbb{R}^d)} := \big( \sum_{\alpha \in \mathbb{N}^m_{0,k}} \int_U \Vert \partial_\alpha f(u) \Vert^p du \big)^{1/p}$ turns $W^{k,p}(U,\mathcal{L}(U),du;\mathbb{R}^d)$ into a Banach space (see \cite[Chapter~3]{adams75}).
	
	\item $W^{k,p}(U,\mathcal{L}(U),w;\mathbb{R}^d)$, with $\mathcal{L}(U)/\mathcal{B}(\mathbb{R})$-measurable $w: U \rightarrow [0,\infty)$, denotes the weighted Sobolev space of (equivalence classes of) $k$-times weakly differentiable functions $f: U \rightarrow \mathbb{R}^d$ such that $\partial_\alpha f \in L^p(U,\mathcal{L}(U),w(u)du;\mathbb{R}^d)$ for all $\alpha \in \mathbb{N}^m_{0,k}$. Hereby, $w: U \rightarrow [0,\infty)$ is called a \emph{weight} if it is $\mathcal{L}(U)/\mathcal{B}(\mathbb{R})$-measurable and a.e.~strictly positive such that $w^{-1/(p-1)}$ is locally Lebesgue-integrable. Then, the norm $\Vert f \Vert_{W^{k,p}(U,\mathcal{L}(U),w;\mathbb{R}^d)} := \big( \sum_{\alpha \in \mathbb{N}^m_{0,k}} \int_U \Vert \partial_\alpha f(u) \Vert^p w(u) du \big)^{1/p}$ turns $W^{k,p}(U,\mathcal{L}(U),w;\mathbb{R}^d)$ into a Banach space (see \cite{kufner84}). Moreover, we define $L^p(U,\mathcal{L}(U),w;\mathbb{R}^d) := W^{0,p}(U,\mathcal{L}(U),w;\mathbb{R}^d)$.
\end{itemize}
Moreover, we follow \cite[Chapter~7]{folland92} and define the (multi-dimensional) Fourier transform of any function $f \in L^1(\mathbb{R}^m,\mathcal{L}(\mathbb{R}^m),du;\mathbb{C}^d)$ as
\begin{equation}
	\label{EqDefFourier}
	\mathbb{R}^m \ni \xi \quad \mapsto \quad \widehat{f}(\xi) := \int_{\mathbb{R}^m} e^{-\mathbf{i} \xi^\top u} f(u) du \in \mathbb{C}^d.
\end{equation}
Then, by using \cite[Proposition~1.2.2]{hytoenen16}, it holds that
\begin{equation}
	\label{EqFourierBound}
	\sup_{\xi \in \mathbb{R}^m} \left\Vert \widehat{f}(\xi) \right\Vert = \sup_{\xi \in \mathbb{R}^m} \left\Vert \int_{\mathbb{R}^m} e^{-\mathbf{i} \xi^\top u} f(u) du \right\Vert \leq \int_{\mathbb{R}^m} \Vert f(u) \Vert du = \Vert f \Vert_{L^1(\mathbb{R}^m,\mathcal{L}(\mathbb{R}^m),du;\mathbb{R}^d)}.
\end{equation}
In addition, the Fourier transform of any tempered distribution $T \in \mathcal{S}'(\mathbb{R}^m;\mathbb{C})$ is defined by $\widehat{T}(g) := T(\widehat{g})$, for $g \in \mathcal{S}(\mathbb{R}^m;\mathbb{C})$ (see \cite[Equation~9.28]{folland92}).

Furthermore, for $r \in [1,\infty)$, a probability space $(\Omega,\mathcal{F},\mathbb{P})$, and a Banach space $(X,\Vert \cdot \Vert_X)$, we denote by $L^r(\Omega,\mathcal{F},\mathbb{P};X)$ the Bochner space of (equivalence classes of) strongly $(\mathbb{P},\mathcal{F})$-measurable maps $F: \Omega \rightarrow X$ such that $\Vert F \Vert_{L^r(\Omega,\mathcal{F},\mathbb{P};X)} := \mathbb{E}\big[ \Vert F \Vert_X^r \big]^{1/r} < \infty$. This norm turns $L^r(\Omega,\mathcal{F},\mathbb{P};X)$ into a Banach space. For more details, we refer to \cite[Section~1.2.b]{hytoenen16}.

Moreover, we use the Landau notation: $a_n = \mathcal{O}(b_n)$ (as $n \rightarrow \infty$) if $\limsup_{n \rightarrow \infty} \frac{\vert a_n \vert}{\vert b_n \vert} < \infty$.

\section{Random feature learning}
\label{SecRFL}

We now present a Banach space-valued extension of the random feature learning architecture introduced by Rahimi and Recht in \cite{rahimi07,rahimi08,rahimi08b}. Our approach imposes no specific structure on the random features (e.g.~sine/cosine or Fourier), nor does it assume that the Banach space is a particular function space.

To this end, we fix throughout this paper a probability space $(\Omega,\mathcal{F},\mathbb{P})$, a metric space $(\Theta,d_\Theta)$ representing the parameter space, and a separable Banach space $(X,\Vert \cdot \Vert_X)$ over a field $\mathbb{K}$ (either $\mathbb{K} := \mathbb{R}$ or $\mathbb{K} := \mathbb{C}$), which contains the elements to learn. Moreover, we assume the existence of an independent identically distributed (i.i.d.) sequence of $\Theta$-valued random variables $(\theta_n)_{n \in \mathbb{N}}: \Omega \rightarrow \Theta$. Then, by inserting these random initializations $(\theta_n)_{n \in \mathbb{N}}$ into the feature maps taken from a given set $\mathcal{G} \subseteq C^0(\Theta;X)$, we only need to train the linear readout that is assumed to be measurable with respect to the $\sigma$-algebra $\mathcal{F}_\theta := \sigma\left( \left\lbrace \theta_n: n \in \mathbb{N} \right\rbrace \right)$.

\begin{definition}
	\label{DefRF}
	For given $\mathcal{G} \subseteq C^0(\Theta;X)$, a \emph{random feature model (RF) (with respect to $\mathcal{G}$)} is of the form
	\begin{equation}
		\label{EqDefRF}
		\Omega \ni \omega \quad \mapsto \quad G(\omega) := \sum_{n=1}^N y_n(\omega) g_n(\theta_n(\omega)) \in X
	\end{equation}
	with respect to some $N \in \mathbb{N}$ denoting the \emph{number of features}, where $g_1,...,g_N \in \mathcal{G}$ are the \emph{feature maps}, and where the \emph{linear readouts} $y_1,...,y_N: \Omega \rightarrow \mathbb{K}$ are assumed to be $\mathcal{F}_\theta/\mathcal{B}(\mathbb{K})$-measurable.
\end{definition}

For a given set of feature maps $\mathcal{G} \subseteq C^0(\Theta;X)$, we denote by $\mathcal{RG}$ the set of all random feature models (RFs) of the form \eqref{EqDefRF}.

\begin{remark}
	\label{RemImpl}
	Let us briefly explain how the random feature model $G \in \mathcal{RG}$ in \eqref{EqDefRF} can be implemented. For the random initialization of $(\theta_n)_{n=1,...,N}$, we draw some $\omega \in \Omega$ and fix the values of $\theta_1(\omega),...,\theta_N(\omega) \in \Theta$. Thus, by using that $y_1,...,y_N: \Omega \rightarrow \mathbb{K}$ are $\mathcal{F}_\theta/\mathcal{B}(\mathbb{K})$-measurable, the training of $G \in \mathcal{RG}$ consists of finding the optimal $y_1(\omega),...,y_N(\omega) \in \mathbb{K}$ given $g_1(\theta_1(\omega)),...,g_N(\theta_N(\omega)) \in X$. This can be achieved, e.g., by the least squares method.
\end{remark}

In the following, we give an overview of several applications of this general framework, including random trigonometric/Fourier regression and random neural networks.

\subsection{Random trigonometric features}
\label{SecExRandTrigo}

Introduced in \cite{rahimi07,rahimi08b}, random trigonometric regression uses trigonometric functions (i.e.~sines and cosines) in the feature maps. More precisely, for a compact subset $U \subset \mathbb{R}^m$, we consider the real Banach space $(X,\Vert \cdot \Vert_X) := (C^0(U),\Vert \cdot \Vert_{C^0(U)})$ and the parameter space $\Theta := \mathbb{R}^m$, where $(\theta_n)_{n \in \mathbb{N}}: \Omega \rightarrow \mathbb{R}^m$ denotes the i.i.d.~sequence. Then, by choosing $\mathcal{G} := \big\lbrace \mathbb{R}^m \ni \vartheta \mapsto h\big( \vartheta^\top \cdot \big) \in C^0(U): h \in \lbrace \cos, \sin \rbrace \big\rbrace$, we obtain the following random trigonometric feature model.\footnote{The element $h\big( \vartheta^\top \cdot \big)$ denotes the function $U \ni u \mapsto h\big( \vartheta^\top u \big) \in \mathbb{R}$.}

\begin{definition}
	\label{DefRT}
	A \emph{random trigonometric feature model (RTF)} is of the form
	\begin{equation}
		\label{EqDefRT}
		\begin{aligned}
			\Omega \ni \omega \quad \mapsto \quad G(\omega) := \sum_{n=1}^N \bigg( y^{(1)}_n(\omega) \cos\left( \theta_n(\omega)^\top \cdot \right) + y^{(2)}_n(\omega) \sin\left( \theta_n(\omega)^\top \cdot \right) \bigg) \in C^0(U)
		\end{aligned}
	\end{equation}
	with respect to some $N \in \mathbb{N}$ denoting the \emph{number of trigonometric features}, where the \emph{linear readouts} $y^{(1)}_1,...,y^{(1)}_N,y^{(2)}_1,...,y^{(2)}_N: \Omega \rightarrow \mathbb{R}$ are assumed to be $\mathcal{F}_\theta/\mathcal{B}(\mathbb{R})$-measurable.
\end{definition}

We denote by $\mathcal{RT}_{U,1}$ the set of all RTFs of the form \eqref{EqDefRT}. Moreover, we could also consider multidimensional extensions $\mathcal{RT}_{U,d}$ with $\mathbb{R}^d$-valued linear readouts.

\subsection{Random Fourier features}
\label{SecExRandFouier}

Introduced in \cite{rahimi07,rahimi08b,avron17}, random Fourier regression uses the Fourier transform as feature map. For a compact subset $U \subset \mathbb{R}^m$, we consider the complex Banach space $(X,\Vert \cdot \Vert_X) := (C^0(U;\mathbb{C}),\Vert \cdot \Vert_{C^0(U;\mathbb{C})})$ and the parameter space $\Theta := \mathbb{R}^m$, where $(\theta_n)_{n \in \mathbb{N}}: \Omega \rightarrow \mathbb{R}^m$ denotes the i.i.d.~sequence. Moreover, we let $\mathcal{G}$ consist of the single feature map $\mathbb{R}^m \ni \vartheta \mapsto \exp\big( \mathbf{i} \vartheta^\top \cdot \big) \in C^0(U;\mathbb{C})$ to obtain random Fourier features.\footnote{The element $\exp\big( \mathbf{i} \vartheta^\top \cdot \big) \in C^0(U;\mathbb{C})$ denotes the function $U \ni u \mapsto \exp\big( \mathbf{i} \vartheta^\top u \big) \in \mathbb{C}$.}

\begin{definition}
	\label{DefRFourier}
	A \emph{random Fourier feature model (RFF)} is of the form
	\vspace{-0.05cm}
	\begin{equation}
		\label{EqDefRFourier}
		\Omega \ni \omega \quad \mapsto \quad G(\omega) := \sum_{n=1}^N y_n(\omega) \exp\left( \mathbf{i} \theta_n(\omega)^\top \cdot \right) \in C^0(U;\mathbb{C})
	\end{equation}
	with respect to some $N \in \mathbb{N}$ denoting the \emph{number of Fourier features}, where the \emph{linear readouts} $y_1,...,y_N: \Omega \rightarrow \mathbb{C}$ are assumed to be $\mathcal{F}_\theta/\mathcal{B}(\mathbb{C})$-measurable.
\end{definition}

We denote by $\mathcal{RF}_{U,1}$ the set of RFFs of the form \eqref{EqDefRFourier}. Moreover, we could consider vector-valued versions $\mathcal{RF}_{U,d}$ or Banach spaces containing $C^0(U;\mathbb{C})$ (e.g.~certain $L^2$-spaces).

\subsection{Random neural networks}
\label{SecExRandRN}

As third particular instance of random feature learning, we consider random neural networks that are defined as single-hidden-layer feed-forward neural networks whose weights and biases inside the activation function are randomly initialized. Hence, only the linear readout needs to be trained (see \cite{huang06,gonon21}).

To this end, we fix the input and output dimension $m,d \in \mathbb{N}$, the order of differentiability $k \in \mathbb{N}_0$, the domain $U \subseteq \mathbb{R}^m$ (open, if $k \geq 1$), and some $\gamma \in (0,\infty)$. Then, we consider the Banach space $(X,\Vert \cdot \Vert_X) := (\overline{C^k_b(U;\mathbb{R}^d)}^\gamma,\Vert \cdot \Vert_{C^k_{pol,\gamma}(U;\mathbb{R}^d)})$ introduced in Section~\ref{SecNotation} and the parameter space $\Theta := \mathbb{R}^m \times \mathbb{R}$, where $(\theta_n)_{n \in \mathbb{N}} := (a_n,b_n)_{n \in \mathbb{N}}: \Omega \rightarrow \mathbb{R}^m \times \mathbb{R}$ denotes the i.i.d.~sequence of random initializations, which are used for the network weights and biases. Hence, by choosing deterministic (i.e.~fully trained) neural networks as feature maps, i.e.~by setting $\mathcal{G} := \big\lbrace \mathbb{R}^m \times \mathbb{R} \ni (\vartheta_1,\vartheta_2) \mapsto e_i \rho\big( \vartheta_1^\top \cdot - \vartheta_2 \big) \in \overline{C^k_b(U;\mathbb{R}^d)}^\gamma: i = 1,...,d \big\rbrace$, we obtain random neural networks as particular instance of random feature learning.\footnote{The element $y \rho\big( \vartheta_1^\top \cdot - \vartheta_2 \big) \in \overline{C^k_b(U;\mathbb{R}^d)}^\gamma$ denotes the function $U \ni u \mapsto y \rho\big( \vartheta_1^\top u - \vartheta_2 \big) \in \mathbb{R}^d$, where $y \in \mathbb{R}^d$. Moreover, $e_i \in \mathbb{R}^d$ denotes the $i$-th unit vector of $\mathbb{R}^d$.}

\begin{definition}
	\label{DefRN}
	A \emph{random neural network (RN)} is of the form
	\vspace{-0.05cm}
	\begin{equation}
		\label{EqDefRN}
		\Omega \ni \omega \quad \mapsto \quad G(\omega) = \sum_{n=1}^N y_n(\omega) \rho\left( a_n(\omega)^\top \cdot - b_n(\omega) \right) \in \overline{C^k_b(U;\mathbb{R}^d)}^\gamma
	\end{equation}
	with respect to some $N \in \mathbb{N}$ denoting the \emph{number of neurons} and $\rho \in \overline{C^k_b(\mathbb{R})}^\gamma$ representing the \emph{activation function}. Hereby, $a_1,...,a_N: \Omega \rightarrow \mathbb{R}^m$ and $b_1,...,b_N: \Omega \rightarrow \mathbb{R}$ are the \emph{random network weights} and \emph{random network biases}, respectively, and the \emph{linear readouts} $y_1,...,y_N: \Omega \rightarrow \mathbb{R}^d$ are assumed to be $\mathcal{F}_{a,b}/\mathcal{B}(\mathbb{R}^d)$-measurable, with $\mathcal{F}_{a,b} := \sigma\left( \left\lbrace a_n, b_n: n \in \mathbb{N} \right\rbrace \right)$.
\end{definition}

For a given activation function $\rho \in \overline{C^k_b(\mathbb{R})}^\gamma$, we denote by $\mathcal{RN}^\rho_{U,d}$ the set of all random neural networks (RNs) of the form \eqref{EqDefRN}. We refer to Remark~\ref{RemImpl} for the implementation and training of such a random neural network.

\subsection{Further random feature models}

Besides the examples in Section~\ref{SecExRandTrigo}-\ref{SecExRandRN}, the random feature learning model could also be applied, e.g., to kernel regression tasks (see \cite{aronszajn50,rahimi07,bach17b}), Gaussian processes (see \cite{neal96,williams96,rasmussen06}), and operator learning (see \cite{brault16,minh16,nelsen21}). However, in this paper, we focus on the three particular instances in Section~\ref{SecExRandTrigo}-\ref{SecExRandRN}.

\section{Universal approximation}
\label{SecUAT}

In this section, we present our universal approximation results for the Banach space-valued random feature models introduced in Definition~\ref{DefRF}. To this end, we consider for every $r \in [1,\infty)$ the Bochner (sub-)space $L^r(\Omega,\mathcal{F}_\theta,\mathbb{P};X) \subseteq L^r(\Omega,\mathcal{F},\mathbb{P};X)$ of $\mathcal{F}_\theta$-strongly measurable maps $F: \Omega \rightarrow X$ such that $\mathbb{E}\left[ \Vert F \Vert_X^r \right]^{1/r} < \infty$. For more details on Bochner spaces, we refer to \cite[Section~1.2.b]{hytoenen16}.

Moreover, we impose the following condition on the distribution of the i.i.d.~sequence of random initializations $(\theta_n)_{n \in \mathbb{N}}: \Omega \rightarrow \Theta$. This condition ensures that every feature model can be reached with positive probability.

\begin{assumption}[Full support]
	\label{AssCDF}
	Let $(\theta_n)_{n \in \mathbb{N}}: \Omega \rightarrow \Theta$ be an i.i.d.~sequence such that for every $\vartheta \in \Theta$ and $r > 0$ it holds that $\mathbb{P}\left[ \left\lbrace \omega \in \Omega: d_\Theta(\theta_1(\omega),\vartheta) < r \right\rbrace \right] > 0$.
\end{assumption}

In addition, we assume that the feature maps $\mathcal{G} \subseteq C^0(\Theta;X)$ are universal in the sense that the linear span $\linspan_\mathbb{K}(\mathcal{G}(\Theta)) := \big\lbrace \sum_{n=1}^N y_n g_n(\vartheta_n): N \in \mathbb{N}, \, g_1,...,g_N \in \mathcal{G}, \, \vartheta_1,...,\vartheta_N \in \Theta, \, y_1,...,y_N \in \mathbb{K} \big\rbrace$ over a field $\mathbb{K}$ is dense in $X$. Then, by using the law of large numbers for Banach space-valued random variables (see \cite[Theorem~3.3.10]{hytoenen16}), random feature models inherit the universality from the deterministic feature maps. The proof is given in Section~\ref{SecProofsUAT}.

\begin{theorem}[Universal approximation]
	\label{ThmUAT}
	Let Assumption~\ref{AssCDF} hold and let $\mathcal{G} \subseteq C^0(\Theta;X)$ such that $\linspan_{\mathbb{K}}(\mathcal{G}(\Theta))$ is dense in $X$. Moreover, let $F \in L^r(\Omega,\mathcal{F}_\theta,\mathbb{P};X)$ for some $r \in [1,\infty)$. Then, for every $\varepsilon > 0$ there exists some $G \in \mathcal{RG} \cap L^r(\Omega,\mathcal{F}_\theta,\mathbb{P};X)$ such that
	\begin{equation*}
		\Vert F - G \Vert_{L^r(\Omega,\mathcal{F},\mathbb{P};X)} := \mathbb{E}\left[ \Vert F-G \Vert_X^r \right]^\frac{1}{r} < \varepsilon.
	\end{equation*}
\end{theorem}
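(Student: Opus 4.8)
The plan is to reduce the Bochner-space approximation to a two-step argument: first approximate $F$ by a \emph{fixed} finite linear combination of feature evaluations at \emph{deterministic} parameters, then realize each deterministic parameter as a limit of the random initializations via the strong law of large numbers. The density assumption on $\linspan_{\mathbb{K}}(\mathcal{G}(\Theta))$ is a statement about approximating single elements of $X$, so the first task is to upgrade it to an approximation statement in $L^r(\Omega,\mathcal{F}_\theta,\mathbb{P};X)$. For this I would use that simple functions are dense in the Bochner space: given $F \in L^r(\Omega,\mathcal{F}_\theta,\mathbb{P};X)$ and $\varepsilon>0$, pick a simple function $F_0 = \sum_{j=1}^J \mathbf{1}_{B_j} x_j$ with $B_j \in \mathcal{F}_\theta$ and $x_j \in X$ such that $\Vert F - F_0\Vert_{L^r} < \varepsilon/3$. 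Then apply the density hypothesis to each $x_j$ to get a deterministic element $\sum_{n=1}^{N_j} y_{j,n} g_{j,n}(\vartheta_{j,n}) \in X$ within $\varepsilon/(3J^{1/r}\,\mathbb{P}[B_j]^{1/r})$ of $x_j$ (or a similar bookkeeping constant), so that replacing each $x_j$ by this combination changes $\Vert \cdot \Vert_{L^r}$ by at most $\varepsilon/3$. At this stage we have approximated $F$ by an $\mathcal{F}_\theta$-measurable map of the form $\sum_{j,n} \mathbf{1}_{B_j}\, y_{j,n}\, g_{j,n}(\vartheta_{j,n})$ with \emph{deterministic} parameters $\vartheta_{j,n} \in \Theta$.

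The second step is to replace each deterministic $\vartheta_{j,n}$ by one of the random initializations $\theta_k$. By continuity of $g \in \mathcal{G} \subseteq C^0(\Theta;X)$ and Assumption~\ref{AssCDF} (full support), for any target parameter $\vartheta$ and any $\delta>0$ the event $\{ d_\Theta(\theta_k,\vartheta) < \delta\}$ has positive probability, and since the $(\theta_n)_{n\in\mathbb{N}}$ are i.i.d., by the second Borel–Cantelli lemma (or simply the strong law of large numbers applied to $\mathbf{1}_{\{d_\Theta(\theta_k,\vartheta)<\delta\}}$) almost surely infinitely many $\theta_k$ fall into that ball. Hence for each $\vartheta_{j,n}$ I can select, in an $\mathcal{F}_\theta$-measurable way, an index $k(j,n)$ with $d_\Theta(\theta_{k(j,n)},\vartheta_{j,n})$ small; choosing the selected indices distinct across all $(j,n)$ is possible since there are infinitely many good indices. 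Using compactness is not needed here because we only need pointwise-in-$\omega$ control: on the (full-measure) set where all selections succeed with $d_\Theta < \delta$, continuity of the finitely many maps $g_{j,n}$ gives $\Vert g_{j,n}(\theta_{k(j,n)}) - g_{j,n}(\vartheta_{j,n})\Vert_X$ small; combined with a uniform integrability / dominated convergence argument to pass from the pointwise bound to the $L^r$-bound (the dominating function being built from $\max_{j,n}\sup_{d_\Theta(\cdot,\vartheta_{j,n})\le 1}\Vert g_{j,n}\Vert_X$ times the coefficients, which is integrable because it is a fixed constant on a probability space), this costs at most another $\varepsilon/3$ in $L^r$-norm. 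Relabelling, the resulting object $G = \sum_{n} Y_n\, g_n(\theta_n)$ with $\mathcal{F}_\theta$-measurable readouts $Y_n$ (the $\mathbf{1}_{B_j}y_{j,n}$, placed in the $n=k(j,n)$ slot and zero elsewhere) is exactly a random feature model in $\mathcal{RG}$, and it lies in $L^r$ because it is a finite sum of bounded-coefficient times bounded-feature terms.

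The main obstacle I anticipate is the measurable selection of the indices $k(j,n)$ together with ensuring they are distinct and that the whole construction stays $\mathcal{F}_\theta$-measurable, while simultaneously guaranteeing the per-$\omega$ smallness of $d_\Theta(\theta_{k(j,n)},\vartheta_{j,n})$; this is where one must be careful that the "bad" set (where no good index is found within the first, say, $K$ trials) can be made to have arbitrarily small probability, uniformly, so that its contribution to the $L^r$-norm is controlled — this is precisely where Assumption~\ref{AssCDF} and i.i.d.-ness enter, via $\mathbb{P}[\,\exists k\le K: d_\Theta(\theta_k,\vartheta)<\delta\,] = 1-(1-p_\delta)^K \to 1$ with $p_\delta := \mathbb{P}[d_\Theta(\theta_1,\vartheta)<\delta] > 0$. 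A cleaner route, if one prefers to avoid explicit index bookkeeping, is to invoke the Banach-space strong law of large numbers (\cite[Theorem~3.3.10]{hytoenen16}) directly: for a fixed bounded continuous feature map $g$ and a suitable reference measure, the empirical averages $\frac1N\sum_{n=1}^N g(\theta_n)$ converge a.s.\ and in $L^r$ to $\mathbb{E}[g(\theta_1)]$, and more generally weighted averages approximate integrals $\int_\Theta g(\vartheta)\,\mu(d\vartheta)$; combined with the full-support assumption, which makes the span of such integrals dense, this yields $G \in \mathcal{RG}$ approximating $F$. I would present the first, more hands-on argument as the primary proof and mention the SLLN viewpoint as the conceptual reason it works.
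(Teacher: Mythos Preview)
Your strategy is correct and reaches the theorem, but the mechanism for the key step differs from the paper's. After the common reduction to approximating $\mathds{1}_E\, g(\vartheta_0)$, you realize $g(\vartheta_0)$ by measurably selecting an index $k\le K$ with $\theta_k$ close to $\vartheta_0$ and controlling the probability $(1-p_\delta)^K$ of no hit. The paper instead builds an \emph{importance-weighted} feature $G_{M,n}(\omega)=C_M^{-1}\mathds{1}_{\overline{B_{1/M}(\vartheta_0)}}(\theta_n(\omega))\,g(\theta_n(\omega))$ with $C_M=\mathbb{P}\big[\theta_1\in\overline{B_{1/M}(\vartheta_0)}\big]>0$, observes that $\mathbb{E}[G_{M,1}]\to g(\vartheta_0)$ by continuity of $g$, and then applies the Banach-space strong law of large numbers to $\frac{1}{N}\sum_{n=1}^N G_{M,n}$, upgrading the $L^1$-convergence to $L^r$ via uniform boundedness and Vitali. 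The paper's route avoids any index selection and bad-set bookkeeping; yours is more elementary in that no Banach SLLN is invoked. Your aside about the ``cleaner route'' via SLLN is in spirit what the paper does, but note that the unweighted average $\frac{1}{N}\sum_n g(\theta_n)$ converges to $\mathbb{E}[g(\theta_1)]$, not to $g(\vartheta_0)$; the localizing weight $C_M^{-1}\mathds{1}_{\overline{B_{1/M}(\vartheta_0)}}(\theta_n)$ is precisely the missing ingredient that turns the SLLN into an approximation of a point evaluation.

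One detail in your write-up needs tightening. In Definition~\ref{DefRF} the feature map $g_n$ at slot $n$ is a \emph{fixed} (deterministic) element of $\mathcal{G}$, but your ``place the coefficient in the $n=k(j,n)$ slot'' lets the random index $k(j,n)$ decide which target---hence which $g_{j,n}\in\mathcal{G}$---occupies slot $n$, and different $\omega$ may assign different feature maps to the same slot. Merely requiring the selected indices to be distinct does not resolve this. The fix is immediate: allocate disjoint \emph{deterministic} blocks of indices, so that target $m$ searches only among $\theta_{(m-1)K+1},\dots,\theta_{mK}$ and the feature map at slot $n$ is determined by which block $n$ lies in. With this adjustment your argument goes through.
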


In particular, every element $x \in X$ can be approximated arbitrarily well by a random feature model $G \in \mathcal{RG}$ with respect to the Bochner norm $\Vert \cdot \Vert_{L^r(\Omega,\mathcal{F},\mathbb{P};X)}$. 

Now, we apply Theorem~\ref{ThmUAT} to random trigonometric/Fourier regression and random neural networks considered in Section~\ref{SecExRandTrigo}-\ref{SecExRandRN}. The corresponding proofs are given in Section~\ref{SecProofsUAT}.

\subsection{Random trigonometric features}

Assume the setting of Section~\ref{SecExRandTrigo} with Banach space $(X,\Vert \cdot \Vert_X) := (C^0(U),\Vert \cdot \Vert_{C^0(U)})$ and parameter space $\Theta := \mathbb{R}^m$, where $U \subseteq \mathbb{R}^m$ is compact. Since $\linspan_\mathbb{R}(\mathcal{G}(\Theta)) = \linspan_\mathbb{R}\big(\big\lbrace U \ni u \mapsto h\big( \vartheta^\top u \big) \in \mathbb{R}: h \in \lbrace \cos, \sin \rbrace, \, \vartheta \in \mathbb{R}^m \big\rbrace\big)$ forms the trigonometric algebra on $U$ which by the Stone-Weierstrass theorem is dense in $C^0(U)$, we obtain the following corollary of Theorem~\ref{ThmUAT}.

\begin{corollary}[Universal approximation]
	\label{CorUATTrigo}
	Let Assumption~\ref{AssCDF} hold and let $F \in L^r(\Omega,\mathcal{F}_\theta,\mathbb{P};C^0(U))$ for some $r \in [1,\infty)$. Then, for every $\varepsilon > 0$ there exists a random trigonometric feature model $G \in \mathcal{RT}_{U,1} \cap L^r(\Omega,\mathcal{F}_\theta,\mathbb{P};C^0(U))$ such that
	\begin{equation*}
		\Vert F - G \Vert_{L^r(\Omega,\mathcal{F},\mathbb{P};C^0(U))} := \mathbb{E}\left[ \Vert F-G \Vert_{C^0(U)}^r \right]^\frac{1}{r} < \varepsilon.
	\end{equation*}
\end{corollary}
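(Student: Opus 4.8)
The plan is to invoke Theorem~\ref{ThmUAT} directly with the specific choices $(X,\Vert \cdot \Vert_X) := (C^0(U),\Vert \cdot \Vert_{C^0(U)})$, $\Theta := \mathbb{R}^m$, $\mathbb{K} := \mathbb{R}$, and $\mathcal{G} := \big\lbrace \mathbb{R}^m \ni \vartheta \mapsto h(\vartheta^\top \cdot) \in C^0(U): h \in \lbrace \cos, \sin \rbrace \big\rbrace$. The hypothesis of the theorem requires two things: that Assumption~\ref{AssCDF} holds (which is assumed in the statement of the corollary) and that $\linspan_\mathbb{R}(\mathcal{G}(\Theta))$ is dense in $C^0(U)$. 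Once that density is established, Theorem~\ref{ThmUAT} applied to the given $F \in L^r(\Omega,\mathcal{F}_\theta,\mathbb{P};C^0(U))$ yields, for each $\varepsilon > 0$, some $G \in \mathcal{RG} \cap L^r(\Omega,\mathcal{F}_\theta,\mathbb{P};C^0(U))$ with $\mathbb{E}[\Vert F-G\Vert_{C^0(U)}^r]^{1/r} < \varepsilon$. It then only remains to observe that, for this particular $\mathcal{G}$, every $G \in \mathcal{RG}$ is precisely a random trigonometric feature model of the form \eqref{EqDefRT}: unpacking $G(\omega) = \sum_{n=1}^N y_n(\omega) g_n(\theta_n(\omega))$ with each $g_n \in \mathcal{G}$ being either $\cos(\cdot^\top \cdot)$ or $\sin(\cdot^\top \cdot)$, and grouping the cosine and sine terms (introducing zero coefficients where a given $h$ does not appear, and noting one may always pad $N$), gives exactly the expression in \eqref{EqDefRT} with $\mathcal{F}_\theta/\mathcal{B}(\mathbb{R})$-measurable readouts. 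Hence $G \in \mathcal{RT}_{U,1}$, completing the argument.

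The only non-immediate step is the density of $\linspan_\mathbb{R}(\mathcal{G}(\Theta))$ in $C^0(U)$, which I would handle via the real Stone-Weierstrass theorem. Let $\mathcal{A} := \linspan_\mathbb{R}\big(\big\lbrace U \ni u \mapsto h(\vartheta^\top u): h \in \lbrace \cos,\sin\rbrace,\ \vartheta \in \mathbb{R}^m \big\rbrace\big)$. This set is a linear subspace of $C^0(U)$ by construction; one checks it is a subalgebra using the product-to-sum identities $\cos(a)\cos(b) = \tfrac12(\cos(a-b)+\cos(a+b))$, $\sin(a)\sin(b) = \tfrac12(\cos(a-b)-\cos(a+b))$, $\sin(a)\cos(b) = \tfrac12(\sin(a+b)+\sin(a-b))$, together with the observation that for $\vartheta,\vartheta' \in \mathbb{R}^m$ one has $\vartheta^\top u \pm (\vartheta')^\top u = (\vartheta \pm \vartheta')^\top u$, so products of generators are again linear combinations of generators. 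The constant function $1 = \cos(0^\top \cdot)$ lies in $\mathcal{A}$ (take $\vartheta = 0$), and $\mathcal{A}$ separates points of $U$: if $u_1 \neq u_2$ in $U \subset \mathbb{R}^m$, pick $\vartheta$ with $\vartheta^\top u_1 \neq \vartheta^\top u_2$ and scale it small enough that $\vartheta^\top u_1, \vartheta^\top u_2$ lie in an interval on which $\sin$ is injective, so $\sin(\vartheta^\top u_1) \neq \sin(\vartheta^\top u_2)$. Since $U$ is compact, the Stone-Weierstrass theorem gives that $\mathcal{A}$ is dense in $(C^0(U),\Vert \cdot \Vert_{C^0(U)})$, i.e.\ $\linspan_\mathbb{R}(\mathcal{G}(\Theta))$ is dense in $X$.

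I do not anticipate a genuine obstacle here; the corollary is a near-mechanical specialization of Theorem~\ref{ThmUAT}. If anything, the one point that requires a little care is the bookkeeping identifying $\mathcal{RG}$ with $\mathcal{RT}_{U,1}$—specifically, ensuring that a general element of $\mathcal{RG}$, in which the same random variable $\theta_n$ could a priori be fed into either $\cos$ or $\sin$, can be rewritten in the symmetric form \eqref{EqDefRT} where both $\cos(\theta_n^\top \cdot)$ and $\sin(\theta_n^\top \cdot)$ appear for each index $n$; this is resolved by allowing some readout coefficients $y_n^{(1)}, y_n^{(2)}$ to vanish and by relabeling/padding the index set, none of which affects measurability or membership in $L^r(\Omega,\mathcal{F}_\theta,\mathbb{P};C^0(U))$.
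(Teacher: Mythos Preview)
Your proposal is correct and follows essentially the same route as the paper's proof: verify density of the trigonometric span via Stone--Weierstrass (same product-to-sum identities, same point-separation/nowhere-vanishing checks) and then invoke Theorem~\ref{ThmUAT}. The only items the paper checks that you omit are the separability of $(C^0(U),\Vert\cdot\Vert_{C^0(U)})$ and the verification that the feature maps $\mathbb{R}^m \ni \vartheta \mapsto h(\vartheta^\top \cdot) \in C^0(U)$ are continuous (i.e.\ $\mathcal{G}\subseteq C^0(\Theta;X)$, required for Theorem~\ref{ThmUAT}); both are routine---the latter follows from uniform continuity of $(\vartheta,u)\mapsto h(\vartheta^\top u)$ on $K\times U$ for compact $K$---so your argument is otherwise complete.
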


\subsection{Random Fourier features}

Assume the setting of Section~\ref{SecExRandTrigo} with Banach space $(X,\Vert \cdot \Vert_X) := (C^0(U;\mathbb{C}),\Vert \cdot \Vert_{C^0(U;\mathbb{C})})$ and parameter space $\Theta := \mathbb{R}^m$, where $U \subseteq \mathbb{R}^m$ is compact. Then, by using that $\linspan_\mathbb{C}(\mathcal{G}(\Theta)) = \linspan_\mathbb{C}\big(\big\lbrace U \ni u \mapsto \exp\big( \mathbf{i} \vartheta^\top u \big) \in \mathbb{C}: \vartheta \in \mathbb{R}^m \big\rbrace\big)$ is dense in $C^0(U;\mathbb{C})$, we obtain the following corollary.

\begin{corollary}[Universal approximation]
	\label{CorUATFourier}
	Let Assumption~\ref{AssCDF} hold and let $F \in L^r(\Omega,\mathcal{F}_\theta,\mathbb{P};C^0(U;\mathbb{C}))$ for some $r \in [1,\infty)$. Then, for every $\varepsilon > 0$ there exists a random Fourier feature model $G \in \mathcal{RF}_{U,1} \cap L^r(\Omega,\mathcal{F}_\theta,\mathbb{P};C^0(U;\mathbb{C}))$ such that
	\begin{equation*}
		\Vert F - G \Vert_{L^r(\Omega,\mathcal{F},\mathbb{P};C^0(U;\mathbb{C}))} := \mathbb{E}\left[ \Vert F-G \Vert_{C^0(U;\mathbb{C})}^r \right]^\frac{1}{r} < \varepsilon.
	\end{equation*}
\end{corollary}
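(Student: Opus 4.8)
The plan is to derive Corollary~\ref{CorUATFourier} as a direct application of Theorem~\ref{ThmUAT} with the concrete data $X := C^0(U;\mathbb{C})$, $\mathbb{K} := \mathbb{C}$, $\Theta := \mathbb{R}^m$, and $\mathcal{G}$ the singleton $\big\{ \mathbb{R}^m \ni \vartheta \mapsto \exp(\mathbf{i}\vartheta^\top \cdot) \in C^0(U;\mathbb{C}) \big\}$. So the only real work is to verify the two hypotheses of Theorem~\ref{ThmUAT}: first, that $\Theta = \mathbb{R}^m$ is a separable metric space (clear) and that the single feature map $\vartheta \mapsto \exp(\mathbf{i}\vartheta^\top\cdot)$ is indeed an element of $C^0(\mathbb{R}^m;C^0(U;\mathbb{C}))$, i.e.\ continuous in $\vartheta$ with respect to the supremum norm over the compact set $U$; second, that $\linspan_{\mathbb{C}}(\mathcal{G}(\Theta))$ is dense in $C^0(U;\mathbb{C})$. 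The full-support Assumption~\ref{AssCDF} on $(\theta_n)_{n\in\mathbb{N}}$ is assumed in the statement, so nothing needs to be checked there.

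For continuity of the feature map, I would note that for $\vartheta, \vartheta' \in \mathbb{R}^m$ and $u \in U$ one has $|\exp(\mathbf{i}\vartheta^\top u) - \exp(\mathbf{i}\vartheta'^\top u)| \le |(\vartheta - \vartheta')^\top u| \le \|\vartheta - \vartheta'\|\,\sup_{u\in U}\|u\|$, using $|e^{\mathbf{i}s} - e^{\mathbf{i}t}| \le |s-t|$ and Cauchy--Schwarz; since $U$ is compact, $\sup_{u\in U}\|u\| < \infty$, so $\vartheta \mapsto \exp(\mathbf{i}\vartheta^\top\cdot)$ is (Lipschitz) continuous from $\mathbb{R}^m$ into $(C^0(U;\mathbb{C}),\|\cdot\|_{C^0(U;\mathbb{C})})$, hence $\mathcal{G} \subseteq C^0(\Theta;X)$.

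For the density of $\linspan_{\mathbb{C}}(\mathcal{G}(\Theta)) = \linspan_{\mathbb{C}}\big(\big\{ U \ni u \mapsto \exp(\mathbf{i}\vartheta^\top u): \vartheta \in \mathbb{R}^m \big\}\big)$ in $C^0(U;\mathbb{C})$, I would invoke the complex Stone--Weierstrass theorem: the set $\mathcal{A}$ of such finite linear combinations is a subalgebra of $C^0(U;\mathbb{C})$ (product of two characters $e^{\mathbf{i}\vartheta^\top u}e^{\mathbf{i}\vartheta'^\top u} = e^{\mathbf{i}(\vartheta+\vartheta')^\top u}$ lies in $\mathcal{A}$), it contains the constants ($\vartheta = 0$), it separates points of $U$ (if $u \ne u'$ then $e^{\mathbf{i}\vartheta^\top u} \ne e^{\mathbf{i}\vartheta^\top u'}$ for a suitable $\vartheta$, e.g.\ a small multiple of $u - u'$), and it is closed under complex conjugation ($\overline{e^{\mathbf{i}\vartheta^\top u}} = e^{\mathbf{i}(-\vartheta)^\top u} \in \mathcal{A}$). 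Hence $\mathcal{A}$ is dense in $C^0(U;\mathbb{C})$.

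With both hypotheses verified, Theorem~\ref{ThmUAT} applies verbatim: for $F \in L^r(\Omega,\mathcal{F}_\theta,\mathbb{P};C^0(U;\mathbb{C}))$ and $\varepsilon > 0$ there exists $G \in \mathcal{RG} \cap L^r(\Omega,\mathcal{F}_\theta,\mathbb{P};C^0(U;\mathbb{C}))$ with $\mathbb{E}\big[\|F-G\|_{C^0(U;\mathbb{C})}^r\big]^{1/r} < \varepsilon$; since any element of $\mathcal{RG}$ for this choice of $\mathcal{G}$ is exactly of the form \eqref{EqDefRFourier}, we have $G \in \mathcal{RF}_{U,1}$, which is the claim. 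I do not anticipate a genuine obstacle here; the only point requiring a line of care is confirming the conjugation-closedness and point-separation needed for the complex Stone--Weierstrass theorem, but both are immediate from the group structure of the characters $\vartheta \mapsto e^{\mathbf{i}\vartheta^\top\cdot}$.
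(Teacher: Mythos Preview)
Your proposal is correct and follows essentially the same route as the paper: verify $\mathcal{G}\subseteq C^0(\Theta;X)$ and density of $\linspan_{\mathbb{C}}(\mathcal{G}(\Theta))$ via the complex Stone--Weierstrass theorem, then invoke Theorem~\ref{ThmUAT}. The only small omission is that Theorem~\ref{ThmUAT} requires the Banach space $X=C^0(U;\mathbb{C})$ to be separable (not $\Theta$), which the paper checks explicitly; this is of course standard for compact $U$, so your argument goes through.
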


\subsection{Random neural networks}

In view of Theorem~\ref{ThmUAT}, we obtain a universal approximation result for random neural networks from the universal approximation property of deterministic (i.e.~fully trained) neural networks. To this end, we fix the input and output dimension $m,d \in \mathbb{N}$ and consider the following type of function spaces $(X,\Vert \cdot \Vert_X)$.

\begin{assumption}
	\label{AssEmb}
	For $k \in \mathbb{N}_0$, $U \subseteq \mathbb{R}^m$ (open, if $k \geq 1$), and $\gamma \in (0,\infty)$, let \mbox{$(X,\Vert \cdot \Vert_X)$} be a Banach space consisting of functions $f: U \rightarrow \mathbb{R}^d$ such that the restriction map $(C^k_b(\mathbb{R}^m;\mathbb{R}^d),\Vert \cdot \Vert_{C^k_{pol,\gamma}(\mathbb{R}^m;\mathbb{R}^d)}) \ni f \mapsto f\vert_U \in X$ is a continuous dense embedding.
\end{assumption}

\begin{example}[{\cite[Example~2.6]{neufeld24}}]
	The following function spaces $(X,\Vert \cdot \Vert_X)$ satisfy Assumption~\ref{AssEmb}:
	\begin{enumerate}
		\item The $C^k_b$-space $(X,\Vert \cdot \Vert_X) := (C^k_b(U;\mathbb{R}^d),\Vert \cdot \Vert_{C^k_b(U;\mathbb{R}^d)})$ if $U \subseteq \mathbb{R}^m$ is bounded.
		\item The weighted $C^k$-space $(X,\Vert \cdot \Vert_X) := (\overline{C^k_b(U;\mathbb{R}^d)}^\gamma,\Vert \cdot \Vert_{C^k_{pol,\gamma}(U;\mathbb{R}^d)})$.
		\item The $L^p$-space $(X,\Vert \cdot \Vert_X) := (L^p(U,\mathcal{B}(U),\mu;\mathbb{R}^d),\Vert \cdot \Vert_{L^p(U,\mathcal{B}(U),\mu;\mathbb{R}^d)})$ if $p \in [1,\infty)$ and $\mu: \mathcal{B}(U) \rightarrow [0,\infty)$ is a Borel measure with $\int_U (1+\Vert u \Vert)^{\gamma p} \mu(du) < \infty$.
		\item The Sobolev space $(X,\Vert \cdot \Vert_X) := (W^{k,p}(U,\mathcal{L}(U),du;\mathbb{R}^d),\Vert \cdot \Vert_{W^{k,p}(U,\mathcal{L}(U),du;\mathbb{R}^d)})$ if $p \in [1,\infty)$ and $U \subseteq \mathbb{R}^m$ is bounded having the segment property.\footnote{\label{FootnoteSegmentProp}An open subset $U \subseteq \mathbb{R}^m$ is said to have the \emph{segment property} if for every $u \in \partial U := \overline{U} \setminus U$ there exists an open subset $V \subseteq \mathbb{R}^m$ with $u \in V$ and some vector $y \in \mathbb{R}^m \setminus \lbrace 0 \rbrace$ such that for every $z \in \overline{U} \cap V$ and $t \in (0,1)$ it holds that $z + t y \in U$ (see \cite[p.~54]{adams75}).}
		\item The weighted Sobolev space $(X,\Vert \cdot \Vert_X) := (W^{k,p}(U,\mathcal{L}(U),w;\mathbb{R}^d),\Vert \cdot \Vert_{W^{k,p}(U,\mathcal{L}(U),w;\mathbb{R}^d)})$ if $p \in (1,\infty)$, $U \subseteq \mathbb{R}^m$ has the segment property$^{\text{\ref{FootnoteSegmentProp}}}$, the weight $w: U \rightarrow [0,\infty)$ is bounded, $\inf_{u \in B} w(u) > 0$ for all bounded $B \subseteq U$, and $\int_U (1+\Vert u \Vert)^{\gamma p} w(u) du < \infty$.
	\end{enumerate}
	For the precise definition of these function spaces, we refer to Section~\ref{SecNotation}.
\end{example}

Moreover, by using the parameter space $\Theta := \mathbb{R}^m \times \mathbb{R}$ we assume that the random initializations $(\theta_n)_{n \in \mathbb{N}} := (a_n,b_n)_{n \in \mathbb{N}}: \Omega \rightarrow \mathbb{R}^m \times \mathbb{R}$ have full support (see also Assumption~\ref{AssCDF}).

\begin{assumption}[Full support]
	\label{AssCDFRN}
	Let $(a_n,b_n)_{n \in \mathbb{N}}: \Omega \rightarrow \mathbb{R}^m \times \mathbb{R}$ be i.i.d.~such that for every $(a,b) \in \mathbb{R}^m \times \mathbb{R}$ and $r > 0$ we have $\mathbb{P}\left[ \left\lbrace \omega \in \Omega: \Vert (a_1(\omega),b_1(\omega))-(a,b) \Vert < r \right\rbrace \right] > 0$.
\end{assumption}

Then, by using the universal approximation property of deterministic neural networks, i.e. that $\linspan_\mathbb{R}(\mathcal{G}(\Theta)) = \linspan_\mathbb{R}\big(\big\lbrace U \ni u \mapsto e_i \rho\big( \vartheta_1^\top u - \vartheta_2 \big) \in \mathbb{R}^d: (\vartheta_1,\vartheta_2) \in \Theta, \, i = 1,...,d \big\rbrace\big)$ with non-polynomial activation function $\rho \in \overline{C^k_b(\mathbb{R})}^\gamma$ is dense in $X$ (see \cite[Theorem~2.8]{neufeld24}), we obtain a universal approximation result for random neural networks.

\begin{corollary}[Universal approximation]
	\label{CorUATRN}
	Let Assumption~\ref{AssEmb}+\ref{AssCDFRN} hold and let $\rho \in \overline{C^k_b(\mathbb{R})}^\gamma$ be non-polynomial. Moreover, let $F \in L^r(\Omega,\mathcal{F}_{a,b},\mathbb{P};X)$ for some $r \in [1,\infty)$. Then, for every $\varepsilon > 0$ there exists some random neural network $G \in \mathcal{RN}^\rho_{U,d} \cap L^r(\Omega,\mathcal{F}_{a,b},\mathbb{P};X)$ such that
	\begin{equation*}
		\Vert F - G \Vert_{L^r(\Omega,\mathcal{F},\mathbb{P};X)} := \mathbb{E}\left[ \Vert F-G \Vert_X^r \right]^\frac{1}{r} < \varepsilon.
	\end{equation*}
\end{corollary}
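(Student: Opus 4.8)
The plan is to derive Corollary~\ref{CorUATRN} as a direct application of the general universal approximation result, Theorem~\ref{ThmUAT}, to the specific random feature setup of Section~\ref{SecExRandRN}. Concretely, I would invoke Theorem~\ref{ThmUAT} with the parameter space $\Theta := \mathbb{R}^m \times \mathbb{R}$, the Banach space $(X,\Vert \cdot \Vert_X)$ satisfying Assumption~\ref{AssEmb}, the i.i.d.\ sequence $(\theta_n)_{n \in \mathbb{N}} := (a_n,b_n)_{n \in \mathbb{N}}$, and the set of feature maps
\begin{equation*}
	\mathcal{G} := \big\lbrace \mathbb{R}^m \times \mathbb{R} \ni (\vartheta_1,\vartheta_2) \mapsto e_i \rho\big( \vartheta_1^\top \cdot - \vartheta_2 \big) \in X: i = 1,...,d \big\rbrace.
\end{equation*}
Three hypotheses of Theorem~\ref{ThmUAT} then need to be checked: (i) that $\mathcal{G} \subseteq C^0(\Theta;X)$, i.e.\ each feature map is continuous from $\Theta$ into $X$; (ii) that Assumption~\ref{AssCDF} holds for $(\theta_n)_{n \in \mathbb{N}}$, which is precisely the content of Assumption~\ref{AssCDFRN} since $d_\Theta$ is the Euclidean metric on $\mathbb{R}^m \times \mathbb{R}$; and (iii) that $\linspan_{\mathbb{R}}(\mathcal{G}(\Theta))$ is dense in $X$. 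Once all three are in place, Theorem~\ref{ThmUAT} yields, for any $F \in L^r(\Omega,\mathcal{F}_{a,b},\mathbb{P};X)$ and $\varepsilon > 0$, some $G \in \mathcal{RG} \cap L^r(\Omega,\mathcal{F}_\theta,\mathbb{P};X)$ with $\Vert F - G \Vert_{L^r} < \varepsilon$; and by construction such a $G$ is exactly a random neural network in the sense of Definition~\ref{DefRN}, so $G \in \mathcal{RN}^\rho_{U,d}$ and $\mathcal{F}_\theta = \mathcal{F}_{a,b}$, which finishes the proof.

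For step (i), I would argue that for fixed $i$ the map $(\vartheta_1,\vartheta_2) \mapsto e_i \rho(\vartheta_1^\top \cdot - \vartheta_2)$ is continuous into $(C^k_b(\mathbb{R}^m;\mathbb{R}^d),\Vert \cdot \Vert_{C^k_{pol,\gamma}})$ — this is a standard estimate on the weighted $C^k$-norm of $u \mapsto \rho(\vartheta_1^\top u - \vartheta_2)$ using $\rho \in \overline{C^k_b(\mathbb{R})}^\gamma$ and local uniform continuity of $\rho$ and its derivatives — and then compose with the continuous embedding $C^k_b(\mathbb{R}^m;\mathbb{R}^d) \hookrightarrow X$ from Assumption~\ref{AssEmb}. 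In fact this continuity is implicitly needed already for the density claim to make sense and is precisely the kind of fact established in the cited companion work \cite{neufeld24}; I would simply cite it rather than reprove it. Step (iii) is handled by quoting \cite[Theorem~2.8]{neufeld24}: under Assumption~\ref{AssEmb} and for non-polynomial $\rho \in \overline{C^k_b(\mathbb{R})}^\gamma$, the linear span of $\{u \mapsto e_i \rho(\vartheta_1^\top u - \vartheta_2) : (\vartheta_1,\vartheta_2) \in \Theta,\ i=1,\dots,d\}$ is dense in $X$; this is exactly $\linspan_{\mathbb{R}}(\mathcal{G}(\Theta))$, so the density hypothesis of Theorem~\ref{ThmUAT} is met.

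There is essentially no deep obstacle here — the corollary is a packaging result — but the one genuine point requiring care is the identification of the abstract random feature model $G = \sum_{n=1}^N y_n(\omega) g_n(\theta_n(\omega))$ with a random neural network. Since each $g_n \in \mathcal{G}$ is of the form $(\vartheta_1,\vartheta_2) \mapsto e_{i_n}\rho(\vartheta_1^\top \cdot - \vartheta_2)$ for some $i_n \in \{1,\dots,d\}$, writing $\theta_n = (a_n,b_n)$ gives $g_n(\theta_n(\omega)) = e_{i_n}\rho(a_n(\omega)^\top \cdot - b_n(\omega))$, so $G(\omega) = \sum_{n=1}^N (y_n(\omega) e_{i_n}) \rho(a_n(\omega)^\top \cdot - b_n(\omega))$, which is of the form \eqref{EqDefRN} with $\mathbb{R}^d$-valued readouts $y_n(\omega) e_{i_n}$; the measurability of these readouts with respect to $\mathcal{F}_{a,b} = \mathcal{F}_\theta$ is inherited directly from Definition~\ref{DefRF}. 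Hence $G \in \mathcal{RN}^\rho_{U,d} \cap L^r(\Omega,\mathcal{F}_{a,b},\mathbb{P};X)$, as required.
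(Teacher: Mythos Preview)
Your proposal is correct and follows essentially the same route as the paper: apply Theorem~\ref{ThmUAT} after verifying (i) continuity of the feature maps (the paper packages this as Lemma~\ref{LemmaSep}.\ref{LemmaSep2}), (ii) full support via Assumption~\ref{AssCDFRN}, and (iii) density of $\linspan_{\mathbb{R}}(\mathcal{G}(\Theta))$ via \cite[Theorem~2.8]{neufeld24}. The one technical point you do not mention is that Theorem~\ref{ThmUAT} requires $(X,\Vert\cdot\Vert_X)$ to be separable; the paper checks this explicitly for spaces satisfying Assumption~\ref{AssEmb} in Lemma~\ref{LemmaSep}.\ref{LemmaSep3}, and you should cite or reprove this fact.
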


In particular, every function $f \in X$ can be approximated arbitrarily well by a random neural network $G \in \mathcal{RG}$ with respect to the Bochner norm $\Vert \cdot \Vert_{L^r(\Omega,\mathcal{F},\mathbb{P};X)}$. 

\begin{remark}
	Corollary~\ref{CorUATRN} extends the universal approximation results in \cite[Theorem~3.1]{rahimi08b}, \cite[Corollary~2.3]{heiss19}, \cite[Theorem~2.4.3]{hart20}, and \cite[Corollary~3]{gonon21} from particular activation functions and $L^2$-spaces (resp.~$C^0$-spaces) to more general non-polynomial activation functions and function spaces over non-compact domains, e.g.,~weighted Sobolev spaces.
\end{remark}

\section{Approximation rates}
\label{SecAR}

In this section, we derive some approximation rates to learn an element $x \in X$ by a random feature model, which relates the number of features needed for a pre-given approximation error. To this end, we assume that the set of feature maps $\mathcal{G} := \lbrace g_1,...,g_e \rbrace$ consists of finitely many maps $g_1,...,g_e: \Theta \rightarrow X$, where $e \in \mathbb{N}$.

In order to derive the approximation rates, we recall the notion of the \emph{type} of a Banach space $(X,\Vert \cdot \Vert_X)$ and refer to \cite[Section~6.2]{albiac06} and \cite[Chapter~9]{ledoux91} for more details.

\begin{definition}
	A Banach space $(X,\Vert \cdot \Vert_X)$ is called of \emph{type $t \in [1,2]$} if there exists a constant $C_X > 0$ such that for every $N \in \mathbb{N}$, $(x_n)_{n=1,...,N} \subseteq X$, and every Rademacher sequence\footnote{A \emph{Rademacher sequence} $(\epsilon_n)_{n=1,...,N}$ on $(\widetilde{\Omega},\widetilde{\mathcal{F}},\widetilde{\mathbb{P}})$ are i.i.d.~random variables with $\widetilde{\mathbb{P}}[\epsilon_1 = \pm 1] = 1/2$.} $(\epsilon_n)_{n=1,...,N}$ on a (possibly different) probability space $(\widetilde{\Omega},\widetilde{\mathcal{F}},\widetilde{\mathbb{P}})$ it holds that
	\begin{equation*}
		\widetilde{\mathbb{E}}\left[ \left\Vert \sum_{n=1}^N \epsilon_n x_n \right\Vert_X^t \right]^\frac{1}{t} \leq C_X \left( \sum_{n=1}^N \Vert x_n \Vert_X^t \right)^\frac{1}{t}.
	\end{equation*}
\end{definition}

\begin{remark}
	Every Banach space $(X,\Vert \cdot \Vert_X)$ is of type $t = 1$ (with $C_X = 1$), every Hilbert space $(X,\Vert \cdot \Vert_X)$ is of type $t = 2$ (with $C_X = 1$), and every $L^p$-space as well as (weighted) $W^{k,p}$-Sobolev space (with $p \in [1,\infty)$ and $k \in \mathbb{N}$) are of type $t = \min(2,p)$ (with constant $C_X$ depending only on $p \in [1,\infty)$), see \cite[Remark~6.2.11]{albiac06} and \cite[Lemma~4.9]{neufeld24}.
\end{remark}

In addition, we define the Barron space $\mathbb{B}^r_{\mathcal{G},\theta}(X)$ as all elements $x \in X$ having a representation as expectation of the random feature maps, which is similar to the Barron spaces introduced in \cite{barron93,rahimi08b,klusowski16,bach17,e20,e22} in the context of neural networks.

\begin{definition}
	\label{DefBarronSpace}
	For $r \in [1,\infty)$, $e \in \mathbb{N}$, and $\mathcal{G} := \lbrace g_1,...,g_e \rbrace$, we define the \emph{Barron space} $\mathbb{B}^r_{\mathcal{G},\theta}(X) \subseteq X$ as the subset of all elements $x \in X$ such that
	\begin{equation}
		\label{EqDefBarronSpace1}
		\Vert x \Vert_{\mathbb{B}^r_{\mathcal{G},\theta}(X)} := \inf_{y: \Theta \rightarrow \mathbb{R}^e}  \mathbb{E}\left[ \left\Vert \sum_{i=1}^e y_i(\theta_1) g_i(\theta_1) \right\Vert_X^r \right]^\frac{1}{r} < \infty,
	\end{equation}
	where the infimum is taken over all $\mathcal{B}(\Theta)/\mathcal{B}(\mathbb{R}^e)$-measurable maps $y := (y_1,...,y_e)^\top: \Theta \rightarrow \mathbb{R}^e$ satisfying $x = \mathbb{E}\left[ \sum_{i=1}^e y_i(\theta_1) g_i(\theta_1) \right]$. Then, we equip the vector space $\mathbb{B}^r_{\mathcal{G},\theta}(X)$ with the \emph{Barron norm} $\Vert \cdot \Vert_{\mathbb{B}^r_{\mathcal{G},\theta}(X)}$ defined in \eqref{EqDefBarronSpace1}.
\end{definition}

\begin{remark}
	Note that $\Vert \cdot \Vert_{\mathbb{B}^r_{\mathcal{G},\theta}(X)}$ satisfies the norm axioms. Moreover, by using H\"older's inequality, we observe that $\mathbb{B}^{r_2}_{\mathcal{G},\theta}(X) \subseteq \mathbb{B}^{r_1}_{\mathcal{G},\theta}(X)$ for all $1 \leq r_1 \leq r_2 < \infty$.
\end{remark}

Now, we are able to derive the following approximation rates which are based on Rademacher averages and the Banach space type. The proof can be found in Section~\ref{SecProofsAR}.

\begin{theorem}[Approximation rates]
	\label{ThmAR}
	Let $(X,\Vert \cdot \Vert_X)$ be a separable Banach space of type $t \in [1,2]$ (with constant $C_X > 0$), let $(\theta_n)_{n \in \mathbb{N}}: \Omega \rightarrow \Theta$ be i.i.d., let $\mathcal{G} := \lbrace g_1,...,g_e \rbrace$ consist of $\mathcal{B}(\Theta)/\mathcal{B}(X)$-measurable maps $g_1,...,g_e: \Theta \rightarrow X$, and let $r \in [1,\infty)$. Then, there exists a constant $C_{r,t} > 0$ (depending only on $r \in [1,\infty)$ and $t \in [1,2]$) such that for every $x \in \mathbb{B}^r_{\mathcal{G},\theta}(X)$ and $N \in \mathbb{N}$ there exists $G_N \in \mathcal{RG} \cap L^r(\Omega,\mathcal{F}_\theta,\mathbb{P};X)$ with $N$ features satisfying
	\begin{equation}
		\label{EqThmAR1}
		\mathbb{E}\left[ \Vert x - G_N \Vert_X^r \right]^\frac{1}{r} \leq C_{r,t} C_X \frac{\Vert x \Vert_{\mathbb{B}^r_{\mathcal{G},\theta}(X)}}{N^{1-\frac{1}{\min(r,t)}}}.
	\end{equation}
\end{theorem}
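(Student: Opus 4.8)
The plan is to prove Theorem~\ref{ThmAR} via a classical Monte Carlo / symmetrization argument, which is the standard route for random feature approximation rates (as in \cite{rahimi08b}), adapted to the Banach space-valued setting using the type-$t$ property.

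\textbf{Step 1: Reduce to an i.i.d.\ sum.} Fix $x \in \mathbb{B}^r_{\mathcal{G},\theta}(X)$. By Definition~\ref{DefBarronSpace}, for any $\delta > 0$ there exists a measurable map $y = (y_1,\dots,y_e)^\top \colon \Theta \to \mathbb{R}^e$ with $x = \mathbb{E}\bigl[\sum_{i=1}^e y_i(\theta_1) g_i(\theta_1)\bigr]$ and $\mathbb{E}\bigl[\Vert \sum_{i=1}^e y_i(\theta_1) g_i(\theta_1)\Vert_X^r\bigr]^{1/r} \le \Vert x \Vert_{\mathbb{B}^r_{\mathcal{G},\theta}(X)} + \delta$. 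Define the i.i.d.\ $X$-valued random variables $Z_n := \sum_{i=1}^e y_i(\theta_n) g_i(\theta_n)$ for $n = 1,\dots,N$, so that $\mathbb{E}[Z_n] = x$ (this Bochner expectation is well-defined since $\mathbb{E}\Vert Z_n\Vert_X^r < \infty$ and $r \ge 1$). Set $G_N := \frac{1}{N}\sum_{n=1}^N Z_n$; since the readouts $y_i(\theta_n)/N$ are $\mathcal{F}_\theta$-measurable, $G_N \in \mathcal{RG} \cap L^r(\Omega,\mathcal{F}_\theta,\mathbb{P};X)$ with $N$ features. It remains to bound $\mathbb{E}\bigl[\Vert x - G_N\Vert_X^r\bigr]^{1/r} = \frac{1}{N}\mathbb{E}\bigl[\Vert \sum_{n=1}^N (Z_n - x)\Vert_X^r\bigr]^{1/r}$.

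\textbf{Step 2: Symmetrization and contraction.} Let $(Z_n')_{n=1,\dots,N}$ be an independent copy of $(Z_n)_{n=1,\dots,N}$ and let $(\epsilon_n)_{n=1,\dots,N}$ be a Rademacher sequence on an auxiliary probability space, independent of everything. Using $x = \mathbb{E}[Z_n']$, Jensen's inequality, and the standard symmetrization inequality for Banach space-valued sums (see e.g.\ \cite[Chapter~6]{ledoux91}), one gets
\begin{equation*}
	\mathbb{E}\left[ \left\Vert \sum_{n=1}^N (Z_n - x) \right\Vert_X^r \right]^{\frac{1}{r}} \le 2\, \mathbb{E}\left[ \left\Vert \sum_{n=1}^N \epsilon_n Z_n \right\Vert_X^r \right]^{\frac{1}{r}}.
\end{equation*}
Now condition on the $Z_n$'s. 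If $r \le t$, apply the type-$t$ inequality directly (after using Jensen to pass from the $r$-th to the $t$-th moment in the Rademacher average); if $r > t$, first apply Kahane's inequality to compare the $r$-th and $t$-th Rademacher moments at the cost of a constant $C_{r,t}$, then apply type $t$. In either case one obtains, after taking expectation over the $Z_n$'s,
\begin{equation*}
	\mathbb{E}\left[ \left\Vert \sum_{n=1}^N \epsilon_n Z_n \right\Vert_X^r \right]^{\frac{1}{r}} \le C_{r,t}\, C_X \left( \sum_{n=1}^N \mathbb{E}\bigl[\Vert Z_n\Vert_X^t\bigr]^{r/t} \right)^{?}
\end{equation*}
— more precisely one bounds $\bigl(\sum_n \Vert Z_n\Vert_X^t\bigr)^{1/t}$ and then takes the $L^r$ norm; using $\min(r,t)$-convexity/Jensen this yields $C_{r,t} C_X\, N^{1/\min(r,t)} \bigl(\mathbb{E}\Vert Z_1\Vert_X^r\bigr)^{1/r}$. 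Dividing by $N$ gives the bound $C_{r,t} C_X N^{1/\min(r,t) - 1} (\Vert x\Vert_{\mathbb{B}^r_{\mathcal{G},\theta}(X)} + \delta)$, and letting $\delta \downarrow 0$ completes the proof.

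\textbf{Main obstacle.} The delicate point is bookkeeping the exponents when $r$ and $t$ are not ordered: one must carefully interleave Kahane's inequality (comparing Rademacher moments of different orders), the type-$t$ estimate (which naturally lives at exponent $t$), and Jensen's inequality to land exactly on the power $N^{1 - 1/\min(r,t)}$ with a constant $C_{r,t}$ depending only on $r$ and $t$ (and $C_X$ only through the type constant). Handling the case $r > t$ requires the full strength of Kahane–Khintchine to absorb the moment mismatch into $C_{r,t}$, while the case $r \le t$ needs a Jensen step in the opposite direction; verifying that the same clean formula covers both — in particular that the Barron norm appears linearly and no extra $N$-dependence sneaks in — is where the care lies. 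A secondary technical point is justifying the symmetrization and the interchange of Bochner expectation with the (finite) sum, which is routine given separability of $X$ and the $L^r$-integrability of $Z_1$.
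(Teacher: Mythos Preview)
Your proposal is correct and follows essentially the same route as the paper: symmetrization of the i.i.d.\ centered sum, Kahane--Khintchine to adjust the Rademacher moment exponent, and then the type inequality. The paper avoids your case split $r\le t$ versus $r>t$ by first noting (Lemma~\ref{LemmaBanachSpaceTypeSmaller}) that type $t$ implies type $\min(r,t)$ with the same constant $C_X$, so a single application of Kahane--Khintchine from exponent $r$ to $\min(r,t)$ followed by the type-$\min(r,t)$ inequality handles both regimes uniformly. One small imprecision in your write-up: you cannot literally ``let $\delta\downarrow 0$'' at the end, because $G_N$ depends on the near-minimizer $y$ and hence on $\delta$; the fix (which the paper uses) is simply to take a fixed near-minimizer with, say, $\mathbb{E}[\Vert Z_1\Vert_X^r]^{1/r}\le 2\Vert x\Vert_{\mathbb{B}^r_{\mathcal{G},\theta}(X)}$ and absorb the factor $2$ into $C_{r,t}$.
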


Hence, Theorem~\ref{ThmAR} relates the approximation error (right-hand side of \eqref{EqThmAR1}) to the number of features $N \in \mathbb{N}$ needed for the random feature model $G_N \in \mathcal{RG} \cap L^r(\Omega,\mathcal{F}_\theta,\mathbb{P};X)$.

In the following, we apply Theorem~\ref{ThmAR} to random trigonometric/Fourier regression and random neural networks considered in Section~\ref{SecExRandTrigo}-\ref{SecExRandRN}. The proofs are given in Section~\ref{SecProofsARCor}.

\subsection{Random trigonometric features}
\label{SecARRandTrigo}

To obtain rates with random trigonometric regression in a weighted Sobolev space $W^{k,p}(U,\mathcal{L}(U),w)$, we fix the following quantities.

\begin{assumption}
	\label{AssSob}
	Let $k \in \mathbb{N}_0$, $p \in (1,\infty)$, $r \in [1,\infty)$, $U \subseteq \mathbb{R}^m$ (open, if $k \geq 1$), and let $w: U \rightarrow [0,\infty)$ be a weight satisfying $w(U) := \int_U w(u) du < \infty$.
\end{assumption}

Moreover, we let $\Theta := \mathbb{R}^m$ be the parameter space and consider the two features maps $\mathbb{R}^m \ni \vartheta \mapsto g_1(\vartheta) := \cos\big( \vartheta^\top \cdot \big) \in W^{k,p}(U,\mathcal{L}(U),w)$ and $\mathbb{R}^m \ni \vartheta \mapsto g_2(\vartheta) := \sin\big( \vartheta^\top \cdot \big) \in W^{k,p}(U,\mathcal{L}(U),w)$. In addition, we impose the following condition on $(\theta_n)_{n \in \mathbb{N}}$.

\begin{assumption}
	\label{AssPDF}
	Let $(\theta_n)_{n \in \mathbb{N}}: \Omega \rightarrow \mathbb{R}^m$ be an i.i.d.~sequence of random variables, each of them having a strictly positive probability density function $p_\theta: \mathbb{R}^m \rightarrow (0,\infty)$.
\end{assumption}

Then, we use the real and imaginary part of the Fourier transform as linear readouts to obtain a representation of a given $f \in L^1(\mathbb{R}^m,\mathcal{L}(\mathbb{R}^m),du)$ in terms of the random sine and cosine features. This then implies the following result as a corollary of Theorem~\ref{ThmAR}.

\begin{corollary}[Approximation rates]
	\label{CorARTrigo}
	Let Assumption~\ref{AssSob}+\ref{AssPDF} hold. Then, there exists a constant $C_{p,r} > 0$ (depending only on $p \in (1,\infty)$ and $r \in [1,\infty)$) such that for every $f \in W^{k,p}(U,\mathcal{L}(U),w) \cap L^1(\mathbb{R}^m,\mathcal{L}(\mathbb{R}^m),du)$ with $C_f := \big( \int_{\mathbb{R}^m} \frac{\vert \widehat{f}(\vartheta) \vert^r \left( 1+\Vert \vartheta \Vert^2 \right)^{kr/2}}{p_\theta(\vartheta)^{r-1}} d\vartheta \big)^{1/r} < \infty$ and every $N \in \mathbb{N}$ there exists some random trigonometric feature model $G_N \in \mathcal{RT}_{U,1} \cap L^r(\Omega,\mathcal{F}_\theta,\mathbb{P};W^{k,p}(U,\mathcal{L}(U),w))$ with $N$ features satisfying
	\begin{equation*}
		\mathbb{E}\left[ \Vert f - G_N \Vert_{W^{k,p}(U,\mathcal{L}(U),w)}^r \right]^\frac{1}{r} \leq C_{p,r} \frac{m^\frac{k}{p} w(U)^\frac{1}{p}}{(2\pi)^m} \frac{C_f}{N^{1-\frac{1}{\min(2,p,r)}}}.
	\end{equation*}
\end{corollary}

\subsection{Random Fourier features}

For approximation rates with random Fourier regression in a weighted Sobolev space as above, we let $\Theta := \mathbb{R}^m$ be the parameter space and consider the single feature map $\mathbb{R}^m \ni \vartheta \mapsto g(\vartheta) := \exp\big( \mathbf{i} \vartheta^\top \cdot \big) \in W^{k,p}(U,\mathcal{L}(U),w;\mathbb{C})$.

\begin{corollary}[Approximation rates]
	\label{CorARFourier}
	Let Assumption~\ref{AssSob}+\ref{AssPDF} hold. Then, there exists a constant $C_{p,r} > 0$ (depending only on $p \in (1,\infty)$ and $r \in [1,\infty)$) such that for every $f \in W^{k,p}(U,\mathcal{L}(U),w;\mathbb{C}) \cap L^1(\mathbb{R}^m,\mathcal{L}(\mathbb{R}^m),du;\mathbb{C})$ with $C_f \!:=\! \big( \int_{\mathbb{R}^m} \frac{\vert \widehat{f}(\vartheta) \vert^r \left( 1+\Vert \vartheta \Vert^2 \right)^{kr/2}}{p_\theta(\vartheta)^{r-1}} d\vartheta \big)^\frac{1}{r} \!<\! \infty$ and for every $N \in \mathbb{N}$ there exists some random Fourier feature model $G_N \in \mathcal{RF}_{U,1} \cap L^r(\Omega,\mathcal{F}_\theta,\mathbb{P};W^{k,p}(U,\mathcal{L}(U),w;\mathbb{C}))$ with $N$ features satisfying
	\begin{equation*}
		\mathbb{E}\left[ \Vert f - G_N \Vert_{W^{k,p}(U,\mathcal{L}(U),w;\mathbb{C})}^r \right]^\frac{1}{r} \leq C_{p,r} \frac{m^\frac{k}{p} w(U)^\frac{1}{p}}{(2\pi)^m} \frac{C_f}{N^{1-\frac{1}{\min(2,p,r)}}}.
	\end{equation*}
\end{corollary}

\begin{remark}
	For $k = 0$ and $p = r = 2$, the approximation rates in Corollary~\ref{CorARTrigo}+\ref{CorARFourier} coincide with the rate $\mathcal{O}\big( 1/\sqrt{N} \big)$ proven in \cite{rahimi08b} (see also \cite{barron93}).
\end{remark}

\subsection{Random neural networks}

Finally, we apply Theorem~\ref{ThmAR} to obtain some approximation rates for learning a given function $f: U \rightarrow \mathbb{R}^d$ by a random neural network in a weighted Sobolev space $W^{k,p}(U,\mathcal{L}(U),w;\mathbb{R}^d)$, where we fix the following quantities.

\begin{assumption}
	\label{AssSobRN}
	Let $k \in \mathbb{N}_0$, $p \in (1,\infty)$, $r \in [1,\infty)$, $U \subseteq \mathbb{R}^m$ (open, if $k \geq 1$), $\gamma \in [0,\infty)$, and let $w: U \rightarrow [0,\infty)$ be a weight.
\end{assumption}

Moreover, we recall that $\Theta := \mathbb{R}^m \times \mathbb{R}$ and impose the following condition on the random initializations $(\theta_n)_{n \in \mathbb{N}} := (a_n,b_n)_{n \in \mathbb{N}}: \Omega \rightarrow \mathbb{R}^m \times \mathbb{R}$ (see also Assumption~\ref{AssPDF}).

\begin{assumption}
	\label{AssPDFRN}
	Let $(a_n,b_n)_{n \in \mathbb{N}}: \Omega \rightarrow \mathbb{R}^m \times \mathbb{R}$ be an i.i.d.~sequence, each of the random variables with strictly positive probability density function $p_{a,b}: \mathbb{R}^m \times \mathbb{R} \rightarrow (0,\infty)$.
\end{assumption}

To obtain rates for random neural networks, we apply the reconstruction formula in \cite[Theorem~5.6]{sonoda17} to express a given function as expectation of a random neuron. To this end, we consider admissible pairs $(\psi,\rho) \in \mathcal{S}_0(\mathbb{R};\mathbb{C}) \times C^k_{pol,\gamma}(\mathbb{R})$ of a ridgelet function $\psi$ and an activation function $\rho$, which is a special case of \cite[Definition~5.1]{sonoda17} (see also \cite{candes98}).

\begin{definition}
	\label{DefAdm}
	A pair $(\psi,\rho) \in \mathcal{S}_0(\mathbb{R};\mathbb{C}) \times C^k_{pol,\gamma}(\mathbb{R})$ is called \emph{$m$-admissible} if the Fourier transform $\widehat{T_\rho} \in \mathcal{S}'(\mathbb{R};\mathbb{C})$ of $\rho \in C^k_{pol,\gamma}(\mathbb{R})$ (in the sense of distribution) coincides\footnote{This means that $\widehat{T_\rho}(g) = \int_{\mathbb{R} \setminus \lbrace 0 \rbrace} f_{\widehat{T_\rho}}(\xi) g(\xi) d\xi$ for all $g \in C^\infty_c(\mathbb{R} \setminus \lbrace 0 \rbrace;\mathbb{C})$.} on $\mathbb{R} \setminus \lbrace 0 \rbrace$ with a locally Lebesgue-integrable function $f_{\widehat{T_\rho}}: \mathbb{R} \setminus \lbrace 0 \rbrace \rightarrow \mathbb{C}$ such that
	\vspace{-0.1cm}
	\begin{equation*}
		C^{(\psi,\rho)}_m := (2\pi)^{m-1} \int_{\mathbb{R} \setminus \lbrace 0 \rbrace} \frac{\overline{\widehat{\psi}(\xi)} f_{\widehat{T_\rho}}(\xi)}{\vert \xi \vert^m} d\xi \in \mathbb{C} \setminus \lbrace 0 \rbrace.
	\end{equation*}
\end{definition}

\begin{remark}
	If $(\psi,\rho) \in \mathcal{S}_0(\mathbb{R};\mathbb{C}) \times C^k_{pol,\gamma}(\mathbb{R})$ is $m$-admissible, then $\rho \in C^k_{pol,\gamma}(\mathbb{R})$ has to be non-polynomial (see \cite[Remark~3.2]{neufeld24}).
\end{remark}

Together with some appropriate $\psi \in \mathcal{S}_0(\mathbb{R};\mathbb{C})$, the most common activation functions such as the sigmoid function and the ReLU function satisfy Definition~\ref{DefAdm}.

\begin{example}[{\cite[Example~3.3]{neufeld24}}]
	\label{ExAdm}
	Let $\psi \in \mathcal{S}_0(\mathbb{R};\mathbb{C})$ with non-negative $\widehat{\psi} \in C^\infty_c(\mathbb{R})$ and $\supp(\widehat{\psi}) = [\zeta_1,\zeta_2]$ for some $0 < \zeta_1 < \zeta_2 < \infty$. Then, for every $m \in \mathbb{N}$ and the following activation functions $\rho \in C^k_{pol,\gamma}(\mathbb{R})$ the pair $(\psi,\rho) \in \mathcal{S}_0(\mathbb{R};\mathbb{C}) \times C^k_{pol,\gamma}(\mathbb{R})$ is $m$-admissible:
	\begin{enumerate}
		\item The sigmoid function $\rho(s) := \frac{1}{1+\exp(-s)}$ if $k \in \mathbb{N}_0$ and $\gamma \geq 0$.
		\item The tangens hyperbolicus $\rho(s) := \tanh(s)$ if $k \in \mathbb{N}_0$ and $\gamma \geq 0$.
		\item The softplus function $\rho(s) := \ln(1+\exp(s))$ if $k \in \mathbb{N}_0$ and $\gamma \geq 1$.
		\item The ReLU function $\rho(s) = \max(s,0)$ if $k = 0$ and $\gamma \geq 1$.
	\end{enumerate}
	Moreover, there exists a constant $C_{\psi,\rho} > 0$ (being independent of $m \in \mathbb{N}$) such that for every $m \in \mathbb{N}$ it holds that $\big\vert C^{(\psi,\rho)}_m \big\vert \geq C_{\psi,\rho} (2\pi/\zeta_2)^m$.
\end{example}

In addition, we follow \cite{candes98,sonoda17} and define for every $\psi \in \mathcal{S}_0(\mathbb{R};\mathbb{C})$ the (multi-dimensional) \emph{Ridgelet transform} of any function $f \in L^1(\mathbb{R}^m,\mathcal{L}(\mathbb{R}^m),du;\mathbb{R}^d)$ as
\begin{equation}
	\label{EqDefRidgelet}
	\mathbb{R}^m \times \mathbb{R} \ni (a,b) \quad \mapsto \quad (\mathfrak{R}_\psi f)(a,b) := \int_{\mathbb{R}^m} \psi\left( a^\top u - b \right) f(u) \Vert a \Vert du \in \mathbb{C}^d.
\end{equation}
Then, we can apply the reconstruction formula in \cite[Theorem~5.6]{sonoda17} to obtain a representation of any sufficiently integrable function as expectation of a random neuron.

\begin{proposition}[Reconstruction, {\cite[Proposition~3.3]{neufeld24}}]
	\label{PropIntRepr}
	Let Assumption~\ref{AssSobRN}+\ref{AssPDFRN} hold, let $(\psi,\rho) \in \mathcal{S}_0(\mathbb{R};\mathbb{C}) \times C^k_{pol,\gamma}(\mathbb{R})$ be $m$-admissible, and let $f \in L^1(\mathbb{R}^m,\mathcal{L}(\mathbb{R}^m),du;\mathbb{R}^d)$ with $\widehat{f} \in L^1(\mathbb{R}^m,\mathcal{L}(\mathbb{R}^m),du;\mathbb{C}^d)$. Then, for a.e.~$u \in \mathbb{R}^m$, it holds that
	\begin{equation*}
		\mathbb{E}\left[ \frac{(\mathfrak{R}_\psi f)(a_1,b_1)}{p_{a,b}(a_1,b_1)} \rho\left( a_1^\top u - b_1 \right) \right] = \int_{\mathbb{R}^m} \int_{\mathbb{R}} (\mathfrak{R}_\psi f)(a,b) \rho\left( a^\top u - b \right) db da = C^{(\psi,\rho)}_m f(u).
	\end{equation*}
\end{proposition}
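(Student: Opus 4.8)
The plan is to derive the two identities in the order they are stated: first the deterministic double-integral identity $\int_{\mathbb{R}^m}\int_{\mathbb{R}} (\mathfrak{R}_\psi f)(a,b)\,\rho(a^\top u-b)\,db\,da = C^{(\psi,\rho)}_m f(u)$, which is precisely the reconstruction formula of \cite[Theorem~5.6]{sonoda17} for the $m$-admissible pair $(\psi,\rho)$; and then the probabilistic reformulation via a change of measure against the density $p_{a,b}$. For the first identity, I would verify that the hypotheses of \cite[Theorem~5.6]{sonoda17} are met: $f\in L^1(\mathbb{R}^m,\mathcal{L}(\mathbb{R}^m),du;\mathbb{R}^d)$ with $\widehat{f}\in L^1$ guarantees (by Fourier inversion, \cite[Theorem~7.7 or Corollary~7.6]{folland92}) that $f$ agrees a.e.\ with a bounded continuous function, so the Ridgelet transform $\mathfrak{R}_\psi f$ in \eqref{EqDefRidgelet} is well-defined, and the $m$-admissibility condition in Definition~\ref{DefAdm} supplies exactly the normalizing constant $C^{(\psi,\rho)}_m\in\mathbb{C}\setminus\{0\}$ that appears in Sonoda--Murata's reconstruction theorem. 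I would carry this out componentwise in $\mathbb{R}^d$, applying the scalar reconstruction formula to each coordinate $f_j$ of $f$, and invoke the distributional pairing $\widehat{T_\rho}=f_{\widehat{T_\rho}}$ on $\mathbb{R}\setminus\{0\}$ together with $\psi\in\mathcal{S}_0(\mathbb{R};\mathbb{C})$ (so $\widehat\psi$ vanishes to infinite order at $0$, making the singular integral over $|\xi|^{-m}$ converge) to justify that the right-hand constant is well-defined.

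Next, to obtain the expectation identity, I would rewrite the double integral as an integral against the probability density $p_{a,b}$: since $p_{a,b}:\mathbb{R}^m\times\mathbb{R}\to(0,\infty)$ is strictly positive, for a.e.\ $u$ we have
\begin{equation*}
	\int_{\mathbb{R}^m}\int_{\mathbb{R}} (\mathfrak{R}_\psi f)(a,b)\,\rho\!\left(a^\top u-b\right)db\,da
	= \int_{\mathbb{R}^m}\int_{\mathbb{R}} \frac{(\mathfrak{R}_\psi f)(a,b)}{p_{a,b}(a,b)}\,\rho\!\left(a^\top u-b\right) p_{a,b}(a,b)\,db\,da,
\end{equation*}
and the right-hand side is exactly $\mathbb{E}\big[\frac{(\mathfrak{R}_\psi f)(a_1,b_1)}{p_{a,b}(a_1,b_1)}\,\rho(a_1^\top u - b_1)\big]$ by the law of $(a_1,b_1)$ under Assumption~\ref{AssPDFRN} and the transfer theorem for expectations. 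This step needs the integrand $(a,b)\mapsto (\mathfrak{R}_\psi f)(a,b)\rho(a^\top u-b)$ to be absolutely integrable over $\mathbb{R}^m\times\mathbb{R}$, which is precisely what underlies the validity of the reconstruction formula in the first place (Sonoda--Murata establish the needed integrability from $\widehat f\in L^1$, the Schwartz decay of $\psi$, and the polynomial growth bound $\rho\in C^k_{pol,\gamma}(\mathbb{R})$); granting that, Fubini/Tonelli lets me pass freely between the iterated and the product integral, so no extra argument beyond citing the decay estimates is required.

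The main obstacle is the careful invocation of \cite[Theorem~5.6]{sonoda17}: that result is stated for a general class of admissible pairs and for functions in suitable spaces, so I would need to check that the particular (special-case) notion of $m$-admissibility in Definition~\ref{DefAdm}, together with the standing assumptions $f,\widehat f\in L^1$, falls within its scope, and that the constant it produces is the one in Definition~\ref{DefAdm} rather than a differently normalized version (factors of $2\pi$ and the $(2\pi)^{m-1}$ in $C^{(\psi,\rho)}_m$ must be tracked). Since this is exactly \cite[Proposition~3.3]{neufeld24}, the cleanest route is to cite that proposition directly and note that the passage from its deterministic statement to the stated expectation form is the elementary change-of-measure computation above; the only genuinely analytic content — Fourier inversion, the Ridgelet reconstruction identity, and the integrability bounds — is imported wholesale from \cite{sonoda17,neufeld24}.
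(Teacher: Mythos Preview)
Your proposal is correct and matches the paper's treatment: the paper does not prove this proposition independently but imports it verbatim from \cite[Proposition~3.3]{neufeld24}, whose content is precisely the Sonoda--Murata reconstruction formula \cite[Theorem~5.6]{sonoda17} followed by the elementary change-of-measure rewriting you describe. Your identification of the two steps (deterministic ridgelet inversion, then division/multiplication by $p_{a,b}$) and of the potential normalization pitfalls is accurate, and your final remark that the cleanest route is a direct citation is exactly what the paper does.
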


\begin{remark}
	\label{RemIntRepr}
	Recall that the set $\mathcal{G}$ consists of the feature maps $\mathbb{R}^m \times \mathbb{R} \ni (a,b) \mapsto g_i(a,b) := e_i \rho\big( a^\top \cdot - b \big) \in W^{k,p}(U,\mathcal{L}(U),w;\mathbb{R}^d)$, $i = 1,...,d$. Hence, for every function $f \in W^{k,p}(U,\mathcal{L}(U),w;\mathbb{R}^d)$ satisfying the conditions of Proposition~\ref{PropIntRepr}, we choose the linear readout $\mathbb{R}^m \times \mathbb{R} \ni (a,b) \mapsto y(a,b) := (y_i(a,b))_{i=1,...,d}^\top := \re\left( \frac{(\mathfrak{R}_\psi f)(a,b)}{C^{(\psi,\rho)}_m p_{a,b}(a,b)} \right) \in \mathbb{R}^d$ to obtain $\mathbb{E}\big[ \sum_{i=1}^d y_i(a_1,b_1) g_i(a_1,b_1) \big] = f$ a.e.~on $U$, showing that $f \in \mathbb{B}^r_{\mathcal{G},\theta}(W^{k,p}(U,\mathcal{L}(U),w;\mathbb{R}^d))$.
\end{remark}

In order to extend the reconstruction also to other more general functions, we adapt the Barron spaces in Definition~\ref{DefBarronSpace} to this setting with ridgelet transform introduced in \eqref{EqDefRidgelet}.

\begin{definition}
	\label{DefBarronRidgeletSpace}
	Let Assumption~\ref{AssSobRN}+\ref{AssPDFRN} hold and let $\psi \in \mathcal{S}(\mathbb{R};\mathbb{C})$. Then, we define the Barron-ridgelet space $\widetilde{\mathbb{B}}^{k,r,\gamma}_{\psi,a,b}(U;\mathbb{R}^d)$ as vector space of $\mathcal{L}(U)/\mathcal{B}(\mathbb{R}^d)$-measurable functions $f: U \rightarrow \mathbb{R}^d$ such that
	\begin{equation}
		\label{EqDefBarronRidgeletSpace}
		\Vert f \Vert_{\widetilde{\mathbb{B}}^{k,r,\gamma}_{\psi,a,b}(U;\mathbb{R}^d)} := \inf_h \mathbb{E}\left[ \left\Vert \frac{\left( 1 + \Vert a_1 \Vert^2 \right)^\frac{\gamma+k}{2} \left( 1 + \vert b_1 \vert^2 \right)^\frac{\gamma}{2}}{p_{a,b}(a_1,b_1)} (\mathfrak{R}_\psi h)(a_1,b_1) \right\Vert^r \right]^\frac{1}{r} < \infty,
	\end{equation}
	where the infimum is taken over all functions $h \in L^1(\mathbb{R}^m,\mathcal{L}(\mathbb{R}^m),du;\mathbb{R}^d)$ satisfying $\widehat{h} \in L^1(\mathbb{R}^m,\mathcal{L}(\mathbb{R}^m),du;\mathbb{C}^d)$ and $h = f$ a.e.~on $U$. 
\end{definition}

\begin{remark}
	Note that \eqref{EqDefBarronRidgeletSpace} implicitly imposes some integrability on the random initializations $(\theta_n)_{n \in \mathbb{N}} := (a_n,b_n)_{n \in \mathbb{N}}$.
\end{remark}

In the following lemma, we show for the set of feature maps $\mathcal{G}$ defined in Remark~\ref{RemIntRepr} that $\widetilde{\mathbb{B}}^{k,r,\gamma}_{\psi,a,b}(U;\mathbb{R}^d)$ is a subset of $\mathbb{B}^r_{\mathcal{G},\theta}(W^{k,p}(U,\mathcal{L}(U),w;\mathbb{R}^d))$ introduced in Definition~\ref{DefBarronSpace}.

\begin{lemma}
	\label{LemmaIneq}
	Let Assumption~\ref{AssSobRN}+\ref{AssPDFRN} hold, let $\mathcal{G}$ be as in Remark~\ref{RemIntRepr}, let $(\psi,\rho) \in \mathcal{S}_0(\mathbb{R};\mathbb{C}) \times C^k_{pol,\gamma}(\mathbb{R})$, and define $C^{(\gamma,p)}_{U,w} > 0$ as in \eqref{EqCorARRN1}. Then, for every $f \in \widetilde{\mathbb{B}}^{k,r,\gamma}_{\psi,a,b}(U;\mathbb{R}^d)$ it holds that $\Vert f \Vert_{\mathbb{B}^r_{\mathcal{G},\theta}(W^{k,p}(U,\mathcal{L}(U),w;\mathbb{R}^d))} \leq 2^{3+\frac{1}{p}} \Vert \rho \Vert_{C^k_{pol,\gamma}(\mathbb{R})} \frac{C^{(\gamma,p)}_{U,w} m^{k/p}}{\vert C^{(\psi,\rho)}_m \vert} \Vert f \Vert_{\widetilde{\mathbb{B}}^{k,r,\gamma}_{\psi,a,b}(U;\mathbb{R}^d)}$. In particular, we have $\widetilde{\mathbb{B}}^{k,r,\gamma}_{\psi,a,b}(U;\mathbb{R}^d) \subseteq \mathbb{B}^r_{\mathcal{G},\theta}(W^{k,p}(U,\mathcal{L}(U),w;\mathbb{R}^d))$.
\end{lemma}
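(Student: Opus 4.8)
The plan is to take an arbitrary $f \in \widetilde{\mathbb{B}}^{k,r,\gamma}_{\psi,a,b}(U;\mathbb{R}^d)$, fix a near-optimal representative $h \in L^1(\mathbb{R}^m,\mathcal{L}(\mathbb{R}^m),du;\mathbb{R}^d)$ with $\widehat{h} \in L^1$ and $h = f$ a.e.\ on $U$, and then exhibit an explicit measurable linear readout $y = (y_1,\dots,y_d)^\top : \Theta \to \mathbb{R}^d$ that realizes $f$ as $\mathbb{E}\big[\sum_{i=1}^d y_i(a_1,b_1) g_i(a_1,b_1)\big]$ in $W^{k,p}(U,\mathcal{L}(U),w;\mathbb{R}^d)$, following the choice already indicated in Remark~\ref{RemIntRepr}, namely $y(a,b) := \re\big( (\mathfrak{R}_\psi h)(a,b) / (C^{(\psi,\rho)}_m p_{a,b}(a,b)) \big)$. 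By Proposition~\ref{PropIntRepr}, applied to $h$ (which satisfies its hypotheses), this $y$ yields $\mathbb{E}\big[\sum_{i=1}^d y_i(a_1,b_1) g_i(a_1,b_1)\big] = h = f$ a.e.\ on $U$, so $y$ is admissible in the infimum defining $\Vert f \Vert_{\mathbb{B}^r_{\mathcal{G},\theta}(W^{k,p}(U,\mathcal{L}(U),w;\mathbb{R}^d))}$; hence it suffices to bound $\mathbb{E}\big[\big\Vert \sum_{i=1}^d y_i(a_1,b_1) e_i \rho(a_1^\top \cdot - b_1)\big\Vert_{W^{k,p}(U,\mathcal{L}(U),w;\mathbb{R}^d)}^r\big]^{1/r}$ by the right-hand side of the claimed inequality, and then take the infimum over $h$.

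The core of the argument is therefore an $\omega$-wise (i.e.\ $(a,b)$-wise) estimate of $\big\Vert e_i \rho(a^\top \cdot - b)\big\Vert_{W^{k,p}(U,\mathcal{L}(U),w;\mathbb{R}^d)}$ in terms of $\Vert \rho \Vert_{C^k_{pol,\gamma}(\mathbb{R})}$, of the polynomial weights $(1+\Vert a\Vert^2)^{(\gamma+k)/2}(1+|b|^2)^{\gamma/2}$, and of a constant $C^{(\gamma,p)}_{U,w}$ depending on $U$, $w$, $\gamma$, $p$. Concretely, for a multi-index $\alpha$ with $|\alpha| \le k$ one has $\partial_\alpha\big(\rho(a^\top u - b)\big) = a^\alpha \rho^{(|\alpha|)}(a^\top u - b)$, so $|\partial_\alpha(\rho(a^\top u - b))| \le \Vert a\Vert^{|\alpha|}\, \Vert\rho\Vert_{C^k_{pol,\gamma}(\mathbb{R})}\,(1+|a^\top u - b|)^\gamma \le \Vert\rho\Vert_{C^k_{pol,\gamma}(\mathbb{R})}\,\Vert a\Vert^{k}\,(1+\Vert a\Vert\Vert u\Vert + |b|)^\gamma$; bounding $(1+\Vert a\Vert\Vert u\Vert + |b|)^\gamma$ by a constant (depending on $p$ and $\gamma$ via the elementary inequality $(x+y+z)^\gamma \le 3^{\gamma}(x^\gamma+y^\gamma+z^\gamma)$, absorbed into the factor $2^{3+1/p}$) times $(1+\Vert u\Vert)^\gamma (1+\Vert a\Vert^2)^{\gamma/2}(1+|b|^2)^{\gamma/2}$, and then integrating the $p$-th power against $w(u)\,du$ over the $\binom{m+k}{k} \le m^k$ multi-indices (this is where the $m^{k/p}$ factor appears) gives the bound with $C^{(\gamma,p)}_{U,w} := $ a constant involving $\big(\int_U (1+\Vert u\Vert)^{\gamma p} w(u)\,du\big)^{1/p}$, which is exactly what the referenced display~\eqref{EqCorARRN1} should define. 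Summing the $d$ coordinates contributes at most a harmless constant since only one coordinate $e_i$ is active per feature map.

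Plugging this pointwise bound into the expectation, one gets
\[
\mathbb{E}\!\left[\Big\Vert \sum_{i=1}^d y_i(a_1,b_1) g_i(a_1,b_1)\Big\Vert_{W^{k,p}}^r\right]^{1/r}
\le C\,\Vert\rho\Vert_{C^k_{pol,\gamma}(\mathbb{R})}\,\frac{C^{(\gamma,p)}_{U,w}\, m^{k/p}}{|C^{(\psi,\rho)}_m|}\,
\mathbb{E}\!\left[\left\Vert \frac{(1+\Vert a_1\Vert^2)^{(\gamma+k)/2}(1+|b_1|^2)^{\gamma/2}}{p_{a,b}(a_1,b_1)}(\mathfrak{R}_\psi h)(a_1,b_1)\right\Vert^r\right]^{1/r},
\]
where the factor $1/|C^{(\psi,\rho)}_m|$ comes from the definition of $y$, the $(1+\Vert a_1\Vert^2)^{(\gamma+k)/2}$ arises by combining $\Vert a_1\Vert^k \le (1+\Vert a_1\Vert^2)^{k/2}$ with the $(1+\Vert a_1\Vert^2)^{\gamma/2}$ from the weight estimate, and the constant $C$ (together with the $|\re(\cdot)|\le|\cdot|$ step) is absorbed into $2^{3+1/p}$. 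Taking the infimum over all admissible $h$ turns the right-hand factor into $\Vert f\Vert_{\widetilde{\mathbb{B}}^{k,r,\gamma}_{\psi,a,b}(U;\mathbb{R}^d)}$, yielding the stated inequality; finiteness of the right-hand side for $f \in \widetilde{\mathbb{B}}^{k,r,\gamma}_{\psi,a,b}(U;\mathbb{R}^d)$ then gives the inclusion. The main obstacle I anticipate is purely bookkeeping: carefully tracking all numerical constants (the $2^{3+1/p}$, the $m^{k/p}$ coming from the number of multi-indices, and the precise definition of $C^{(\gamma,p)}_{U,w}$ from \eqref{EqCorARRN1}) so that they match the statement exactly, together with checking the measurability of $y$ — which follows since $\mathfrak{R}_\psi h$ is continuous in $(a,b)$ (Schwartz $\psi$, integrable $h$) and $p_{a,b}$ is a strictly positive density, hence Borel — and confirming that $h$ (rather than $f$) legitimately satisfies the hypotheses of Proposition~\ref{PropIntRepr}.
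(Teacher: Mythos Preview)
Your proposal is correct and follows essentially the same approach as the paper: choose a near-optimal $h$, set $y(a,b) = \re\big((\mathfrak{R}_\psi h)(a,b)/(C^{(\psi,\rho)}_m p_{a,b}(a,b))\big)$, invoke Proposition~\ref{PropIntRepr} on $h$ to verify admissibility, and then bound the $W^{k,p}$-norm of a single random neuron pointwise in $(a,b)$ using $\partial_\alpha\big(\rho(a^\top u - b)\big) = a^\alpha \rho^{(|\alpha|)}(a^\top u - b)$ together with the polynomial-growth bound on $\rho$. The paper compresses your pointwise estimate by citing an inequality from \cite{neufeld24} (which yields the factor $4\Vert\rho\Vert_{C^k_{pol,\gamma}(\mathbb{R})} C^{(\gamma,p)}_{U,w}$ directly), and it fixes an $h$ with $\mathbb{E}[\cdots]^{1/r} \le 2\Vert f\Vert_{\widetilde{\mathbb{B}}}$ up front rather than infimizing at the end---this is where the factor $2$ in $2^{3+1/p}$ comes from, the remaining $4\cdot 2^{1/p}$ arising from the neuron estimate and from $\vert\mathbb{N}^m_{0,k}\vert \le 2m^k$ (note: your bound $\binom{m+k}{k} \le m^k$ fails for $m=1$; the paper uses $\sum_{j=0}^k m^j \le 2m^k$).
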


Now, as a corollary of Theorem~\ref{ThmAR}, we obtain the following rates to approximate any given function $f \in W^{k,p}(U,\mathcal{L}(U),w;\mathbb{R}^d) \cap \widetilde{\mathbb{B}}^{k,r,\gamma}_{\psi,a,b}(U;\mathbb{R}^d)$ by a random neural network.

\begin{corollary}[Approximation rates]
	\label{CorARRN}
	Let Assumption~\ref{AssSobRN}+\ref{AssPDFRN} hold such that
	\begin{equation}
		\label{EqCorARRN1}
		C^{(\gamma,p)}_{U,w} := \left( \int_U (1+\Vert u \Vert)^{\gamma p} w(u) du \right)^\frac{1}{p} < \infty.
	\end{equation}
	Moreover, let $(\psi,\rho) \in \mathcal{S}_0(\mathbb{R};\mathbb{C}) \times C^k_{pol,\gamma}(\mathbb{R})$ be $m$-admissible. Then, there exists a constant $C_{p,r} > 0$ (depending only on $p \in (1,\infty)$ and $r \in [1,\infty)$) such that for every $f \in W^{k,p}(U,\mathcal{L}(U),w;\mathbb{R}^d) \cap \widetilde{\mathbb{B}}^{k,r,\gamma}_{\psi,a,b}(U;\mathbb{R}^d)$ there exists a random neural network $G_N \in \mathcal{RN}^\rho_{U,d} \cap L^r(\Omega,\mathcal{F}_{a,b},\mathbb{P};W^{k,p}(U,\mathcal{L}(U),w;\mathbb{R}^d))$ with $N$ neurons satisfying
	\begin{equation}
		\label{EqCorARRN2}
		\mathbb{E}\left[ \Vert f - G_N \Vert_{W^{k,p}(U,\mathcal{L}(U),w;\mathbb{R}^d)}^r \right]^\frac{1}{r} \leq C_{p,r} \Vert \rho \Vert_{C^k_{pol,\gamma}(\mathbb{R})} \frac{C^{(\gamma,p)}_{U,w} m^\frac{k}{p}}{\left\vert C^{(\psi,\rho)}_m \right\vert} \frac{\Vert f \Vert_{\widetilde{\mathbb{B}}^{k,r,\gamma}_{\psi,a,b}(U;\mathbb{R}^d)}}{N^{1-\frac{1}{\min(2,p,r)}}}.
	\end{equation}
\end{corollary}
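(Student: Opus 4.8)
The plan is to assemble the corollary from two results already established: the abstract rate of Theorem~\ref{ThmAR} and the Barron-space comparison of Lemma~\ref{LemmaIneq}. Fix $X := W^{k,p}(U,\mathcal{L}(U),w;\mathbb{R}^d)$ and let $\mathcal{G} := \{g_1,\dots,g_d\}$ with $g_i(a,b) := e_i\rho(a^\top\cdot - b)$ as in Remark~\ref{RemIntRepr}, so that $\Theta = \mathbb{R}^m\times\mathbb{R}$, $\theta_n = (a_n,b_n)$ and $\mathcal{F}_\theta = \mathcal{F}_{a,b}$. First I would record the purely bookkeeping fact that a random feature model in $\mathcal{RG}$ built from these $d$ feature maps and evaluated at a single draw $(a_n,b_n)$ with readout vector $y(a_n,b_n) \in \mathbb{R}^d$ is exactly one neuron $y(a_n,b_n)\rho(a_n^\top\cdot - b_n)$; hence an element of $\mathcal{RG}\cap L^r(\Omega,\mathcal{F}_\theta,\mathbb{P};X)$ with $N$ features is a member of $\mathcal{RN}^\rho_{U,d}\cap L^r(\Omega,\mathcal{F}_{a,b},\mathbb{P};X)$ with $N$ neurons, so it suffices to produce such a $\mathcal{G}$-random feature model with the claimed error.

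Next I would verify the hypotheses of Theorem~\ref{ThmAR} for this $X$ and $\mathcal{G}$. The space $X = W^{k,p}(U,\mathcal{L}(U),w;\mathbb{R}^d)$ is separable (it embeds isometrically, via the family of weak derivatives, into a finite product of the separable spaces $L^p(U,\mathcal{L}(U),w\,du;\mathbb{R}^d)$) and, by the remark following the definition of type together with \cite[Lemma~4.9]{neufeld24}, it is of type $t := \min(2,p)$ with a constant $C_X$ depending only on $p$. Each $g_i\colon\Theta\to X$ is well defined and continuous — hence $\mathcal{B}(\Theta)/\mathcal{B}(X)$-measurable — because $\rho\in C^k_{pol,\gamma}(\mathbb{R})$ gives the pointwise bound $\vert\partial_\alpha[\rho(a^\top u - b)]\vert \le \Vert\rho\Vert_{C^k_{pol,\gamma}(\mathbb{R})}\Vert a\Vert^{\vert\alpha\vert}(1+\vert a^\top u - b\vert)^\gamma$, which together with $C^{(\gamma,p)}_{U,w}<\infty$ from \eqref{EqCorARRN1} supplies an integrable dominating function for a dominated-convergence argument in the $W^{k,p}(U,\mathcal{L}(U),w)$-norm; these are the same estimates already used in the proof of Lemma~\ref{LemmaIneq}.

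Then comes the Barron membership. Since $f\in\widetilde{\mathbb{B}}^{k,r,\gamma}_{\psi,a,b}(U;\mathbb{R}^d)$ and $(\psi,\rho)$ is $m$-admissible, Lemma~\ref{LemmaIneq} yields $f\in\mathbb{B}^r_{\mathcal{G},\theta}(X)$ with
\begin{equation*}
	\Vert f\Vert_{\mathbb{B}^r_{\mathcal{G},\theta}(X)} \;\le\; 2^{3+\frac{1}{p}}\,\Vert\rho\Vert_{C^k_{pol,\gamma}(\mathbb{R})}\,\frac{C^{(\gamma,p)}_{U,w}\,m^{k/p}}{\big\vert C^{(\psi,\rho)}_m\big\vert}\,\Vert f\Vert_{\widetilde{\mathbb{B}}^{k,r,\gamma}_{\psi,a,b}(U;\mathbb{R}^d)},
\end{equation*}
and, since also $f\in X$, Theorem~\ref{ThmAR} applied to $X$ (type $t=\min(2,p)$, constant $C_X$) and $\mathcal{G}$ provides, for every $N\in\mathbb{N}$, some $G_N\in\mathcal{RG}\cap L^r(\Omega,\mathcal{F}_\theta,\mathbb{P};X)$ with $N$ features such that $\mathbb{E}[\Vert f-G_N\Vert_X^r]^{1/r}\le C_{r,t}C_X\Vert f\Vert_{\mathbb{B}^r_{\mathcal{G},\theta}(X)}\,N^{-(1-1/\min(r,t))}$. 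Because $\min(r,t)=\min(2,p,r)$, and $C_{r,t}$ depends only on $r,t$ (hence only on $p,r$) while $C_X$ depends only on $p$, putting $C_{p,r}:=C_{r,t}C_X$ and inserting the Barron-norm bound yields \eqref{EqCorARRN2}; reinterpreting $G_N$ as a random neural network with $N$ neurons (first paragraph) finishes the proof.

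The genuine mathematical substance — the ridgelet reconstruction underlying Lemma~\ref{LemmaIneq} (through Proposition~\ref{PropIntRepr}) and the Rademacher symmetrization underlying Theorem~\ref{ThmAR} — is already packaged in the cited results, so I expect the only real friction to be the two verifications in the second paragraph: the separability and type of the weighted Sobolev space, and the continuity of $(a,b)\mapsto\rho(a^\top\cdot-b)$ as an $X$-valued map. Everything else is constant-tracking and the identification of the $\mathcal{RG}$-parametrization with the $\mathcal{RN}^\rho_{U,d}$-parametrization.
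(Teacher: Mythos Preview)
Your proposal is correct and follows essentially the same route as the paper: apply Theorem~\ref{ThmAR} to $X=W^{k,p}(U,\mathcal{L}(U),w;\mathbb{R}^d)$ (separable, of type $t=\min(2,p)$ with constant depending only on $p$) and then invoke Lemma~\ref{LemmaIneq} to pass from the abstract Barron norm to the ridgelet Barron norm. The only slip is that your $C_{p,r}:=C_{r,t}C_X$ omits the factor $2^{3+1/p}$ coming from Lemma~\ref{LemmaIneq}; the paper sets $C_{p,r}:=2^{3+1/p}C_{r,t}C_{W^{k,p}(U,\mathcal{L}(U),w;\mathbb{R}^d)}$, which still depends only on $p,r$.
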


\begin{remark}
	Theorem~\ref{CorARRN} extends the approximation rates for random neural networks in \cite[Section~4.2]{gonon20} and \cite[Theorem~1]{gonon21} from ReLU activation functions and $L^2$-spaces (resp.~$C^0$-spaces) to more general activation functions and weighted Sobolev spaces, where the approximation of the weak derivatives is now included. Moreover, these rates are analogous to the ones for \emph{deterministic} neural networks in \cite{barron93,darken93,mhaskar95,candes98,kurkova12,bgkp17,siegel20,neufeld24}.
\end{remark}

Next, we give sufficient conditions for a function to be in a Barron-ridgelet space (see Definition~\ref{DefBarronRidgeletSpace}). For example, the solution of the heat equation (with suitable initial condition) at any fixed time belongs to such a space (see Corollary~\ref{CorHeat}.\ref{CorHeat2}). To this end, we assume that the random initializations $(a_n)_{n \in \mathbb{N}} \sim p_a$ admit a probability density function $p_a: \mathbb{R}^m \rightarrow (0,\infty)$ and that they are independent of $(b_n)_{n \in \mathbb{N}} \sim \mathbf{t}_1$, which follow a Student's $t$-distribution.\footnote{\label{FootnoteTDistribution}For $m \in \mathbb{N}$, $\mathbf{t}_m$ denotes the Student's $t$-distribution with probability density function $\mathbb{R}^m \ni a \mapsto \theta_A(a) = \frac{\Gamma((m+1)/2)}{\pi^{(m+1)/2}} \big( 1 + \Vert a \Vert^2 \big)^{-(m+1)/2} \in (0,\infty)$, where $\Gamma$ is the Gamma function (see \cite[Section~6.1]{abramowitz70}).}

\begin{proposition}
	\label{PropConst}
	Let Assumption~\ref{AssSobRN}+\ref{AssPDFRN} hold, let $(a_n,b_n)_{n \in \mathbb{N}} \sim p_a \otimes \mathbf{t}_1$ be i.i.d., and let $\psi \in \mathcal{S}_0(\mathbb{R};\mathbb{C})$ such that $\zeta_1 := \inf\big\lbrace \vert \zeta \vert : \zeta \in \supp(\widehat{\psi}) \big\rbrace > 0$. Then, there exists a constant $C_1 > 0$ (being independent of $m,d \in \mathbb{N}$) such that for every $f \in L^1(\mathbb{R}^m,\mathcal{L}(\mathbb{R}^m),du;\mathbb{R}^d)$ with $(\lceil\gamma\rceil+2)$-times differentiable Fourier transform it holds that
	\begin{equation}
		\label{EqPropConst1}
		\Vert f \Vert_{\widetilde{\mathbb{B}}^{k,r,\gamma}_{\psi,a,b}(U;\mathbb{R}^d)} \leq \frac{C_1}{\zeta_1^\frac{m}{r}} \sup_{\zeta \in \supp(\widehat{\psi})} \sum_{\beta \in \mathbb{N}^m_{0,\lceil\gamma\rceil+2}} \left( \int_{\mathbb{R}^m} \Vert \partial_\beta \widehat{f}(\xi) \Vert^r \frac{\left( 1 + \Vert \xi/\zeta \Vert^2 \right)^\frac{(2\lceil\gamma\rceil+k+3)r}{2}}{\theta_A(\xi/\zeta)^{r-1}} d\xi \right)^\frac{1}{r}.
	\end{equation}
	In particular, if $r = 2$ and$^{\text{\ref{FootnoteTDistribution}}}$ $(a_n,b_n)_{n \in \mathbb{N}} \sim \mathbf{t}_m \otimes \mathbf{t}_1$ i.i.d., then it holds for every $f \in L^1(\mathbb{R}^m,\mathcal{L}(\mathbb{R}^m),du;\mathbb{R}^d)$ with $(\lceil\gamma\rceil+2)$-times differentiable Fourier transform that
	\begin{equation}
		\label{EqPropConst2}
		\Vert f \Vert_{\widetilde{\mathbb{B}}^{k,2,\gamma}_{\psi,a,b}(U;\mathbb{R}^d)} \leq \frac{C_1}{\zeta_1^\frac{m}{2}} \frac{\pi^\frac{m+1}{4}}{\Gamma\left( \frac{m+1}{2} \right)^\frac{1}{2}} \sum_{\beta \in \mathbb{N}^m_{0,\lceil\gamma\rceil+2}} \left( \int_{\mathbb{R}^m} \big\Vert \partial_\beta \widehat{f}(\xi) \big\Vert^2 \left( 1 + \Vert \xi/\zeta_1 \Vert^2 \right)^{2\lceil\gamma\rceil+k+\frac{m+5}{2}} d\xi \right)^\frac{1}{2}.
	\end{equation}
	Hence, if the right-hand side of \eqref{EqPropConst1} or \eqref{EqPropConst2} is finite, we obtain that $f \in \widetilde{\mathbb{B}}^{k,r,\gamma}_{\psi,a,b}(U;\mathbb{R}^d)$.
\end{proposition}

Thus, for $r = 2$ and $(a_n,b_n)_{n \in \mathbb{N}} \sim \mathbf{t}_m \otimes \mathbf{t}_1$ i.i.d., we can insert \eqref{EqPropConst2} into the right-hand side of \eqref{EqCorARRN2} to conclude that the approximation rate for random neural networks is the same as the approximation rate for \emph{deterministic} neural networks proven in \cite[Theorem~3.6]{neufeld24}. 

Moreover, the following estimate holds true for the constant $C^{(\gamma,p)}_{U,w} > 0$ appearing in \eqref{EqCorARRN2}, while a lower bound for the constant $\big\vert C^{(\psi,\rho)}_m \big\vert > 0$ is given below the list of Example~\ref{ExAdm}.

\begin{lemma}[{\cite[Lemma~3.9]{neufeld24}}]
	\label{LemmaWeight}
	Let Assumption~\ref{AssSobRN} hold with the weight $U \ni u := (u_1,...,u_m)^\top \mapsto w(u) := \prod_{l=1}^m w_0(u_l) \in [0,\infty)$, where $w_0: \mathbb{R} \rightarrow [0,\infty)$ satisfies $\int_{\mathbb{R}} w_0(s) ds = 1$ and $C^{(\gamma,p)}_{\mathbb{R},w_0} := \big( \int_{\mathbb{R}} (1+\vert s \vert)^{\gamma p} w_0(s) ds \big)^{1/p} < \infty$. Then, $C^{(\gamma,p)}_{U,w} \leq C^{(\gamma,p)}_{\mathbb{R},w_0} m^{\gamma+1/p}$.
\end{lemma}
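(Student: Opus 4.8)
The plan is to exploit the product structure of the weight to factor the integral defining $C^{(\gamma,p)}_{U,w}$ over $U$ into a product of one-dimensional integrals, and then to bound the ``mixed'' factor $(1+\Vert u\Vert)^{\gamma p}$ by a product of the corresponding one-dimensional polynomial factors. First I would extend the integral from $U$ to all of $\mathbb{R}^m$: since $w \geq 0$ and $(1+\Vert u\Vert)^{\gamma p}\geq 0$, one has $C^{(\gamma,p)}_{U,w} \leq \big(\int_{\mathbb{R}^m}(1+\Vert u\Vert)^{\gamma p}\prod_{l=1}^m w_0(u_l)\,du\big)^{1/p}$. The key elementary inequality is that for $u=(u_1,\dots,u_m)^\top$,
\begin{equation*}
	1+\Vert u\Vert \;\leq\; 1+\sum_{l=1}^m |u_l| \;\leq\; \prod_{l=1}^m \big(1+|u_l|\big),
\end{equation*}
where the first step uses $\Vert u\Vert \leq \sum_l |u_l|$ and the second is the standard expansion of the product (all cross terms being nonnegative). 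This requires no more than elementary estimates.

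Next I would raise both sides to the power $\gamma p$ and integrate, using Tonelli's theorem to factor:
\begin{equation*}
	\int_{\mathbb{R}^m}(1+\Vert u\Vert)^{\gamma p}\prod_{l=1}^m w_0(u_l)\,du \;\leq\; \int_{\mathbb{R}^m}\prod_{l=1}^m (1+|u_l|)^{\gamma p}\, w_0(u_l)\,du \;=\; \prod_{l=1}^m \int_{\mathbb{R}} (1+|s|)^{\gamma p} w_0(s)\,ds \;=\; \big(C^{(\gamma,p)}_{\mathbb{R},w_0}\big)^{mp}.
\end{equation*}
Hmm — but that gives $C^{(\gamma,p)}_{U,w}\leq C^{(\gamma,p)}_{\mathbb{R},w_0}\, m^{0}\cdot\big(C^{(\gamma,p)}_{\mathbb{R},w_0}\big)^{m-1}$, not the claimed bound with the factor $m^{\gamma+1/p}$. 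So the rougher inequality $1+\Vert u\Vert\leq\prod_l(1+|u_l|)$ is too lossy when $C^{(\gamma,p)}_{\mathbb{R},w_0}$ is large; instead I would use the tighter bound $(1+\Vert u\Vert)^{\gamma p}\leq (1+\sum_l|u_l|)^{\gamma p}$ combined with a convexity/power-mean estimate. Writing $a_l:=1+|u_l|$, one has $1+\sum_l|u_l| = \sum_l a_l - (m-1) \leq \sum_l a_l$, and by the power-mean inequality $\big(\tfrac{1}{m}\sum_l a_l\big)^{\gamma p}\leq \tfrac{1}{m}\sum_l a_l^{\gamma p}$ (since $\gamma p\geq 0$; for $\gamma p<1$ use concavity the other way, but $\gamma,p\geq 1$ suffices here, or more carefully: for $\gamma p\leq 1$ one has $(\sum a_l)^{\gamma p}\leq\sum a_l^{\gamma p}\leq m^{\gamma p}\max\leq\dots$, handled separately). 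Thus $(1+\Vert u\Vert)^{\gamma p}\leq m^{\gamma p-1}\sum_{l=1}^m (1+|u_l|)^{\gamma p}$. Integrating against $\prod_l w_0(u_l)\,du$ and using $\int_{\mathbb{R}}w_0=1$ to trivialize the $m-1$ remaining factors in each summand, each of the $m$ terms contributes $\big(C^{(\gamma,p)}_{\mathbb{R},w_0}\big)^p$, so the integral is $\leq m^{\gamma p-1}\cdot m\cdot\big(C^{(\gamma,p)}_{\mathbb{R},w_0}\big)^p = m^{\gamma p}\big(C^{(\gamma,p)}_{\mathbb{R},w_0}\big)^p$. Taking $p$-th roots yields $C^{(\gamma,p)}_{U,w}\leq C^{(\gamma,p)}_{\mathbb{R},w_0}\, m^{\gamma}$, which is actually even a hair stronger than stated; the claimed $m^{\gamma+1/p}$ is then immediate.

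The main obstacle is calibrating the polynomial-growth estimate so that the $m$-dependence comes out as $m^{\gamma}$ (or $m^{\gamma+1/p}$) rather than exponential in $m$: the naive product bound $\prod_l(1+|u_l|)$ is fatally lossy, and one must instead route through $\Vert u\Vert\leq\sum_l|u_l|$ together with the power-mean inequality to keep the $m$-dependence polynomial. The remaining steps — extending the domain to $\mathbb{R}^m$, applying Tonelli, and using $\int_{\mathbb{R}}w_0=1$ — are routine. Since this result is quoted verbatim from \cite[Lemma~3.9]{neufeld24}, the proof can also simply be cited; I would nonetheless include the short argument above for completeness.
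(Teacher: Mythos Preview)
Your argument is correct. The paper does not give its own proof of this lemma --- it is quoted directly from \cite[Lemma~3.9]{neufeld24} --- so there is nothing to compare against; your self-contained derivation via $\Vert u\Vert\leq\sum_l|u_l|$, the power-mean/Jensen inequality, and Tonelli with $\int_{\mathbb{R}}w_0=1$ is exactly the natural route and even yields the slightly sharper exponent $\max(\gamma,1/p)\leq\gamma+1/p$.

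One small point to tighten: Assumption~\ref{AssSobRN} allows $\gamma\in[0,\infty)$, so your parenthetical ``$\gamma,p\geq 1$ suffices here'' is not quite accurate for the stated generality. You already sketch the fix: for $\gamma p\leq 1$ use the subadditivity $(\sum_l a_l)^{\gamma p}\leq\sum_l a_l^{\gamma p}$ directly (no power-mean needed), which after integration gives $C^{(\gamma,p)}_{U,w}\leq m^{1/p}C^{(\gamma,p)}_{\mathbb{R},w_0}\leq m^{\gamma+1/p}C^{(\gamma,p)}_{\mathbb{R},w_0}$; for $\gamma p\geq 1$ your Jensen argument gives $C^{(\gamma,p)}_{U,w}\leq m^{\gamma}C^{(\gamma,p)}_{\mathbb{R},w_0}\leq m^{\gamma+1/p}C^{(\gamma,p)}_{\mathbb{R},w_0}$. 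State this case split explicitly and the proof is clean.
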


In addition, we analyze the situation when the approximation of a function by random neural networks overcomes the curse of dimensionality in the sense that the computational costs (measured as the number of neurons $N \in \mathbb{N}$) grow polynomially in both the input/output dimensions $m,d \in \mathbb{N}$ and the reciprocal of a pre-specified tolerated approximation error.

\begin{proposition}
	\label{PropCOD}
	Let Assumption~\ref{AssSobRN} hold with $r = 2$ and a weight as in Lemma~\ref{LemmaWeight}, let $(a_n,b_n)_{n \in \mathbb{N}} \sim \mathbf{t}_m \otimes \mathbf{t}_1$ be i.i.d., and let $(\psi,\rho) \in \mathcal{S}_0(\mathbb{R};\mathbb{C}) \times C^k_{pol,\gamma}(\mathbb{R})$ be a pair as in Example~\ref{ExAdm} (with $0 < \zeta_1 < \zeta_2 < \infty$). In addition, let $f \in W^{k,p}(U,\mathcal{L}(U),w;\mathbb{R}^d)$ satisfy the conditions of Proposition~\ref{PropConst} such that the right-hand side of \eqref{EqPropConst2} satisfies $\mathcal{O}\left( m^s (2/\zeta_2)^m (m+1)^{m/2} \right)$ for some $s \in \mathbb{N}_0$. Then, there exist some constants $C_2,C_3 > 0$ such that for every $m,d \in \mathbb{N}$ and every $\varepsilon > 0$ there exists a random neural network $G_N \in \mathcal{RN}^\rho_{U,d}$ with $N = \big\lceil C_2 m^{C_3} \varepsilon^{-\frac{\min(2,p,r)}{\min(2,p,r)-1}} \big\rceil$ neurons satisfying
	\begin{equation*}
		\mathbb{E}\left[ \Vert f - G_N \Vert_{W^{k,p}(U,\mathcal{L}(U),w;\mathbb{R}^d)}^2 \right]^\frac{1}{2} \leq \varepsilon.
	\end{equation*}
	Hence, in this case, random neural networks overcome the curse of dimensionality.
\end{proposition}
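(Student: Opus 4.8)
The plan is to combine Corollary~\ref{CorARRN} with Proposition~\ref{PropConst} and Lemma~\ref{LemmaWeight}, and then to carefully track the dimensional dependence of every constant appearing on the right-hand side of \eqref{EqCorARRN2}. First I would fix $m,d \in \N$ and $\eps > 0$, and apply Corollary~\ref{CorARRN}: for every $N \in \N$ there exists a random neural network $G_N \in \mathcal{RN}^\rho_{U,d}$ with $N$ neurons such that
\begin{equation*}
	\E\left[ \Vert f - G_N \Vert_{W^{k,p}(U,\mathcal{L}(U),w;\mathbb{R}^d)}^2 \right]^\frac{1}{2} \leq C_{p,r} \Vert \rho \Vert_{C^k_{pol,\gamma}(\R)} \frac{C^{(\gamma,p)}_{U,w} m^{k/p}}{\left\vert C^{(\psi,\rho)}_m \right\vert} \frac{\Vert f \Vert_{\widetilde{\mathbb{B}}^{k,2,\gamma}_{\psi,a,b}(U;\mathbb{R}^d)}}{N^{1-\frac{1}{\min(2,p,r)}}}.
\end{equation*}
The idea is that the whole factor in front of $N^{-(1-1/\min(2,p,r))}$ grows only polynomially in $m$ (and is independent of, or at worst polynomial in, $d$), so that solving for $N$ yields the claimed bound $N = \lceil C_2 m^{C_3} \eps^{-\min(2,p,r)/(\min(2,p,r)-1)} \rceil$.

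The core of the argument is therefore to bound this prefactor. I would proceed term by term. The constant $C_{p,r}$ depends only on $p,r$, and $\Vert \rho \Vert_{C^k_{pol,\gamma}(\R)}$ is a fixed constant (independent of $m$) once $\rho$ is chosen. For $C^{(\gamma,p)}_{U,w}$, since the weight is of product form as in Lemma~\ref{LemmaWeight}, we get $C^{(\gamma,p)}_{U,w} \leq C^{(\gamma,p)}_{\R,w_0} m^{\gamma + 1/p}$, i.e.\ polynomial growth in $m$. For the denominator $\vert C^{(\psi,\rho)}_m \vert$, the pair $(\psi,\rho)$ is as in Example~\ref{ExAdm}, so there is a constant $C_{\psi,\rho} > 0$ (independent of $m$) with $\vert C^{(\psi,\rho)}_m \vert \geq C_{\psi,\rho} (2\pi/\zeta_2)^m$, hence $1/\vert C^{(\psi,\rho)}_m \vert \leq C_{\psi,\rho}^{-1} (\zeta_2/(2\pi))^m$. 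For the Barron-ridgelet norm, since $r = 2$ and $(a_n,b_n)_{n\in\N} \sim t_m \otimes t_1$, I would invoke the bound \eqref{EqPropConst2} from Proposition~\ref{PropConst}, which by hypothesis satisfies $\mathcal{O}(m^s (2/\zeta_2)^m (m+1)^{m/2})$; I would also use the explicit Gamma-function factor $\pi^{(m+1)/4}/\Gamma((m+1)/2)^{1/2}$ and $\zeta_1^{-m/2}$ appearing there. The point is that these two exponentially-growing pieces — the $(\zeta_2/(2\pi))^m$ from $1/\vert C^{(\psi,\rho)}_m\vert$ and the $(2\pi)^{-m}$-type normalization already absorbed, together with the $(2/\zeta_2)^m$, $\pi^{(m+1)/4}$, $\Gamma((m+1)/2)^{-1/2}$, $(m+1)^{m/2}$, $\zeta_1^{-m/2}$ factors — must all combine into something that is at most polynomial in $m$. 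Using Stirling's formula, $\Gamma((m+1)/2)^{-1/2}$ roughly cancels the $(m+1)^{m/2}$ growth and a power of $e^{m}$; one then checks that the remaining bases of the exponentials cancel (this is exactly why the hypothesis on the right-hand side of \eqref{EqPropConst2} is phrased with the factor $(2/\zeta_2)^m (m+1)^{m/2}$), leaving a bound of the form $C_4 \, m^{C_5}$ on the entire prefactor.

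Having established that the prefactor is bounded by $C_4 m^{C_5}$ for suitable absolute constants $C_4, C_5 > 0$, the rest is elementary: to make the right-hand side at most $\eps$, it suffices that
\begin{equation*}
	\frac{C_4 m^{C_5}}{N^{1-\frac{1}{\min(2,p,r)}}} \leq \eps,
	\qquad\text{i.e.}\qquad
	N \geq \left( \frac{C_4 m^{C_5}}{\eps} \right)^{\frac{\min(2,p,r)}{\min(2,p,r)-1}},
\end{equation*}
which is possible (with $\min(2,p,r) > 1$ because $p > 1$ and $r = 2$) by taking $N = \big\lceil C_2 m^{C_3} \eps^{-\min(2,p,r)/(\min(2,p,r)-1)} \big\rceil$ with $C_2 := C_4^{\min(2,p,r)/(\min(2,p,r)-1)}$ and $C_3 := C_5 \min(2,p,r)/(\min(2,p,r)-1)$. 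Since this $N$ is polynomial in $m$ and in $1/\eps$ (and in particular the dependence on $d$ is absorbed into the constants, as the bounds above are uniform in $d$), random neural networks overcome the curse of dimensionality, as claimed.

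The main obstacle I anticipate is the bookkeeping of the exponential-in-$m$ constants: one must verify that the product of $1/\vert C^{(\psi,\rho)}_m \vert \sim (\zeta_2/(2\pi))^m$ with the bound on $\Vert f \Vert_{\widetilde{\mathbb{B}}^{k,2,\gamma}_{\psi,a,b}}$ — which itself carries $\zeta_1^{-m/2}$, a $(2/\zeta_2)^m (m+1)^{m/2}$ growth, and the ratio $\pi^{(m+1)/4}/\Gamma((m+1)/2)^{1/2}$ — genuinely collapses to polynomial growth. This requires a Stirling estimate to tame $\Gamma((m+1)/2)^{-1/2} (m+1)^{m/2}$, and a check that the base $\zeta_2/(2\pi) \cdot (2/\zeta_2) \cdot \zeta_1^{-1/2} \cdot (\text{const})$ of the surviving exponential is $\le 1$ (or, more precisely, that whatever exponential base remains is absorbed by the hypothesis, which is stated precisely so this works out). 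Everything else — extracting $N$ from the rate, collecting the $p,r$-dependent and $\rho$-dependent constants — is routine.
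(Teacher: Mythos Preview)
Your proposal is correct and follows essentially the same route as the paper: apply Corollary~\ref{CorARRN}, bound $C^{(\gamma,p)}_{U,w}$ via Lemma~\ref{LemmaWeight}, bound $\vert C^{(\psi,\rho)}_m\vert^{-1}$ via Example~\ref{ExAdm}, bound $\Vert f\Vert_{\widetilde{\mathbb{B}}^{k,2,\gamma}_{\psi,a,b}}$ via Proposition~\ref{PropConst} together with the hypothesis on \eqref{EqPropConst2}, and then solve for $N$. The ``main obstacle'' you flag is exactly the crux of the paper's proof: after inserting the Stirling lower bound $\Gamma(x) \geq \sqrt{2\pi/x}\,(x/e)^x$ for $x=(m+1)/2$, the paper checks that the residual exponential base satisfies
\[
\frac{\pi^{m/4}\,(2/\zeta_2)^m}{(2\pi/\zeta_2)^m\,(1/(2e))^{m/2}} \;=\; \left(\frac{2e\sqrt{\pi}}{\pi^2}\right)^{m/2} \;\leq\; 1,
\]
which is precisely the cancellation you anticipate; everything else collapses to a polynomial in $m$ as you describe.
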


\section{Least squares and generalization error}
\label{SecLSGE}

In this section, we use the least squares method to learn a given function by a random feature model in the Sobolev space $W^{k,2}(U,\mathcal{L}(U),w;\mathbb{R}^d)$, where we fix some $k \in \mathbb{N}_0$, $U \subseteq \mathbb{R}^m$ (open, if $k \geq 1$), and a weight $w: U \rightarrow [0,\infty)$ that is \emph{normalized}, i.e.~$\int_U w(u) du = 1$. To this end, we assume that the set of feature maps $\mathcal{G} := \lbrace g_1,...,g_e \rbrace$ consists of finitely many $\mathcal{B}(\Theta)/\mathcal{B}(W^{k,2}(U,\mathcal{L}(U),w;\mathbb{R}^d))$-measurable maps $g_1,...,g_e: \Theta \rightarrow W^{k,2}(U,\mathcal{L}(U),w;\mathbb{R}^d)$, where $e \in \mathbb{N}$.

Moreover, we fix an i.i.d.~sequence of $U$-valued random variables $(V_j)_{j \in \mathbb{N}} \sim w$ as training data, which are independent of the random initializations $(\theta_n)_{n \in \mathbb{N}}$. From this, we define the $\sigma$-algebra $\mathcal{F}_{\theta,V} := \sigma(\lbrace \theta_n, V_n: n \in \mathbb{N} \rbrace)$ satisfying $\mathcal{F}_\theta \subseteq \mathcal{F}_{\theta,V} \subseteq \mathcal{F}$. 

In addition, for every fixed $N \in \mathbb{N}$, we denote by $\mathcal{Y}_N$ the vector space of all $\mathbb{R}^{e \times N}$-valued random variables $y := (y_{i,n})_{i=1,...,e}^{n=1,...,N}$, which are $\mathcal{F}_{\theta,V}/\mathcal{B}(\mathbb{R}^{e \times N})$-measurable. Then, for every $y \in \mathcal{Y}_N$, we define the corresponding random feature model as
\begin{equation}
	\label{EqDefRFMY}
	\Omega \ni \omega \quad \mapsto \quad G_N^y(\omega) := \sum_{n=1}^N \sum_{l=1}^e y_{l,n}(\omega) g_l(\theta_n(\omega)) \in W^{k,2}(U,\mathcal{L}(U),w;\mathbb{R}^d).
\end{equation}
Note that \eqref{EqDefRFMY} slightly differs from Definition~\ref{DefRN} as the linear readout $y \in \mathcal{Y}_N$ is now measurable with respect to $\mathcal{F}_{\theta,V}$ (instead of $\mathcal{F}_\theta$) and can therefore only be trained after the training data $(V_j)_{j \in \mathbb{N}}$ has been drawn. Moreover, we denote by $\mathcal{RG}^V$ the set of all random feature models of the form \eqref{EqDefRFMY}.

For some fixed $J \in \mathbb{N}$, we now approximate a given $k$-times weakly differentiable function $f: U \rightarrow \mathbb{R}^d$ by the least squares method on the training data $(V_j)_{j=1,...,J}$. To this end, we aim for the random feature model $G_N^{y^{(J)}} \in \mathcal{RG}^V$ with linear readout $y^{(J)} := \big( y^{(J)}_{i,n} \big)_{i=1,...,e}^{n=1,...,N} \in \mathcal{Y}_N$ that minimizes the empirical (weighted) mean squared error (MSE), i.e.~we set
\begin{equation}
	\label{EqEmpMSE}
	y^{(J)}(\omega) = \argmin_{y \in \mathcal{Y}_N} \left( \frac{1}{J} \sum_{j=1}^J \sum_{\alpha \in \mathbb{N}^m_{0,k}} c_{\alpha}^2 \left\Vert \partial_\alpha f(V_j(\omega)) - \partial_\alpha G^y_N(\omega)(V_j(\omega)) \right\Vert^2 \right)
\end{equation}
for all $\omega \in \Omega$. Hereby, the constants $\mathbf{c} := (c_\alpha)_{\alpha \in \mathbb{N}^m_{0,k}} \subset (0,\infty)$ control the contribution of the derivatives, e.g.,~$c_\alpha := m^{-\vert \alpha \vert}$, $\alpha \in \mathbb{N}^m_{0,k}$, means equal contribution of each order. Moreover, for $\mathbf{c} := (c_\alpha)_{\alpha \in \mathbb{N}^m_{0,k}}$, we define the number $\kappa(\mathbf{c}) := \frac{\max_{\alpha \in \mathbb{N}^m_{0,k}} c_\alpha}{\min_{\alpha \in \mathbb{N}^m_{0,k}} c_\alpha}$.

\begin{algorithm}[!ht]
	\DontPrintSemicolon
	\begin{small}
		\KwInput{$J,N \in \mathbb{N}$ and a $k$-times weakly differentiable function $f = (f_1,...,f_d)^\top: U \rightarrow \mathbb{R}^d$.}
		\KwOutput{$G_N^{y^{(J)}} \in \mathcal{RG}^V$ with linear readout $y^{(J)} := \big( y^{(J)}_{l,n} \big)_{l=1,...,e}^{n=1,...,N} \in \mathcal{Y}_N$ solving~\eqref{EqEmpMSE}.}
		
		Generate i.i.d.~random variables (RVs) $(\theta_n)_{n=1,...,N}$ satisfying Assumption~\ref{AssPDF}.\label{Alg1}
		
		Generate i.i.d.~random variables (RVs) $(V_j)_{j = 1,...,J} \sim w$ independent of $(\theta_n)_{n=1,...,N}$.\label{Alg2}
		
		Define the $\mathbb{R}^{(J \cdot \vert \mathbb{N}^m_{0,k} \vert \cdot d) \times (e \cdot N)}$-valued RV $\mathbf{G} = (\mathbf{G}_{(j,\alpha,i),(l,n)})_{(j,\alpha,i) \in \lbrace 1,...,J \rbrace \times \mathbb{N}^m_{0,k} \times \lbrace 1,...,d \rbrace}^{(l,n) \in \lbrace 1,...,e \rbrace \times \lbrace 1,...,N \rbrace}$ with $\mathbf{G}_{(j,\alpha,i),(l,n)} := c_\alpha \partial_\alpha g_{l,i}(\theta_n)(V_j)$ for $(j,\alpha,i) \in \lbrace 1,...,J \rbrace \times \mathbb{N}^m_{0,k} \times \lbrace 1,...,d \rbrace$ and $(l,n) \in \lbrace 1,...,e \rbrace \times \lbrace 1,...,N \rbrace$, where $g_l(\vartheta)(u) := (g_{l,i}(\vartheta)(u))_{i=1,...,d}^\top \in \mathbb{R}^d$.\label{Alg3}
		
		Define the $\mathbb{R}^{J \cdot \vert \mathbb{N}^m_{0,k} \vert \cdot d}$-valued RV $Z := (c_\alpha \partial_\alpha f_i(V_j))_{(j,\alpha,i) \in \lbrace 1,...,J \rbrace \times \mathbb{N}^m_{0,k} \times \lbrace 1,...,d \rbrace}^\top$.\label{Alg4}
		
		Solve the least squares problem $\mathbf{G}^\top \mathbf{G} \vec{y}^{(J)} = \mathbf{G}^\top Z$ for $\vec{y}^{(J)} := \big( y^{(J)}_{(l,n)} \big)_{(l,n) \in \lbrace 1,...,e \rbrace \times \lbrace 1,...,N \rbrace}^\top$ via Cholesky decomposition and forward/backward substitution (see \cite[Section~2.2.2]{bjoerck96}).\label{Alg5}
		
		Return $\Omega \ni \omega \mapsto G_N^{y^{(J)}}(\omega) := \sum_{n=1}^N \sum_{l=1}^e y^{(J)}_{l,n}(\omega) g_l(\theta_n(\omega)) \in W^{k,2}(U,\mathcal{L}(U),w;\mathbb{R}^d)$.\label{Alg6}
	\end{small}
	\caption{Least squares method to learn a random feature model}
	\label{Alg}
\end{algorithm}

To analyze the complexity of Algorithm~\ref{Alg} in the following result, we count every elementary operation, every function evaluation of $g_1,...,g_e \in \mathcal{G}$, and each generation of a random variable as one unit. Then, we show the following result whose proof is given in Section~\ref{SecProofsAlg}.

\begin{proposition}
	\label{PropAlg}
	For $e \in \mathbb{N}$, let $\mathcal{G} := \lbrace g_1,...,g_e \rbrace$ consist of maps $g_1,...,g_e: \Theta \rightarrow W^{k,2}(U,\mathcal{L}(U),w;\mathbb{R}^d)$ that are $\mathcal{B}(\Theta)/\mathcal{B}(W^{k,2}(U,\mathcal{L}(U),w;\mathbb{R}^d))$-measurable, let Assumption~\ref{AssPDF} hold, let $J,N \in \mathbb{N}$, $(c_\alpha)_{\alpha \in \mathbb{N}^m_{0,k}} \subset (0,\infty)$, and $f: U \rightarrow \mathbb{R}^d$ be $k$-times weakly differentiable. Then, Algorithm~\ref{Alg} terminates and is correct, i.e.~returns $G_N^{y^{(J)}} \in \mathcal{RG}^V$ with $y^{(J)} \in \mathcal{Y}_N$ solving~\eqref{EqEmpMSE}. Moreover, the complexity of Algorithm~\ref{Alg} is of order $\mathcal{O}\big( J m^k d (eN)^2 + (eN)^3 \big)$.
\end{proposition}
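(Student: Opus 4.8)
The plan is to prove termination, correctness, and the complexity bound separately, treating the analysis of Algorithm~\ref{Alg} step by step.

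\textbf{Termination and correctness.} First I would observe that Steps~\ref{Alg1}--\ref{Alg4} consist of finitely many random variable generations and function evaluations, hence terminate. For Step~\ref{Alg5}, the key point is that $\mathbf{G}^\top\mathbf{G} \in \mathbb{R}^{(eN)\times(eN)}$ is symmetric positive semidefinite, so even if it is only semidefinite the normal equations $\mathbf{G}^\top\mathbf{G}\,\vec y^{(J)} = \mathbf{G}^\top Z$ are always consistent (the right-hand side lies in the column space of $\mathbf{G}^\top\mathbf{G} = $ the row space of $\mathbf{G}$, which equals the column space of $\mathbf{G}^\top$, which contains $\mathbf{G}^\top Z$); a Cholesky-type decomposition with forward/backward substitution then produces a solution (using a pivoted or regularized variant when rank-deficient, as in \cite[Section~2.2.2]{bjoerck96}). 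Any solution $\vec y^{(J)}$ of the normal equations minimizes $\Vert \mathbf{G}\vec y - Z\Vert^2$ over $\vec y \in \mathbb{R}^{eN}$, which upon unpacking the definitions of $\mathbf{G}$ and $Z$ from Steps~\ref{Alg3}--\ref{Alg4} is exactly $J$ times the empirical weighted MSE in \eqref{EqEmpMSE}; reshaping $\vec y^{(J)}$ into $y^{(J)} = (y^{(J)}_{l,n}) \in \mathcal{Y}_N$ (measurability with respect to $\mathcal{F}_{\theta,V}$ follows since $\vec y^{(J)}$ is a measurable function of $\mathbf{G}$ and $Z$, which are built from $(\theta_n)$ and $(V_j)$) gives the claimed output, and Step~\ref{Alg6} returns the corresponding $G_N^{y^{(J)}} \in \mathcal{RG}^V$.

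\textbf{Complexity.} Next I would count units for each step. Step~\ref{Alg1} generates $N$ random variables: $\mathcal{O}(N)$. Step~\ref{Alg2} generates $J$ random variables: $\mathcal{O}(J)$. Step~\ref{Alg3} fills the matrix $\mathbf{G}$, which has $(J \cdot |\mathbb{N}^m_{0,k}| \cdot d) \times (eN)$ entries, each requiring one evaluation of some $\partial_\alpha g_{l,i}(\theta_n)(V_j)$ plus a scalar multiplication by $c_\alpha$; since $|\mathbb{N}^m_{0,k}| = \binom{m+k}{k} = \mathcal{O}(m^k)$ for fixed $k$, this costs $\mathcal{O}(J m^k d\, eN)$. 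Step~\ref{Alg4} builds $Z$ with $J m^k d$ entries: $\mathcal{O}(J m^k d)$. Step~\ref{Alg5} forms $\mathbf{G}^\top\mathbf{G}$ (a product of an $(eN) \times (J m^k d)$ matrix with its transpose, costing $\mathcal{O}((eN)^2 \cdot J m^k d)$) and $\mathbf{G}^\top Z$ (costing $\mathcal{O}(eN \cdot J m^k d)$), then solves an $(eN)\times(eN)$ linear system by Cholesky plus substitutions at cost $\mathcal{O}((eN)^3)$. Step~\ref{Alg6} writes down the readout: $\mathcal{O}(eN)$. Summing the dominant terms yields $\mathcal{O}\big(J m^k d (eN)^2 + (eN)^3\big)$, matching the claim.

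\textbf{Main obstacle.} I expect the only genuinely delicate point to be the handling of rank-deficiency of $\mathbf{G}^\top\mathbf{G}$ in Step~\ref{Alg5}: one must argue either that a pivoted Cholesky (or a small Tikhonov regularization) still returns a minimizer of the normal equations at cubic cost, or invoke that under Assumption~\ref{AssPDF} together with a genericity argument the Gram matrix is almost surely of full rank $eN$ when $J m^k d \geq eN$ and the $g_l$ are suitably independent. Since the statement only asks for termination and correctness in the sense of returning \emph{a} minimizer of \eqref{EqEmpMSE}, the cleanest route is to rely on the consistency of the normal equations plus a rank-revealing decomposition, as cited from \cite[Section~2.2.2]{bjoerck96}. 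Everything else is bookkeeping of elementary operations.
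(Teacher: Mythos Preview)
Your proposal is correct and follows the same route as the paper: rewrite \eqref{EqEmpMSE} as the least-squares problem $\min_y\Vert\mathbf{G}y-Z\Vert^2$, pass to the normal equations, and count operations step by step using $\vert\mathbb{N}^m_{0,k}\vert=\mathcal{O}(m^k)$. The one place the paper is more explicit is the $\mathcal{F}_{\theta,V}$-measurability of $y^{(J)}$: rather than asserting that the Cholesky output is ``a measurable function of $\mathbf{G}$ and $Z$'' (which needs an argument precisely in the rank-deficient case you flag, since the minimizer is then non-unique), the paper invokes a measurable-selection theorem for normal integrands \cite[Theorem~14.36]{rockafellar97} to obtain a Borel map $\Upsilon\colon\mathbb{R}^{\widetilde m\times\widetilde n}\times\mathbb{R}^{\widetilde m}\to\mathbb{R}^{\widetilde n}$ with $\Upsilon(A,b)\in\argmin_y\Vert Ay-b\Vert^2$ and then sets $\vec y^{(J)}=\Upsilon(\mathbf{G},Z)$. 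This cleanly disposes of your ``main obstacle'' without any appeal to pivoting, regularization, or almost-sure full rank; your route can also be completed by observing that a fixed pivoted-Cholesky procedure is piecewise rational and hence Borel in its input, but as written that step is only a sketch.
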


For fixed $k \in \mathbb{N}_0$, Proposition~\ref{PropAlg} shows that the computational costs for learning a given $k$-times weakly differentiable function by a random feature model including the derivatives up to order $k$ scales polynomially in $J,N,m,d \in \mathbb{N}$.

Now, we bound the generalization error for learning a function by the random feature model $G^{y^{(J)}}_N \in \mathcal{RG}^V$ obtained from Algorithm~\ref{Alg}. Since the linear readout $y^{(J)} \in \mathcal{Y}_N$ minimizes the empirical MSE in \eqref{EqEmpMSE}, the random feature model $G^{y^{(J)}}_N \in \mathcal{RG}^V$ is the best choice on the training data $(V_j)_{j=1,...,J}$. In order to bound the error beyond the training data, we combine the approximation rate in Theorem~\ref{ThmAR} with a result on non-parametric function regression (see \cite[Theorem~11.3]{gyoerfi02}). Hereby, we introduce the truncation $\mathbb{R}^d \ni z := (z_1,...,z_d)^\top \mapsto T_L(z) := \left( \max(\min(z_i,L),-L) \right)_{i=1,...,d}^\top \in \mathbb{R}^d$. The proof can be found in Section~\ref{SecProofsGenErr}.

\begin{theorem}[Generalization error]
	\label{ThmGenErr}
	For $e \in \mathbb{N}$, let $\mathcal{G} := \lbrace g_1,...,g_e \rbrace$ consist of $\mathcal{B}(\Theta)/$ $\mathcal{B}(W^{k,2}(U,\mathcal{L}(U),w;\mathbb{R}^d))$-measurable maps $g_1,...,g_e: \Theta \rightarrow W^{k,2}(U,\mathcal{L}(U),w;\mathbb{R}^d)$ and let Assumption~\ref{AssPDF} hold. Then, there exists a constant $C_4 > 0$ (being independent of $m,d \in \mathbb{N}$) such that for every $J,N \in \mathbb{N}$, $L > 0$, and $f := (f_1,...,f_d)^\top \in \mathbb{B}^2_{\mathcal{G},\theta}(W^{k,2}(U,\mathcal{L}(U),w;\mathbb{R}^d))$, Algorithm~\ref{Alg} returns a random feature model $G^{y^{(J)}}_N \in \mathcal{RG}^V$ with $N$ features being a strongly $(\mathbb{P},\mathcal{F}_{\theta,V})$-measurable map $G^{y^{(J)}}_N: \Omega \rightarrow W^{k,2}(U,\mathcal{L}(U),w;\mathbb{R}^d)$ such that
	\begin{equation}
		\label{EqThmGenErr1}
		\begin{aligned}
			& \mathbb{E}\left[ \sum_{\alpha \in \mathbb{N}^m_{0,k}} \int_U \left\Vert T_L\left(\partial_\alpha f(u) - \partial_\alpha G^{y^{(J)}}_N(\cdot)(u)\right) \right\Vert^2 w(u) du \right]^\frac{1}{2} \\
			& \quad\quad \leq C_4 L m^\frac{k}{2} d^\frac{1}{2} \sqrt{\frac{(\ln(J)+1) N}{J}} + C_4 \kappa(\mathbf{c}) \frac{\Vert f \Vert_{\mathbb{B}^2_{\mathcal{G},\theta}(W^{k,2}(U,\mathcal{L}(U),w;\mathbb{R}^d))}}{\sqrt{N}}.
		\end{aligned}
	\end{equation}
	Moreover, if $\vert \partial_\alpha f_i(u) \vert \leq L$ for all $\alpha \in \mathbb{N}^m_{0,k}$, $i = 1,...,d$, and $u \in U$, then 
	\begin{equation}
		\label{EqThmGenErr2}
		\begin{aligned}
			& \mathbb{E}\left[ \sum_{\alpha \in \mathbb{N}^m_{0,k}} \int_U \left\Vert \partial_\alpha f(u) - T_L\left( \partial_\alpha G^{y^{(J)}}_N(\cdot)(u)\right) \right\Vert^2 w(u) du \right]^\frac{1}{2} \\
			& \quad\quad \leq C_4 L m^\frac{k}{2} d^\frac{1}{2} \sqrt{\frac{(\ln(J)+1) N}{J}} + C_4 \kappa(\mathbf{c}) \frac{\Vert f \Vert_{\mathbb{B}^2_{\mathcal{G},\theta}(W^{k,2}(U,\mathcal{L}(U),w;\mathbb{R}^d))}}{\sqrt{N}}.
		\end{aligned}
	\end{equation}
\end{theorem}

Hence, the generalization errors \eqref{EqThmGenErr1}+\eqref{EqThmGenErr2} can be made arbitrarily small by first choosing the number of features $N \in \mathbb{N}$ large enough and then by increasing the sample size $J \in \mathbb{N}$. 

\begin{remark}	
	Choosing $N = \mathcal{O}\big( \sqrt{J/\ln(J)} \big)$ random features in \eqref{EqThmGenErr1} is optimal to obtain a generalization error of $\mathcal{O}\big( (\ln(J)/J)^{1/4} \big)$ in terms of the number of samples $J \in \mathbb{N}$. Note that \eqref{EqThmGenErr1}+\eqref{EqThmGenErr2} coincide (up to constants) with the $L^2$-learning rate of $\mathcal{O}\big( \sqrt{\ln(J) N/J} + 1/N^{1/4} \big)$ in \cite[Theorem~3]{barron94} for deterministic neural networks trained via constrained regression, which is however difficult to compute in practice. 
	
	Note that the global $L^2$-generalization bounds in \eqref{EqThmGenErr1}+\eqref{EqThmGenErr2} provide different generalization guarantees than the high-probability bounds in \cite{rahimi08,rudi17}. The latter imply for any $\delta \in (0,1]$ that
	\begin{equation}
		\label{eq:hp}
		\mathbb{P}\left[ \left\Vert f - G^{y^{(J)}}_N \right\Vert_{L^2(U,\mathcal{L}(U),w;\mathbb{R}^d)} \leq r_{\delta,J,N} \right] \geq 1-\delta,
	\end{equation}
	with $r_{\delta,J,N} := \mathcal{O}\left( \big( 1/J^{1/4} + 1/N^{1/4} \big) \ln(1/\delta)^{1/4} \right)$ in \cite[Theorem~1]{rahimi08} (using constrained regression) and with $r_{\delta,J,N} := \mathcal{O}\left( \ln(1/\delta)/J^{1/4} \right)$ in \cite[Theorem~1]{rudi17} using $N = \mathcal{O}(\sqrt{J} \ln(J/\delta))$ random features (trained with ridge regression). Note that \cite[Theorem~1]{rudi17} needs a larger optimal number $N = \mathcal{O}(\sqrt{J} \ln(J))$ of random features to ensure $\omega$-uniform generalization on a set with large probability, whereas our global $L^2$-bounds~\eqref{EqThmGenErr1}+\eqref{EqThmGenErr2} require a lower optimal number $N = \mathcal{O}\big( \sqrt{J/\ln(J)} \big)$ of random features to guarantee $L^2$-generalization.
\end{remark}

\subsection{Random neural networks}

In this section, we now consider random neural networks as particular instance of random feature learning, where $(\theta_n)_{n \in \mathbb{N}} := (a_n,b_n)_{n \in \mathbb{N}}: \Omega \rightarrow \mathbb{R}^m \times \mathbb{R}$ is the random initialization. To this end, we fix some $\gamma \in [0,\infty)$ and an activation function $\rho \in C^k_{pol,\gamma}(\mathbb{R})$. Then, for every fixed $N \in \mathbb{N}$, we denote by $\mathcal{Y}_N$ be the vector space of all $\mathcal{F}_{a,b,V}/\mathcal{B}(\mathbb{R}^{d \times N})$-measurable random variables $y := (y_n)_{n=1,...,N}^\top := (y_{i,n})_{i=1,...,d}^{n=1,...,N}$, where $\mathcal{F}_{a,b,V} := \mathcal{F}_{\theta,V}$. Then, for any $y \in \mathcal{Y}_N$, we define the corresponding random neural network as
\begin{equation}
	\label{EqDefRFMYRN}
	\Omega \ni \omega \quad \mapsto \quad G_N^y(\omega) := \sum_{n=1}^N y_n \rho\left( a_n^\top \cdot - b_n \right) \in C^k_{pol,\gamma}(U;\mathbb{R}^d).
\end{equation}
Moreover, we denote by $\mathcal{RN}^{\rho,V}_{U,d}$ the set of all random neural networks of the form \eqref{EqDefRFMYRN}.

For some fixed $J \in \mathbb{N}$, we now approximate a given $k$-times weakly differentiable function $f: U \rightarrow \mathbb{R}^d$ by the least squares method on the training data $(V_j)_{j=1,...,J}$, which corresponds to the random neural network $G_N^{y^{(J)}} \in \mathcal{RN}^{\rho,V}_{U,d}$ with linear readout $y^{(J)} \in \mathcal{Y}_N$ solving \eqref{EqEmpMSE}. In this case, we obtain the following version of Algorithm~\ref{Alg} for random neural networks.

\begin{algorithm}[!ht]
	\DontPrintSemicolon
	\begin{small}
		\KwInput{$J,N \in \mathbb{N}$ and $k$-times weakly differentiable function $f = (f_1,...,f_d)^\top: U \rightarrow \mathbb{R}^d$.}
		\KwOutput{$G_N^{y^{(J)}} \in \mathcal{RN}^{\rho,V}_{U,d}$ with linear readout $y^{(J)} := \big( y^{(J)}_{i,n} \big)_{i=1,...,d}^{n=1,...,N} \in \mathcal{Y}_N$ solving~\eqref{EqEmpMSE}.}
		
		Generate i.i.d.~random variables (RVs) $(a_n,b_n)_{n=1,...,N} \sim p_{a,b}$ (see Assumption~\ref{AssPDFRN}).\label{AlgRN1}
		
		Generate i.i.d.~random variables (RVs) $(V_j)_{j = 1,...,J} \sim w$ independent of $(a_n,b_n)_{n=1,...,N}$.\label{AlgRN2}
		
		Define the $\mathbb{R}^{(J \cdot \vert \mathbb{N}^m_{0,k} \vert) \times N}$-valued RV $\mathbf{G} = (\mathbf{G}_{(j,\alpha),n})_{(j,\alpha) \in \lbrace 1,...,J \rbrace \times \mathbb{N}^m_{0,k}}^{n = 1,...,N}$ with $\mathbf{G}_{(j,\alpha),n} := c_\alpha \rho^{(\vert \alpha \vert)}\big( a_n^\top V_j - b_n \big) a_n^\alpha$ for $(j,\alpha) \in \lbrace 1,...,J \rbrace \times \mathbb{N}^m_{0,k}$ and $n = 1,...,N$.\label{AlgRN3}
		
		\For{$i = 1,...,d$\label{AlgRN4}}{
			
			Define the $\mathbb{R}^{J \cdot \vert \mathbb{N}^m_{0,k} \vert}$-valued random variable $Z_i := (c_\alpha \partial_\alpha f_i(V_j))_{(j,\alpha) \in \lbrace 1,...,J \rbrace \times \mathbb{N}^m_{0,k}}^\top$.\label{AlgRN5}
			
			Solve the least squares problem $\mathbf{G}^\top \mathbf{G} y_i^{(J)} = \mathbf{G}^\top Z_i$ for $y_i^{(J)} := \big( y^{(J)}_{(i,n)} \big)_{n=1,...,N}^\top$ via Cholesky decomposition and forward/backward substitution (see \cite[Section~2.2.2]{bjoerck96}).\label{AlgRN6}
			
		}
		
		Return $\Omega \ni \omega \mapsto G_N^{y^{(J)}}(\omega) := \sum_{n=1}^N y^{(J)}_n(\omega) \rho\big( a_n(\omega)^\top \cdot - b_n(\omega) \big) \in C^k_{pol,\gamma}(U;\mathbb{R}^d)$.
	\end{small}
	\caption{Least squares method to learn a random neural network}
	\label{AlgRN}
\end{algorithm}

\begin{corollary}[Generalization error]
	\label{CorGenErrRN}
	Let $w: U \rightarrow [0,\infty)$ be a normalized weight such that the constant $C^{(\gamma,2)}_{U,w} > 0$ defined in \eqref{EqCorARRN1} is finite. Moreover, let $(\psi,\rho) \in \mathcal{S}_0(\mathbb{R};\mathbb{C}) \times C^k_{pol,\gamma}(\mathbb{R})$ be $m$-admissible and let Assumption~\ref{AssPDFRN} hold. Then, there exists some $C_5 > 0$ (being independent of $m,d \in \mathbb{N}$) such that for every $J,N \in \mathbb{N}$, $L > 0$, and $f := (f_1,...,f_d)^\top \in W^{k,2}(U,\mathcal{L}(U),w;\mathbb{R}^d) \cap \widetilde{\mathbb{B}}^{k,2,\gamma}_{\psi,a,b}(U;\mathbb{R}^d)$, Algorithm~\ref{AlgRN} returns a random neural network $G^{y^{(J)}}_N \in \mathcal{RN}^\rho_{U,d}$ with $N$ neurons being a strongly $(\mathbb{P},\mathcal{F}_{a,b,V})$-measurable map $G^{y^{(J)}}_N: \Omega \rightarrow W^{k,2}(U,\mathcal{L}(U),w;\mathbb{R}^d)$ such that
	\begin{equation}
		\label{EqCorGenErrRN1}
		\begin{aligned}
			& \mathbb{E}\left[ \sum_{\alpha \in \mathbb{N}^m_{0,k}} \int_U \left\Vert T_L\left( \partial_\alpha f(u) - \partial_\alpha G^{y^{(J)}}_N(\cdot)(u)\right) \right\Vert^2 w(u) du \right]^\frac{1}{2} \\
			& \quad\quad \leq C_5 L m^\frac{k}{2} d^\frac{1}{2} \sqrt{\frac{(\ln(J)+1) N}{J}} + C_5 \kappa(\mathbf{c}) \Vert \rho \Vert_{C^k_{pol,\gamma}(\mathbb{R})} \frac{C^{(\gamma,2)}_{U,w} m^\frac{k}{2}}{\left\vert C^{(\psi,\rho)}_m \right\vert} \frac{\Vert f \Vert_{\widetilde{\mathbb{B}}^{k,2,\gamma}_{\psi,a,b}(U;\mathbb{R}^d)}}{\sqrt{N}}.
		\end{aligned}
	\end{equation}
	Moreover, if $\vert \partial_\alpha f_i(u) \vert \leq L$ for all $\alpha \in \mathbb{N}^m_{0,k}$, $i = 1,...,d$, and $u \in U$, then 
	\begin{equation}
		\label{EqCorGenErrRN2}
		\begin{aligned}
			& \mathbb{E}\left[ \sum_{\alpha \in \mathbb{N}^m_{0,k}} \int_U \left\Vert \partial_\alpha f(u) - T_L\left(\partial_\alpha G^{y^{(J)}}_N(\cdot)(u)\right) \right\Vert^2 w(u) du \right]^\frac{1}{2} \\
			& \quad\quad \leq C_5 L m^\frac{k}{2} d^\frac{1}{2} \sqrt{\frac{(\ln(J)+1) N}{J}} + C_5 \kappa(\mathbf{c}) \Vert \rho \Vert_{C^k_{pol,\gamma}(\mathbb{R})} \frac{C^{(\gamma,2)}_{U,w} m^\frac{k}{2}}{\left\vert C^{(\psi,\rho)}_m \right\vert} \frac{\Vert f \Vert_{\widetilde{\mathbb{B}}^{k,2,\gamma}_{\psi,a,b}(U;\mathbb{R}^d)}}{\sqrt{N}}.
		\end{aligned}
	\end{equation}
\end{corollary}

\begin{remark}
	Corollary~\ref{CorGenErrRN} extends the generalization error in \cite[Theorem~4.1]{gonon21} for random neural networks with ReLU activation function to more general activation functions and includes the approximation of the weak derivatives. Moreover, \eqref{EqCorGenErrRN1}+\eqref{EqCorGenErrRN2} coincide (up to constants) with the generalization error in \cite[Theorem~3]{barron94} for \emph{deterministic} neural networks trained via constrained regression, which is however difficult to compute in practice.
\end{remark}

\section{Numerical examples}
\label{SecNE}

In this section, we illustrate in two numerical examples\footnotemark~that random feature models (including random neural networks) can be effectively used to learn the solution of a partial differential equation (PDE) of the form
\begin{equation}
	\label{eq:pde}
	\left\lbrace
	\begin{matrix*}[l]
		\frac{\partial f}{\partial t}(t,u) - (\mathcal{A} f)(t,u) & = 0, & \quad\quad (t,u) & \in (0,\infty) \times \mathbb{R}^m, \\
		\quad\quad\quad\quad\quad\;\;\;\, f(0,u) & = f_0(u), & \quad\quad\quad\;\; u & \in \mathbb{R}^m,
	\end{matrix*}
	\right.
\end{equation}
where $\mathcal{A}$ is a differential operator on a suitable function space. We first consider the heat equation as an example of a linear PDE, followed by the non-linear Fokker-Planck equation. For both PDEs, we consider the following two complementary learning settings:
\begin{itemize}
	\item \textbf{Fixed-time learning:} We learn the solution $f(t,\cdot)$ of the PDE~\eqref{eq:pde} at a fixed time $t > 0$, which is a function regression problem and requires the prior knowledge of its solution.
	\item \textbf{Spatio-temporal learning:} We learn the solution $f$ of the PDE~\eqref{eq:pde} over a time interval $[0,T]$, where $T > 0$. Instead of knowing the PDE solution a-priori, we assume that $\mathcal{A}$ is the generator of a strongly continuous (possibly non-linear) convex monotone semigroup $(S_t)_{t \in [0,\infty)}$ on a suitable Banach space (e.g., $S_t = e^{t\mathcal{A}}$ if $\mathcal{A}$ is linear), which allows us to define the feature map $G(t,u) := (S_t g_\theta)(u)$ solving the PDE in the interior (in an appropriate (mild/viscosity) sense; see \cite{kato67,brezis71,crandall71,barbu10,blessing25}), where $(g_\theta)_{\theta \in \Theta}$ are parametrized functions. Hence, we only need to minimize the residual with respect to the initial condition $f_0$.
\end{itemize}
We compare both approaches in Section~\ref{SecRL} with related random feature methods from the literature and summarize our main contributions.

\footnotetext{The numerical experiments have been implemented in \texttt{Python} and were executed on a high-performing computing (HPC) cluster of ETH Zurich. The code can be found under the following link: https://github.com/psc25/RandomNeuralNetworks}

\subsection{Linear heat equation}

First, we consider the linear heat equation, which describes the evolution of a given quantity throughout time. More precisely, we consider the PDE
\begin{equation}
	\label{EqDefHeat}
	\left\lbrace
	\begin{matrix*}[l]
		\frac{\partial f}{\partial t}(t,u) - \lambda \Delta f(t,u) & = 0, & \quad\quad (t,u) & \in (0,\infty) \times \mathbb{R}^m, \\
		\quad\quad\quad\quad\quad\;\;\; f(0,u) & = f_0(u), & \quad\quad\quad\;\; u & \in \mathbb{R}^m,
	\end{matrix*}
	\right.
\end{equation}
where $\Delta f(t,u) := \sum_{l=1}^m \frac{\partial^2 f}{\partial u_l^2}(t,u)$ denotes the Laplacian, and where $f_0: \mathbb{R}^m \rightarrow \mathbb{R}$ represents the initial condition. If $f_0: \mathbb{R}^m \rightarrow \mathbb{R}$ is bounded and continuous, then
\begin{equation}
	\label{EqSolHeat}
	[0,\infty) \times \mathbb{R}^m \ni (t,u) \quad \mapsto \quad f(t,u) = \frac{1}{(4\pi \lambda t)^\frac{m}{2}} \int_{\mathbb{R}^m} e^{-\frac{\Vert u-v \Vert^2}{4\lambda t}} f_0(v) dv \in \mathbb{R}
\end{equation}
is the unique solution of \eqref{EqDefHeat} with initial condition $f_0: \mathbb{R}^m \rightarrow \mathbb{R}$ (see \cite[Theorem~2.3.1]{evans10}).

\subsubsection{Fixed-time learning}
\label{SecHeatSuper}

For some fixed $t \in [0,T]$, we now learn the function $f(t,\cdot)$ by random trigonometric feature models, random neural networks, and their deterministic counterparts. We omit random Fourier regression as it coincides for real-valued functions with random trigonometric models. Moreover, we provide sufficient conditions that the approximation of $f(t,\cdot)$ by random feature models overcomes the curse of dimensionality in the sense that the computational costs (measured as the number of feature maps $N \in \mathbb{N}$; for $\mathcal{RT}_{\mathbb{R}^m,1}$, we use $\lceil N/2 \rceil$ cosine and $\lceil N/2 \rceil$ sine features) grow polynomially in both the input/output dimensions $m,d \in \mathbb{N}$ and the reciprocal of a pre-specified tolerated approximation error $\varepsilon > 0$. To this end, we apply the approximation rates in Corollary~\ref{CorARTrigo}+\ref{CorARRN}.

\begin{corollary}
	\label{CorHeat}
	For $\lambda,t,\kappa_m,\sigma \in (0,\infty)$ and $f_0(u) := \frac{\kappa_m}{(2\pi \sigma^2)^{m/2}} \exp\left( -\Vert u \Vert^2/(2\sigma^2) \right)$, $u \in \mathbb{R}^m$, let $f(t,\cdot): \mathbb{R}^m \rightarrow \mathbb{R}$ be the solution of \eqref{EqDefHeat} at time $t$. Moreover, let $p \in (1,\infty)$, $\gamma \in [0,\infty)$, and $w: \mathbb{R}^m \rightarrow [0,\infty)$ be as in Lemma~\ref{LemmaWeight}. Then, the following holds:
	\begin{enumerate}
		\item\label{CorHeat1} Let $(\theta_n)_{n \in \mathbb{N}} \sim t_m$ be i.i.d. Then, there exists $C_6,C_7 > 0$ (being independent of $m \in \mathbb{N}$) such that for every $N \in \mathbb{N}$ there exists a random trigonometric feature model $G_N \in \mathcal{RT}_{\mathbb{R}^m,1}$ with $N$ features satisfying
		\begin{equation}
			\label{EqCorHeat1}
			\mathbb{E}\left[ \Vert f(t,\cdot) - G_N \Vert_{L^p(\mathbb{R}^m,\mathcal{L}(\mathbb{R}^m),w)}^2 \right]^\frac{1}{2} \leq C_6 m^{C_7} \frac{\kappa_m \left( \frac{(\zeta_2/\zeta_1)^2}{\pi \sigma} \right)^\frac{m}{2}}{N^{1-\frac{1}{\min(2,p)}}}.
		\end{equation}
		In particular, if $\kappa_m \left( (\zeta_2/\zeta_1)^2/ (\pi \sigma) \right)^{m/2} = \mathcal{O}\left( m^\nu \right)$ for some $\nu > 0$, then there exists $C_8,C_9 > 0$ such that for every $m \in \mathbb{N}$ and $\varepsilon > 0$ there exists a random trigonometric feature model $G_N \in \mathcal{RT}_{\mathbb{R}^m,1}$ with $N = \big\lceil C_8 m^{C_9} \varepsilon^{-\frac{\min(2,p)}{\min(2,p)-1}} \big\rceil$ features satisfying
		\begin{equation}
			\label{EqCorHeat2}
			\mathbb{E}\left[ \Vert f(t,\cdot) - G_N \Vert_{L^p(\mathbb{R}^m,\mathcal{L}(\mathbb{R}^m),w)}^2 \right]^\frac{1}{2} \leq \varepsilon.
		\end{equation}
		\item\label{CorHeat2} Let $(a_n,b_n)_{n \in \mathbb{N}} \sim t_m \otimes t_1$ be i.i.d.~and let $(\psi,\rho) \in \mathcal{S}_0(\mathbb{R};\mathbb{C}) \times C^0_{pol,\gamma}(\mathbb{R})$ be as in Example~\ref{ExAdm} (with $0 < \zeta_1 < \zeta_2 < \infty$). Then, $f(t,\cdot) \in \widetilde{\mathbb{B}}^{0,2,\gamma}_{\psi,a,b}(\mathbb{R}^m)$ and there exist $C_{10},C_{11} > 0$ (being independent of $m \in \mathbb{N}$) such that for every $N \in \mathbb{N}$ there exists a random neural network $G_N \in \mathcal{RN}^\rho_{\mathbb{R}^m,1}$ with $N$ neurons satisfying
		\begin{equation}
			\label{EqCorHeat3}
			\mathbb{E}\left[ \Vert f(t,\cdot) - G_N \Vert_{L^p(\mathbb{R}^m,\mathcal{L}(\mathbb{R}^m),w)}^2 \right]^\frac{1}{2} \leq C_{10} m^{C_{11}} \frac{\kappa_m \left( \frac{(\zeta_2/\zeta_1)^2}{\pi \sigma} \right)^\frac{m}{2}}{N^{1-\frac{1}{\min(2,p)}}}.
		\end{equation}
		In particular, $\kappa_m \left( (\zeta_2/\zeta_1)^2/ (\pi \sigma) \right)^{m/2} = \mathcal{O}\left( m^\nu \right)$ for some $\nu > 0$, then there exist $C_{12},C_{13} > 0$ such that for every $m \in \mathbb{N}$ and $\varepsilon > 0$ there exists a random neural network $G_N \in \mathcal{RN}^\rho_{\mathbb{R}^m,1}$ with $N = \big\lceil C_{12} m^{C_{13}} \varepsilon^{-\frac{\min(2,p)}{\min(2,p)-1}} \big\rceil$ neurons satisfying
		\begin{equation}
			\label{EqCorHeat4}
			\mathbb{E}\left[ \Vert f(t,\cdot) - G_N \Vert_{L^p(\mathbb{R}^m,\mathcal{L}(\mathbb{R}^m),w)}^2 \right]^\frac{1}{2} \leq \varepsilon.
		\end{equation}
	\end{enumerate}
\end{corollary}

Note that the approximation bounds in \eqref{EqCorHeat1} and \eqref{EqCorHeat3} can be incorporated into Theorem~\ref{ThmGenErr} and Corollary~\ref{CorGenErrRN}, showing that the generalization error also does not suffer from the curse of dimensionality. For the numerical experiment, we choose $t = 1$, $\lambda = 0.02$, the initial condition $f_0(u) := \kappa_m (2\pi \sigma^2)^{-m/2} \exp\left( -\Vert u \Vert^2/(2\sigma^2) \right)$ with $\sigma = 1$ and $\kappa_m = 10^{-4} m^6 1.7724^m$, and the weight $w(u) := (2\pi \sigma_w^2)^{-m/2} \exp\left( -\Vert u \Vert^2/(2 \sigma_w^2) \right)$ with $\sigma_w := 0.5$ satisfying the conditions of Lemma~\ref{LemmaWeight}. Then, we generate $J = 2^{12},2^{13},...,2^{18}$ i.i.d.~data $(V_j)_{j=1,...,J} \sim w$, split up into $80\%/20\%$ for training/testing, and minimize the empirical $L^2$-error
\begin{equation}
	\label{EqDefMSE}
	\bigg( \frac{1}{J} \sum_{j=1}^J \left\vert f(1,V_j) - G_N(\cdot)(V_j) \right\vert^2 \bigg)^\frac{1}{2}
\end{equation}
over random trigonometric feature models $G_N \in \mathcal{RT}_{\mathbb{R}^m,1}$, random neural networks $G_N \in \mathcal{RN}^{\tanh}_{\mathbb{R}^m,1}$, and their deterministic counterparts, all of them with $N = 10 \sqrt{J/\ln(J)}$ features.

For the random feature models, we randomly initialize $(\theta_n)_{n \in \mathbb{N}} \sim \mathbf{t}_m$ (for $\mathcal{RT}_{\mathbb{R}^m,1}$) and $(a_n,b_n)_{n \in \mathbb{N}} \sim \mathbf{t}_m \otimes \mathbf{t}_1$ (for $\mathcal{RN}^{\tanh}_{\mathbb{R}^m,1}$), and train the models by minimizing \eqref{EqDefMSE} according to Algorithm~1+2. As $\kappa_m \left( (\zeta_2/\zeta_1)^2/(\pi\sigma) \right)^{m/2} \leq 10^{-4} m^6$, Corollary~\ref{CorHeat} shows that the approximation of $f(t,\cdot)$ by the random feature models overcomes the curse of dimensionality.

For the deterministic feature models, we minimize \eqref{EqDefMSE} over the deterministic trigonometric feature models $G_N \in \mathcal{T}_{\mathbb{R}^m,1} := \linspan_\mathbb{R}\big(\left\lbrace \mathbb{R}^m \ni u \mapsto h\big( \vartheta^\top u \big) \in \mathbb{R}: h \in \lbrace \cos, \sin \rbrace \right\rbrace\big)$ and the deterministic neural networks $G_N \in \mathcal{N}^{\tanh}_{\mathbb{R}^m,1} := \linspan_\mathbb{R}\big(\big\lbrace \mathbb{R}^m \ni u \mapsto \tanh\big( \vartheta_1^\top u - \vartheta_2 \big) \in \mathbb{R}: (\vartheta_1,\vartheta_2) \in \mathbb{R}^m \times \mathbb{R} \big\rbrace\big)$. Hereby, we apply the Adam algorithm (see \cite{kingma15}) over $5000$ epochs (with learning rate $10^{-4}$ and batchsize $1000$) to train the deterministic feature models.

Figure~\ref{FigHeat} empirically demonstrates that random trigonometric features and random neural networks are indeed able to learn the solution of the heat equation \eqref{EqDefHeat}. While random feature models typically require a larger $N$ to achieve a reasonable accuracy, they significantly outperform their deterministic counterparts in terms of computational time (see also Table~\ref{TabHeat}).

\begin{figure}[h]
	\begin{minipage}[t]{0.48\textwidth}
		\centering
		\includegraphics[height = 5.2cm]{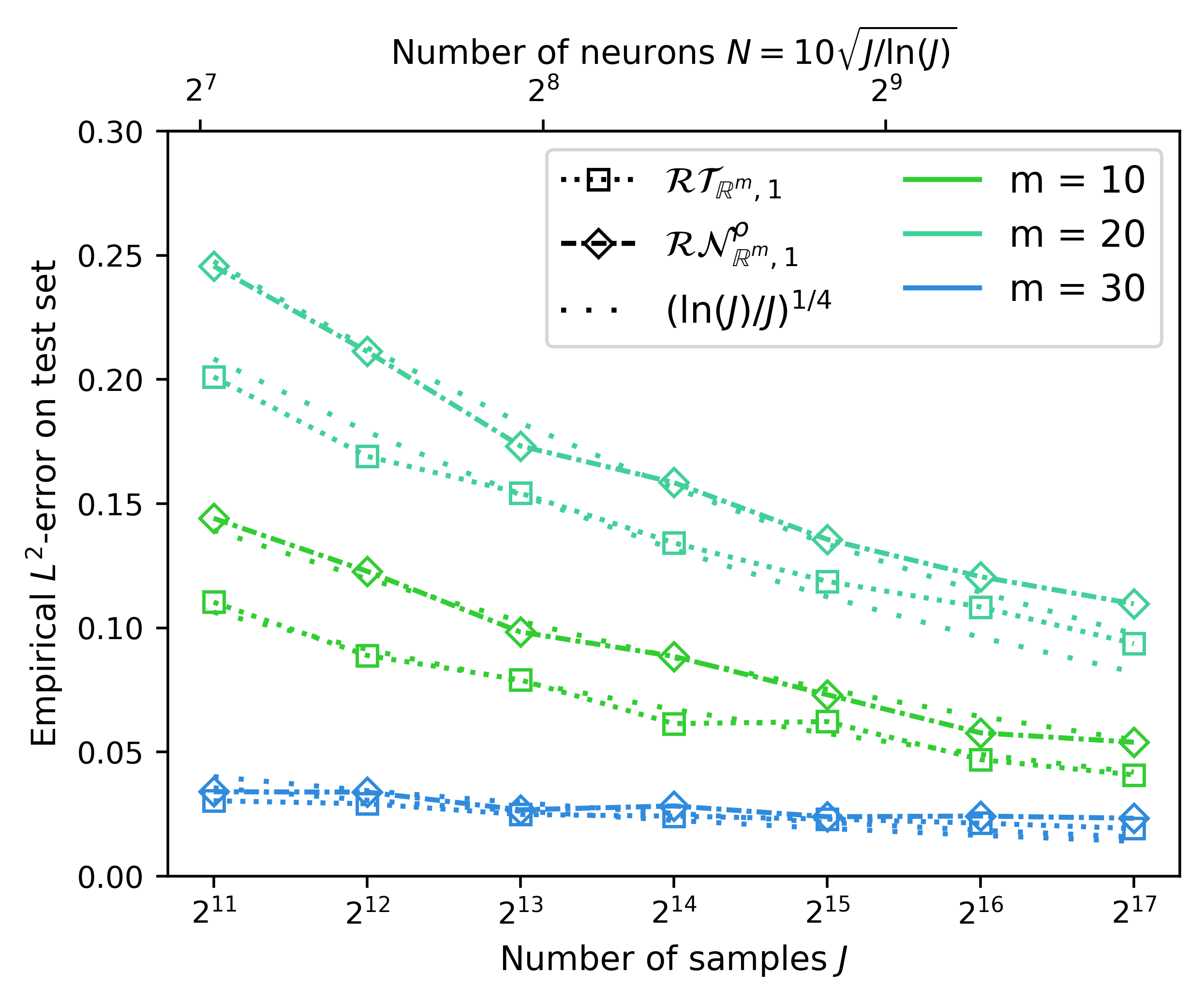}
		\subcaption{Empirical $L^2$-error defined in \eqref{EqDefMSE}.}
	\end{minipage}
	\hspace{0.02\textwidth}
	\begin{minipage}[t]{0.48\textwidth}
		\centering
		\includegraphics[height = 5.2cm]{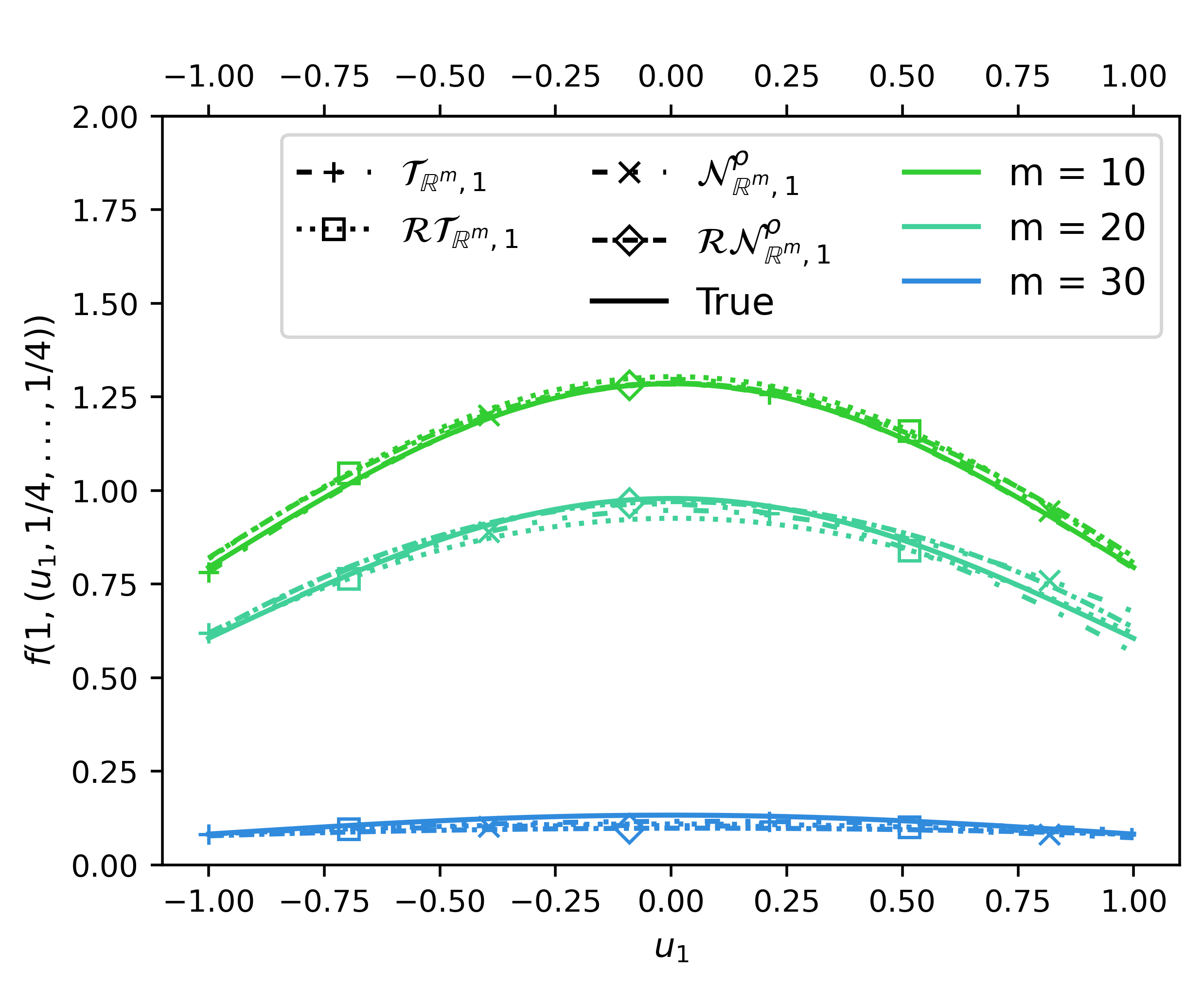}
		\subcaption{Approximation of the function $\mathbb{R} \ni u_1 \mapsto f(1,(u_1,0.4,...,0.4)) \in \mathbb{R}$.}
	\end{minipage}
	\caption{Fixed-time learning of the heat equation \eqref{EqDefHeat}.}
	\label{FigHeat}
\end{figure}

\begin{table}
	\centering
	\begin{scriptsize}
		\begin{tabular}{ll|R{1.2cm}|R{1.2cm}|R{1.2cm}|R{1.2cm}|R{1.2cm}|R{1.2cm}|R{1.2cm}}
			\multicolumn{2}{r|}{$N = 10 \sqrt{J \ln(J)}$} & \multicolumn{1}{c|}{$J = 2^{12}$} & \multicolumn{1}{c|}{$J = 2^{13}$} & \multicolumn{1}{c|}{$J = 2^{14}$} & \multicolumn{1}{c|}{$J = 2^{15}$} & \multicolumn{1}{c|}{$J = 2^{16}$} & \multicolumn{1}{c}{$J = 2^{17}$} & \multicolumn{1}{c}{$J = 2^{18}$} \\
			\hline
			\multirow{8}{*}{$m = 10$} & \multirow{2}{*}{$\mathcal{T}_{\mathbb{R}^m,1}$} & \textbf{0.098}  & \textbf{0.073}  & \textbf{0.047}  & \textbf{0.028}  & \textbf{0.026}  & \textbf{0.024}  & \textbf{0.020} \\ 
 & & \textit{35.95} s  & \textit{59.82} s  & \textit{122.96} s  & \textit{244.66} s  & \textit{455.58} s  & \textit{922.78} s  & \textit{2031.82} s \\ 
& \multirow{2}{*}{$\mathcal{RT}_{\mathbb{R}^m,1}$} & \cellcolor{gray!40} \textbf{0.110}  & \cellcolor{gray!40} \textbf{0.089}  & \cellcolor{gray!40} \textbf{0.079}  & \cellcolor{gray!40} \textbf{0.061}  & \cellcolor{gray!40} \textbf{0.062}  & \cellcolor{gray!40} \textbf{0.047}  & \cellcolor{gray!40} \textbf{0.041} \\ 
 & & \cellcolor{gray!40} \textit{0.03} s & \cellcolor{gray!40} \textit{0.09} s & \cellcolor{gray!40} \textit{0.31} s & \cellcolor{gray!40} \textit{1.24} s & \cellcolor{gray!40} \textit{4.34} s & \cellcolor{gray!40} \textit{16.00} s & \cellcolor{gray!40} \textit{56.92} s \\ 
& \multirow{2}{*}{$\mathcal{N}_{\mathbb{R}^m,1}$} & \textbf{0.106}  & \textbf{0.095}  & \textbf{0.081}  & \textbf{0.062}  & \textbf{0.042}  & \textbf{0.032}  & \textbf{0.029} \\ 
 & & \textit{31.11} s  & \textit{53.13} s  & \textit{107.37} s  & \textit{213.35} s  & \textit{419.86} s  & \textit{805.53} s  & \textit{1730.47} s \\ 
& \multirow{2}{*}{$\mathcal{RN}_{\mathbb{R}^m,1}$} & \cellcolor{gray!40} \textbf{0.144}  & \cellcolor{gray!40} \textbf{0.123}  & \cellcolor{gray!40} \textbf{0.098}  & \cellcolor{gray!40} \textbf{0.089}  & \cellcolor{gray!40} \textbf{0.073}  & \cellcolor{gray!40} \textbf{0.058}  & \cellcolor{gray!40} \textbf{0.054} \\ 
 & & \cellcolor{gray!40} \textit{0.04} s & \cellcolor{gray!40} \textit{0.10} s & \cellcolor{gray!40} \textit{0.34} s & \cellcolor{gray!40} \textit{1.32} s & \cellcolor{gray!40} \textit{4.51} s & \cellcolor{gray!40} \textit{15.80} s & \cellcolor{gray!40} \textit{72.68} s \\ 
\hline 
\multirow{8}{*}{$m = 20$} & \multirow{2}{*}{$\mathcal{T}_{\mathbb{R}^m,1}$} & \textbf{0.137}  & \textbf{0.139}  & \textbf{0.122}  & \textbf{0.108}  & \textbf{0.097}  & \textbf{0.087}  & \textbf{0.072} \\ 
 & & \textit{37.75} s  & \textit{71.77} s  & \textit{145.13} s  & \textit{283.43} s  & \textit{501.72} s  & \textit{1001.23} s  & \textit{2041.21} s \\ 
& \multirow{2}{*}{$\mathcal{RT}_{\mathbb{R}^m,1}$} & \cellcolor{gray!40} \textbf{0.201}  & \cellcolor{gray!40} \textbf{0.169}  & \cellcolor{gray!40} \textbf{0.154}  & \cellcolor{gray!40} \textbf{0.134}  & \cellcolor{gray!40} \textbf{0.119}  & \cellcolor{gray!40} \textbf{0.108}  & \cellcolor{gray!40} \textbf{0.094} \\ 
 & & \cellcolor{gray!40} \textit{0.25} s & \cellcolor{gray!40} \textit{0.09} s & \cellcolor{gray!40} \textit{0.29} s & \cellcolor{gray!40} \textit{1.23} s & \cellcolor{gray!40} \textit{4.49} s & \cellcolor{gray!40} \textit{16.44} s & \cellcolor{gray!40} \textit{61.34} s \\ 
& \multirow{2}{*}{$\mathcal{N}_{\mathbb{R}^m,1}$} & \textbf{0.169}  & \textbf{0.151}  & \textbf{0.135}  & \textbf{0.130}  & \textbf{0.116}  & \textbf{0.107}  & \textbf{0.092} \\ 
 & & \textit{29.64} s  & \textit{52.85} s  & \textit{105.27} s  & \textit{212.85} s  & \textit{478.87} s  & \textit{877.64} s  & \textit{1709.79} s \\ 
& \multirow{2}{*}{$\mathcal{RN}_{\mathbb{R}^m,1}$} & \cellcolor{gray!40} \textbf{0.246}  & \cellcolor{gray!40} \textbf{0.211}  & \cellcolor{gray!40} \textbf{0.173}  & \cellcolor{gray!40} \textbf{0.159}  & \cellcolor{gray!40} \textbf{0.136}  & \cellcolor{gray!40} \textbf{0.121}  & \cellcolor{gray!40} \textbf{0.110} \\ 
 & & \cellcolor{gray!40} \textit{0.07} s & \cellcolor{gray!40} \textit{0.13} s & \cellcolor{gray!40} \textit{0.43} s & \cellcolor{gray!40} \textit{1.54} s & \cellcolor{gray!40} \textit{4.66} s & \cellcolor{gray!40} \textit{17.25} s & \cellcolor{gray!40} \textit{61.72} s \\ 
\hline 
\multirow{8}{*}{$m = 30$} & \multirow{2}{*}{$\mathcal{T}_{\mathbb{R}^m,1}$} & \textbf{0.025}  & \textbf{0.025}  & \textbf{0.022}  & \textbf{0.022}  & \textbf{0.022}  & \textbf{0.021}  & \textbf{0.020} \\ 
 & & \textit{39.37} s  & \textit{72.97} s  & \textit{146.84} s  & \textit{258.97} s  & \textit{489.56} s  & \textit{958.23} s  & \textit{1786.83} s \\ 
& \multirow{2}{*}{$\mathcal{RT}_{\mathbb{R}^m,1}$} & \cellcolor{gray!40} \textbf{0.030}  & \cellcolor{gray!40} \textbf{0.029}  & \cellcolor{gray!40} \textbf{0.025}  & \cellcolor{gray!40} \textbf{0.024}  & \cellcolor{gray!40} \textbf{0.023}  & \cellcolor{gray!40} \textbf{0.021}  & \cellcolor{gray!40} \textbf{0.019} \\ 
 & & \cellcolor{gray!40} \textit{0.31} s & \cellcolor{gray!40} \textit{0.09} s & \cellcolor{gray!40} \textit{0.39} s & \cellcolor{gray!40} \textit{1.49} s & \cellcolor{gray!40} \textit{4.76} s & \cellcolor{gray!40} \textit{16.66} s & \cellcolor{gray!40} \textit{63.43} s \\ 
& \multirow{2}{*}{$\mathcal{N}_{\mathbb{R}^m,1}$} & \textbf{0.036}  & \textbf{0.028}  & \textbf{0.024}  & \textbf{0.022}  & \textbf{0.022}  & \textbf{0.021}  & \textbf{0.019} \\ 
 & & \textit{35.42} s  & \textit{65.60} s  & \textit{108.97} s  & \textit{206.92} s  & \textit{509.77} s  & \textit{909.82} s  & \textit{1662.40} s \\ 
& \multirow{2}{*}{$\mathcal{RN}_{\mathbb{R}^m,1}$} & \cellcolor{gray!40} \textbf{0.034}  & \cellcolor{gray!40} \textbf{0.034}  & \cellcolor{gray!40} \textbf{0.027}  & \cellcolor{gray!40} \textbf{0.028}  & \cellcolor{gray!40} \textbf{0.024}  & \cellcolor{gray!40} \textbf{0.024}  & \cellcolor{gray!40} \textbf{0.023} \\ 
 & & \cellcolor{gray!40} \textit{0.04} s & \cellcolor{gray!40} \textit{0.11} s & \cellcolor{gray!40} \textit{0.37} s & \cellcolor{gray!40} \textit{1.56} s & \cellcolor{gray!40} \textit{4.82} s & \cellcolor{gray!40} \textit{17.02} s & \cellcolor{gray!40} \textit{66.71} s 
		\end{tabular}
	\end{scriptsize}
	\vspace{0.3cm}
	\caption{Fixed-time learning of the heat equation \eqref{EqDefHeat}: Empirical $L^2$-error defined in \eqref{EqDefMSE} on the test set (in bold letters) and running time (in italic letters; in seconds).}
	\label{TabHeat}
\end{table}

\subsubsection{Spatio-temporal learning}

In this section, we now consider a spatio-temporal learning approach to solve the heat equation~\eqref{EqDefHeat}. To this end, we first observe that the feature map
\begin{equation*}
	\mathbb{R}^m \times \mathbb{R} \ni (a,b) \quad \mapsto \quad \left( (t,u) \mapsto g(a,b)(t,u) := e^{-\lambda t \Vert a \Vert^2} \cos\left( a^\top u - b \right) \right)
\end{equation*}
solves the heat equation~\eqref{EqDefHeat} in the interior. Thus, random feature models of the form
\begin{equation}
	\label{EqDefRNHeat}
	\Omega \ni \omega \,\,\, \mapsto \,\,\, \left( (t,u) \mapsto G_N(\omega)(t,u) := \sum_{n=1}^N y_n(\omega) e^{-\lambda t \Vert a_n(\omega) \Vert^2} \cos\left( a_n(\omega)^\top u - b_n(\omega) \right) \right),
\end{equation}
solve for any fixed $\omega \in \Omega$ the heat equation \eqref{EqDefHeat} in the interior, where $(a_n,b_n)_{n=1,...,N} \sim \mathbf{t}_m \otimes \mathbf{t}_1$ are the random initializations and $(y_n)_{n=1,...,N}$ are the $\mathcal{F}_{a,b}$-measurable linear readouts. We denote by $\mathcal{R}\lbrace g \rbrace_{\mathbb{R}^m,1}$ all random feature models of the form \eqref{EqDefRNHeat}, while $\lbrace g \rbrace_{\mathbb{R}^m,1} := \linspan\big\lbrace (t,u) \mapsto e^{-\lambda t \Vert a \Vert^2} \cos\left( a^\top u - b \right): (a,b) \in \mathbb{R}^m \times \mathbb{R} \big\rbrace$ is the deterministic counterpart.

Hence, we train the linear readouts $(y_n)_{n=1,...,N}$ so that the corresponding random feature model $G_N$ minimizes the empirical $L^2$-error with respect to the initial condition $f_0$, i.e.
\begin{equation}
	\label{EqDefMSE1}
	\bigg( \frac{1}{J} \sum_{j=1}^J \left\vert f_0(V_j) - G_N(\cdot)(0,V_j) \right\vert^2 \bigg)^\frac{1}{2},
\end{equation}
where $(V_j)_{j=1,...,J} \sim w$ for some weight $w: \mathbb{R}^m \rightarrow [0,\infty)$. For $T > 0$ and a partition $0 = t_0 < t_1 < ... < t_K = T$, we then test the performance of $G_N$ on the global $L^2$-error
\begin{equation}
	\label{EqDefMSE2}
	\bigg( \frac{1}{J(K+1)} \sum_{j=1}^J \sum_{k=0}^K \left\vert f(t_k,V_j) - G_N(\cdot)(t_k,V_j) \right\vert^2 \bigg)^\frac{1}{2}.
\end{equation}
This procedure only requires a feature map solving the heat equation~\eqref{EqDefHeat} in the interior. Note that, unlike other PDE learning methods based on deterministic neural networks (see, e.g., \cite{sirignano18}), it is difficult to minimize the residual of a linear PDE, e.g., $\partial_t f - \mathcal{L} f$ with random neural networks as the only trainable parameter (the linear readout) factors out linearly. If $\partial_t f$ and $\mathcal{L} f$ have different distributions, it forces the linear readout towards zero, which might fail to represent the true PDE solution. However, our method only minimizes the residual with respect to the initial condition, i.e., without the interior, and therefore circumvents this issue.

Now, we choose $T = 1$, $K = 20$, the initial condition $g$, the weight $w$, and $(V_j)_{j=1,...,J} \sim w$ as in Section~\ref{SecHeatSuper}. Then, we train the random feature model $G_N \in \mathcal{R}\lbrace g \rbrace_{\mathbb{R}^m,1}$ of the form \eqref{EqDefRNHeat} by minimizing \eqref{EqDefMSE1} according to Algorithm~1. On the other hand, we also consider the corresponding deterministic feature models $G_N \in \lbrace g \rbrace_{\mathbb{R}^m,1}$, which are trained with the Adam algorithm over $5000$ epochs (with learning rate $10^{-4}$ and batchsize $1000$).

Figure~\ref{FigHeatUSV} empirically demonstrates that specific random feature models and their deterministic counterparts can learn the heat equation~\eqref{EqDefHeat}. However, the random feature models outperform the deterministic models in terms of computational time (see also Table~\ref{TabHeatUSV}).

\begin{figure}[h]
	\begin{minipage}[t]{0.48\textwidth}
		\centering
		\includegraphics[height = 5.2cm]{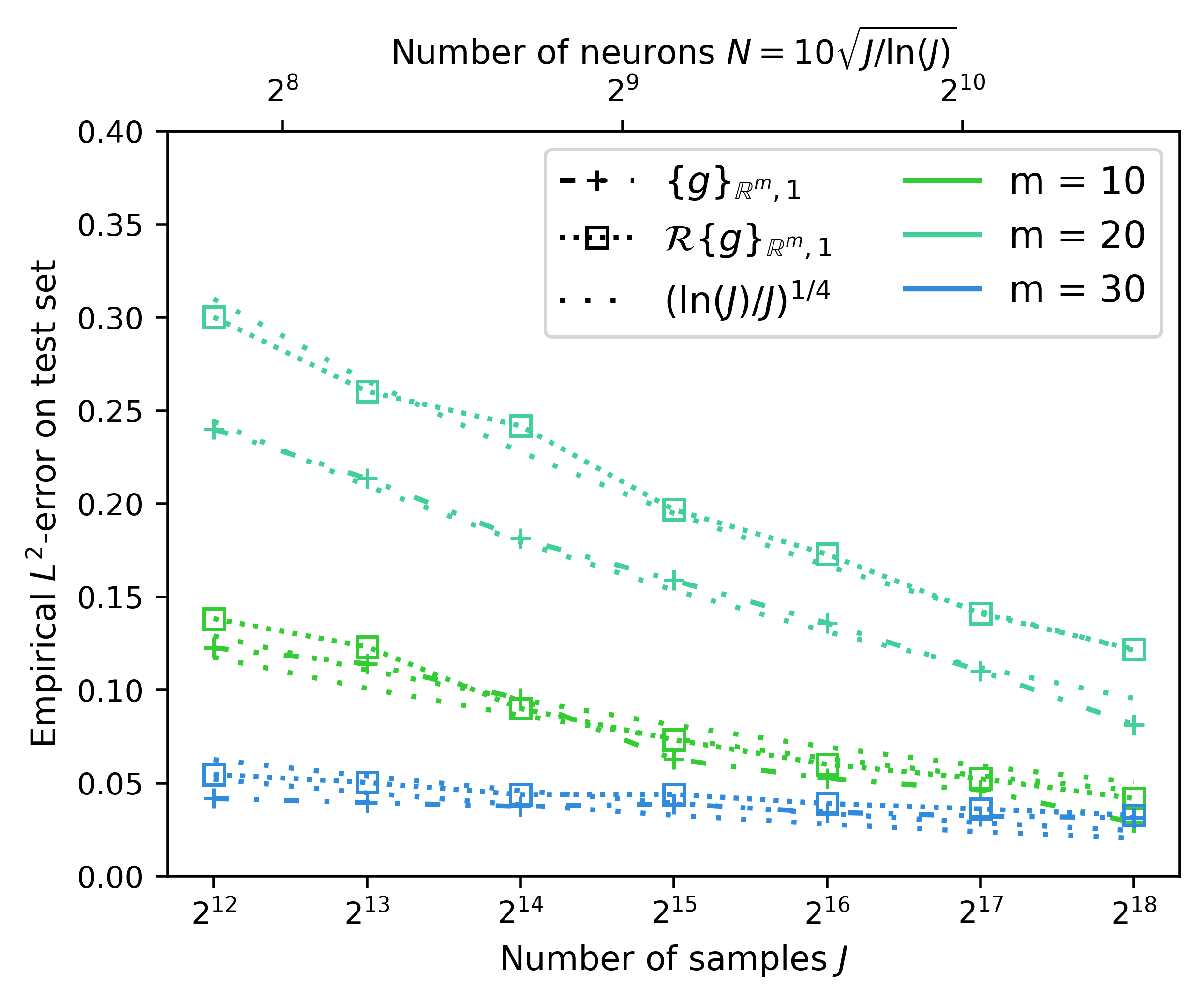}
		\subcaption{Empirical $L^2$-error defined in \eqref{EqDefMSE2}.}
	\end{minipage}
	\hspace{0.02\textwidth}
	\begin{minipage}[t]{0.48\textwidth}
		\centering
		\includegraphics[height = 5.5cm, trim = {0.8cm 0 0 0.4cm}, clip]{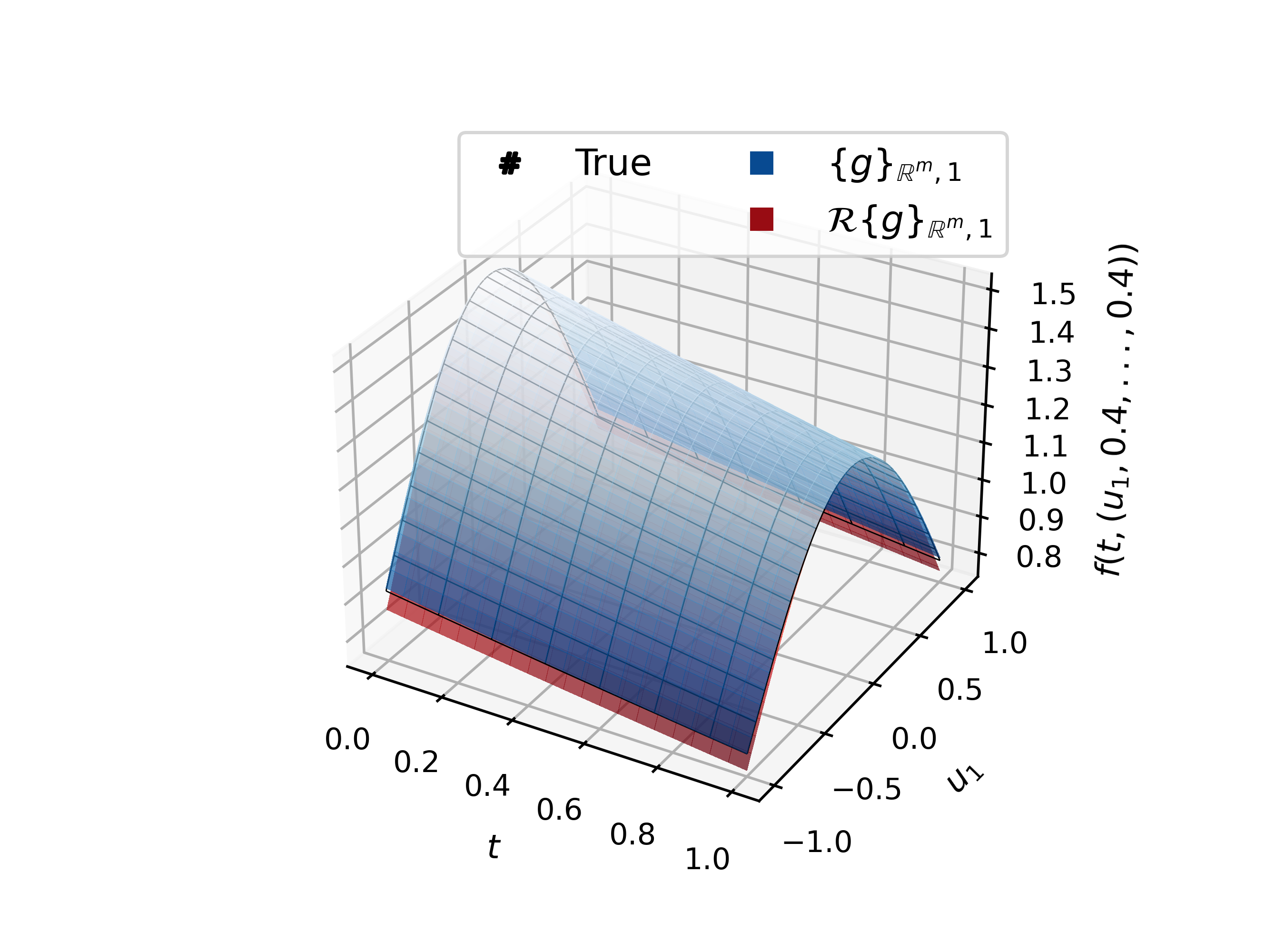}
		\subcaption{Approximation of the function $[0,T] \times \mathbb{R} \ni (t,u_1) \mapsto f(t,(u_1,0.4,...,0.4)) \in \mathbb{R}$ for $m = 10$.}
	\end{minipage}
	\begin{minipage}[t]{0.48\textwidth}
		\centering
		\includegraphics[height = 5.5cm, trim = {0.8cm 0 0 0.4cm}, clip]{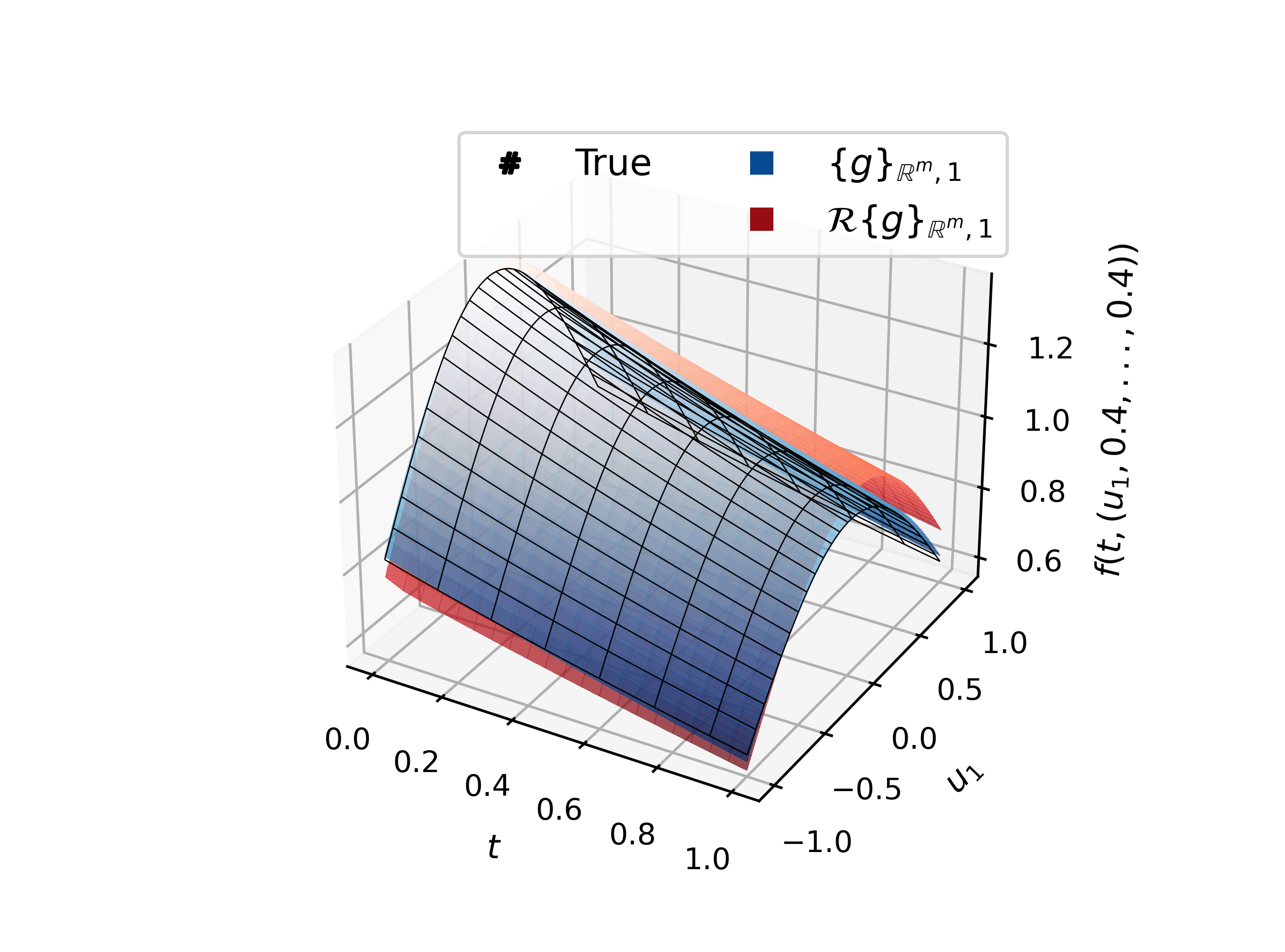}
		\subcaption{Approximation of the function $[0,T] \times \mathbb{R} \ni (t,u_1) \mapsto f(t,(u_1,0.4,...,0.4)) \in \mathbb{R}$ for $m = 20$.}
	\end{minipage}
	\hspace{0.02\textwidth}
	\begin{minipage}[t]{0.48\textwidth}
		\centering
		\includegraphics[height = 5.5cm, trim = {0.8cm 0 0 0.4cm}, clip]{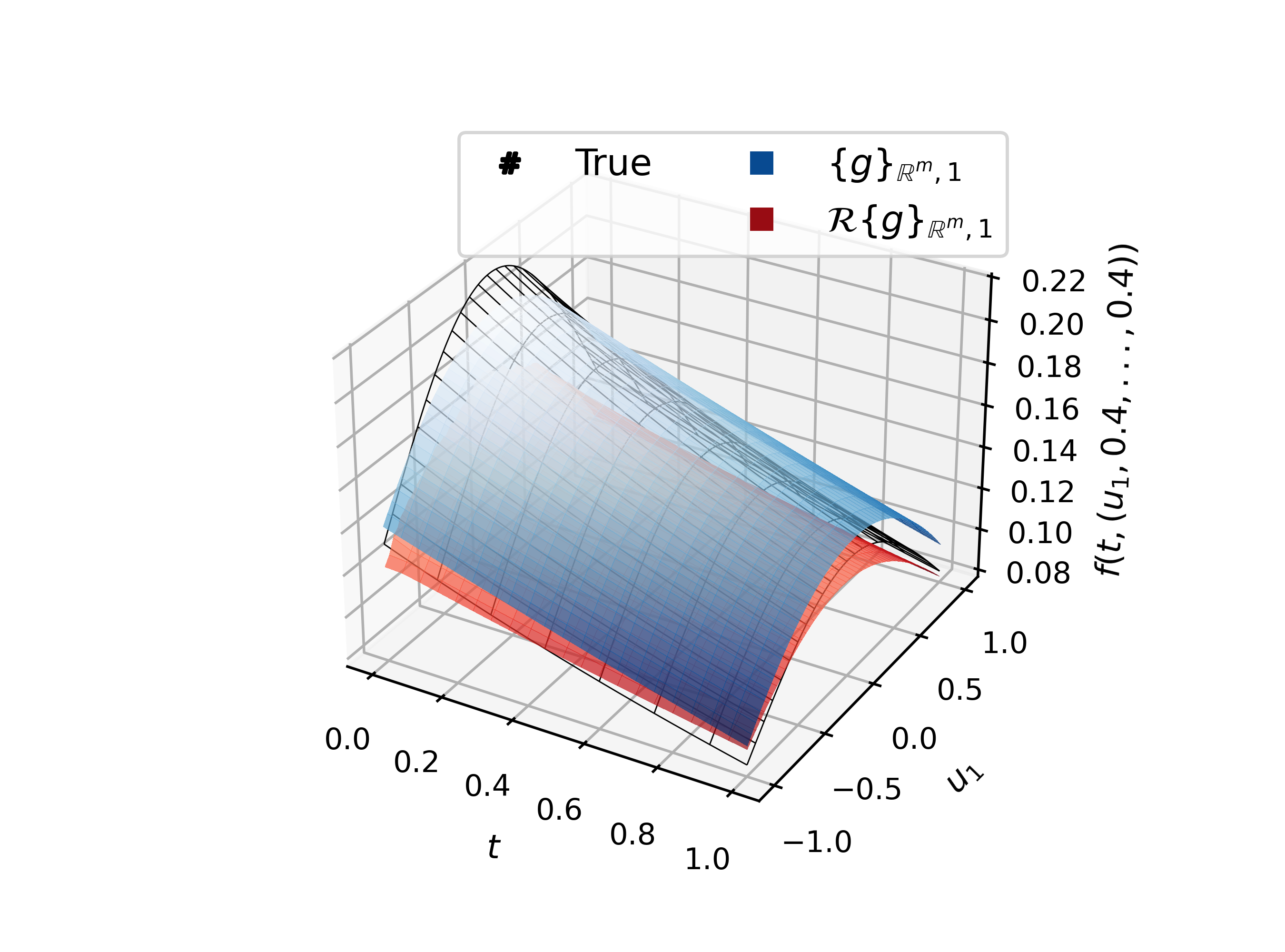}
		\subcaption{Approximation of the function $[0,T] \times \mathbb{R} \ni (t,u_1) \mapsto f(t,(u_1,0.4,...,0.4)) \in \mathbb{R}$ for $m = 30$.}
	\end{minipage}
	\caption{Spatio-temporal learning of the heat equation \eqref{EqDefHeat}.}
	\label{FigHeatUSV}
\end{figure}

\begin{table}
	\centering
	\begin{scriptsize}
		\begin{tabular}{ll|R{1.2cm}|R{1.2cm}|R{1.2cm}|R{1.2cm}|R{1.2cm}|R{1.2cm}|R{1.2cm}}
			\multicolumn{2}{r|}{$N = 10 \sqrt{J \ln(J)}$} & \multicolumn{1}{c|}{$J = 2^{12}$} & \multicolumn{1}{c|}{$J = 2^{13}$} & \multicolumn{1}{c|}{$J = 2^{14}$} & \multicolumn{1}{c|}{$J = 2^{15}$} & \multicolumn{1}{c|}{$J = 2^{16}$} & \multicolumn{1}{c}{$J = 2^{17}$} & \multicolumn{1}{c}{$J = 2^{18}$} \\
			\hline
			\multirow{4}{*}{$m = 10$} & \multirow{2}{*}{$\lbrace g \rbrace_{\mathbb{R}^m,1}$} & \textbf{0.123}  & \textbf{0.114}  & \textbf{0.095}  & \textbf{0.063}  & \textbf{0.053}  & \textbf{0.046}  & \textbf{0.029} \\ 
 & & \textit{37.25} s  & \textit{68.73} s  & \textit{127.17} s  & \textit{239.23} s  & \textit{560.76} s  & \textit{1007.78} s  & \textit{1871.33} s \\ 
& \multirow{2}{*}{$\mathcal{R}\lbrace g \rbrace_{\mathbb{R}^m,1}$} & \cellcolor{gray!40} \textbf{0.138}  & \cellcolor{gray!40} \textbf{0.123}  & \cellcolor{gray!40} \textbf{0.090}  & \cellcolor{gray!40} \textbf{0.073}  & \cellcolor{gray!40} \textbf{0.060}  & \cellcolor{gray!40} \textbf{0.052}  & \cellcolor{gray!40} \textbf{0.042} \\ 
 & & \cellcolor{gray!40} \textit{0.11} s & \cellcolor{gray!40} \textit{0.30} s & \cellcolor{gray!40} \textit{0.87} s & \cellcolor{gray!40} \textit{2.67} s & \cellcolor{gray!40} \textit{7.49} s & \cellcolor{gray!40} \textit{27.02} s & \cellcolor{gray!40} \textit{318.27} s \\ 
\hline 
\multirow{4}{*}{$m = 20$} & \multirow{2}{*}{$\lbrace g \rbrace_{\mathbb{R}^m,1}$} & \textbf{0.240}  & \textbf{0.213}  & \textbf{0.181}  & \textbf{0.159}  & \textbf{0.136}  & \textbf{0.110}  & \textbf{0.081} \\ 
 & & \textit{33.54} s  & \textit{79.33} s  & \textit{140.58} s  & \textit{216.41} s  & \textit{542.13} s  & \textit{1006.61} s  & \textit{1937.19} s \\ 
& \multirow{2}{*}{$\mathcal{R}\lbrace g \rbrace_{\mathbb{R}^m,1}$} & \cellcolor{gray!40} \textbf{0.300}  & \cellcolor{gray!40} \textbf{0.260}  & \cellcolor{gray!40} \textbf{0.242}  & \cellcolor{gray!40} \textbf{0.197}  & \cellcolor{gray!40} \textbf{0.173}  & \cellcolor{gray!40} \textbf{0.141}  & \cellcolor{gray!40} \textbf{0.122} \\ 
 & & \cellcolor{gray!40} \textit{0.13} s & \cellcolor{gray!40} \textit{0.33} s & \cellcolor{gray!40} \textit{1.05} s & \cellcolor{gray!40} \textit{2.79} s & \cellcolor{gray!40} \textit{8.40} s & \cellcolor{gray!40} \textit{35.78} s & \cellcolor{gray!40} \textit{368.16} s \\ 
\hline 
\multirow{4}{*}{$m = 30$} & \multirow{2}{*}{$\lbrace g \rbrace_{\mathbb{R}^m,1}$} & \textbf{0.042}  & \textbf{0.040}  & \textbf{0.037}  & \textbf{0.039}  & \textbf{0.034}  & \textbf{0.032}  & \textbf{0.031} \\ 
 & & \textit{34.04} s  & \textit{78.49} s  & \textit{127.62} s  & \textit{283.79} s  & \textit{474.76} s  & \textit{946.81} s  & \textit{1940.34} s \\ 
& \multirow{2}{*}{$\mathcal{R}\lbrace g \rbrace_{\mathbb{R}^m,1}$} & \cellcolor{gray!40} \textbf{0.055}  & \cellcolor{gray!40} \textbf{0.050}  & \cellcolor{gray!40} \textbf{0.044}  & \cellcolor{gray!40} \textbf{0.044}  & \cellcolor{gray!40} \textbf{0.039}  & \cellcolor{gray!40} \textbf{0.036}  & \cellcolor{gray!40} \textbf{0.033} \\ 
 & & \cellcolor{gray!40} \textit{0.12} s & \cellcolor{gray!40} \textit{0.31} s & \cellcolor{gray!40} \textit{0.81} s & \cellcolor{gray!40} \textit{2.56} s & \cellcolor{gray!40} \textit{7.56} s & \cellcolor{gray!40} \textit{27.21} s & \cellcolor{gray!40} \textit{341.02} s 
		\end{tabular}
	\end{scriptsize}
	\vspace{0.3cm}
	\caption{Spatio-temporal learning of the heat equation \eqref{EqDefHeat}: Empirical $L^2$-error defined in \eqref{EqDefMSE2} on the test set (in bold letters) and running time (in italic letters; in seconds).}
	\label{TabHeatUSV}
\end{table}

\subsection{Non-linear Fokker-Planck equation}

Second, we consider a non-linear Fokker-Planck equation describing the forward equation of the corresponding (density-dependent) McKean-Vlasov stochastic differential equation (SDE). More precisely, we consider the PDE
\begin{equation}
	\label{EqDefFokker}
	\left\lbrace
	\begin{matrix*}[l]
		\frac{\partial f}{\partial t}(t,u) - \nabla \cdot \big( D \nabla f(t,u) + f(t,u) \nabla \mathcal{L}(f(t,\cdot))(u) \big) & = 0, & \quad (t,u) & \in (0,\infty) \times \mathbb{R}^m, \\
		\quad\quad\quad\quad\quad\quad\quad\quad\quad\quad\quad\quad\quad\quad\quad\quad\quad\quad\, f(0,u) & = f_0(u), & \quad\quad\,\,\, u & \in \mathbb{R}^m,
	\end{matrix*}
	\right.
\end{equation}
where $\nabla \cdot F := \big( u \mapsto \sum_{l=1}^m \frac{\partial F_l}{\partial u_l}(u) \big)$ for $F := (F_1,...,F_m)^\top \in C^1(\mathbb{R}^m;\mathbb{R}^m)$, where $D \in \mathbb{S}^m_{++}$, where\footnote{Here, $(V * h)(u) := \int_{\mathbb{R}^m} V(u-y) h(y) dy$ denotes the convolution of $V$ with $h$.} $\mathcal{L}(h)(u) := U(u) + (V * h)(u)$ for some potential $U \in C^2(\mathbb{R}^m)$ and mean-field interaction term $V \in C^2(\mathbb{R}^m)$, and where $f_0: \mathbb{R}^m \rightarrow [0,\infty)$ is the initial condition satisfying $\int_{\mathbb{R}^m} f_0(u) du = 1$. If $f: [0,\infty) \times \mathbb{R}^m \rightarrow \mathbb{R}$ is a sufficiently regular weak solution of \eqref{EqDefFokker}, then $f(t,\cdot)$ describes the probability density function at time $t$ of the (weak) solution $(Z_t)_{t \in [0,\infty)}$ to the (density-dependent) McKean-Vlasov SDE
\begin{equation}
	\label{EqDefMcKV}
	\left\lbrace
	\begin{matrix*}[l]
		dZ_t & = -\nabla \mathcal{L}(f(t,\cdot)) dt + \sqrt{2D} dW_t, & \quad t & \in (0,\infty), \\
		\,\, Z_0 & \sim f_0(u) du, &
	\end{matrix*}
	\right.
\end{equation}
where $\sqrt{2D} \in \mathbb{S}^m_{++}$ denotes the matrix square root and $(W_t)_{t \in [0,\infty)}$ is an $m$-dimensional Brownian motion. Indeed, \eqref{EqDefMcKV} follows from \eqref{EqDefFokker} by applying Ito's formula to $\varphi(Z_t)$ and verifying that $\frac{d}{dt} \mathbb{E}\left[ \varphi(Z_t) \right] = \frac{d}{dt} \int_{\mathbb{R}^m} f(t,u) \varphi(u) du = \int_{\mathbb{R}^m} \frac{\partial f}{\partial t}(t,u) \varphi(u) du$ (see also \cite{sznitman91,frank05}).

Now, if $f_0(u) := (2\pi)^{-m/2} (\det(\Sigma_0))^{-1/2} \exp\left( -0.5 (u-\mu_0)^\top \Sigma_0^{-1} (u-\mu_0) \right)$, i.e.~$Z_0 \sim \mathcal{N}_m(\mu_0,\Sigma_0)$ is Gaussian, and $U$ and $V$ are quadratic, i.e.~$U(u) = \frac{1}{2} u^\top C_1 u + c_1^\top u + d_1$ and $V(u) = \frac{1}{2} u^\top C_2 u + c_2^\top u + d_2$ with $C_1, C_2 \in \mathbb{S}^m_{++}$, $c_1,c_2 \in \mathbb{R}^m$, and $d_1,d_2 \in \mathbb{R}$ (implying that $\nabla \mathcal{L}(h)(u) = (C_1+C_2) u - C_2 \mu_h + c_2$, where $\mu_h := \int_{\mathbb{R}^m} u h(u) du$), then $Z_t \sim \mathcal{N}_m(\mu_t,\Sigma_t)$ is Gaussian with mean process $(\mu_t)_{t \in [0,\infty)}$ and covariance process $(\Sigma_t)_{t \in [0,\infty)}$ satisfying
\begin{equation*}
	\left\lbrace
	\begin{matrix*}[l]
		\frac{\partial \mu_t}{\partial t} & = \frac{d}{dt} \mathbb{E}[Z_t] & = -C_1 \mu_t - c_2, \\
		\frac{\partial \Sigma_t}{\partial t} & = \frac{d}{dt} \mathbb{E}\left[ (Z_t-\mu_t) (Z_t-\mu_t)^\top \right] & = -(C_1+C_2) \Sigma_t - \Sigma_t (C_1+C_2) + 2 D.
	\end{matrix*}
	\right.
\end{equation*}
The solutions of the latter ordinary differential equations are for every $t \in [0,\infty)$ given by
\begin{equation}
	\label{EqFokkerMeanCov}
	\left\lbrace
	\begin{matrix*}[l]
		\mu_t & = e^{-t C_1} \mu_0 - \int_0^t e^{-(t-s) C_1} c_2 ds, \\
		\Sigma_t & = e^{-t (C_1+C_2)} \Sigma_0 e^{-t (C_1+C_2)} + \int_0^t e^{-(t-s) (C_1+C_2)} 2D e^{-(t-s) (C_1+C_2)} ds.
	\end{matrix*}
	\right.
\end{equation}
Hence, the solution $f: [0,\infty) \times \mathbb{R}^m \rightarrow \mathbb{R}$ of \eqref{EqDefFokker} is Gaussian given by
\begin{equation}
	\label{EqSolFokker}
	[0,\infty) \times \mathbb{R}^m \ni (t,u) \,\, \mapsto \,\, f(t,u) := \frac{1}{\sqrt{(2\pi)^m \det(\Sigma_t)}} \exp\left( -\frac{1}{2} (u - \mu_t)^\top \Sigma_t^{-1} (u-\mu_t) \right) \in \mathbb{R},
\end{equation}
which describes the probability density function of \eqref{EqDefMcKV}.

\subsubsection{Fixed-time learning}

For some fixed $t \in [0,T]$, we now learn the function $f(t,\cdot)$ by random trigonometric feature models, random neural networks, and their deterministic counterparts. Moreover, we provide sufficient conditions that the approximation of $f(t,\cdot)$ by random feature models overcomes the curse of dimensionality. To this end, we denote by $\lambda_{\min}(\Sigma_t) > 0$ the smallest eigenvalue of the matrix $\Sigma_t \in \mathbb{S}^m_{++}$.

\begin{corollary}
	\label{CorFokker}
	Let $t \in (0,\infty)$ and assume that $f(t,\cdot): \mathbb{R}^m \rightarrow \mathbb{R}$ is the solution of \eqref{EqDefFokker} given in \eqref{EqSolFokker}. Moreover, let $p \in (1,\infty)$, $\gamma \in [0,\infty)$, and $w: \mathbb{R}^m \rightarrow [0,\infty)$ be as in Lemma~\ref{LemmaWeight}. Then, the following holds true:
	\begin{enumerate}
		\item\label{CorFokker1} Let $(\theta_n)_{n \in \mathbb{N}} \sim t_m$. Then, there exists $C_{14},C_{15} > 0$ (being independent of $m \in \mathbb{N}$) such that for every $N \in \mathbb{N}$ there exists a random trigonometric feature model $G_N \in \mathcal{RT}_{\mathbb{R}^m,1}$ with $N$ features satisfying
		\begin{equation}
			\label{EqCorFokker1}
			\mathbb{E}\left[ \Vert f(t,\cdot) - G_N \Vert_{L^p(\mathbb{R}^m,\mathcal{L}(\mathbb{R}^m),w)}^2 \right]^\frac{1}{2} \leq C_{14} m^{C_{15}} \frac{\left( \frac{2}{\lambda_{\min}(\Sigma_t) \pi^2} \right)^\frac{m}{4}}{N^{1-\frac{1}{\min(2,p)}}}.
		\end{equation}
		In particular, if $\lambda_{\min}(\Sigma_t) \geq 2/\pi^2$, then there exists $C_{16},C_{17} > 0$ such that for every $m \in \mathbb{N}$ and $\varepsilon > 0$ there exists a random trigonometric feature model $G_N \in \mathcal{RT}_{\mathbb{R}^m,1}$ with $N = \big\lceil C_{16} m^{C_{17}} \varepsilon^{-\frac{\min(2,p)}{\min(2,p)-1}} \big\rceil$ features satisfying
		\begin{equation}
			\label{EqCorFokker2}
			\mathbb{E}\left[ \Vert f(t,\cdot) - G_N \Vert_{L^p(\mathbb{R}^m,\mathcal{L}(\mathbb{R}^m),w)}^2 \right]^\frac{1}{2} \leq \varepsilon.
		\end{equation}
		\item\label{CorFokker2} Let $(a_n,b_n)_{n \in \mathbb{N}} \sim t_m \otimes t_1$ be i.i.d., and let $(\psi,\rho) \in \mathcal{S}_0(\mathbb{R};\mathbb{C}) \times C^0_{pol,\gamma}(\mathbb{R})$ be as in Example~\ref{ExAdm} (with $0 < \zeta_1 < \zeta_2 < \infty$). Then, $f(t,\cdot) \in \widetilde{\mathbb{B}}^{0,2,\gamma}_{\psi,a,b}(\mathbb{R}^m)$ and there exist $C_{18},C_{19} > 0$ (being independent of $m \in \mathbb{N}$) such that for every $N \in \mathbb{N}$ there exists a random neural network $G_N \in \mathcal{RN}^\rho_{\mathbb{R}^m,1}$ with $N$ neurons satisfying
		\begin{equation}
			\label{EqCorFokker3}
			\mathbb{E}\left[ \Vert f(t,\cdot) - G_N \Vert_{L^p(\mathbb{R}^m,\mathcal{L}(\mathbb{R}^m),w)}^2 \right]^\frac{1}{2} \leq C_{18} m^{C_{19}} (1 + \Vert \mu_t \Vert + \Vert \Sigma_t \Vert)^{\lceil\gamma\rceil+2} \frac{\left( \frac{2}{\lambda_{\min}(\Sigma_t) \pi^2} \right)^\frac{m}{4}}{N^{1-\frac{1}{\min(2,p)}}}.
		\end{equation}
		In particular, $\lambda_{\min}(\Sigma_t) \geq 2/\pi^2$ and $1+\Vert \mu_t \Vert+\Vert \Sigma_t \Vert = \mathcal{O}(m^\eta)$ for some $\eta > 0$, then there exist $C_{20},C_{21} > 0$ such that for every $m \in \mathbb{N}$ and $\varepsilon > 0$ there exists a random neural network $G_N \in \mathcal{RN}^\rho_{\mathbb{R}^m,1}$ with $N = \big\lceil C_{20} m^{C_{21}} \varepsilon^{-\frac{\min(2,p)}{\min(2,p)-1}} \big\rceil$ neurons satisfying
		\begin{equation}
			\label{EqCorFokker4}
			\mathbb{E}\left[ \Vert f(t,\cdot) - G_N \Vert_{L^p(\mathbb{R}^m,\mathcal{L}(\mathbb{R}^m),w)}^2 \right]^\frac{1}{2} \leq \varepsilon.
		\end{equation}
	\end{enumerate}
\end{corollary}

Now, we choose $t = 1$, the initial condition $f_0(u) := (2\pi \sigma_0^2)^{m/2} \exp\left( -\Vert u \Vert^2/(2 \sigma_0^2) \right)$ with $\sigma_0 := 0.5$, the coefficients $C_1 := C_2 := 0.1 m^{-1/2} I_m \in \mathbb{S}^m_{++}$, $c_1 := 0 \in \mathbb{R}^m$, $c_2 := 0.1 m^{-1} (1,...,1)^\top \in \mathbb{R}^m$, $d_1 := d_2 := 0 \in \mathbb{R}$ and $D := 0.2 m^{-1/2} I_m \in \mathbb{S}^m_{++}$, and the weight function $w(u) := (2\pi \sigma_w^2)^{m/2} \exp\left( -\Vert u \Vert^2/(2 \sigma_w^2) \right)$ with $\sigma_w := 0.3$ satisfying the conditions of Lemma~\ref{LemmaWeight}. Then, we generate $J = 2^{12},2^{13},...,2^{18}$ i.i.d.~data $(V_j)_{j=1,...,J} \sim w$, split up into $80\%/20\%$ for training/testing, and minimize the empirical $L^2$-error \eqref{EqDefMSE} over random trigonometric feature models $G_N \in \mathcal{RT}_{\mathbb{R}^m,1}$, random neural networks $G_N \in \mathcal{RN}^{\tanh}_{\mathbb{R}^m,1}$, and their deterministic counterparts, all of them having $N = 10 \sqrt{J/\ln(J)}$ features. For training the deterministic models, we apply the Adam algorithm over $5000$ epochs (with learning rate $5 \cdot 10^{-5}$ and batchsize $1000$).

Moreover, by using that $\mu_0 := 0 \in \mathbb{R}^m$ and $\Sigma_0 := \operatorname{diag}(\sigma_0,...,\sigma_0) \in \mathbb{S}^m_{++}$ and inserting the coefficients $C_1, C_2 \in \mathbb{S}^m_{++}$, $c_1, c_2 \in \mathbb{R}^m$, $d_1, d_2 \in \mathbb{R}$ into \eqref{EqFokkerMeanCov}, it holds for every $t \in [0,\infty)$ that
\begin{equation*}
	\Vert \mu_t \Vert \leq e^{-\lambda_{\min}(C_1) t} \Vert \mu_0 \Vert + \frac{\Vert c_2 \Vert}{\lambda_{\min}(C_1)} \left( 1 \!-\! e^{-\lambda_{\min}(C_1) t} \right) \leq \Vert \mu_0 \Vert + \frac{\Vert c_2 \Vert}{\lambda_{\min}(C_1)} \leq \frac{0.1 m^{-\frac{1}{2}}}{0.1 m^{-\frac{1}{2}}} \leq 1.
\end{equation*}
By similar arguments, we have for every $t \in [0,\infty)$ that
\begin{equation*}
	\begin{aligned}
		\Vert \Sigma_t \Vert & \leq e^{-\lambda_{\min}(C_1+C_2) t} \Vert \Sigma_0 \Vert e^{-\lambda_{\min}(C_1+C_2) t} + \frac{\Vert D \Vert}{\lambda_{\min}(C_1+C_2)} \left( 1-e^{-2\lambda_{\min}(C_1+C_2) t} \right) \\
		& \leq \Vert \Sigma_0 \Vert + \frac{\Vert D \Vert}{\lambda_{\min}(C_1+C_2)} = 0.5 + \frac{0.2 m^{-\frac{1}{2}}}{0.2 m^{-\frac{1}{2}}} \leq 1.5.
	\end{aligned}
\end{equation*}
In addition, we observe for every $t \in [0,T]$ with $T = 1$ that
\begin{equation*}
	\begin{aligned}
		& \lambda_{\min}(\Sigma_t) \geq e^{-\lambda_{\max}(C_1+C_2) t} \lambda_{\min}(\Sigma_0) e^{-\lambda_{\max}(C_1+C_2) t} \!+\! \frac{\lambda_{\min}(D)}{\lambda_{\max}(C_1 \!+\! C_2)} \left( 1 \!-\! e^{-2\lambda_{\max}(C_1+C_2) t} \right) \\
		& \,\,\, \geq \frac{\lambda_{\min}(D)}{\lambda_{\max}(C_1+C_2)} \left( 1-e^{-2\lambda_{\max}(C_1+C_2) T} \right) = \frac{0.2 m^{-\frac{1}{2}}}{0.2 m^{-\frac{1}{2}}} \left( 1 - e^{-2 \cdot 0.2 m^{-1/2} \cdot 1} \right) \approx 0.33 \geq \frac{2}{\pi^2},
	\end{aligned}
\end{equation*}
where $\lambda_{\max}(C_1+C_2) > 0$ denotes the largest eigenvalue of the matrix $C_1+C_2 \in \mathbb{S}^m_{++}$. Hence, Corollary~\ref{CorFokker} shows that the approximation of $f(t,\cdot)$ by random trigonometric features and random neural networks overcomes the curse of dimensionality.

Figure~\ref{FigFokker} empirically demonstrates that random trigonometric feature models as well as random neural networks are indeed able to learn the solution of the non-linear Fokker-Planck equation \eqref{EqDefFokker}. However, in terms of computational time, the random feature models significantly outperform their deterministic counterparts (see also Table~\ref{TabFokker}).

\begin{figure}[h]
	\begin{minipage}[t]{0.48\textwidth}
		\centering
		\includegraphics[height = 5.2cm]{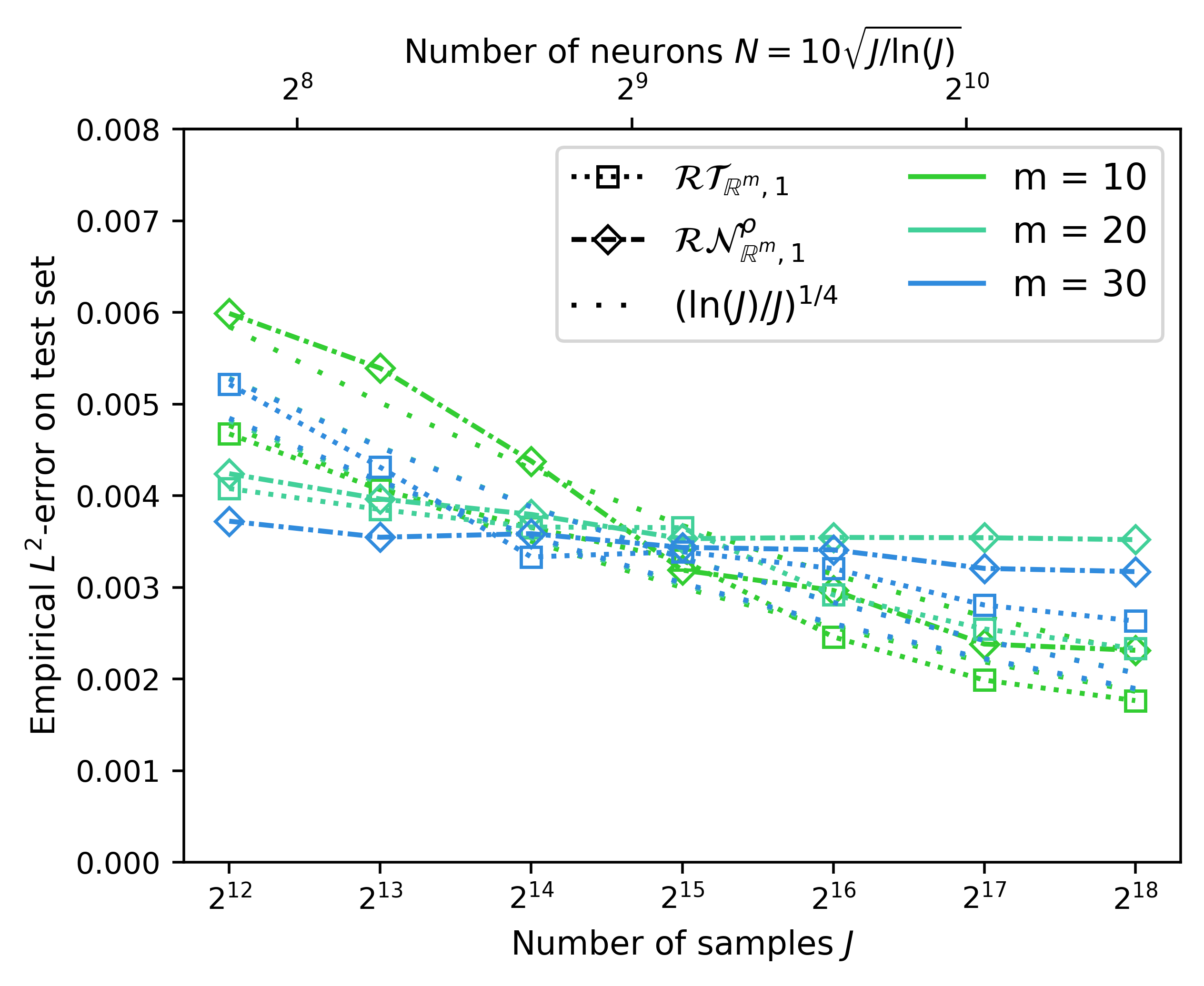}
		\subcaption{Empirical $L^2$-error defined in \eqref{EqDefMSE}.}
	\end{minipage}
	\hspace{0.02\textwidth}
	\begin{minipage}[t]{0.48\textwidth}
		\centering
		\includegraphics[height = 5.2cm]{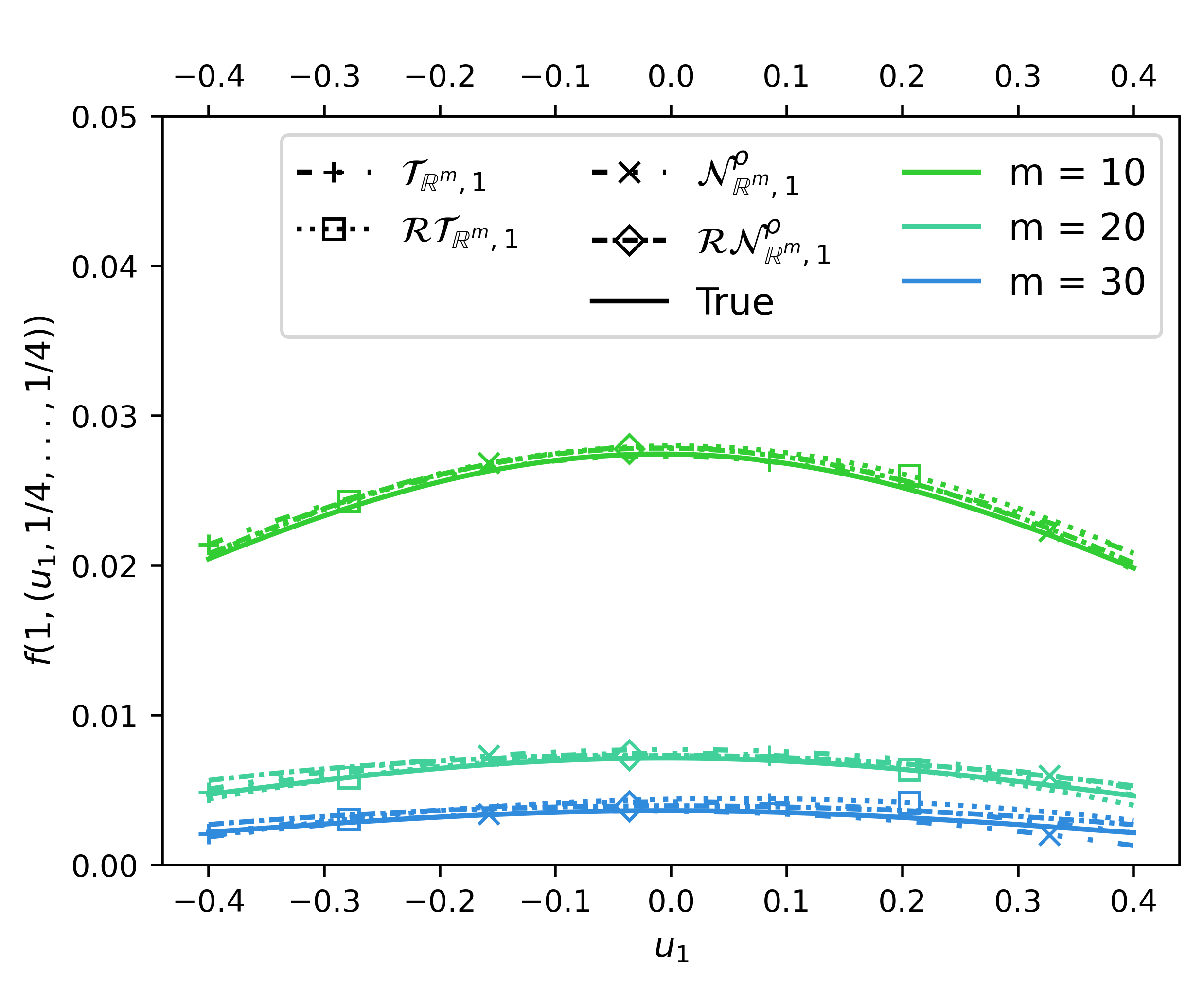}
		\subcaption{Approximation of the function $\mathbb{R} \ni u_1 \mapsto f(1,(u_1,1/4,...,1/4)) \in \mathbb{R}$.}
	\end{minipage}
	\caption{Fixed-time learning of the non-linear Fokker-Planck equation \eqref{EqDefFokker}.}
	\label{FigFokker}
\end{figure}

\begin{table}
	\centering
	\begin{scriptsize}
		\begin{tabular}{ll|R{1.2cm}|R{1.2cm}|R{1.2cm}|R{1.2cm}|R{1.2cm}|R{1.2cm}|R{1.2cm}}
			\multicolumn{2}{r|}{$N = 10 \sqrt{J \ln(J)}$} & \multicolumn{1}{c|}{$J = 2^{12}$} & \multicolumn{1}{c|}{$J = 2^{13}$} & \multicolumn{1}{c|}{$J = 2^{14}$} & \multicolumn{1}{c|}{$J = 2^{15}$} & \multicolumn{1}{c|}{$J = 2^{16}$} & \multicolumn{1}{c}{$J = 2^{17}$} & \multicolumn{1}{c}{$J = 2^{18}$} \\
			\hline
			\multirow{8}{*}{$m = 10$} & \multirow{2}{*}{$\mathcal{T}_{\mathbb{R}^m,1}$} & \textbf{0.006}  & \textbf{0.005}  & \textbf{0.006}  & \textbf{0.006}  & \textbf{0.005}  & \textbf{0.004}  & \textbf{0.004} \\ 
 & & \textit{32.50} s  & \textit{77.89} s  & \textit{111.28} s  & \textit{226.13} s  & \textit{455.12} s  & \textit{958.22} s  & \textit{1845.27} s \\ 
& \multirow{2}{*}{$\mathcal{RT}_{\mathbb{R}^m,1}$} & \cellcolor{gray!40} \textbf{0.005}  & \cellcolor{gray!40} \textbf{0.004}  & \cellcolor{gray!40} \textbf{0.004}  & \cellcolor{gray!40} \textbf{0.003}  & \cellcolor{gray!40} \textbf{0.002}  & \cellcolor{gray!40} \textbf{0.002}  & \cellcolor{gray!40} \textbf{0.002} \\ 
 & & \cellcolor{gray!40} \textit{0.03} s & \cellcolor{gray!40} \textit{0.10} s & \cellcolor{gray!40} \textit{0.32} s & \cellcolor{gray!40} \textit{1.21} s & \cellcolor{gray!40} \textit{4.28} s & \cellcolor{gray!40} \textit{15.09} s & \cellcolor{gray!40} \textit{54.51} s \\ 
& \multirow{2}{*}{$\mathcal{N}_{\mathbb{R}^m,1}$} & \textbf{0.010}  & \textbf{0.007}  & \textbf{0.006}  & \textbf{0.006}  & \textbf{0.006}  & \textbf{0.005}  & \textbf{0.005} \\ 
 & & \textit{25.36} s  & \textit{47.45} s  & \textit{125.98} s  & \textit{195.95} s  & \textit{401.79} s  & \textit{813.33} s  & \textit{1635.54} s \\ 
& \multirow{2}{*}{$\mathcal{RN}_{\mathbb{R}^m,1}$} & \cellcolor{gray!40} \textbf{0.006}  & \cellcolor{gray!40} \textbf{0.005}  & \cellcolor{gray!40} \textbf{0.004}  & \cellcolor{gray!40} \textbf{0.003}  & \cellcolor{gray!40} \textbf{0.003}  & \cellcolor{gray!40} \textbf{0.002}  & \cellcolor{gray!40} \textbf{0.002} \\ 
 & & \cellcolor{gray!40} \textit{0.03} s & \cellcolor{gray!40} \textit{0.10} s & \cellcolor{gray!40} \textit{0.34} s & \cellcolor{gray!40} \textit{1.28} s & \cellcolor{gray!40} \textit{4.88} s & \cellcolor{gray!40} \textit{20.06} s & \cellcolor{gray!40} \textit{60.06} s \\ 
\hline 
\multirow{8}{*}{$m = 20$} & \multirow{2}{*}{$\mathcal{T}_{\mathbb{R}^m,1}$} & \textbf{0.004}  & \textbf{0.004}  & \textbf{0.004}  & \textbf{0.004}  & \textbf{0.004}  & \textbf{0.004}  & \textbf{0.004} \\ 
 & & \textit{31.84} s  & \textit{57.27} s  & \textit{133.30} s  & \textit{223.47} s  & \textit{471.27} s  & \textit{937.77} s  & \textit{1894.66} s \\ 
& \multirow{2}{*}{$\mathcal{RT}_{\mathbb{R}^m,1}$} & \cellcolor{gray!40} \textbf{0.004}  & \cellcolor{gray!40} \textbf{0.004}  & \cellcolor{gray!40} \textbf{0.004}  & \cellcolor{gray!40} \textbf{0.004}  & \cellcolor{gray!40} \textbf{0.003}  & \cellcolor{gray!40} \textbf{0.003}  & \cellcolor{gray!40} \textbf{0.002} \\ 
 & & \cellcolor{gray!40} \textit{0.03} s & \cellcolor{gray!40} \textit{0.08} s & \cellcolor{gray!40} \textit{0.31} s & \cellcolor{gray!40} \textit{1.15} s & \cellcolor{gray!40} \textit{4.03} s & \cellcolor{gray!40} \textit{14.10} s & \cellcolor{gray!40} \textit{50.85} s \\ 
& \multirow{2}{*}{$\mathcal{N}_{\mathbb{R}^m,1}$} & \textbf{0.005}  & \textbf{0.005}  & \textbf{0.004}  & \textbf{0.004}  & \textbf{0.004}  & \textbf{0.004}  & \textbf{0.004} \\ 
 & & \textit{29.34} s  & \textit{53.52} s  & \textit{101.96} s  & \textit{211.52} s  & \textit{408.84} s  & \textit{844.67} s  & \textit{1651.71} s \\ 
& \multirow{2}{*}{$\mathcal{RN}_{\mathbb{R}^m,1}$} & \cellcolor{gray!40} \textbf{0.004}  & \cellcolor{gray!40} \textbf{0.004}  & \cellcolor{gray!40} \textbf{0.004}  & \cellcolor{gray!40} \textbf{0.004}  & \cellcolor{gray!40} \textbf{0.004}  & \cellcolor{gray!40} \textbf{0.004}  & \cellcolor{gray!40} \textbf{0.004} \\ 
 & & \cellcolor{gray!40} \textit{0.04} s & \cellcolor{gray!40} \textit{0.10} s & \cellcolor{gray!40} \textit{0.33} s & \cellcolor{gray!40} \textit{1.31} s & \cellcolor{gray!40} \textit{4.30} s & \cellcolor{gray!40} \textit{18.30} s & \cellcolor{gray!40} \textit{56.88} s \\ 
\hline 
\multirow{8}{*}{$m = 30$} & \multirow{2}{*}{$\mathcal{T}_{\mathbb{R}^m,1}$} & \textbf{0.004}  & \textbf{0.003}  & \textbf{0.004}  & \textbf{0.004}  & \textbf{0.003}  & \textbf{0.004}  & \textbf{0.004} \\ 
 & & \textit{32.10} s  & \textit{68.04} s  & \textit{122.23} s  & \textit{223.63} s  & \textit{466.83} s  & \textit{933.89} s  & \textit{1790.32} s \\ 
& \multirow{2}{*}{$\mathcal{RT}_{\mathbb{R}^m,1}$} & \cellcolor{gray!40} \textbf{0.005}  & \cellcolor{gray!40} \textbf{0.004}  & \cellcolor{gray!40} \textbf{0.003}  & \cellcolor{gray!40} \textbf{0.003}  & \cellcolor{gray!40} \textbf{0.003}  & \cellcolor{gray!40} \textbf{0.003}  & \cellcolor{gray!40} \textbf{0.003} \\ 
 & & \cellcolor{gray!40} \textit{0.03} s & \cellcolor{gray!40} \textit{0.09} s & \cellcolor{gray!40} \textit{0.31} s & \cellcolor{gray!40} \textit{1.22} s & \cellcolor{gray!40} \textit{4.08} s & \cellcolor{gray!40} \textit{13.26} s & \cellcolor{gray!40} \textit{47.93} s \\ 
& \multirow{2}{*}{$\mathcal{N}_{\mathbb{R}^m,1}$} & \textbf{0.005}  & \textbf{0.004}  & \textbf{0.003}  & \textbf{0.004}  & \textbf{0.004}  & \textbf{0.004}  & \textbf{0.004} \\ 
 & & \textit{29.44} s  & \textit{61.95} s  & \textit{121.96} s  & \textit{200.03} s  & \textit{418.05} s  & \textit{827.91} s  & \textit{1691.33} s \\ 
& \multirow{2}{*}{$\mathcal{RN}_{\mathbb{R}^m,1}$} & \cellcolor{gray!40} \textbf{0.004}  & \cellcolor{gray!40} \textbf{0.004}  & \cellcolor{gray!40} \textbf{0.004}  & \cellcolor{gray!40} \textbf{0.003}  & \cellcolor{gray!40} \textbf{0.003}  & \cellcolor{gray!40} \textbf{0.003}  & \cellcolor{gray!40} \textbf{0.003} \\ 
 & & \cellcolor{gray!40} \textit{0.04} s & \cellcolor{gray!40} \textit{0.10} s & \cellcolor{gray!40} \textit{0.34} s & \cellcolor{gray!40} \textit{1.33} s & \cellcolor{gray!40} \textit{4.30} s & \cellcolor{gray!40} \textit{16.66} s & \cellcolor{gray!40} \textit{66.72} s 
		\end{tabular}
	\end{scriptsize}
	\vspace{0.3cm}
	\caption{Fixed-time learning of the non-linear Fokker-Planck equation \eqref{EqDefFokker}: Empirical $L^2$-error defined in \eqref{EqDefMSE} on the test set (in bold letters) and running time (in italic letters; in seconds).}
	\label{TabFokker}
\end{table}

\subsubsection{Spatio-temporal learning}

In this section, we again consider the spatio-temporal learning approach to numerically solve the non-linear Fokker-Planck equation~\eqref{EqDefFokker}. To this end, we first observe that the feature map
\begin{equation*}
	\mathbb{R}^m \times \mathbb{S}^m_{++} \ni (\mu_0,\Sigma_0) \quad \mapsto \quad \left( (t,u) \mapsto g(\mu_0,\Sigma_0)(t,u) := \frac{e^{-\frac{1}{2} (u-\mu_t)^\top \Sigma_t^{-1} (u-\mu_t)}}{(2\pi)^{m/2} \det(\Sigma_t)^{1/2}} \right)
\end{equation*}
solves the non-linear Fokker-Planck equation~\eqref{EqDefFokker} in the interior, where $(\mu_t)_{t \in [0,T]}$ and $(\Sigma_t)_{t \in [0,T]}$ follow the dynamics \eqref{EqFokkerMeanCov} with initial values $\mu_0 \in \mathbb{R}^m$ and $\Sigma_0 \in \mathbb{S}^m_{++}$, respectively. Thus, random feature models of the form
\begin{equation}
	\label{EqDefRNFokker}
	\Omega \ni \omega \quad \mapsto \quad \left( (t,u) \mapsto G_N(\omega)(t,u) := \sum_{n=1}^N y_n(\omega) \frac{e^{-\frac{1}{2} (u-\mu_{n,t}(\omega))^\top \Sigma_{n,t}(\omega)^{-1} (u-\mu_{n,t}(\omega))}}{(2\pi)^{m/2} \det(\Sigma_{n,t}(\omega))^{1/2}} \right),
\end{equation}
solve for any fixed $\omega \in \Omega$ the non-linear Fokker-Planck equation~\eqref{EqDefFokker} in the interior, where $(\mu_{n,0},\Sigma_{n,0})_{n \in \mathbb{N}}: \Omega \rightarrow \mathbb{R}^m \times \mathbb{S}^m_{++}$ are some random initializations inducing $(\mu_{n,t}(\omega))_{t \in [0,T]}$ and $(\Sigma_{n,t}(\omega))_{t \in [0,T]}$ according to \eqref{EqFokkerMeanCov}, and where $(y_n)_{n \in \mathbb{N}}$ are the $\mathcal{F}_{a,b}$-measurable linear readouts. We denote by $\mathcal{R}\lbrace g \rbrace_{\mathbb{R}^m,1}$ all random feature models of the form \eqref{EqDefRNFokker}, while $\lbrace g \rbrace_{\mathbb{R}^m,1} := \linspan\big\lbrace (t,u) \mapsto g(\mu_0,\Sigma_0)(t,u): (\mu_0,\Sigma_0) \in \mathbb{R}^m \times \mathbb{S}^m_{++} \big\rbrace$ represents the deterministic counterpart. Hence, we aim to train the linear readouts $(y_n)_{n \in \mathbb{N}}$ so that the corresponding random feature model $G_N$ minimizes the empirical $L^2$-error \eqref{EqDefMSE1}, where $(V_j)_{j=1,...,J} \sim w$ for some weight $w: \mathbb{R}^m \rightarrow [0,\infty)$. For $T > 0$ and a partition $0 = t_0 < t_1 < ... < t_K = T$, we then test the performance of $G_N$ on the global $L^2$-error \eqref{EqDefMSE2}.

Now, we choose $T = 1$, $K = 20$, and the initial condition $g$, the weight $w$, as well as $(V_j)_{j=1,...,J} \sim w$ as in Section~\ref{SecHeatSuper}, except $J = 2^9,2^{10},...,2^{13}$. Then, we train the random feature model $G_N \in \mathcal{R}\lbrace g \rbrace_{\mathbb{R}^m,1}$ of the form \eqref{EqDefRNHeat} by minimizing \eqref{EqDefMSE}. On the other hand, we also train the corresponding deterministic feature models $G_N \in \lbrace g \rbrace_{\mathbb{R}^m,1}$ with the Adam algorithm over $8000$ epochs (with learning rate $\gamma = 10^{-5}$ and batchsize $1000$).

Figure~\ref{FigFokkerUSV} empirically demonstrates that specific random feature models and their deterministic counterparts can learn the non-linear Fokker-Planck equation~\eqref{EqDefFokker}. Note that learning the dependence of the deterministic features on the parameter $\Sigma_0 \in \mathbb{S}^m_{++}$ can be challenging, especially if $\lambda_{\max}(\Sigma_0)$ is small (meaning that $\Sigma_0 \in \mathbb{S}^m_{++}$ is close to being singular). However, random neural networks, for which only the linear readouts need to be trained, overcome this issue and are considerably faster (see Table~\ref{TabFokkerUSV}).

\begin{figure}[h]
	\begin{minipage}[t]{0.48\textwidth}
		\centering
		\includegraphics[height = 5.2cm]{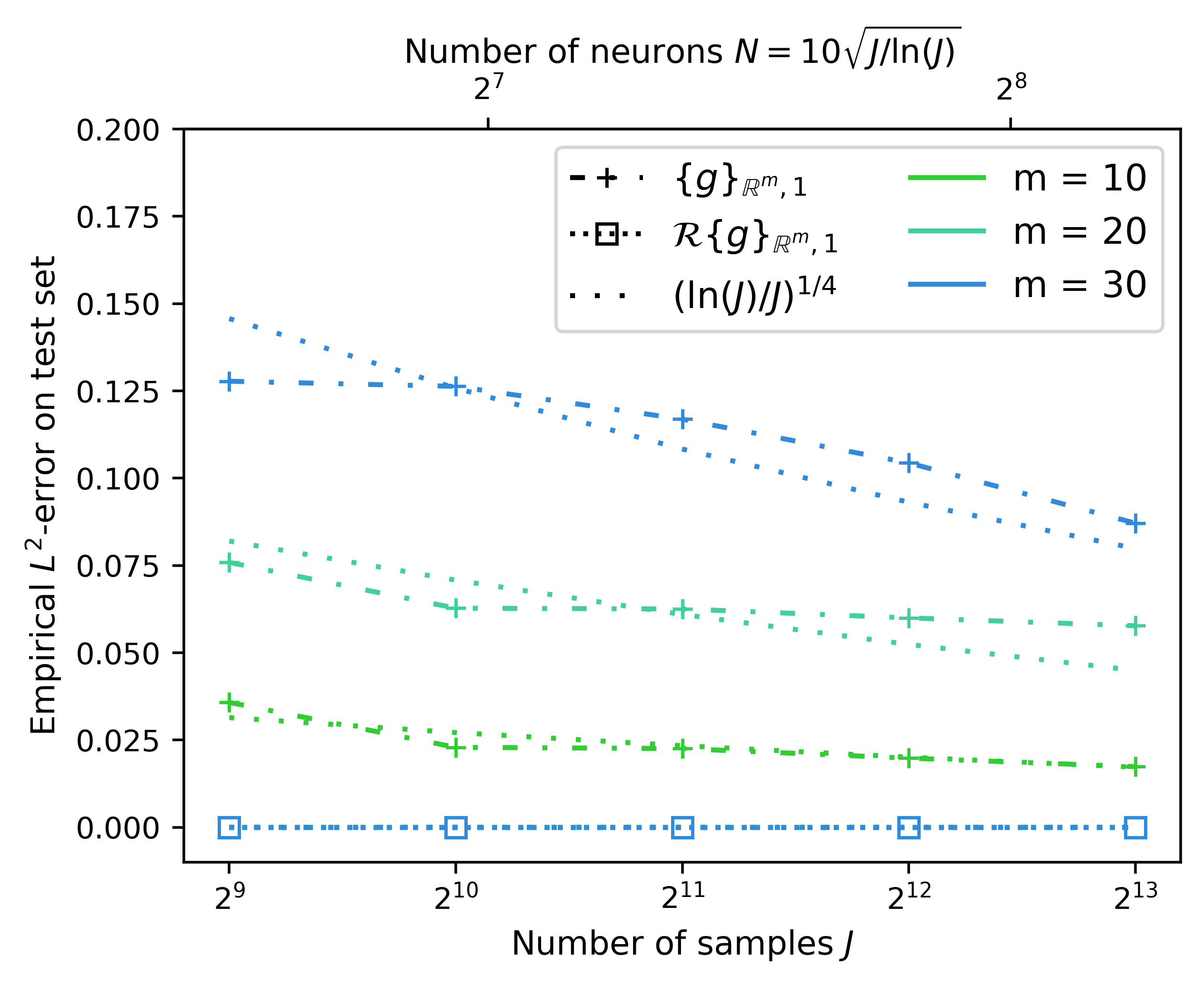}
		\subcaption{Empirical $L^2$-error defined in \eqref{EqDefMSE2}.}
	\end{minipage}
	\hspace{0.02\textwidth}
	\begin{minipage}[t]{0.48\textwidth}
		\centering
		\includegraphics[height = 5.5cm, trim = {0.8cm 0 0 0.4cm}, clip]{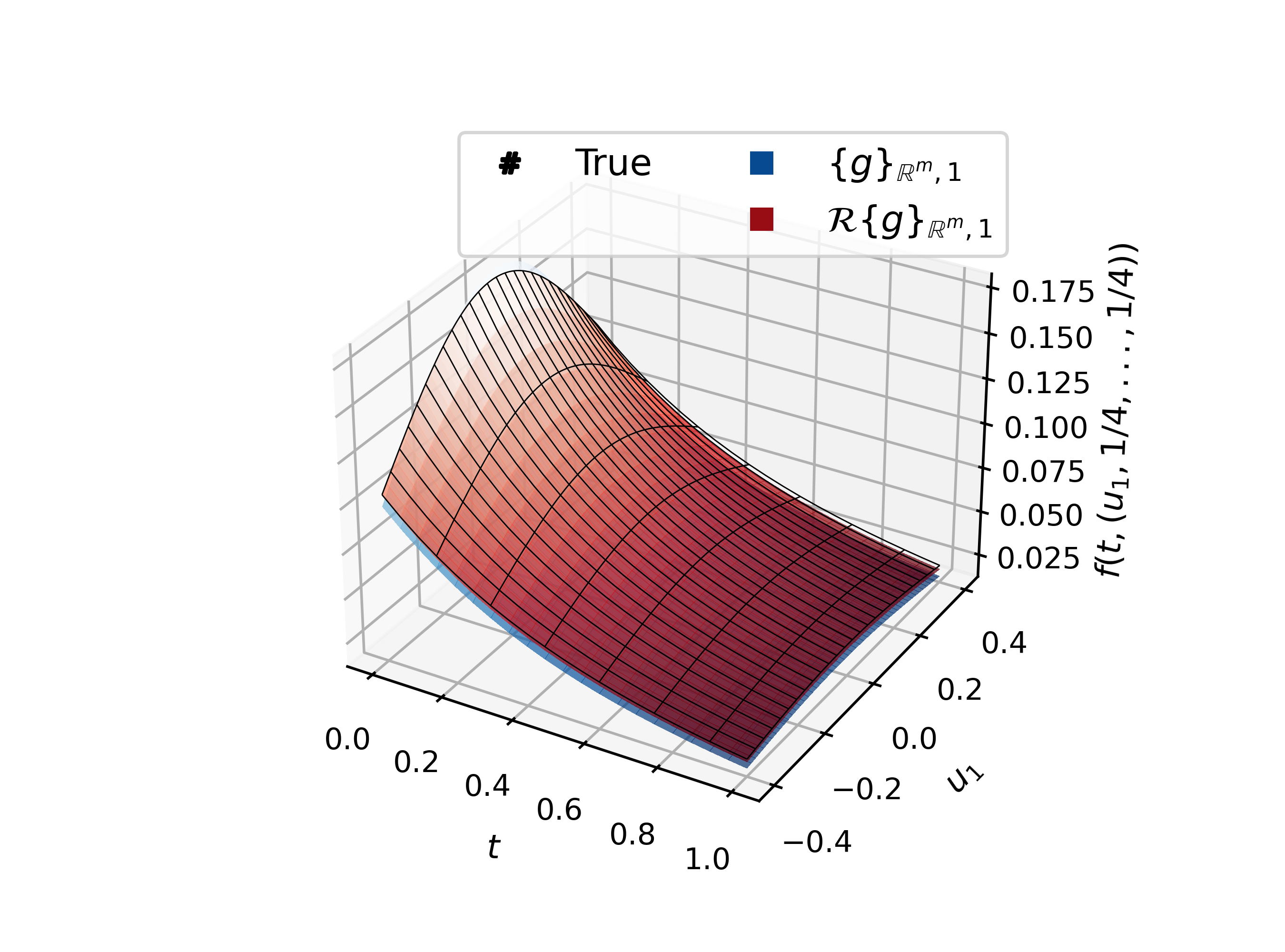}
		\subcaption{Approximation of the function $[0,T] \times \mathbb{R} \ni (t,u_1) \!\mapsto\! f(t,(u_1,1/4,...,1/4)) \in \mathbb{R}$ for $m = 10$.}
	\end{minipage}
	\begin{minipage}[t]{0.48\textwidth}
		\centering
		\includegraphics[height = 5.5cm, trim = {0.8cm 0 0 0.4cm}, clip]{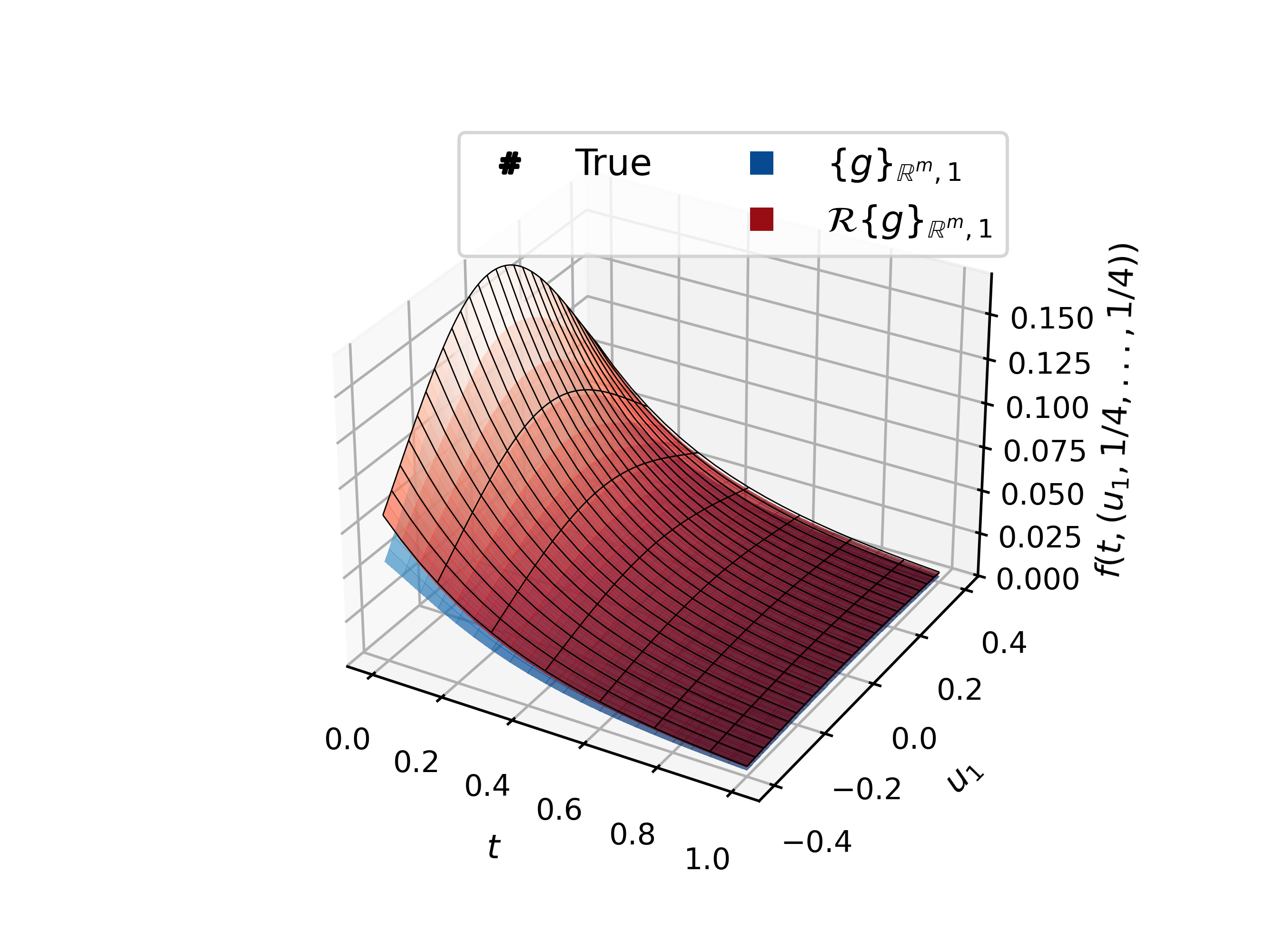}
		\subcaption{Approximation of the function $[0,T] \times \mathbb{R} \ni (t,u_1) \!\mapsto\! f(t,(u_1,1/4,...,1/4)) \in \mathbb{R}$ for $m = 20$.}
	\end{minipage}
	\hspace{0.02\textwidth}
	\begin{minipage}[t]{0.48\textwidth}
		\centering
		\includegraphics[height = 5.5cm, trim = {0.8cm 0 0 0.4cm}, clip]{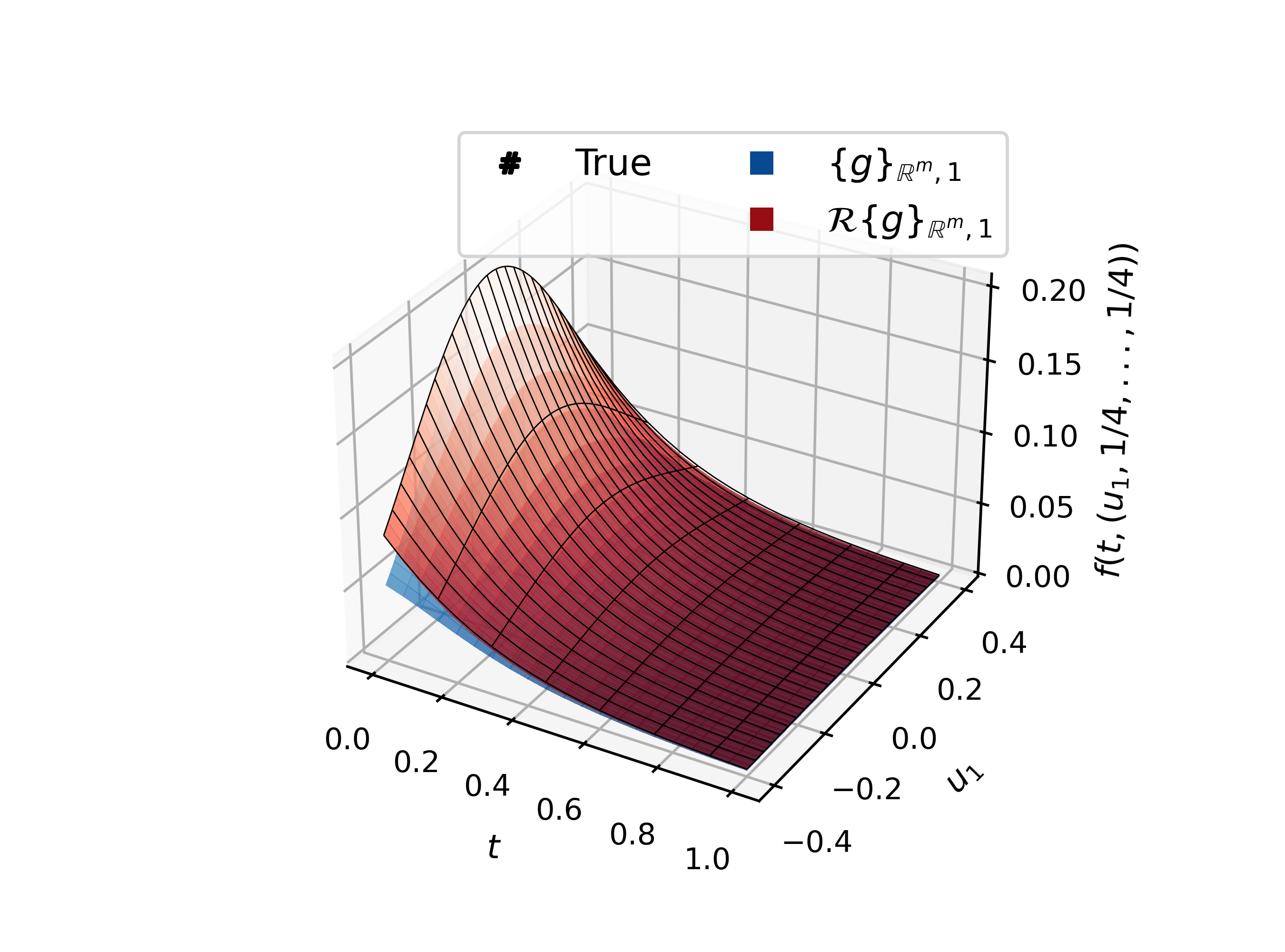}
		\subcaption{Approximation of the function $[0,T] \times \mathbb{R} \ni (t,u_1) \!\mapsto\! f(t,(u_1,1/4,...,1/4)) \in \mathbb{R}$ for $m = 30$.}
	\end{minipage}
	\caption{Spatio-temporal learning of the non-linear Fokker-Planck equation \eqref{EqDefFokker}.}
	\label{FigFokkerUSV}
\end{figure}

\begin{table}
	\centering
	\begin{scriptsize}
		\begin{tabular}{ll|R{1.6cm}|R{1.6cm}|R{1.6cm}|R{1.6cm}|R{1.6cm}}
			\multicolumn{2}{r|}{$N = 10 \sqrt{J \ln(J)}$} & \multicolumn{1}{c|}{$J = 2^9$} & \multicolumn{1}{c|}{$J = 2^{10}$} & \multicolumn{1}{c|}{$J = 2^{11}$} & \multicolumn{1}{c|}{$J = 2^{13}$} & \multicolumn{1}{c}{$J = 2^{15}$} \\
			\hline
			\multirow{4}{*}{$m = 10$} & \multirow{2}{*}{$\lbrace g \rbrace_{\mathbb{R}^m,1}$} & \textbf{3.578 $\cdot$ 10$^{\text{-2}}$}  & \textbf{2.291 $\cdot$ 10$^{\text{-2}}$}  & \textbf{2.255 $\cdot$ 10$^{\text{-2}}$}  & \textbf{1.982 $\cdot$ 10$^{\text{-2}}$}  & \textbf{1.738 $\cdot$ 10$^{\text{-2}}$} \\ 
 & & \textit{20.16} s  & \textit{27.59} s  & \textit{69.04} s  & \textit{167.88} s  & \textit{375.78} s \\ 
& \multirow{2}{*}{$\mathcal{R}\lbrace g \rbrace_{\mathbb{R}^m,1}$} & \cellcolor{gray!40} \textbf{1.939 $\cdot$ 10$^{\text{-8}}$}  & \cellcolor{gray!40} \textbf{1.939 $\cdot$ 10$^{\text{-8}}$}  & \cellcolor{gray!40} \textbf{1.940 $\cdot$ 10$^{\text{-8}}$}  & \cellcolor{gray!40} \textbf{1.939 $\cdot$ 10$^{\text{-8}}$}  & \cellcolor{gray!40} \textbf{1.939 $\cdot$ 10$^{\text{-8}}$} \\ 
 & & \cellcolor{gray!40} \textit{0.75} s & \cellcolor{gray!40} \textit{1.43} s & \cellcolor{gray!40} \textit{2.90} s & \cellcolor{gray!40} \textit{5.82} s & \cellcolor{gray!40} \textit{11.83} s \\ 
\hline 
\multirow{4}{*}{$m = 20$} & \multirow{2}{*}{$\lbrace g \rbrace_{\mathbb{R}^m,1}$} & \textbf{7.588 $\cdot$ 10$^{\text{-2}}$}  & \textbf{6.277 $\cdot$ 10$^{\text{-2}}$}  & \textbf{6.254 $\cdot$ 10$^{\text{-2}}$}  & \textbf{5.998 $\cdot$ 10$^{\text{-2}}$}  & \textbf{5.773 $\cdot$ 10$^{\text{-2}}$} \\ 
 & & \textit{22.70} s  & \textit{38.96} s  & \textit{87.57} s  & \textit{216.28} s  & \textit{590.72} s \\ 
& \multirow{2}{*}{$\mathcal{R}\lbrace g \rbrace_{\mathbb{R}^m,1}$} & \cellcolor{gray!40} \textbf{6.894 $\cdot$ 10$^{\text{-8}}$}  & \cellcolor{gray!40} \textbf{6.901 $\cdot$ 10$^{\text{-8}}$}  & \cellcolor{gray!40} \textbf{6.775 $\cdot$ 10$^{\text{-8}}$}  & \cellcolor{gray!40} \textbf{6.877 $\cdot$ 10$^{\text{-8}}$}  & \cellcolor{gray!40} \textbf{6.925 $\cdot$ 10$^{\text{-8}}$} \\ 
 & & \cellcolor{gray!40} \textit{1.12} s & \cellcolor{gray!40} \textit{2.32} s & \cellcolor{gray!40} \textit{4.60} s & \cellcolor{gray!40} \textit{8.77} s & \cellcolor{gray!40} \textit{22.05} s \\ 
\hline 
\multirow{4}{*}{$m = 30$} & \multirow{2}{*}{$\lbrace g \rbrace_{\mathbb{R}^m,1}$} & \textbf{1.278 $\cdot$ 10$^{\text{-1}}$}  & \textbf{1.263 $\cdot$ 10$^{\text{-1}}$}  & \textbf{1.169 $\cdot$ 10$^{\text{-1}}$}  & \textbf{1.044 $\cdot$ 10$^{\text{-1}}$}  & \textbf{8.712 $\cdot$ 10$^{\text{-2}}$} \\ 
 & & \textit{32.91} s  & \textit{49.81} s  & \textit{116.20} s  & \textit{307.15} s  & \textit{758.80} s \\ 
& \multirow{2}{*}{$\mathcal{R}\lbrace g \rbrace_{\mathbb{R}^m,1}$} & \cellcolor{gray!40} \textbf{1.452 $\cdot$ 10$^{\text{-7}}$}  & \cellcolor{gray!40} \textbf{1.452 $\cdot$ 10$^{\text{-7}}$}  & \cellcolor{gray!40} \textbf{1.445 $\cdot$ 10$^{\text{-7}}$}  & \cellcolor{gray!40} \textbf{1.451 $\cdot$ 10$^{\text{-7}}$}  & \cellcolor{gray!40} \textbf{1.448 $\cdot$ 10$^{\text{-7}}$} \\ 
 & & \cellcolor{gray!40} \textit{1.82} s & \cellcolor{gray!40} \textit{3.38} s & \cellcolor{gray!40} \textit{6.42} s & \cellcolor{gray!40} \textit{12.65} s & \cellcolor{gray!40} \textit{28.19} s 
		\end{tabular}
	\end{scriptsize}
	\vspace{0.3cm}
	\caption{Spatio-temporal learning of the non-linear Fokker-Planck equation \eqref{EqDefFokker}: Empirical $L^2$-error defined in \eqref{EqDefMSE2} on the test set (in bold letters) and running time (in italic letters; in seconds).}
	\label{TabFokkerUSV}
\end{table}

\subsection{Related literature and conclusion}
\label{SecRL}

In recent years, fully trained neural networks have been successfully applied to learn the solution of a partial differential equation (PDE). Based on their universal approximation property, \cite{e17,han18,beck21} among others developed deep learning algorithms to numerically solve high-dimensional parabolic PDEs and BSDEs via stochastic representations. Moreover, the Deep Ritz Method~\cite{e18} reformulated the variational problem as neural network optimization tasks, while the Deep Galerkin Method~\cite{sirignano18} approximated the solution of PDEs by minimizing the residual of the PDE and the boundary condition. This line of work led to physics-informed neural networks (PINNs), which incorporate the governing PDE directly into the loss function (see, e.g., \cite{raissi19,karniadakis21} and the references therein). In contrast to PINNs, our spatio-temporal learning approach only requires fitting the initial condition since the feature map already satisfies the PDE in the interior.

Subsequently, also random feature models have been applied to learn the solution of PDEs. For example, \cite{gonon21} used random neural networks with ReLU activation function to approximate the solution of the Black-Scholes-type PDEs at fixed time and \cite{dong21,wang23} developed closely related methods under the framework of extreme learning machines. Moreover, during the revision process, \cite{deryck25} introduced physics-informed random neural networks to solve PDEs over a given time interval, while \cite{liao26} used random features. Our numerical experiments confirm the efficiency of random feature models for both settings: at fixed time (function regression) and over a given time interval (spatio-temporal learning).

Based on the theoretical results in Section~\ref{SecUAT}+\ref{SecAR}, we have empirically demonstrated in this section that random feature models are able to learn the solution of a given PDE. In particular, they outperform their deterministic counterparts in terms of stability as well as computational efficiency. Indeed, while the deterministic models are trained via a non-convex optimization problem using an iterative training method (e.g., stochastic gradient descent; see \cite{goodfellow16}), the random feature models require only training the linear readout, yielding a convex optimization problem that can be efficiently solved via least squares method and allows to control the optimization error (see, e.g., \cite{wu24}).

\section{Proof of results in Section~\ref{SecUAT}}
\label{SecProofs}

\subsection{Proof of Theorem~\ref{ThmUAT}}
\label{SecProofsUAT}

For the following proofs in this section, we denote the closed ball of radius $r > 0$ around $\vartheta_0 \in \Theta$ by $\overline{B_r(\vartheta_0)} := \left\lbrace \vartheta \in \Theta: d_\Theta(\vartheta,\vartheta_0) \leq r \right\rbrace$.

\begin{proof}[Proof of Theorem~\ref{ThmUAT}]
	First, by using that $\mathcal{G}(\Theta) := \linspan_\mathbb{K}(\lbrace g(\vartheta): g \in \mathcal{G}, \, \vartheta \in \Theta \rbrace)$ is by assumption dense in $X$ together with \cite[Lemma~1.2.19~(i)]{hytoenen16}, i.e.~that $\mathcal{I}_{\mathcal{F}_\theta} \otimes X := \linspan_\mathbb{K}(\lbrace \Omega \ni \omega \mapsto \mathds{1}_E(\omega) x \in X: E \in \mathcal{F}_\theta, \, x \in X \rbrace)$ is dense in $L^r(\Omega,\mathcal{F}_\theta,\mathbb{P};X)$, we obtain that $\mathcal{I}_{\mathcal{F}_\theta} \otimes \mathcal{G}(\Theta) := \linspan_\mathbb{K}(\left\lbrace \Omega \ni \omega \mapsto \mathds{1}_E(\omega) g(\vartheta) \in X: E \in \mathcal{F}_\theta, \, g \in \mathcal{G}, \, \vartheta \in \Theta \right\rbrace)$ is dense in $L^r(\Omega,\mathcal{F}_\theta,\mathbb{P};X)$. Hence, it suffices to show the approximation of any map of the form $\Omega \ni \omega \mapsto \mathds{1}_E(\omega) g(\vartheta_0) \in X$, with $E \in \mathcal{F}_\theta$, $g \in \mathcal{G}$, and $\vartheta_0 \in \Theta$, by some $G \in \mathcal{RG} \cap L^r(\Omega,\mathcal{F}_\theta,\mathbb{P};X)$ with respect to $\Vert \cdot \Vert_{L^r(\Omega,\mathcal{F},\mathbb{P};X)}$. To this end, we fix some $E \in \mathcal{F}_\theta$, $g \in \mathcal{G}$, $\vartheta_0 \in \Theta$, and $\varepsilon > 0$. Moreover, for every $M,n \in \mathbb{N}$, we define the map
	\begin{equation*}
		\Omega \ni \omega \quad \mapsto \quad G_{M,n}(\omega) := y_{M,n}(\omega) g(\theta_n(\omega)) \in X
	\end{equation*}
	with $\mathcal{F}_\theta$-measurable linear readout $\Omega \ni \omega \mapsto y_{M,n}(\omega) := C_M^{-1} \mathds{1}_{\overline{B_{1/M}(\vartheta_0)}}(\theta_n(\omega)) \in \mathbb{R}$, where $C_M := \mathbb{P}\big[ \big\lbrace \omega \in \Omega: \theta_1(\omega) \in \overline{B_{1/M}(\vartheta_0)} \big\rbrace \big] > 0$ due to Assumption~\ref{AssCDF}.
	
	Now, we show that the sequence $\big(\mathds{1}_{\overline{B_{1/M}(\vartheta_0)}}(\theta_n(\omega)) g(\vartheta_0) - C_M G_{M,n}(\omega)\big)_{M \in \mathbb{N}}$ converges uniformly in $\omega \in \Omega$ and $n \in \mathbb{N}$ to $0 \in X$ with respect to $\Vert \cdot \Vert_X$. To this end, we fix some $\varepsilon > 0$. Then, by using that $g \in \mathcal{G} \subseteq C^0(\Theta;X)$ is continuous, there exists some $\delta > 0$ such that for every $\vartheta \in B_\delta(\vartheta_0)$ it holds that
	\begin{equation*}
		\Vert g(\vartheta_0) - g(\vartheta) \Vert_X < \varepsilon.
	\end{equation*}
	Hence, by choosing $M_0 \in \mathbb{N}$ large enough such that $M_0 > \delta^{-1}$, it follows for every $M \in \mathbb{N} \cap [M_0,\infty)$ that
	\begin{equation*}
		\begin{aligned}
			& \sup_{\omega \in \Omega} \sup_{n \in \mathbb{N}} \left\Vert \mathds{1}_{\overline{B_{1/M}(\vartheta_0)}}(\theta_n(\omega)) g(\vartheta_0) - C_M G_{M,n}(\omega) \right\Vert_X \\
			& \quad\quad = \sup_{\omega \in \Omega} \sup_{n \in \mathbb{N}} \left\Vert \mathds{1}_{\overline{B_{1/M}(\vartheta_0)}}(\theta_n(\omega)) \big( g(\vartheta_0) - g(\theta_n(\omega)) \big) \right\Vert_X \\
			& \quad\quad = \sup_{\vartheta \in \overline{B_{1/M}(\vartheta_0)}} \left\Vert g(\vartheta_0) - g(\vartheta) \right\Vert_X \\
			& \quad\quad \leq \sup_{\vartheta \in \overline{B_{1/M_0}(\vartheta_0)}} \left\Vert g(\vartheta_0) - g(\vartheta) \right\Vert_X < \varepsilon.
		\end{aligned}
	\end{equation*}
	Since $\varepsilon > 0$ was chosen arbitrarily, this shows that the sequence $\big(\mathds{1}_{\overline{B_{1/M}(\vartheta_0)}}(\theta_n(\omega)) g(\vartheta_0) - C_M G_{M,n}(\omega)\big)_{M \in \mathbb{N}}$ converges uniformly in $\omega \in \Omega$ and $n \in \mathbb{N}$ to $0 \in X$ with respect to $\Vert \cdot \Vert_X$.
	
	Next, we show for every fixed $M,n \in \mathbb{N}$ that $G_{M,n} \in L^r(\Omega,\mathcal{F}_\theta,\mathbb{P};X)$. Indeed, by using that $\Omega \ni \omega \mapsto (y_{M,n}(\omega),\theta_n(\omega)) \in \mathbb{R} \times \Theta$ is $\mathcal{F}_\theta/\mathcal{B}(\mathbb{R} \times \Theta)$-measurable and that $g \in \mathcal{G} \subseteq C^0(\Theta;X)$ is continuous, it follows that the concatenation $\Omega \ni \omega \mapsto y_{M,n}(\omega) g(\theta_n(\omega)) \in X$ is $\mathcal{F}_\theta/\mathcal{B}(X)$-measurable. Hence, by using that $(X,\Vert \cdot \Vert_X)$ is separable, we can apply \cite[Theorem 1.1.6+1.1.20]{hytoenen16} to conclude that $G_{M,n}: \Omega \rightarrow X$ is strongly $(\mathbb{P},\mathcal{F}_\theta)$-measurable. Moreover, by using Minkowski's inequality and that the sequence $\big(\mathds{1}_{\overline{B_{1/M}(\vartheta_0)}}(\theta_n(\omega)) g(\vartheta_0) - C_M G_{M,n}(\omega)\big)_{M \in \mathbb{N}}$ is by the previous step uniformly bounded in $\omega \in \Omega$ and $n \in \mathbb{N}$, we have
	\begin{equation*}
		\begin{aligned}
			& \Vert G_{M,n} \Vert_{L^r(\Omega,\mathcal{F},\mathbb{P};X)} = \mathbb{E}\left[ \Vert G_{M,n} \Vert_X^r \right]^\frac{1}{r} = \frac{1}{C_M} \mathbb{E}\left[ \Vert C_M G_{M,n} \Vert_X^r \right]^\frac{1}{r} \\
			& \quad\quad \leq \frac{1}{C_M} \mathbb{E}\left[ \left\Vert \mathds{1}_{\overline{B_{1/M}(\vartheta_0)}}(\theta_n) g(\vartheta_0) \right\Vert_X^r \right]^\frac{1}{r} + \frac{1}{C_M} \mathbb{E}\left[ \left\Vert \mathds{1}_{\overline{B_{1/M}(\vartheta_0)}}(\theta_n) g(\vartheta_0) - C_M G_{M,n} \right\Vert_X^r \right]^\frac{1}{r} \\
			& \quad\quad \leq \frac{1}{C_M} \Vert g(\vartheta_0) \Vert_X + \frac{1}{C_M} \sup_{\omega \in \Omega} \left\Vert \mathds{1}_{\overline{B_{1/M}(\vartheta_0)}}(\theta_n(\omega)) g(\vartheta_0) - C_M G_{M,n} \right\Vert_X < \infty,
		\end{aligned}
	\end{equation*}
	which shows that $G_{M,n} \in L^r(\Omega,\mathcal{F}_\theta,\mathbb{P};X)$ for all $M,n \in \mathbb{N}$.	
	
	Now, we show that there exists some $M_1 \in \mathbb{N}$ such that the constant maps $\left( \omega \mapsto g(\vartheta_0) \right) \in L^r(\Omega,\mathcal{F}_\theta,\mathbb{P};X)$ and $\left( \omega \mapsto \mathbb{E}[G_{M_1,1}] \right) \in L^r(\Omega,\mathcal{F}_\theta,\mathbb{P};X)$ are $\frac{\varepsilon}{2}$-close to each other with respect to $\Vert \cdot \Vert_{L^r(\Omega,\mathcal{F},\mathbb{P};X)}$. Indeed, by using that $\big( \mathds{1}_{\overline{B_{1/M}(\vartheta_0)}}(\theta_n(\omega)) g(\vartheta_0) - C_M G_{M,n}(\omega) \big)_{M \in \mathbb{N}}$ converges uniformly in $\omega \in \Omega$ and $n \in \mathbb{N}$ to $0 \in X$ with respect to $\Vert \cdot \Vert_X$, there exists some $M_1 \in \mathbb{N}$ such that
	\begin{equation}
		\label{EqPropSimplebyRNProof3}
		\sup_{n \in \mathbb{N}} \sup_{\omega \in \Omega} \left\Vert \mathds{1}_{\overline{B_{1/M_1}(\vartheta_0)}}(\theta_n(\omega)) g(\vartheta_0) - C_{M_1} G_{M_1,n}(\omega) \right\Vert_X < \frac{\varepsilon}{2}.
	\end{equation}
	Hence, by using that $\mathbb{E}\big[ \mathds{1}_{\overline{B_{1/M_1}(\vartheta_0)}}(\theta_1) \big] = \mathbb{P}\big[ \big\lbrace \omega \in \Omega: \theta_1(\omega) \in \overline{B_{1/M_1}(\vartheta_0)} \big\rbrace \big] = C_{M_1} > 0$, and \cite[Proposition~1.2.2]{hytoenen16}, it follows that
	\begin{equation}
		\label{EqPropSimplebyRNProof4}
		\begin{aligned}
			& \left\Vert g(\vartheta_0) - \mathbb{E}[G_{M_1,1}] \right\Vert_{L^r(\Omega,\mathcal{F},\mathbb{P};X)} = \mathbb{E}\left[ \left\Vert g(\vartheta_0) - \mathbb{E}[G_{M_1,1}] \right\Vert_X^r \right]^\frac{1}{r} = \left\Vert g(\vartheta_0) - \mathbb{E}[G_{M_1,1}] \right\Vert_X \\
			& \quad\quad = \left\Vert \mathbb{E}\left[ \frac{1}{C_{M_1}} \mathds{1}_{\overline{B_{1/M_1}(\vartheta_0)}}(\theta_1) g(\vartheta_0) - G_{M_1,1} \right] \right\Vert_X \\
			& \quad\quad \leq \mathbb{E}\left[ \frac{\mathds{1}_{\overline{B_{1/M_1}(\vartheta_0)}}(\theta_1)}{C_{M_1}} \left\Vert \mathds{1}_{\overline{B_{1/M_1}(\vartheta_0)}}(\theta_1) g(\vartheta_0) - C_{M_1} G_{M_1,1} \right\Vert_X \right] \\
			& \quad\quad \leq \underbrace{\frac{\mathbb{E}\left[ \mathds{1}_{\overline{B_{1/M_1}(\vartheta_0)}}(\theta_1) \right]}{C_{M_1}}}_{=1} \sup_{\omega \in \Omega} \left\Vert \mathds{1}_{\overline{B_{1/M_1}(\vartheta_0)}}(\theta_1) g(\vartheta_0) - C_{M_1} G_{M_1,1}(\omega) \right\Vert_X < \frac{\varepsilon}{2}.
		\end{aligned}
	\end{equation}
	This shows that the constant maps $(\omega \mapsto  g(\vartheta_0)) \in L^r(\Omega,\mathcal{F}_\theta,\mathbb{P};X)$ and $\left( \omega \mapsto \mathbb{E}[G_{M_1,1}] \right) \in L^r(\Omega,\mathcal{F}_\theta,\mathbb{P};X)$ are $\frac{\varepsilon}{2}$-close to each other with respect to $\Vert \cdot \Vert_{L^r(\Omega,\mathcal{F},\mathbb{P};X)}$.
	
	Finally, we approximate the constant random variable $\left( \omega \mapsto \mathbb{E}\left[ G_{M_1,1} \right] \right) \in L^1(\Omega,\mathcal{F}_\theta,\mathbb{P};X)$ by the average of the i.i.d.~sequence $(G_{M_1,n})_{n \in \mathbb{N}} \subseteq L^1(\Omega,\mathcal{F}_\theta,\mathbb{P};X)$. Indeed, by applying the strong law of large numbers for Banach space-valued random variables in \cite[Theorem~3.3.10]{hytoenen16} with Banach space $(X,\Vert \cdot \Vert_X)$, we conclude that
	\begin{equation}
		\label{EqPropSimplebyRNProof5}
		\frac{1}{N} \sum_{n=1}^N G_{M_1,n} \quad \overset{N \rightarrow \infty}{\longrightarrow} \quad \mathbb{E}\left[ G_{M_1,1} \right] \quad \text{in } L^1(\Omega,\mathcal{F}_\theta,\mathbb{P};X) \text{ and } \mathbb{P}\text{-a.s.}
	\end{equation}
	Moreover, if $r \in (1,\infty)$, we generalize the convergence in \eqref{EqPropSimplebyRNProof5} to $L^r(\Omega,\mathcal{F}_\theta,\mathbb{P};X)$. To this end, we define the sequence of real-valued random variables $(Z_N)_{N \in \mathbb{N}}$ by $Z_N(\omega) := \big\Vert \mathbb{E}\big[ G_{M_1,1} \big] - \frac{1}{N} \sum_{n=1}^N G_{M_1,n} \big\Vert_X^r$, for $\omega \in \Omega$ and $N \in \mathbb{N}$. Then, by using \cite[Proposition~1.2.2]{hytoenen16} and \eqref{EqPropSimplebyRNProof3}, it follows for every $N \in \mathbb{N}$ that
	\begin{equation*}
		\begin{aligned}
			\sup_{\omega \in \Omega} Z_N(\omega) & \leq \sup_{\omega \in \Omega} \left( \left\Vert \mathbb{E}\left[ G_{M_1,1} \right] \right\Vert_X + \frac{1}{N} \sum_{n=1}^N \left\Vert G_{M_1,n}(\omega) \right\Vert_X \right)^r \\
			& \leq \sup_{n \in \mathbb{N}} \sup_{\omega \in \Omega} \left( \mathbb{E}\left[ \left\Vert G_{M_1,1} \right\Vert_X \right] + \left\Vert G_{M_1,n}(\omega) \right\Vert_X \right)^r \\
			& \leq \frac{2^r}{C_{M_1}^r} \sup_{n \in \mathbb{N}} \sup_{\omega \in \Omega} \left\Vert C_{M_1} G_{M_1,n}(\omega) \right\Vert_X^r \\
			& \leq \frac{2^r}{C_{M_1}^r} \sup_{n \in \mathbb{N}} \sup_{\omega \in \Omega} \bigg( \left\Vert \mathds{1}_{\overline{B_{1/M_1}(\vartheta_0)}}(\theta_1(\omega)) g(\vartheta_0) \right\Vert_X + \\
			& \quad\quad\quad\quad\quad\quad\quad\quad + \left\Vert \mathds{1}_{\overline{B_{1/M_1}(\vartheta_0)}}(\theta_1(\omega)) g(\vartheta_0) - C_{M_1} G_{M_1,n}(\omega) \right\Vert_X \bigg)^r \\
			& < \frac{2^r}{C_{M_1}^r} \left( \left\Vert g(\vartheta_0) \right\Vert_X + \frac{\varepsilon}{2} \right)^r =: C_Z < \infty.
		\end{aligned}
	\end{equation*}
	Hence, $\sup_{N \in \mathbb{N}} \mathbb{E}\left[ \vert Z_N \vert \mathds{1}_{\left\lbrace \vert Z_N \vert > C_Z \right\rbrace} \right] = 0$, which implies that the family of random variables $(Z_N)_{N \in \mathbb{N}}$ is uniformly integrable (see \cite[Definition~A.3.1]{hytoenen16}). Thus, by using that $Z_N \rightarrow 0$, $\mathbb{P}$-a.s., as $N \rightarrow \infty$ (see \eqref{EqPropSimplebyRNProof5}), and Vitali's convergence theorem (see \cite[Proposition~A.3.5]{hytoenen16}), we have
	\begin{equation}
		\label{EqPropSimplebyRNProof6}
		\lim_{N \rightarrow \infty} \mathbb{E}\left[ \left\Vert \mathbb{E}\left[ G_{M_1,1} \right] - \frac{1}{N} \sum_{n=1}^N G_{M_1,n} \right\Vert_X^r \right] = \lim_{N \rightarrow \infty} \mathbb{E}[Z_N] = 0.
	\end{equation}
	Thus, either by \eqref{EqPropSimplebyRNProof5} (if $r = 1$) or \eqref{EqPropSimplebyRNProof6} (if $r \in (1,\infty)$) there exists some $N_0 \in \mathbb{N}$ such that
	\begin{equation}
		\label{EqPropSimplebyRNProof7}
		\mathbb{E}\left[ \left\Vert \mathbb{E}\left[ G_{M_1,1} \right] - \frac{1}{N_0} \sum_{n=1}^{N_0} G_{M_1,n} \right\Vert_X^r \right]^\frac{1}{r} < \frac{\varepsilon}{2}.
	\end{equation}
	Finally, we define $G := \big( \omega \mapsto \frac{1}{N_0} \sum_{n=1}^{N_0} \mathds{1}_E(\omega) G_{M_1,n}(\omega) \big) \in \mathcal{RG} \cap L^r(\Omega,\mathcal{F}_\theta,\mathbb{P};X)$. Hence, by combining \eqref{EqPropSimplebyRNProof4} and \eqref{EqPropSimplebyRNProof7} with Minkowski's inequality, it follows that
	\begin{equation*}
		\begin{aligned}
			\left\Vert \mathds{1}_E g(\vartheta_0) - G \right\Vert_{L^r(\Omega,\mathcal{F},\mathbb{P};X)} & = \mathbb{E}\left[ \left\Vert \mathds{1}_E g(\vartheta_0) - G \right\Vert^r \right]^\frac{1}{r} \\
			& = \mathbb{E}\Bigg[ \underbrace{\mathds{1}_E}_{\leq 1} \left\Vert g(\vartheta_0) - \frac{1}{N_0} \sum_{n=1}^{N_0} G_{M_1,n} \right\Vert^r \Bigg]^\frac{1}{r} \\
			& \leq \mathbb{E}\left[ \left\Vert g(\vartheta_0) - \frac{1}{N_0} \sum_{n=1}^{N_0} G_{M_1,n} \right\Vert^r \right]^\frac{1}{r} \\
			& \leq \left\Vert g(\vartheta_0) - \mathbb{E}\left[ G_{M_1,n} \right] \right\Vert_X + \mathbb{E}\left[ \left\Vert \mathbb{E}\left[ G_{M_1,n} \right] - \frac{1}{N_0} \sum_{n=1}^{N_0} G_{M_1,n} \right\Vert^r \right]^\frac{1}{r} \\
			& < \frac{\varepsilon}{2} + \frac{\varepsilon}{2} = \varepsilon.
		\end{aligned}
	\end{equation*}
	Since $\varepsilon > 0$, $g \in \mathcal{G}$, and $\vartheta_0 \in \Theta$ were chosen arbitrarily, this shows that $\Omega \ni \omega \mapsto \mathds{1}_E(\omega) g(\vartheta_0) \in X$ can be approximated by a random feature model $G \in \mathcal{RG} \cap L^r(\Omega,\mathcal{F}_\theta,\mathbb{P};X)$ with respect to $\Vert \cdot \Vert_{L^r(\Omega,\mathcal{F},\mathbb{P};X)}$. Combining this together with the first step of the proof, i.e.~that $\mathcal{I}_{\mathcal{F}_\theta} \otimes \mathcal{G}(\Theta)$ is dense in $L^r(\Omega,\mathcal{F}_\theta,\mathbb{P};X)$, we obtain the conclusion.
\end{proof}

\subsection{Proof of Corollary~\ref{CorUATTrigo}+\ref{CorUATFourier}+\ref{CorUATRN}}
\label{SecProofsUATCor}

\begin{proof}[Proof of Corollary~\ref{CorUATTrigo}]
	We aim to apply Theorem~\ref{ThmUAT} with Banach space $(X,\Vert \cdot \Vert_X) := (C^0(U),\Vert \cdot \Vert_{C^0(U)})$. To this end, we first observe that $(C^0(U),\Vert \cdot \Vert_{C^0(U)})$ is by \cite[Problem~24]{brezis11} separable. Moreover, we choose $\Theta := \mathbb{R}^m$ and let $(\theta_n)_{n \in \mathbb{N}}: \Omega \rightarrow \mathbb{R}^m$ be an i.i.d.~sequence satisfying Assumption~\ref{AssCDF}. In addition, we define
	\begin{equation*}
		\mathcal{G} := \left\lbrace \mathbb{R}^m \ni \vartheta \mapsto h\left( \vartheta^\top \cdot \right) \in C^0(U): h \in \lbrace \cos, \sin \rbrace \right\rbrace.
	\end{equation*}
	Then, for both $h \in \lbrace \cos, \sin \rbrace$, we use that $\mathbb{R}^m \times U \ni (\vartheta,u) \mapsto h\big( \vartheta^\top u \big) \in \mathbb{R}$ is continuous to conclude that $K \times U \ni (\vartheta,u) \mapsto h\big( \vartheta^\top u \big) \in \mathbb{R}$ is uniformly continuous, for all compact subsets $K \subset \mathbb{R}^m$. Hence, the map $\mathbb{R}^m \ni \vartheta \mapsto h\big( \vartheta^\top \cdot \big) \in C^0(U)$ is continuous, which shows that $\mathcal{G} \subseteq C^0(\Theta;X)$. Moreover, by using the trigonometric identities $\cos(s) \cos(t) = (\cos(s-t)+\cos(s+t))/2$, $\sin(s) \sin(t) = (\cos(s-t)-\cos(s+t))/2$, and $\cos(s) \sin(t) = (\sin(s+t)-\sin(s-t))/2$ for any $s,t \in \mathbb{R}$, we observe that
	\begin{equation*}
		\linspan_\mathbb{R}(\mathcal{G}(\Theta)) = \linspan_\mathbb{R}\left( \left\lbrace U \ni u \mapsto h\left( \vartheta^\top u \right) \in \mathbb{R}: h \in \lbrace \cos, \sin \rbrace, \, \vartheta \in \mathbb{R}^m \right\rbrace \right)
	\end{equation*}
	is a subalgebra of $C^0(U)$, i.e.~for every $g_1,g_2 \in \linspan_\mathbb{R}(\mathcal{G}(\Theta))$ we have $g_1 + g_2 \in \linspan_\mathbb{R}(\mathcal{G}(\Theta))$ and $g_1 \cdot g_2 \in \linspan_\mathbb{R}(\mathcal{G}(\Theta))$. Moreover, $\linspan_\mathbb{R}(\mathcal{G}(\Theta))$ is point separating, i.e.~for any distinct $u_1,u_2 \in U$ there exists some $g \in \linspan_\mathbb{R}(\mathcal{G}(\Theta))$ such that $g(u_1) \neq g(u_2)$. In addition, $\linspan_\mathbb{R}(\mathcal{G}(\Theta))$ vanishes nowhere, i.e.~for every $u_0 \in U$ there exists some $g \in \linspan_\mathbb{R}(\mathcal{G}(\Theta))$ such that $g(u_0) \neq 0$. Hence, we can apply the Stone-Weierstrass theorem (see \cite{stone48}) to obtain that $\linspan_\mathbb{R}(\mathcal{G}(\Theta))$ is dense in $C^0(U)$. Thus, the conclusion follows from Theorem~\ref{ThmUAT}.
\end{proof}

\begin{proof}[Proof of Corollary~\ref{CorUATFourier}]
	We aim to apply Theorem~\ref{ThmUAT} with Banach space $(X,\Vert \cdot \Vert_X) := (C^0(U),\Vert \cdot \Vert_{C^0(U)})$. To this end, we first observe that $(C^0(U),\Vert \cdot \Vert_{C^0(U)})$ is by \cite[Problem~24]{brezis11} separable. Moreover, we choose $\Theta := \mathbb{R}^m$ and let $(\theta_n)_{n \in \mathbb{N}}: \Omega \rightarrow \mathbb{R}^m$ be an i.i.d.~sequence satisfying Assumption~\ref{AssCDF}. In addition, we define the singleton set
	\begin{equation*}
		\mathcal{G} := \left\lbrace \mathbb{R}^m \ni \vartheta \mapsto \exp\left( \mathbf{i} \vartheta^\top \cdot \right) \in C^0(U) \right\rbrace.
	\end{equation*}
	Then, we follow the proof of Corollary~\ref{CorUATRN} to conclude that $\mathbb{R}^m \ni \vartheta \mapsto \exp\big( \mathbf{i} \vartheta^\top \cdot \big) \in C^0(U)$ is continuous, which shows that $\mathcal{G} \subseteq C^0(\Theta;X)$. Moreover, by using the identities $\exp\big( \mathbf{i} \vartheta_1^\top u \big) \exp\big( \mathbf{i} \vartheta_2^\top u \big) = \exp\big( \mathbf{i} (\vartheta_1 + \vartheta_2)^\top u \big)$ and $\overline{\exp\big( \mathbf{i} \vartheta_1^\top u \big)} = \exp\big( \!\!-\!\!\mathbf{i} \vartheta_1^\top u \big) = \exp\big( \mathbf{i} (-\vartheta_1)^\top u \big)$ for any $\vartheta_1, \vartheta_2 \in \mathbb{R}^m$ and $u \in U$, we observe that
	\begin{equation*}
		\linspan_\mathbb{C}(\mathcal{G}(\Theta)) = \linspan_\mathbb{C}\left( \left\lbrace U \ni u \mapsto \exp\left( \mathbf{i} \vartheta^\top u \right) \in \mathbb{C}: \vartheta \in \mathbb{R}^m \right\rbrace \right)
	\end{equation*}
	is a subalgebra of $C^0(U)$, which is point separating, nowhere vanishing, and self-adjoint, where the latter means that for every $g \in \linspan_\mathbb{C}(\mathcal{G}(\Theta))$ the function $\Theta \ni \vartheta \mapsto \overline{g}(\vartheta) := \overline{g(\vartheta)} \in \mathbb{C}$ satisfies $\overline{g} \in \linspan_\mathbb{C}(\mathcal{G}(\Theta))$. Hence, we can apply the complex-valued Stone-Weierstrass theorem (see e.g.~\cite[p.~122]{rudin91}) to obtain that $\linspan_\mathbb{C}(\mathcal{G}(\Theta))$ is dense in $C^0(U;\mathbb{C})$. Thus, the conclusion follows from Theorem~\ref{ThmUAT}.
\end{proof}

For the proof of Corollary~\ref{CorUATRN}, we first show the following auxiliary lemma about neurons and that Banach spaces $(X,\Vert \cdot \Vert_X)$ satisfying Assumption~\ref{AssEmb} are separable.

\begin{lemma}
	\label{LemmaSep}
	Let $(X,\Vert \cdot \Vert_X)$ satisfy Assumption~\ref{AssEmb}. Then, the following holds true:
	\begin{enumerate}
		\item\label{LemmaSep1} For every $y \in \mathbb{R}^d$, $a \in \mathbb{R}^m$, $b \in \mathbb{R}$, and $\rho \in \overline{C^k_b(\mathbb{R})}^\gamma$ it holds that $y \rho\big( a^\top \cdot - b \big) \in X$.
		\item\label{LemmaSep2} For every $\rho \in \overline{C^k_b(\mathbb{R})}^\gamma$ the map $\mathbb{R}^d \times \mathbb{R}^m \times \mathbb{R} \ni (y,a,b) \mapsto y \rho\big( a^\top \cdot - b \big) \in X$ is continuous.
		\item\label{LemmaSep3} The Banach space $(X,\Vert \cdot \Vert_X)$ is separable.
	\end{enumerate}
\end{lemma}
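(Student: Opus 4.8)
The three parts are essentially consequences of the continuous dense embedding $\iota: (C^k_b(\mathbb{R}^m;\mathbb{R}^d),\Vert \cdot \Vert_{C^k_{pol,\gamma}(\mathbb{R}^m;\mathbb{R}^d)}) \ni f \mapsto f\vert_U \in X$ postulated in Assumption~\ref{AssEmb}, combined with the description of $\overline{C^k_b(\mathbb{R})}^\gamma$ recalled in Section~\ref{SecNotation}. I will treat the parts in the order (i), (ii), (iii), since (ii) immediately gives (i) by continuity of the embedding and a cheap case, and (iii) will use (ii) together with the universal approximation machinery (really just density of the neuron span, which is already quoted from \cite[Theorem~2.8]{neufeld24}) plus separability of finite-dimensional parameter spaces.

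\medskip

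\textbf{Parts (i) and (ii).} Fix $\rho \in \overline{C^k_b(\mathbb{R})}^\gamma$. The key reduction is: the map $(y,a,b) \mapsto \big(u \mapsto y\rho(a^\top u - b)\big)$ factors through $\iota$, so it suffices to show that $\Phi: \mathbb{R}^d \times \mathbb{R}^m \times \mathbb{R} \to (C^k_b(\mathbb{R}^m;\mathbb{R}^d),\Vert \cdot \Vert_{C^k_{pol,\gamma}})$, $\Phi(y,a,b) := \big(u \mapsto y\rho(a^\top u - b)\big)$, is well-defined and continuous; then (ii) follows by composing with the continuous linear map $\iota$, and (i) is the statement that $\Phi(y,a,b) \in C^k_b(\mathbb{R}^m;\mathbb{R}^d)$ hence $\iota(\Phi(y,a,b)) \in X$. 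For well-definedness one notes $\rho \in \overline{C^k_b(\mathbb{R})}^\gamma \subseteq C^k(\mathbb{R})$, so $u \mapsto y\rho(a^\top u - b)$ is $C^k$, and $\partial_\alpha$ of it equals $y\,\rho^{(|\alpha|)}(a^\top u - b)\,a^\alpha$ by the chain rule, which is bounded because $\rho^{(j)}$ is bounded for $j \le k$ (as $\rho \in \overline{C^k_b(\mathbb{R})}^\gamma \subseteq C^k_{pol,\gamma}(\mathbb{R})$ — actually one needs $\rho \in C^k_b$? no: $\rho \in \overline{C^k_b}^\gamma$ only gives polynomial growth, so $\partial_\alpha \Phi(y,a,b)$ need only lie in $C^k_{pol,\gamma'}$ for a suitable $\gamma'$; I should instead argue that $\Phi(y,a,b)\vert_U \in X$ directly via Assumption~\ref{AssEmb} using an approximating sequence $\rho_n \in C^k_b(\mathbb{R})$ with $\Vert \rho_n - \rho\Vert_{C^k_{pol,\gamma}} \to 0$, giving $\Phi(y,a,b) = \lim_n \Phi_n(y,a,b)$ in the $C^k_{pol,\gamma}$-type norm and hence in $X$). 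For continuity of $\Phi$ on $C^k_b$, fix a convergent sequence $(y_n,a_n,b_n) \to (y,a,b)$; one estimates $\Vert \Phi(y_n,a_n,b_n) - \Phi(y,a,b)\Vert_{C^k_{pol,\gamma}}$ by splitting into the difference in $y$, in $a$ (which enters both through the argument and through the factor $a^\alpha$), and in $b$, using boundedness/uniform continuity of $\rho^{(j)}$, $j \le k$, on compact sets together with the polynomial-growth denominator $(1+\Vert u\Vert)^\gamma$ to control the region $\Vert u\Vert$ large; this is a routine but slightly tedious estimate, and it is the main technical obstacle of the lemma.

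\medskip

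\textbf{Part (iii).} To prove $(X,\Vert\cdot\Vert_X)$ separable: by Assumption~\ref{AssEmb} the image $\iota(C^k_b(\mathbb{R}^m;\mathbb{R}^d))$ is dense in $X$, so it suffices to exhibit a countable subset of $X$ dense in $\iota(C^k_b(\mathbb{R}^m;\mathbb{R}^d))$; for this, note that by \cite[Theorem~2.8]{neufeld24} (quoted in the discussion before Corollary~\ref{CorUATRN}) the span of the neurons $\big\{u \mapsto e_i\rho(\vartheta_1^\top u - \vartheta_2) : (\vartheta_1,\vartheta_2)\in\mathbb{R}^m\times\mathbb{R},\ i=1,\dots,d\big\}$ is dense in $X$ for any fixed non-polynomial $\rho \in \overline{C^k_b(\mathbb{R})}^\gamma$ — and such a $\rho$ exists, e.g.\ $\tanh$. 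Then, using part (ii), the countable set of finite rational linear combinations of neurons with rational parameters $(\vartheta_1,\vartheta_2)\in\mathbb{Q}^m\times\mathbb{Q}$ and rational coefficients is dense in that span (continuity of $\Phi$ lets us approximate each neuron by one with rational parameters, and density of $\mathbb{Q}$ in $\mathbb{R}$ handles the coefficients), hence dense in $X$. This gives a countable dense subset of $X$, so $X$ is separable. The only subtlety is that \cite[Theorem~2.8]{neufeld24} requires a \emph{non-polynomial} activation, which is why I pick $\tanh$ rather than an arbitrary $\rho$; this is harmless since separability of $X$ does not depend on any choice of $\rho$.
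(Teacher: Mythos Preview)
Your proposal is correct and follows essentially the same route as the paper: for (i)--(ii) you approximate $\rho$ by some $\widetilde\rho\in C^k_b(\mathbb{R})$, split into the regions $\Vert u\Vert\le r$ and $\Vert u\Vert>r$, and use uniform continuity on the compact part together with the weight $(1+\Vert u\Vert)^\gamma$ on the tail, then push through the continuous embedding; for (iii) you take a fixed non-polynomial activation, invoke \cite[Theorem~2.8]{neufeld24}, and use (ii) to pass to rational parameters. The only cosmetic difference is that the paper picks $\rho=\cos$ rather than $\tanh$ in (iii), and for (i) it simply cites \cite[Lemma~2.5]{neufeld24} instead of spelling out the approximation argument you sketch.
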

\begin{proof}
	For Part~\ref{LemmaSep1}., we apply \cite[Lemma~2.5]{neufeld24} to conclude that $y \rho\big( a^\top \cdot - b \big) \in X$ for all $y \in \mathbb{R}^d$, $a \in \mathbb{R}^m$, $b \in \mathbb{R}$, and $\rho \in \overline{C^k_b(\mathbb{R})}^\gamma$. 
	
	For Part~\ref{LemmaSep2}., we fix some $\varepsilon > 0$ and a sequence $(y_M,a_M,b_M)_{M \in \mathbb{N}} \subseteq \mathbb{R}^d \times \mathbb{R}^m \times \mathbb{R}$ converging to $(y,a,b) \in \mathbb{R}^d \times \mathbb{R}^m \times \mathbb{R}$. Then, by using that $y_M a_M^\alpha$ converges uniformly in $\alpha \in \mathbb{N}^m_{0,k}$ to $y a^\alpha$ (where $a^\alpha := \prod_{l=1}^m a_l^{\alpha_l}$ for $a := (a_1,...,a_m)^\top \in \mathbb{R}^m$ and $\alpha := (\alpha_1,...,\alpha_m) \in \mathbb{N}^m_{0,k}$), the constant $C_{y,a} := 1 + \max_{\alpha \in \mathbb{N}^m_{0,k}} \left\Vert y a^\alpha \right\Vert + \sup_{M \in \mathbb{N}} \max_{\alpha \in \mathbb{N}^m_{0,k}} \left\Vert y_M a_M^\alpha \right\Vert > 0$ is finite. Moreover, since $(a_M,b_M)_{M \in \mathbb{N}}^\top \subseteq \mathbb{R}^m \times \mathbb{R}$ converges to $(a,b) \in \mathbb{R}^m \times \mathbb{R}$, the constant $C_{a,b} := 1 + \Vert (a,b) \Vert + \sup_{M \in \mathbb{N}} \Vert (a_M,b_M) \Vert > 0$ is finite. In addition, there exists by definition of $\overline{C^k_b(\mathbb{R})}^\gamma$ some $\widetilde{\rho} \in C^k_b(\mathbb{R})$ such that
	\begin{equation}
		\label{EqLemmaSepProof1}
		\Vert \rho - \widetilde{\rho} \Vert_{C^k_{pol,\gamma}(\mathbb{R})} := \max_{\alpha \in \mathbb{N}^m_{0,k}} \sup_{s \in \mathbb{R}} \frac{\left\vert \rho^{(\vert \alpha \vert)}(s) - \widetilde{\rho}^{(\vert \alpha \vert)}(s) \right\vert}{\left( 1 + \left\vert s \right\vert \right)^\gamma} < \frac{\varepsilon}{6 C_{y,a} C_{a,b}}.
	\end{equation}
	Now, we choose some $r > 0$ large enough such that $(1+r)^\gamma \geq 6 \varepsilon^{-1} C_{y,a} \Vert \widetilde{\rho} \Vert_{C^k_b(\mathbb{R})}$. Then, the inequality $1+\left\vert a_M^\top u - b_M \right\vert \leq 1+\Vert a_M \Vert \Vert u \Vert + \vert b_M \vert \leq (1+\Vert a_M \Vert + \vert b_M \vert) (1 + \Vert u \Vert)$ for any $u \in \mathbb{R}^m$ and \eqref{EqLemmaSepProof1} imply that
	\begin{equation}
		\label{EqLemmaSepProof2}
		\begin{aligned}
			& \max_{\alpha \in \mathbb{N}^m_{0,k}} \sup_{u \in \mathbb{R}^m \setminus \overline{B_r(0)}} \frac{\left\Vert y_M \rho^{(\vert \alpha \vert)}\left( a_M^\top u - b_M \right) a_M^\alpha \right\Vert}{(1+\Vert u \Vert)^\gamma} \\
			& \quad\quad \leq \left( \max_{\alpha \in \mathbb{N}^m_{0,k}} \left\Vert y_M a_M^\alpha \right\Vert \right) \max_{\alpha \in \mathbb{N}^m_{0,k}} \sup_{u \in \mathbb{R}^m \setminus \overline{B_r(0)}} \frac{\left\vert \rho^{(\vert \alpha \vert)}\left( a_M^\top u - b_M \right) \right\vert}{(1+\Vert u \Vert)^\gamma} \\
			& \quad\quad \leq C_{y,a} \max_{\alpha \in \mathbb{N}^m_{0,k}} \sup_{u \in \mathbb{R}^m \setminus \overline{B_r(0)}} \frac{\left\vert \rho^{(\vert \alpha \vert)}\left( a_M^\top u - b_M \right) - \widetilde{\rho}^{(\vert \alpha \vert)}\left( a_M^\top u - b_M \right) \right\vert}{(1+\Vert u \Vert)^\gamma} \\
			& \quad\quad\quad\quad + C_{y,a} \max_{\alpha \in \mathbb{N}^m_{0,k}} \sup_{u \in \mathbb{R}^m \setminus \overline{B_r(0)}} \frac{\left\vert \widetilde{\rho}^{(\vert \alpha \vert)}\left( a_M^\top u - b_M \right) \right\vert}{(1+\Vert u \Vert)^\gamma} \\
			& \quad\quad \leq C_{y,a} (1+\Vert a_M \Vert+\Vert b_M \Vert)^\gamma \max_{\alpha \in \mathbb{N}^m_{0,k}} \sup_{u \in \mathbb{R}^m} \frac{\left\vert \rho^{(\vert \alpha \vert)}\left( a_M^\top u - b_M \right) - \widetilde{\rho}^{(\vert \alpha \vert)}\left( a_M^\top u - b_M \right) \right\vert}{\left( 1 + \left\vert a_M^\top u - b_M \right\vert \right)^\gamma} \\
			& \quad\quad\quad\quad + C_{y,a} \frac{\Vert \widetilde{\rho} \Vert_{C^k_b(\mathbb{R})}}{(1+r)^\gamma} \\
			& \quad\quad \leq C_{y,a} C_{a,b} \max_{j=0,...,k} \sup_{s \in \mathbb{R}} \frac{\left\vert \rho^{(j)}(s) - \widetilde{\rho}^{(j)}(s) \right\vert}{\left( 1 + \left\vert s \right\vert \right)^\gamma} + C_{y,a} \frac{\varepsilon}{6 C_{y,a} C_{a,b}} \\
			& \quad\quad < C_{y,a} C_{a,b} \frac{\varepsilon}{6 C_{y,a} C_{a,b}} + \frac{\varepsilon}{6} = \frac{\varepsilon}{3}.
		\end{aligned}
	\end{equation}
	Analogously, we conclude that
	\begin{equation}
		\label{EqLemmaSepProof3}
		\max_{\alpha \in \mathbb{N}^m_{0,k}} \sup_{u \in \mathbb{R}^m \setminus \overline{B_r(0)}} \frac{\left\Vert y \rho^{(\vert \alpha \vert)}\left( a^\top u - b \right) a^\alpha \right\Vert}{(1+\Vert u \Vert)^\gamma} < \frac{\varepsilon}{3}.
	\end{equation}
	Moreover, we define the compact subset $K := \big\lbrace x^\top u - y: u \in \overline{B_r(0)}, \, \Vert x \Vert + \Vert y \Vert \leq C_{a,b} \big\rbrace \subseteq \mathbb{R}$. Then, by using that $\rho,\rho',...,\rho^{(k)} \in \overline{C^k_b(\mathbb{R})}^\gamma$ are continuous, thus uniformly continuous on $K$, there exists some $\delta > 0$ such that for every $j = 0,...,k$ and $s_1,s_2 \in K$ with $\vert s_1 - s_2 \vert < \delta$ it holds that
	\begin{equation}
		\label{EqLemmaSepProof4}
		\left\vert \rho^{(j)}(s_1) - \rho^{(j)}(s_2) \right\vert < \frac{\varepsilon}{6 C_{y,a}}.
	\end{equation}
	Now, we define the constant $C_{r,\rho} := 1 + \max_{j=0,...,k} \sup_{u \in \overline{B_r(0)}} \left\vert \rho^{(j)}\left( a^\top u - b \right) \right\vert > 0$. Moreover, we choose some $M_2 \in \mathbb{N}$ such that for every $M \in \mathbb{N} \cap [M_2,\infty)$ it holds that $\Vert (a-a_M,b-b_M) \Vert < \delta/(1+r)$ and that
	\begin{equation}
		\label{EqLemmaSepProof5}
		\max_{\alpha \in \mathbb{N}^m_{0,k}} \left\Vert y a^\alpha - y_M a_M^\alpha \right\Vert < \frac{\varepsilon}{6 C_{r,\rho}}.
	\end{equation}
	Then, we conclude for every $M \in \mathbb{N} \cap [M_2,\infty)$ that
	\begin{equation}
		\label{EqLemmaSepProof6}
		\begin{aligned}
			\left\vert \left( a^\top u - b \right) - \left( a_M^\top u - b_M \right) \right\vert & \leq \left\vert (a-a_M)^\top u - (b - b_M) \right\vert \\
			& \leq \Vert a-a_M \Vert \Vert u \Vert + \vert b-b_M \vert \\
			& \leq \left( \Vert a-a_M \Vert + \vert b-b_M \vert \right) (1+r) \\
			& \leq \Vert (a-a_M,b-b_M) \Vert (1+r) < \delta.
		\end{aligned}
	\end{equation}
	Hence, by using \eqref{EqLemmaSepProof5} and by combining \eqref{EqLemmaSepProof4} with \eqref{EqLemmaSepProof6}, it follows for every $M \in \mathbb{N} \cap [M_2,\infty)$ that
	\begin{equation}
		\label{EqLemmaSepProof7}
		\begin{aligned}
			& \max_{\alpha \in \mathbb{N}^m_{0,k}} \sup_{u \in \overline{B_r(0)}} \left\Vert y \rho^{(\vert \alpha \vert)}\left( a^\top u - b \right) a^\alpha - y_M \rho^{(\vert \alpha \vert)}\left( a_M^\top u - b_M \right) a_M^\alpha \right\Vert \\
			& \quad\quad \leq \max_{\alpha \in \mathbb{N}^m_{0,k}} \sup_{u \in \overline{B_r(0)}} \left\Vert y \rho^{(\vert \alpha \vert)}\left( a^\top u - b \right) a^\alpha - y_M \rho^{(\vert \alpha \vert)}\left( a^\top u - b \right) a_M^\alpha \right\Vert \\
			& \quad\quad\quad\quad + \max_{\alpha \in \mathbb{N}^m_{0,k}} \sup_{u \in \overline{B_r(0)}} \left\Vert y_M \rho^{(\vert \alpha \vert)}\left( a^\top u - b \right) a_M^\alpha - y_M \rho^{(\vert \alpha \vert)}\left( a_M^\top u - b_M \right) a_M^\alpha \right\Vert \\
			& \quad\quad \leq \max_{\alpha \in \mathbb{N}^m_{0,k}} \left\Vert y a^\alpha - y_M a_M^\alpha \right\Vert \max_{j=0,...,k} \sup_{u \in \overline{B_r(0)}} \left\vert \rho^{(j)}\left( a^\top u - b \right) \right\vert \\
			& \quad\quad\quad\quad + \max_{\alpha \in \mathbb{N}^m_{0,k}} \left\Vert y_M a_M^\alpha \right\Vert \max_{j=0,...,k} \sup_{u \in \overline{B_r(0)}} \left\vert \rho^{(j)}\left( a_M^\top u - b_M \right) - \rho^{(j)}\left( a^\top u - b \right) \right\vert \\
			& \quad\quad \leq \frac{\varepsilon}{6 C_{r,\rho}} C_{r,\rho} + C_{y,a} \frac{\varepsilon}{6 C_{y,a}} = \frac{\varepsilon}{3}.
		\end{aligned}
	\end{equation}
	Thus, by using the inequalities \eqref{EqLemmaSepProof2}+\eqref{EqLemmaSepProof3}+\eqref{EqLemmaSepProof7} and that $(1+\Vert u \Vert)^\gamma \geq 1$ for any $u \in U$, we have
	\begin{equation*}
		\begin{aligned}
			& \left\Vert y \rho\left( a^\top \cdot - b \right) - y_M \rho\left( a_M^\top \cdot - b_M \right) \right\Vert_{C^k_{pol,\gamma}(\mathbb{R}^m;\mathbb{R}^d)} \\
			& \quad\quad = \max_{\alpha \in \mathbb{N}^m_{0,k}} \sup_{u \in \mathbb{R}^m} \frac{\left\Vert y \rho^{(\vert \alpha \vert)}\left( a^\top u - b \right) a^\alpha - y_M \rho^{(\vert \alpha \vert)}\left( a_M^\top u - b_M \right) a_M^\alpha \right\Vert}{(1+\Vert u \Vert)^\gamma} \\
			& \quad\quad \leq \max_{\alpha \in \mathbb{N}^m_{0,k}} \sup_{u \in \overline{B_r(0)}} \left\Vert y \rho^{(\vert \alpha \vert)}\left( a^\top u - b \right) a^\alpha - y_M \rho^{(\vert \alpha \vert)}\left( a_M^\top u - b_M \right) a_M^\alpha \right\Vert \\
			& \quad\quad\quad\quad + \max_{\alpha \in \mathbb{N}^m_{0,k}} \sup_{u \in \mathbb{R}^m \setminus \overline{B_r(0)}} \frac{\left\Vert y \rho^{(\vert \alpha \vert)}\left( a^\top u - b \right) a^\alpha \right\Vert}{(1+\Vert u \Vert)^\gamma} \\
			& \quad\quad\quad\quad + \max_{\alpha \in \mathbb{N}^m_{0,k}} \sup_{u \in \mathbb{R}^m \setminus \overline{B_r(0)}} \frac{\left\Vert y_M \rho^{(\vert \alpha \vert)}\left( a_M^\top u - b_M \right) a_M^\alpha \right\Vert}{(1+\Vert u \Vert)^\gamma} \\
			& \quad\quad < \frac{\varepsilon}{3} + \frac{\varepsilon}{3} + \frac{\varepsilon}{3} = \varepsilon.
		\end{aligned}
	\end{equation*}
	Since $\varepsilon > 0$ was chosen arbitrarily, this shows that $\mathbb{R}^d \times \mathbb{R}^m \times \mathbb{R} \ni (y,a,b) \mapsto y \rho\big( a^\top \cdot - b \big) \in \overline{C^k_b(U;\mathbb{R}^d)}^\gamma$ is continuous. Hence, by using that $(\overline{C^k_b(\mathbb{R}^m;\mathbb{R}^d)}^\gamma,\Vert \cdot \Vert_{C^k_{pol,\gamma}(\mathbb{R}^m;\mathbb{R}^d)}) \ni f \mapsto f\vert_U \in (X,\Vert \cdot \Vert_X)$ is by Assumption~\ref{AssEmb} continuous, we obtain the conclusion in Part~\ref{LemmaSep2}.
	
	For Part~\ref{LemmaSep3}., we define the subsets
	\begin{equation*}
		\mathcal{N}^{\cos}_{U,d}[\mathbb{A}] := \left\lbrace U \ni u \mapsto \sum_{n=1}^N y_n \cos\left( a_n^\top \cdot - b_n \right) \in \mathbb{R}^d:
		\begin{matrix}
			N \in \mathbb{N}, \, y_1,...,y_N \in \mathbb{A}^d, \\
			a_1,...,a_N \in \mathbb{A}^m, \, b_1,...,b_N \in \mathbb{A}
		\end{matrix}
		\right\rbrace \subseteq X,
	\end{equation*}
	for $\mathbb{A} \in \lbrace \mathbb{Q}, \mathbb{R} \rbrace$. Then, by using that the map $\mathbb{R}^d \times \mathbb{R}^m \times \mathbb{R} \ni (y,a,b) \mapsto y \rho\big( a^\top \cdot - b \big) \in X$ is continuous (see Part~2.), we conclude that $\mathcal{N}^{\cos}_{U,d}[\mathbb{R}]$ is contained in the closure of $\mathcal{N}^{\cos}_{U,d}[\mathbb{Q}]$ with respect to $\Vert \cdot \Vert_X$. Moreover, by using that $\cos \in \overline{C^k_b(\mathbb{R})}^\gamma$ is non-polynomial, we can apply \cite[Theorem~2.8]{neufeld24} to conclude that $\mathcal{N}^{\cos}_{U,d}[\mathbb{R}]$ is dense in $X$. Hence, by combining these two arguments, we obtain that $\mathcal{N}^{\cos}_{U,d}[\mathbb{Q}]$ is also dense in $X$. Since $\mathcal{N}^{\cos}_{U,d}[\mathbb{Q}]$ is countable, this shows that $(X,\Vert \cdot \Vert_X)$ is separable.
\end{proof}

\begin{proof}[Proof of Corollary~\ref{CorUATRN}]
	We aim to apply Theorem~\ref{ThmUAT} with Banach space $(X,\Vert \cdot \Vert_X)$ satisfying Assumption~\ref{AssEmb}. To this end, we first observe that $(X,\Vert \cdot \Vert_X)$ is by Lemma~\ref{LemmaSep}.\ref{LemmaSep3} separable. Moreover, we choose $\Theta := \mathbb{R}^m \times \mathbb{R}$ and let $(\theta_n)_{n \in \mathbb{N}} := (a_n,b_n)_{n \in \mathbb{N}}: \Omega \rightarrow \mathbb{R}^m \times \mathbb{R}$ be an i.i.d.~sequence satisfying Assumption~\ref{AssCDFRN}, which implies Assumption~\ref{AssCDF}. In addition, we define
	\begin{equation*}
		\mathcal{G} := \left\lbrace \mathbb{R}^m \times \mathbb{R} \ni (\vartheta_1,\vartheta_2) \mapsto e_i \rho\left( \vartheta_1^\top \cdot - \vartheta_2 \right) \in X: i = 1,...,d \right\rbrace.
	\end{equation*}
	Since $\mathbb{R}^m \times \mathbb{R} \ni (\vartheta_1,\vartheta_2) \mapsto e_i \rho\big( \vartheta_1^\top \cdot - \vartheta_2 \big) \in X$ is by Lemma~\ref{LemmaSep}.\ref{LemmaSep2} continuous, we have $\mathcal{G} \subseteq C^0(\Theta;X)$. Moreover, we observe that
	\begin{equation*}
		\linspan_\mathbb{R}(\mathcal{G}(\Theta)) = \linspan_\mathbb{R}\left( \left\lbrace U \ni u \mapsto e_i \rho\left( \vartheta_1^\top u - \vartheta_2 \right) \in \mathbb{R}^d: (\vartheta_1,\vartheta_2) \in \mathbb{R}^m \times \mathbb{R}, \, i = 1,...,d \right\rbrace \right),
	\end{equation*}
	forms the set of deterministic (i.e.~fully trained) neural networks with activation function $\rho \in \overline{C^k_b(\mathbb{R})}^\gamma$, where $e_i \in \mathbb{R}^d$ denotes the $i$-th unit vector of $\mathbb{R}^d$. Since $\rho \in \overline{C^k_b(\mathbb{R})}^\gamma$ is non-polynomial, we can apply \cite[Theorem~2.8]{neufeld24} to conclude that $\linspan_\mathbb{R}(\mathcal{G}(\Theta))$ is dense in $X$. Hence, the conclusion follows from Theorem~\ref{ThmUAT}.
\end{proof}

\section{Proof of results in Section~\ref{SecAR}}

\subsection{Proof of Theorem~\ref{ThmAR}}
\label{SecProofsAR}

For the proof of Theorem~\ref{ThmAR}, we first show the following auxiliary lemma about Banach space types.

\begin{lemma}
	\label{LemmaBanachSpaceTypeSmaller}
	Let $(X,\Vert \cdot \Vert_X)$ be a Banach space of type $t \in [1,2]$ with constant $C_X > 0$, and let $t' \in [1,t]$. Then, $(X,\Vert \cdot \Vert_X)$ is a Banach space of type $t'$ with constant $C_X > 0$.
\end{lemma}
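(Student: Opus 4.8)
The plan is to deduce the type-$t'$ inequality directly from the type-$t$ inequality by a pointwise comparison of $\ell^{t}$- and $\ell^{t'}$-norms on $\mathbb{R}^N$, together with Jensen's (or H\"older's) inequality applied to the Rademacher expectation. Fix $N \in \mathbb{N}$, vectors $x_1,\dots,x_N \in X$, and a Rademacher sequence $(\epsilon_n)_{n=1,\dots,N}$ on some probability space $(\widetilde{\Omega},\widetilde{\mathcal{F}},\widetilde{\mathbb{P}})$. First I would observe that since $t' \le t$, Jensen's inequality applied to the convex function $s \mapsto s^{t/t'}$ on $[0,\infty)$ with respect to the probability measure $\widetilde{\mathbb{P}}$ gives
\begin{equation*}
	\widetilde{\mathbb{E}}\left[ \left\Vert \sum_{n=1}^N \epsilon_n x_n \right\Vert_X^{t'} \right]^{\frac{1}{t'}} \leq \widetilde{\mathbb{E}}\left[ \left\Vert \sum_{n=1}^N \epsilon_n x_n \right\Vert_X^{t} \right]^{\frac{1}{t}}.
\end{equation*}
This reduces the task to bounding the right-hand side, which is exactly where the type-$t$ hypothesis applies: it is at most $C_X \left( \sum_{n=1}^N \Vert x_n \Vert_X^{t} \right)^{1/t}$.

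The remaining step is the elementary inequality $\left( \sum_{n=1}^N \Vert x_n \Vert_X^{t} \right)^{1/t} \leq \left( \sum_{n=1}^N \Vert x_n \Vert_X^{t'} \right)^{1/t'}$, valid because $t' \le t$ means the $\ell^{t}$-norm on $\mathbb{R}^N$ is dominated by the $\ell^{t'}$-norm (the $\ell^p$-norms are non-increasing in $p$). Chaining the three displayed estimates yields
\begin{equation*}
	\widetilde{\mathbb{E}}\left[ \left\Vert \sum_{n=1}^N \epsilon_n x_n \right\Vert_X^{t'} \right]^{\frac{1}{t'}} \leq C_X \left( \sum_{n=1}^N \Vert x_n \Vert_X^{t'} \right)^{\frac{1}{t'}},
\end{equation*}
which is precisely the type-$t'$ inequality with the \emph{same} constant $C_X$. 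Since $N$, the $x_n$, and the Rademacher sequence were arbitrary, this proves the claim.

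There is no real obstacle here; the lemma is a soft monotonicity statement. The only points requiring a word of care are that both inequalities used ($\ell^{t} \hookrightarrow \ell^{t'}$ for the norms of the scalars, and Jensen in the $L^{t'}(\widetilde{\mathbb{P}}) \hookrightarrow L^{t}(\widetilde{\mathbb{P}})$ direction) go the ``correct'' way precisely because $t' \le t$ and $\widetilde{\mathbb{P}}$ is a probability measure, so no extra dimensional factors appear and the constant is genuinely preserved. I would present it in essentially the three-line form above.
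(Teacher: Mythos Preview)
Your proposal is correct and follows essentially the same three-step approach as the paper's proof: Jensen's inequality to pass from the $L^{t'}$- to the $L^t$-moment of the Rademacher sum, the type-$t$ hypothesis, and then the monotonicity of $\ell^p$-norms to convert the $\ell^t$-sum back into an $\ell^{t'}$-sum, all with the same constant $C_X$.
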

\begin{proof}
	Fix some $N \in \mathbb{N}$, $(x_n)_{n=1,...,N} \subseteq X$, and a Rademacher sequence $(\epsilon_n)_{n=1,...,N}$ defined on a (possibly different) probability space $(\widetilde{\Omega},\widetilde{\mathcal{F}},\widetilde{\mathbb{P}})$. Then, by using Jensen's inequality and the inequality $\big( \sum_{n=1}^N x_n \big)^{t'/t} \leq \sum_{n=1}^N x_n^{t'/t}$ for any $x_1,...,x_N \geq 0$, it follows that
	\begin{equation*}
		\widetilde{\mathbb{E}}\left[ \left\Vert \sum_{n=1}^N \epsilon_n x_n \right\Vert_X^{t'} \right]^\frac{1}{t'} \leq \widetilde{\mathbb{E}}\left[ \left\Vert \sum_{n=1}^N \epsilon_n x_n \right\Vert_X^t \right]^\frac{1}{t} \leq C_X \left( \sum_{n=1}^N \Vert x_n \Vert_X^t \right)^\frac{1}{t} \leq C_X \left( \sum_{n=1}^N \Vert x_n \Vert_X^{t'} \right)^\frac{1}{t'}.
	\end{equation*}
	This shows that $(X,\Vert \cdot \Vert_X)$ is a Banach space of type $t' \in [1,t]$ with constant $C_X > 0$. 
\end{proof}

\begin{proof}[Proof of Theorem~\ref{ThmAR}]
	Fix some $x \in \mathbb{B}^r_{\mathcal{G},\theta}(X)$ and $N \in \mathbb{N}$. Then, by definition of $\mathbb{B}^r_{\mathcal{G},\theta}(X)$, there exists a $\mathcal{B}(\Theta)/\mathcal{B}(\mathbb{R}^e)$-measurable map $y := (y_1,...,y_e)^\top: \Theta \rightarrow \mathbb{R}^e$ such that $x = \mathbb{E}\left[ \sum_{i=1}^e y_i(\theta_1) g_i(\theta_1) \right] \in X$ and
	\begin{equation}
		\label{EqThmARProof1}
		\mathbb{E}\left[ \left\Vert \sum_{i=1}^e y_i(\theta_1) g_i(\theta_1) \right\Vert_X^r \right]^\frac{1}{r} \leq 2 \Vert x \Vert_{\mathbb{B}^r_{\mathcal{G},\theta}(X)} < \infty.
	\end{equation}
	From this, we define for every fixed $n = 1,...,N$, the map
	\begin{equation*}
		\Omega \ni \omega \quad \mapsto \quad G_n(\omega) := \sum_{i=1}^e y_i(\theta_n(\omega)) g_i(\theta_n(\omega)) \in X.
	\end{equation*}
	Then, by using that $\theta_n: \Omega \rightarrow \Theta$ is by definition $\mathcal{F}_\theta/\mathcal{B}(\Theta)$-measurable and that $y := (y_1,...,y_e)^\top: \Theta \rightarrow \mathbb{R}^e$ is by definition $\mathcal{B}(\Theta)/\mathcal{B}(\mathbb{R}^e)$-measurable, the concatenation $\Omega \ni \omega \mapsto G_n := \sum_{l=1}^e y_l(\theta_n(\omega)) g_l(\theta_n(\omega)) \in X$ is $\mathcal{F}_\theta/\mathcal{B}(X)$-measurable. Hence, by using that $(X,\Vert \cdot \Vert_X)$ is separable, we can apply \cite[Theorem 1.1.6+1.1.20]{hytoenen16} to conclude that $G_n: \Omega \rightarrow X$ is strongly $(\mathbb{P},\mathcal{F}_\theta)$-measurable. Thus, \eqref{EqThmARProof1} and $\theta_n \sim \theta_1$ ensure that $G_n \in L^r(\Omega,\mathcal{F}_\theta,\mathbb{P};X)$.
	
	Now, by using that $x = \mathbb{E}\left[ \sum_{i=1}^e y_i(\theta_1) g_i(\theta_1) \right] = \mathbb{E}\left[ \sum_{i=1}^e y_i(\theta_n) g_i(\theta_n) \right] = \mathbb{E}[G_n] \in X$ for any $n = 1,...,N$, the right-hand side of \cite[Lemma~6.3]{ledoux91} for the independent mean-zero random variables $(\mathbb{E}[G_n] - G_n)_{n=1,...,N}$ with Rademacher sequence $(\epsilon_n)_{n=1,...,N}$ on $(\Omega,\mathcal{F},\mathbb{P})$ independent of $(\mathbb{E}[G_n] - G_n)_{n=1,...,N}$, the Kahane-Khintchine inequality in \cite[Theorem~3.2.23]{hytoenen16} with constant $\kappa_{r,\min(r,t)} > 0$ (depending only on $r \in [1,\infty)$ and $\min(r,t) \in [1,2]$), that $(X,\Vert \cdot \Vert_X)$ is by assumption a Banach space of type $t \in [1,2]$ (with constant $C_X > 0$), thus by Lemma~\ref{LemmaBanachSpaceTypeSmaller} of type $\min(r,t) \in (1,t]$ (with the same constant $C_X > 0$), and that $(\mathbb{E}[G_n] - G_n)_{n=1,...,N} \sim \mathbb{E}[G_1] - G_1$ are identically distributed, it follows for the random feature model $G_N := \frac{1}{N} \sum_{n=1}^N G_n \in \mathcal{RG} \cap L^r(\Omega,\mathcal{F}_\theta,\mathbb{P};X)$ that
	\begin{equation*}
		\begin{aligned}
			\mathbb{E}\left[ \left\Vert x - G_N \right\Vert_X^r \right]^\frac{1}{r} & = \frac{1}{N} \mathbb{E}\left[ \left\Vert \sum_{n=1}^N \left( \mathbb{E}[G_n] - G_n \right) \right\Vert_X^r \right]^\frac{1}{r} \\
			& \leq \frac{2}{N} \mathbb{E}\left[ \left\Vert \sum_{n=1}^N \epsilon_n \left( \mathbb{E}[G_n] - G_n \right) \right\Vert_X^r \right]^\frac{1}{r} \\
			& \leq \frac{2 \kappa_{r,\min(r,t)}}{N} \mathbb{E}\left[ \left\Vert \sum_{n=1}^N \epsilon_n \left( \mathbb{E}[G_n] - G_n \right) \right\Vert_X^{\min(r,t)} \right]^\frac{1}{\min(r,t)} \\
			& \leq \frac{2 C_X \kappa_{r,\min(r,t)}}{N} \left( \sum_{n=1}^N \mathbb{E}\left[ \left\Vert \mathbb{E}[G_n] - G_n \right\Vert_X^{\min(r,t)} \right] \right)^\frac{1}{\min(r,t)} \\
			& = \frac{2 C_X \kappa_{r,\min(r,t)}}{N^{1-\frac{1}{\min(r,t)}}} \mathbb{E}\left[ \left\Vert \mathbb{E}[G_1] - G_1 \right\Vert_X^{\min(r,t)} \right]^\frac{1}{\min(r,t)}. \\
		\end{aligned}
	\end{equation*}
	Hence, by using Jensen's inequality, Minkowski's inequality, \cite[Proposition~1.2.2]{hytoenen16}, the inequality \eqref{EqThmARProof1}, and the constant $C_{r,t} := 8 \kappa_{r,\min(r,t)} > 0$ (depending only on $r \in [1,\infty)$ and $t \in [1,2]$), we conclude for $G_N := \frac{1}{N} \sum_{n=1}^N G_n \in \mathcal{RG} \cap L^r(\Omega,\mathcal{F}_\theta,\mathbb{P};X)$ that
	\begin{equation*}
		\begin{aligned}
			\mathbb{E}\left[ \left\Vert x - G_N \right\Vert_X^r \right]^\frac{1}{r} & \leq \frac{2 C_X \kappa_{r,\min(r,t)}}{N^{1-\frac{1}{\min(r,t)}}} \mathbb{E}\left[ \left\Vert \mathbb{E}[G_1] - G_1 \right\Vert_X^r \right]^\frac{1}{r} \\
			& \leq \frac{2 C_X \kappa_{r,\min(r,t)}}{N^{1-\frac{1}{\min(r,t)}}} \left( \left\Vert \mathbb{E}[G_1] \right\Vert_X + \mathbb{E}\left[ \Vert G_1 \Vert_X^r \right]^\frac{1}{r} \right) \\
			& \leq \frac{4 C_X \kappa_{r,\min(r,t)}}{N^{1-\frac{1}{\min(r,t)}}} \left\Vert G_1 \right\Vert_{L^r(\Omega,\mathcal{F},\mathbb{P};X)} \\
			& \leq C_{r,t} C_X \frac{\Vert x \Vert_{\mathbb{B}^r_{\mathcal{G},\theta}(X)}}{N^{1-\frac{1}{\min(r,t)}}},
		\end{aligned}
	\end{equation*}
	which completes the proof.
\end{proof}

\subsection{Proof of Corollary~\ref{CorARTrigo}+\ref{CorARFourier}+\ref{CorARRN} and Proposition~\ref{PropConst}}
\label{SecProofsARCor}

\begin{proof}[Proof of Corollary~\ref{CorARTrigo}]
	We aim to apply Theorem~\ref{ThmAR} onto a fixed function $f \in W^{k,p}(U,\mathcal{L}(U),w) \cap L^1(\mathbb{R}^m,\mathcal{L}(\mathbb{R}^m),du)$ with $C_f := \big( \int_{\mathbb{R}^m} \frac{\vert \widehat{f}(\vartheta) \vert^r \left( 1+\Vert \vartheta \Vert^2 \right)^{kr/2}}{p_\theta(\vartheta)^{r-1}} d\vartheta \big)^{1/r} < \infty$. To this end, we first observe that $(W^{k,p}(U,\mathcal{L}(U),w;\mathbb{R}^d),\Vert \cdot \Vert_{W^{k,p}(U,\mathcal{L}(U),w;\mathbb{R}^d)})$ is a separable Banach space (see \cite[Lemma~4.7]{neufeld24}). Moreover, we define the linear readouts
	\begin{equation}
		\label{EqCorARTrigoProof1}
		\begin{aligned}
			\mathbb{R}^m \ni \vartheta \quad \mapsto \quad y_1(\vartheta) & := \frac{\re\big( \widehat{f}(\vartheta) \big)}{(2\pi)^m p_\theta(\vartheta)} \in \mathbb{R}, \quad\quad \text{and} \\
			\mathbb{R}^m \ni \vartheta \quad \mapsto \quad y_2(\vartheta) & := -\frac{\im\big( \widehat{f}(\vartheta) \big)}{(2\pi)^m p_\theta(\vartheta)} \in \mathbb{R},
		\end{aligned}
	\end{equation}
	which are $\mathcal{B}(\mathbb{R}^m)/\mathcal{B}(\mathbb{R})$-measurable as composition of the continuous function $\mathbb{R}^m \ni \vartheta \mapsto \widehat{f}(\vartheta) \in \mathbb{C}$ (see \cite[p.~214]{folland92}) and the $\mathcal{B}(\mathbb{C})/\mathcal{B}(\mathbb{R})$-measurable functions returning the real and imaginary part. Then, by using Jensen's inequality, it follows that
	\begin{equation*}
		\begin{aligned}
			\int_{\mathbb{R}^m} \vert \widehat{f}(\vartheta) \vert d\vartheta & \leq \int_{\mathbb{R}^m} \frac{\vert \widehat{f}(\vartheta) \vert}{p_\theta(\vartheta)} p_\theta(\vartheta) d\vartheta \leq \left( \int_{\mathbb{R}^m} \frac{\vert \widehat{f}(\vartheta) \vert^r}{p_\theta(\vartheta)^r} p_\theta(\vartheta) d\vartheta \right)^\frac{1}{r} \\
			& \leq \left( \int_{\mathbb{R}^m} \frac{\vert \widehat{f}(\vartheta) \vert^r}{p_\theta(\vartheta)^{r-1}} \left( 1+\Vert \vartheta \Vert^2 \right)^\frac{kr}{2} d\vartheta \right)^\frac{1}{r} = C_f < \infty,
		\end{aligned}
	\end{equation*}
	which shows that $\widehat{f} \in L^1(\mathbb{R}^m,\mathcal{L}(\mathbb{R}^m),du;\mathbb{C})$. Hence, we can apply the Fourier inversion theorem (see \cite[Equation~7.14]{folland92}) and use that the left-hand side is real-valued to conclude for a.e.~$u \in \mathbb{R}^m$ that
	\begin{equation}
		\label{EqCorARTrigoProof2}
		\begin{aligned}
			f(u) & = \frac{1}{(2\pi)^m} \int_{\mathbb{R}^m} \widehat{f}(\vartheta) e^{\mathbf{i} \vartheta^\top u} d\vartheta = \int_{\mathbb{R}^m} \frac{\widehat{f}(\vartheta)}{(2\pi)^m p_\theta(\vartheta)} e^{\mathbf{i} \vartheta^\top u} p_\theta(\vartheta) d\vartheta \\
			& = \mathbb{E}\left[ \frac{\widehat{f}(\theta_1)}{(2\pi)^m p_\theta(\theta_1)} \cos\left( \theta_1^\top u \right) + \frac{\widehat{f}(\theta_1)}{(2\pi)^m p_\theta(\theta_1)} \mathbf{i} \sin\left( \theta_1^\top u \right) \right] \\
			& = \mathbb{E}\left[ \frac{\re\big( \widehat{f}(\theta_1) \big)}{(2\pi)^m p_\theta(\theta_1)} \cos\left( \theta_1^\top u \right) - \frac{\im\big( \widehat{f}(\theta_1) \big)}{(2\pi)^m p_\theta(\theta_1)} \sin\left( \theta_1^\top u \right) \right] \\
			& = \mathbb{E}\left[ y_1(\theta_1) \cos\left( \theta_1^\top u \right) + y_2(\theta_1) \sin\left( \theta_1^\top u \right) \right].
		\end{aligned}
	\end{equation}
	Thus, by using integration by parts, it holds that
	\begin{equation}
		\label{EqCorARTrigoProof2b}
		\begin{aligned}
			\int_U \mathbb{E}\left[ y_1(\theta_1) \cos\left( \theta_1^\top u \right) + y_2(\theta_1) \sin\left( \theta_1^\top u \right) \right] \partial_\alpha h(u) du & = \int_U x(u) \partial_\alpha h(u) du \\
			& = (-1)^{\vert\alpha\vert} \int_U \partial_\alpha f(u) h(u) du,
		\end{aligned}
	\end{equation}
	which shows that the weak derivatives of $\mathbb{E}\left[ y_1(\theta_1) \cos\left( \theta_1^\top \cdot \right) + y_2(\theta_1) \sin\left( \theta_1^\top \cdot \right) \right]$ and $f$ coincide, and thus implies that $\mathbb{E}\left[ y_1(\theta_1) \cos\left( \theta_1^\top \cdot \right) + y_2(\theta_1) \sin\left( \theta_1^\top \cdot \right) \right] = f \in W^{k,p}(U,\mathcal{L}(U),w)$. In addition, by using that $\big\vert \vartheta^\alpha \big\vert := \big\vert \prod_{l=1}^m \vartheta_l^{\alpha_l} \big\vert = \prod_{l=1}^m \vert \vartheta_l \vert^{\alpha_l} \leq \prod_{l=1}^m \left( 1+\Vert \vartheta \Vert^2 \right)^{\alpha_l/2} = \left( 1+\Vert \vartheta \Vert^2 \right)^{k/2}$ for any $\alpha := (\alpha_1,...,\alpha_m) \in \mathbb{N}^m_{0,k}$ and $\vartheta := (\vartheta_1,...,\vartheta_m) \in \mathbb{R}^m$, that $\big\vert \mathbb{N}^m_{0,k} \big\vert = \sum_{j=0}^k m^j \leq 2m^k$, and that $w(U) := \int_U w(u) du < \infty$, we conclude for every $\vartheta \in \mathbb{R}^m$ that
	\begin{equation}
		\label{EqCorARTrigoProof3}
		\begin{aligned}
			\left\Vert \cos\left( \vartheta^\top \cdot \right) \right\Vert_{W^{k,p}(U,\mathcal{L}(U),w)} & = \left( \sum_{\alpha \in \mathbb{N}^m_{0,k}} \int_U \left\vert \partial_\alpha \left( \cos\left( \vartheta^\top u \right) \right) \right\vert^p w(u) du \right)^\frac{1}{p} \\
			& = \left( \sum_{\alpha \in \mathbb{N}^m_{0,k}} \int_U \left\vert \cos^{(\vert\alpha\vert)}\left( \vartheta^\top u \right) \vartheta^\alpha \right\vert^p w(u) du \right)^\frac{1}{p} \\
			& \leq \left\vert \mathbb{N}^m_{0,k} \right\vert^\frac{1}{p} \left( 1 + \Vert \vartheta \Vert^2 \right)^\frac{k}{2} \left( \int_U w(u) du \right)^\frac{1}{p} \\
			& \leq 2^\frac{1}{p} m^\frac{k}{p} \left( 1 + \Vert \vartheta \Vert^2 \right)^\frac{k}{2}.
		\end{aligned}
	\end{equation}
	Moreover, by using the same arguments as in \eqref{EqCorARTrigoProof3}, we also obtain that
	\begin{equation}
		\label{EqCorARTrigoProof3b}
		\left\Vert \sin\left( \vartheta^\top \cdot \right) \right\Vert_{W^{k,p}(U,\mathcal{L}(U),w)} \leq 2^\frac{1}{p} m^\frac{k}{p} \left( 1 + \Vert \vartheta \Vert^2 \right)^\frac{k}{2}.
	\end{equation}
	Hence, by using that $\mathbb{E}\left[ y_1(\theta_1) \cos\left( \theta_1^\top \cdot \right) + y_2(\theta_1) \sin\left( \theta_1^\top \cdot \right) \right] = f \in W^{k,p}(U,\mathcal{L}(U),w)$, the inequalities~\eqref{EqCorARTrigoProof3}+\eqref{EqCorARTrigoProof3b}, and that $\vert \re(z) \vert + \vert \im(z) \vert \leq \sqrt{2} \vert z \vert$ for any $z \in \mathbb{C}$, it follows that
	\begin{equation}
		\label{EqCorARTrigoProof4}
		\begin{aligned}
			& \Vert f \Vert_{\mathbb{B}^r_{\mathcal{G},\theta}(W^{k,p}(U,\mathcal{L}(U),w))} \leq \mathbb{E}\left[ \left\Vert y_1(\theta_1) \cos\left( \theta_1^\top \cdot \right) + y_2(\theta_1) \sin\left( \theta_1^\top \cdot \right) \right\Vert_{W^{k,p}(U,\mathcal{L}(U),w)}^r \right]^\frac{1}{r} \\
			& \quad\quad \leq \mathbb{E}\left[ \left( \vert y_1(\theta_1) \vert \left\Vert \cos\left( \theta_1^\top \cdot \right) \right\Vert_{W^{k,p}(U,\mathcal{L}(U),w)} + \vert y_2(\theta_1) \vert \left\Vert \sin\left( \theta_1^\top \cdot \right) \right\Vert_{W^{k,p}(U,\mathcal{L}(U),w)} \right)^r \right]^\frac{1}{r} \\
			& \quad\quad \leq \frac{2^\frac{1}{p} m^\frac{k}{p} w(U)^\frac{1}{p}}{(2\pi)^m} \mathbb{E}\left[ \frac{\Big( \big\vert \re\big( \widehat{f}(\theta_1) \big) \big\vert + \big\vert \im\big( \widehat{f}(\theta_1) \big) \big\vert \Big)^r}{p_\theta(\theta_1)^r} \left( 1 + \Vert \theta_1 \Vert^2 \right)^\frac{kr}{2} \right]^\frac{1}{r} \\
			& \quad\quad \leq \frac{2^{\frac{1}{p}+\frac{1}{2}} m^\frac{k}{p} w(U)^\frac{1}{p}}{(2\pi)^m} \left( \int_{\mathbb{R}^m} \frac{\big\vert \widehat{f}(\vartheta) \big\vert^r}{p_\theta(\vartheta)^r} \left( 1 + \Vert \vartheta \Vert^2 \right)^\frac{kr}{2} p_\theta(\vartheta) d\vartheta \right)^\frac{1}{r} \\
			& \quad\quad = \frac{2^{\frac{1}{p}+\frac{1}{2}} m^\frac{k}{p} w(U)^\frac{1}{p}}{(2\pi)^m} C_f < \infty,
		\end{aligned}
	\end{equation}
	which shows that $f \in \mathbb{B}^r_{\mathcal{G},\theta}(X)$. Thus, by using that $(W^{k,p}(U,\mathcal{L}(U),w),\Vert \cdot \Vert_{W^{k,p}(U,\mathcal{L}(U),w)})$ is a Banach space of type $t = \min(2,p)$ with constant $C_{W^{k,p}(U,\mathcal{L}(U),w)} > 0$ depending only on $p \in (1,\infty)$ (see \cite[Lemma~4.9]{neufeld24}), we can apply Theorem~\ref{ThmAR} (with constant $C_{r,t} > 0$ depending only on $r \in [1,\infty)$ and $t \in [1,2]$), insert the inequality \eqref{EqCorARTrigoProof4}, and define the constant $C_{p,r} := 2^{1/p+1/2} C_{r,t} C_{W^{k,p}(U,\mathcal{L}(U),w)} > 0$ (depending only on $p \in (1,\infty)$ and $r \in [1,\infty)$) to conclude that there exists a random trigonometric feature model $G_N \in \mathcal{RT}_{U,1} \cap L^r(\Omega,\mathcal{F}_\theta,\mathbb{P};W^{k,p}(U,\mathcal{L}(U),w))$ with $N$ features satisfying
	\begin{equation*}
		\begin{aligned}
			\mathbb{E}\left[ \Vert f - G_N \Vert_{W^{k,p}(U,\mathcal{L}(U),w)}^r \right]^\frac{1}{r} & \leq C_{r,t} C_{W^{k,p}(U,\mathcal{L}(U),w)} \frac{\Vert f \Vert_{\mathbb{B}^r_{\mathcal{G},\theta}(W^{k,p}(U,\mathcal{L}(U),w))}}{N^{1-\frac{1}{\min(2,p,r)}}} \\
			& \leq C_{p,r} \frac{m^\frac{k}{p} w(U)^\frac{1}{p}}{(2\pi)^m} \frac{C_f}{N^{1-\frac{1}{\min(2,p,r)}}},
		\end{aligned}
	\end{equation*}
	which completes the proof.
\end{proof}

\begin{proof}[Proof of Corollary~\ref{CorARFourier}]
	We aim to apply Theorem~\ref{ThmAR} onto a fixed function $f \in W^{k,p}(U,\mathcal{L}(U),w;\mathbb{C}) \cap L^1(\mathbb{R}^m,\mathcal{L}(\mathbb{R}^m),du;\mathbb{C})$ with $\big( \int_{\mathbb{R}^m} \frac{\vert \widehat{f}(\vartheta) \vert^r \left( 1+\Vert \vartheta \Vert^2 \right)^{kr/2}}{p_\theta(\vartheta)^{r-1}} d\vartheta \big)^{1/r} < \infty$. To this end, we first observe that $(W^{k,p}(U,\mathcal{L}(U),w;\mathbb{R}^d),\Vert \cdot \Vert_{W^{k,p}(U,\mathcal{L}(U),w;\mathbb{R}^d)})$ is a separable Banach space (see \cite[Lemma~4.7]{neufeld24}). Moreover, we define
	\begin{equation*}
		\mathbb{R}^m \ni \vartheta \quad \mapsto \quad y(\vartheta) := \frac{\widehat{f}(\vartheta)}{(2\pi)^m p_\theta(\vartheta)} \in \mathbb{C},
	\end{equation*}
	and follow the proof of Corollary~\ref{CorARFourier}, where the Fourier inversion theorem is applied to conclude for a.e.~$u \in \mathbb{R}^m$ that
	\begin{equation*}
		f(u) = \frac{1}{(2\pi)^m} \int_{\mathbb{R}^m} \widehat{f}(\vartheta) e^{\mathbf{i} \vartheta^\top u} d\vartheta = \int_{\mathbb{R}^m} \frac{\widehat{f}(\vartheta)}{(2\pi)^m p_\theta(\vartheta)} e^{\mathbf{i} \vartheta^\top u} p_\theta(\vartheta) d\vartheta = \mathbb{E}\left[ y(\theta_1) \exp\left( \mathbf{i} \theta_1^\top u \right) \right].
	\end{equation*}
	Hence, by using the same steps as in the proof of Corollary~\ref{CorARFourier}, we obtain the conclusion from Theorem~\ref{ThmAR}.
\end{proof}

\begin{proof}[Proof of Lemma~\ref{LemmaIneq}]
	Let $\mathcal{G}$ as in Remark~\ref{RemIntRepr}, let $\psi \in \mathcal{S}_0(\mathbb{R};\mathbb{C})$, let $\rho \in C^k_{pol,\gamma}(\mathbb{R})$, and fix some $f \in \widetilde{\mathbb{B}}^{k,r,\gamma}_{\psi,a,b}(U;\mathbb{R}^d)$. Then, there exists by definition of $\widetilde{\mathbb{B}}^{k,r,\gamma}_{\psi,a,b}(U;\mathbb{R}^d)$ some $h \in L^1(\mathbb{R}^m;\mathcal{L}(\mathbb{R}^m),du;\mathbb{R}^d)$ such that $\widehat{h} \in L^1(\mathbb{R}^m;\mathcal{L}(\mathbb{R}^m),du;\mathbb{C}^d)$, $h = f$ a.e.~on $U$, and
	\begin{equation}
		\label{EqLemmaIneqProof1}
		\mathbb{E}\left[ \left\Vert \frac{\left( 1 + \Vert a_1 \Vert^2 \right)^\frac{\gamma+k}{2} \left( 1 + \vert b_1 \vert^2 \right)^\frac{\gamma}{2}}{p_{a,b}(a_1,b_1)} (\mathfrak{R}_\psi h)(a_1,b_1) \right\Vert^r \right]^\frac{1}{r} \leq 2 \Vert f \Vert_{\widetilde{\mathbb{B}}^{k,r,\gamma}_{\psi,a,b}(U;\mathbb{R}^d)} < \infty.
	\end{equation}
	Moreover, we recall that $\mathcal{G}$ consists of the feature maps given by
	\begin{equation}
		\label{EqLemmaIneqProof2}
		\mathbb{R}^m \times \mathbb{R} \ni (a,b) \quad \mapsto \quad g_i(a,b) := e_i \rho\left( a^\top \cdot - b \right) \in W^{k,p}(U,\mathcal{L}(U),w;\mathbb{R}^d), \quad\quad i = 1,...,d,
	\end{equation}
	where $e_i \in \mathbb{R}^d$ denotes the $i$-th unit vector. In addition, we define the linear readout
	\begin{equation}
		\label{EqLemmaIneqProof3}
		\mathbb{R}^m \times \mathbb{R} \ni (a,b) \quad \mapsto \quad y(a,b) := (y_i(a,b))_{i=1,...,d}^\top := \re\left( \frac{(\mathfrak{R}_\psi f)(a,b)}{C^{(\psi,\rho)}_m p_{a,b}(a,b)} \right) \in \mathbb{R}^d.
	\end{equation}
	Then, by using Proposition~\ref{PropIntRepr} together with the fact that $h: \mathbb{R}^m \rightarrow \mathbb{R}^d$ is $\mathbb{R}^d$-valued and that $h = f$ a.e.~on $U$, it follows for a.e.~$u \in U$ that
	\begin{equation*}
		\mathbb{E}\left[ y(a_1,b_1) \rho\left( a_1^\top u - b_1 \right) \right] = h(u) = f(u),
	\end{equation*}
	which implies by following the arguments of \eqref{EqCorARTrigoProof2b} that
	\begin{equation*}
		\mathbb{E}\left[ \sum_{i=1}^d y_i(a_1,b_1) e_i \rho\left( a_1^\top \cdot - b_1 \right) \right] = \mathbb{E}\left[ y(a_1,b_1) \rho\left( a_1^\top \cdot - b_1 \right) \right] = f \in W^{k,p}(U,\mathcal{L}(U),w;\mathbb{R}^d).
	\end{equation*}
	Hence, by using that $\big\vert a^\alpha \big\vert := \big\vert \prod_{l=1}^m a_l^{\alpha_l} \big\vert = \prod_{l=1}^m \vert a_l \vert^{\alpha_l} \leq \big( 1+\Vert a \Vert^2 \big)^{\vert \alpha \vert/2} \leq \big( 1+\Vert a \Vert^2 \big)^{k/2}$ for any $\alpha := (\alpha_1,...,\alpha_m) \in \mathbb{N}^m_{0,k}$ and $a := (a_1,...,a_m) \in \mathbb{R}^m$, the inequality \cite[Equation~42]{neufeld24} (with constant $C^{(\gamma,p)}_{U,w} > 0$ defined in \eqref{EqCorARRN1}), that $\big\vert \mathbb{N}^m_{0,k} \big\vert = \sum_{j=0}^k m^j \leq 2m^k$, and \eqref{EqLemmaIneqProof1}, we have
	\begin{equation}
		\label{EqLemmaIneqProof4}
		\begin{aligned}
			& \Vert f \Vert_{\mathbb{B}^r_{\mathcal{G},\theta}(W^{k,p}(U,\mathcal{L}(U),w;\mathbb{R}^d))} \leq \mathbb{E}\left[ \left\Vert \sum_{i=1}^d y_i(a_1,b_1) e_i \rho\left( a_1^\top u - b_1 \right) \right\Vert_{W^{k,p}(U,\mathcal{L}(U),w;\mathbb{R}^d)}^r \right]^\frac{1}{r} \\
			& \quad\quad = \mathbb{E}\left[ \left( \sum_{\alpha \in \mathbb{N}^m_{0,k}} \int_U \left\vert \partial_\alpha \Big( y(a_1,b_1) \rho\left( a_1^\top u - b_1 \right) \Big) \right\vert^p w(u) du \right)^\frac{r}{p} \right]^\frac{1}{r} \\
			& \quad\quad \leq \mathbb{E}\left[ \left( \sum_{\alpha \in \mathbb{N}^m_{0,k}} \left\Vert \re\left( \frac{a_1^\alpha (\mathfrak{R}_\psi g)(a_1,b_1)}{C^{(\psi,\rho)}_m p_{a,b}(a_1,b_1)} \right) \right\Vert^p \int_U \left\vert \rho^{(\vert\alpha\vert)}\left( a_1^\top u - b_1 \right) \right\vert^p du \right)^\frac{r}{p} \right]^\frac{1}{r} \\
			& \quad\quad \leq 4 \Vert \rho \Vert_{C^k_{pol,\gamma}(\mathbb{R})} \frac{C^{(\gamma,p)}_{U,w} \left\vert \mathbb{N}^m_{0,k} \right\vert^\frac{1}{p}}{\left\vert C^{(\psi,\rho)}_m \right\vert} \mathbb{E}\left[ \left\Vert \frac{\left( 1 + \Vert a_1 \Vert^2 \right)^\frac{\gamma+k}{2} \left( 1 + \vert b_1 \vert^2 \right)^\frac{\gamma}{2}}{p_{a,b}(a_1,b_1)} (\mathfrak{R}_\psi g)(a_1,b_1) \right\Vert^r \right]^\frac{1}{r} \\
			& \quad\quad \leq 2^{3+\frac{1}{p}} \Vert \rho \Vert_{C^k_{pol,\gamma}(\mathbb{R})} \frac{C^{(\gamma,p)}_{U,w} m^\frac{k}{p}}{\left\vert C^{(\psi,\rho)}_m \right\vert} \Vert f \Vert_{\widetilde{\mathbb{B}}^{k,r,\gamma}_{\psi,a,b}(U;\mathbb{R}^d)} < \infty,
		\end{aligned}
	\end{equation}
	which shows that $f \in \mathbb{B}^r_{\mathcal{G},\theta}(W^{k,p}(U,\mathcal{L}(U),w;\mathbb{R}^d))$.
\end{proof}

\begin{proof}[Proof of Corollary~\ref{CorARRN}]
	We aim to apply Theorem~\ref{ThmAR} onto a fixed function $f \in W^{k,p}(U,\mathcal{L}(U),w;\mathbb{R}^d) \cap \widetilde{\mathbb{B}}^{k,r,\gamma}_{\psi,a,b}(U;\mathbb{R}^d)$, where we recall that $(W^{k,p}(U,\mathcal{L}(U),w;\mathbb{R}^d),\Vert \cdot \Vert_{W^{k,p}(U,\mathcal{L}(U),w;\mathbb{R}^d)})$ is separable. To this end, we use that there exists by definition of $\widetilde{\mathbb{B}}^{k,r,\gamma}_{\psi,a,b}(U;\mathbb{R}^d)$ some $h \in L^1(\mathbb{R}^m;\mathcal{L}(\mathbb{R}^m),du;\mathbb{R}^d)$ such that $\widehat{h} \in L^1(\mathbb{R}^m;\mathcal{L}(\mathbb{R}^m),du;\mathbb{C}^d)$, $h = f$ a.e.~on $U$, and \eqref{EqLemmaIneqProof1} holds true. Moreover, we recall that the feature maps are given by \eqref{EqLemmaIneqProof2} and define the linear $y: \mathbb{R}^m \times \mathbb{R} \rightarrow \mathbb{R}^d$ as in \eqref{EqLemmaIneqProof3}. Then, by using that $(W^{k,p}(U,\mathcal{L}(U),w;\mathbb{R}^d),\Vert \cdot \Vert_{W^{k,p}(U,\mathcal{L}(U),w;\mathbb{R}^d)})$ is a Banach space of type $t = \min(2,p)$ with constant $C_{W^{k,p}(U,\mathcal{L}(U),w;\mathbb{R}^d)} > 0$ depending only on $p \in (1,\infty)$ (see \cite[Lemma~4.9]{neufeld24}), we can use Theorem~\ref{ThmAR} (with constant $C_{r,t} > 0$ depending only on $r \in [1,\infty)$ and $t \in [1,2]$), Lemma~\ref{LemmaIneq}, and the constant $C_{p,r} := 2^{3+1/p} C_{r,t} C_{W^{k,p}(U,\mathcal{L}(U),w;\mathbb{R}^d)} > 0$ (depending only on $p \in (1,\infty)$ and $r \in [1,\infty)$) to conclude that there exists a random neural network $G_N \in \mathcal{RN}^\rho_{U,d} \cap L^r(\Omega,\mathcal{F}_{a,b},\mathbb{P};W^{k,p}(U,\mathcal{L}(U),w;\mathbb{R}^d))$ with $N$ neurons satisfying
	\begin{equation*}
		\begin{aligned}
			\mathbb{E}\left[ \Vert f - G_N \Vert_{W^{k,p}(U,\mathcal{L}(U),w;\mathbb{R}^d)}^r \right]^\frac{1}{r} & \leq C_{r,t} C_{W^{k,p}(U,\mathcal{L}(U),w;\mathbb{R}^d)} \frac{\Vert f \Vert_{\mathbb{B}^r_{\mathcal{G},\theta}(W^{k,p}(U,\mathcal{L}(U),w;\mathbb{R}^d))}}{N^{1-\frac{1}{\min(2,p,r)}}} \\
			& \leq C_{p,r} \Vert \rho \Vert_{C^k_{pol,\gamma}(\mathbb{R})} \frac{C^{(\gamma,p)}_{U,w} m^\frac{k}{p}}{\left\vert C^{(\psi,\rho)}_m \right\vert} \frac{\Vert f \Vert_{\widetilde{\mathbb{B}}^{k,r,\gamma}_{\psi,a,b}(U;\mathbb{R}^d)}}{N^{1-\frac{1}{\min(2,p,r)}}},
		\end{aligned}
	\end{equation*}
	which completes the proof.
\end{proof}

\begin{proof}[Proof of Proposition~\ref{PropConst}]
	For \eqref{EqPropConst1}, let $f \in L^1(\mathbb{R}^m,\mathcal{L}(\mathbb{R}^m),du;\mathbb{R}^d)$ with $(\lceil\gamma\rceil+2)$-times differentiable Fourier transform and fix some $c \in \lbrace 0,\lceil\gamma\rceil+2 \rbrace$. Then, by using that $b_1 \sim \mathbf{t}_1$, the inequality \cite[Equation~46]{neufeld24}, and Minkowski's integral inequality (with measure spaces $(\mathbb{R}^m,\mathcal{L}(\mathbb{R}^m),da)$ and $(\mathbb{N}^m_{0,k} \times \mathbb{R},\mathcal{P}(\mathbb{N}^m_{0,k}) \otimes \mathcal{B}(\mathbb{R}),d\mu \otimes d\zeta)$, where $\mathcal{P}(\mathbb{N}^m_{0,k})$ denotes the power set of $\mathbb{N}^m_{0,k}$, and where $\mathcal{P}(\mathbb{N}^m_{0,k}) \ni E \mapsto \mu(E) := \sum_{\alpha \in \mathbb{N}^m_{0,k}} \mathds{1}_E(\alpha) \in [0,\infty)$ is the counting measure) together with the probability distribution function of $a_1 \sim \mathbf{t}_m$, we have
	\begin{equation*}
		\begin{aligned}
			& \Vert f \Vert_{\widetilde{\mathbb{B}}^{k,r,\gamma}_{\psi,a,b}(U;\mathbb{R}^d)} \leq \mathbb{E}\left[ \left\Vert \frac{\left( 1 + \Vert a_1 \Vert^2 \right)^\frac{\gamma+k}{2} \left( 1 + \vert b_1 \vert^2 \right)^\frac{\gamma}{2}}{p_{a,b}(a_1,b_1)} (\mathfrak{R}_\psi f)(a_1,b_1) \right\Vert^r \right]^\frac{1}{r} \\
			& \leq \mathbb{E}\left[ \sup_{\widetilde{b} \in \mathbb{R}} \left\Vert \frac{\left( 1 + \Vert a_1 \Vert^2 \right)^\frac{\lceil\gamma\rceil+k}{2}}{p_a(a_1)} \left( 1 + \big\vert \widetilde{b} \big\vert^2 \right)^\frac{\lceil\gamma\rceil+2}{2} (\mathfrak{R}_\psi f)\big(a_1,\widetilde{b}\big) \right\Vert^r \right]^\frac{1}{r} \\
			& \leq 2^\frac{\lceil\gamma\rceil}{2} \frac{(\lceil\gamma\rceil+2)!}{\pi} \mathbb{E}\left[ \left\Vert \frac{\left( 1 + \Vert a_1 \Vert^2 \right)^\frac{2\lceil\gamma\rceil+k+2}{2}}{p_a(a_1)} \sum_{\beta \in \mathbb{N}^m_{0,\lceil\gamma\rceil+2}} \int_{\mathbb{R}} \big\Vert \partial_\beta \widehat{f}(\zeta a) \big\Vert \left\vert \widehat{\psi}^{(\lceil\gamma\rceil+2-\vert\beta\vert)}(\zeta) \right\vert d\zeta \right\Vert^r \right]^\frac{1}{r} \\
			& \leq 2^\frac{\lceil\gamma\rceil}{2} \frac{(\lceil\gamma\rceil+2)!}{\pi} \sum_{\beta \in \mathbb{N}^m_{0,\lceil\gamma\rceil+2}} \int_{\mathbb{R}} \left\vert \widehat{\psi}^{(\lceil\gamma\rceil+2-\vert\beta\vert)}(\zeta) \right\vert \left( \int_{\mathbb{R}^m} \big\Vert \partial_\beta \widehat{f}(\zeta a) \big\Vert^r \frac{\left( 1 + \Vert a \Vert^2 \right)^\frac{(2\lceil\gamma\rceil+k+2)r}{2}}{p_a(a)^{r-1}} da \right)^\frac{1}{r} d\zeta.
		\end{aligned}
	\end{equation*}
	Hence, by using the substitution $\xi \mapsto \zeta a$ with Jacobi determinant $d\xi = \vert \zeta \vert^m da$, that $\zeta_1 := \inf\big\lbrace \vert \zeta \vert : \zeta \in \supp(\widehat{\psi}) \big\rbrace > 0$, and $C_1 := 2^{\lceil\gamma\rceil/2} (\lceil\gamma\rceil+2)! \max_{j=0,...,\lceil\gamma\rceil+2} \int_{\mathbb{R}} \big\vert \widehat{\psi}^{(j)}(\zeta) \big\vert d\zeta > 0$ (depending only on $\gamma \in [0,\infty)$ and $\psi \in \mathcal{S}(\mathbb{R};\mathbb{C})$), it follows that
	\begin{equation*}
		\begin{aligned}
			& \Vert f \Vert_{\widetilde{\mathbb{B}}^{k,r,\gamma}_{\psi,a,b}(U;\mathbb{R}^d)} \\
			& \quad \leq \frac{(\lceil\gamma\rceil+2)!}{\pi} \sum_{\beta \in \mathbb{N}^m_{0,\lceil\gamma\rceil+2}} \int_{\mathbb{R}} \frac{\left\vert \widehat{\psi}^{(\lceil\gamma\rceil+2-\vert\beta\vert)}(\zeta) \right\vert}{\zeta^\frac{m}{2}} \left( \int_{\mathbb{R}^m} \big\Vert \partial_\beta \widehat{f}(\xi) \big\Vert^r \frac{\left( 1 + \Vert \xi/\zeta \Vert^2 \right)^\frac{(2\lceil\gamma\rceil+k+2)r}{2}}{p_a(\xi/\zeta)^{r-1}} d\xi \right)^\frac{1}{r} d\zeta \\
			& \quad \leq \frac{C_1}{\zeta_1^\frac{m}{2}} \sup_{\zeta \in \supp(\widehat{\psi})} \sum_{\beta \in \mathbb{N}^m_{0,\lceil\gamma\rceil+2}} \left( \int_{\mathbb{R}^m} \big\Vert \partial_\beta \widehat{f}(\xi) \big\Vert^r \frac{\left( 1 + \Vert \xi/\zeta \Vert^2 \right)^\frac{(2\lceil\gamma\rceil+k+2)r}{2}}{p_a(\xi/\zeta)^{r-1}} d\xi \right)^\frac{1}{r},
		\end{aligned}
	\end{equation*}
	which proves \eqref{EqPropConst1}. For \eqref{EqPropConst2}, we use \eqref{EqPropConst1} and that $a_1 \sim \mathbf{t}_m$ to obtain that
	\begin{equation*}
		\begin{aligned}
			\Vert f \Vert_{\widetilde{\mathbb{B}}^{k,r,\gamma}_{\psi,a,b}(U;\mathbb{R}^d)} & \leq \frac{C_1}{\zeta_1^\frac{m}{2}} \sup_{\zeta \in \supp(\widehat{\psi})} \sum_{\beta \in \mathbb{N}^m_{0,\lceil\gamma\rceil+2}} \left( \int_{\mathbb{R}^m} \big\Vert \widehat{f}(\xi) \big\Vert^r \frac{\left( 1 + \Vert \xi/\zeta \Vert^2 \right)^{2\lceil\gamma\rceil+k+2}}{p_a(\xi/\zeta)} d\xi \right)^\frac{1}{2} \\
			& = \frac{C_1}{\zeta_1^\frac{m}{2}} \frac{\pi^\frac{m+1}{4}}{\Gamma\left( \frac{m+1}{2} \right)^\frac{1}{2}} \sum_{\beta \in \mathbb{N}^m_{0,\lceil\gamma\rceil+2}} \left( \int_{\mathbb{R}^m} \big\Vert \partial_\beta \widehat{f}(\xi) \big\Vert^2 \left( 1 + \Vert \xi/\zeta_1 \Vert^2 \right)^{2\lceil\gamma\rceil+k+\frac{m+5}{2}} d\xi \right)^\frac{1}{2},
		\end{aligned}
	\end{equation*}
	which completes the proof.
\end{proof}

\begin{proof}[Proof of Proposition~\ref{PropCOD}]
	The proof is based on \cite[Proposition~3.10]{neufeld24} which has established a similar result, but with respect to \emph{deterministic} neural networks. To this end, we fix some $m,d \in \mathbb{N}$ and $\varepsilon > 0$. Moreover, let $p > 1$ and $w: U \rightarrow [0,\infty)$ be a weight as in Lemma~\ref{LemmaWeight} (with constant $C^{(\gamma,p)}_{\mathbb{R},w_0} > 0$ being independent of $m,d \in \mathbb{N}$ and $\varepsilon > 0$), let $(\psi,\rho) \in \mathcal{S}_0(\mathbb{R};\mathbb{C}) \times C^k_{pol,\gamma}(\mathbb{R})$ be a pair as in Example~\ref{ExAdm} (with $0 < \zeta_1 < \zeta_2 < \infty$ and constant $C_{\psi,\rho} > 0$ being independent of $m,d \in \mathbb{N}$ and $\varepsilon > 0$), and fix some $f \in W^{k,p}(U,\mathcal{L}(U),w;\mathbb{R}^d)$ satisfying the conditions of Proposition~\ref{PropConst} such that the right-hand side of \eqref{EqPropConst2} satisfies $\mathcal{O}\left( m^s (2/\zeta_2)^m (m+1)^{m/2} \right)$ for some $s \in \mathbb{N}_0$. Then, there exists some constant $C > 0$ (being independent of $m,d \in \mathbb{N}$ and $\varepsilon > 0$) such that for every $m,d \in \mathbb{N}$ it holds that
	\begin{equation}
		\label{EqPropCODProof1}
		\frac{C_1}{\zeta_1^\frac{m}{2}} \sum_{\beta \in \mathbb{N}^m_{0,\lceil\gamma\rceil+2}} \left( \int_{\mathbb{R}^m} \big\Vert \partial_\beta \widehat{f}(\xi) \big\Vert^2 \left( 1 + \Vert \xi/\zeta_1 \Vert^2 \right)^{2\lceil\gamma\rceil+k+\frac{m+5}{2}} d\xi \right)^\frac{1}{2} \leq C m^s \left( \frac{2}{\zeta_2} \right)^m (m+1)^\frac{m}{2}.
	\end{equation}
	Hence, by using the inequality~\eqref{EqPropConst2} in Proposition~\ref{PropConst} together with \eqref{EqPropCODProof1}, that $\Gamma(x) \geq \sqrt{2\pi/x} (x/e)^x$ for any $x \in (0,\infty)$ (see \cite[Lemma~2.4]{gonon19}), and that $\frac{\pi^{m/4} (2/\zeta_2)^m}{(2\pi/\zeta_2)^m (1/(2e))^{m/2}} = \big( \frac{2e\sqrt{\pi}}{\pi^2} \big)^{m/2} \leq 1$ for any $m \in \mathbb{N}$, we conclude that there exist some constants $C_2, C_3 > 0$ (being independent of $m,d \in \mathbb{N}$ and $\varepsilon > 0$) such that
	\begin{equation}
		\label{EqPropCODProof2}
		\begin{aligned}
			& C_p \Vert \rho \Vert_{C^k_{pol,\gamma}(\mathbb{R})} \frac{C^{(\gamma,p)}_{\mathbb{R},w_0} m^{\gamma+\frac{k+1}{p}}}{C_{\psi,\rho} \left( \frac{2\pi}{\zeta_2} \right)^m} \Vert f \Vert_{\widetilde{\mathbb{B}}^{k,r,\gamma}_{\psi,a,b}(U;\mathbb{R}^d)} \\
			& \quad\quad \leq C_p \Vert \rho \Vert_{C^k_{pol,\gamma}(\mathbb{R})} \frac{C^{(\gamma,p)}_{\mathbb{R},w_0} m^{\gamma+\frac{k+1}{p}} \pi^\frac{m+1}{4}}{C_{\psi,\rho} \left( \frac{2\pi}{\zeta_2} \right)^m \Gamma\left( \frac{m+1}{2} \right)^\frac{1}{2}} C m^s \left( \frac{2}{\zeta_2} \right)^m (m+1)^\frac{m}{2} \\
			& \quad\quad \leq C_p \Vert \rho \Vert_{C^k_{pol,\gamma}(\mathbb{R})} \frac{C^{(\gamma,p)}_{\mathbb{R},w_0} m^{\gamma+\frac{k+1}{p}} \pi^\frac{m+1}{4}}{C_{\psi,\rho} \left( \frac{2\pi}{\zeta_2} \right)^m \left( \frac{4\pi}{m+1} \right)^\frac{1}{4} \left( \frac{m+1}{2e} \right)^\frac{m+1}{2}} C m^s \left( \frac{2}{\zeta_2} \right)^m (m+1)^\frac{m}{2} \\
			& \quad\quad \leq 2 C_p \frac{C_p \Vert \rho \Vert_{C^k_{pol,\gamma}(\mathbb{R})} C^{(\gamma,p)}_{\mathbb{R},w_0} \pi^\frac{1}{4} (2e)^\frac{1}{2} C_3}{C_{\psi,\rho} (4\pi)^\frac{1}{4}} C m^s \\
			& \quad\quad \leq \left( C_2 m^{C_3} \right)^{1-\frac{1}{\min(2,p)}}.
		\end{aligned}
	\end{equation}
	Hence, by using that $f \in \widetilde{\mathbb{B}}^{k,r,\gamma}_{\psi,a,b}(U;\mathbb{R}^d)$ (see Proposition~\ref{PropConst}), we can apply Theorem~\ref{ThmAR} with $N = \Big\lceil C_2 m^{C_3} \varepsilon^{-\frac{\min(2,p)}{\min(2,p)-1}} \Big\rceil$ and insert the inequality~\eqref{EqPropCODProof2} to obtain a random neural network $G_N \in \mathcal{RN}^\rho_{U,d}$ with $N$ neurons satisfying
	\begin{equation*}
		\begin{aligned}
			\mathbb{E}\left[ \Vert f - G_N \Vert_{W^{k,p}(U,\mathcal{L}(U),w;\mathbb{R}^d)}^r \right]^\frac{1}{r} & \leq C_{p,r} \Vert \rho \Vert_{C^k_{pol,\gamma}(\mathbb{R})} \frac{C^{(\gamma,p)}_{U,w} m^\frac{k}{p}}{\left\vert C^{(\psi,\rho)}_m \right\vert} \frac{\Vert f \Vert_{\widetilde{\mathbb{B}}^{k,r,\gamma}_{\psi,a,b}(U;\mathbb{R}^d)}}{N^{1-\frac{1}{\min(2,p,r)}}} \\
			& \leq \frac{\left( C_2 m^{C_3} \right)^{1-\frac{1}{\min(2,p)}}}{N^{1-\frac{1}{\min(2,p,r)}}} \leq \varepsilon,
		\end{aligned}
	\end{equation*}
	which completes the proof.
\end{proof}

\section{Proof of results in Section~\ref{SecLSGE}}

\subsection{Proof of Proposition~\ref{PropAlg}}
\label{SecProofsAlg}

\begin{proof}[Proof of Proposition~\ref{PropAlg}]
	Fix some $J,N \in \mathbb{N}$ and a $k$-times weakly differentiable function $f := (f_1,...,f_d)^\top: U \rightarrow \mathbb{R}^d$. Moreover, in order to ease notation, we define $\widetilde{m} := J \vert \mathbb{N}^m_{0,k} \vert d \in \mathbb{N}$ and $\widetilde{n} := e N \in \mathbb{N}$. Then, by using the definition of the Euclidean norm, we first observe that \eqref{EqEmpMSE} is equivalent to
	\begin{equation}
		\label{EqPropAlgProof1}
		y^{(J)}(\omega) = \argmin_{y \in \mathcal{Y}_N} \left( \frac{1}{J} \sum_{j=1}^J \sum_{\alpha \in \mathbb{N}^m_{0,k}} \sum_{i=1}^d c_{\alpha}^2 \left\vert \partial_\alpha f_i(V_j(\omega)) - \partial_\alpha G^y_{N,i}(\omega)(V_j(\omega)) \right\vert^2 \right),
	\end{equation}
	where $G^y_N(\omega) := (G^y_{N,1}(\omega),...,G^y_{N,d}(\omega))^\top \in W^{k,2}(U,\mathcal{L}(U),w;\mathbb{R}^d)$ is defined in \eqref{EqDefRFMY}. Hence, for every fixed $\omega \in \Omega$, the least squares problem \eqref{EqPropAlgProof1} is by \cite[Theorem~1.1.2]{bjoerck96} equivalent to the normal equations $\mathbf{G}(\omega)^\top \mathbf{G}(\omega) \vec{y}^{(J)}(\omega) = \mathbf{G}(\omega)^\top Z(\omega)$ stated in Line~\ref{Alg5} of Algorithm~\ref{Alg}, where $\vec{y}^{(J)}(\omega) := \big( y^{(J)}_{(l,n)}(\omega) \big)_{(l,n) \in \lbrace 1,...,e \rbrace \times \lbrace 1,...,N \rbrace}^\top$ denotes the vectorized version of $y^{(J)}(\omega) := \big( y^{(J)}_{l,n}(\omega) \big)_{l=1,...,e}^{n = 1,...,N}$. Thus, the problem \eqref{EqPropAlgProof1} admits by \cite[Theorem~1.2.10]{bjoerck96} a solution $y^{(J)}(\omega) := \big( y^{(J)}_{l,n}(\omega) \big)_{l=1,...,e}^{n = 1,...,N} \in \mathbb{R}^{e \times N}$, which proves that Algorithm~\ref{Alg} terminates.	
	
	Next, we show that Algorithm~\ref{Alg} is correct. To this end, we first prove that the $\mathbb{R}^{e \times N}$-valued random variable $y^{(J)} := \big( y^{(J)}_{l,n} \big)_{l=1,...,e}^{n = 1,...,N}$ defined in \eqref{EqPropAlgProof1} is $\mathcal{F}_{\theta,V}/\mathcal{B}(\mathbb{R}^{\widetilde{n}})$-measurable. Let us define the function
	\begin{equation}
		\label{EqPropAlgProof2}
		(\mathbb{R}^{\widetilde{m} \times \widetilde{n}} \times \mathbb{R}^{\widetilde{m}}) \times \mathbb{R}^{\widetilde{n}} \ni ((A,b),y) \quad \mapsto \quad \Vert Ay - b \Vert^2 \in \mathbb{R},
	\end{equation}
	whose epigraphical mapping $(\mathbb{R}^{\widetilde{m} \times \widetilde{n}} \times \mathbb{R}^{\widetilde{m}}) \times \mathbb{R}^{\widetilde{n}} \times \mathbb{R} \ni ((A,b),y,t) \mapsto \big\lbrace ((A,b),y,t) \in (\mathbb{R}^{\widetilde{m} \times \widetilde{n}} \times \mathbb{R}^{\widetilde{m}}) \times \mathbb{R}^{\widetilde{n}} \times \mathbb{R}: \Vert Ay - b \Vert^2 \leq t \big\rbrace$ is closed-valued and measurable (see \cite[Definition~14.1]{rockafellar97} for the definition of the latter). This shows that \eqref{EqPropAlgProof2} is a normal integrand in the sense of \cite[Definition~14.27]{rockafellar97}. Hence, we can apply \cite[Theorem~14.36]{rockafellar97} to conclude that there exists a $\mathcal{B}((\mathbb{R}^{\widetilde{m} \times \widetilde{n}} \times \mathbb{R}^{\widetilde{m}}) \times \mathbb{R}^{\widetilde{n}})/\mathcal{B}(\mathbb{R}^{\widetilde{n}})$-measurable map $\Upsilon: \mathbb{R}^{\widetilde{m} \times \widetilde{n}} \times \mathbb{R}^{\widetilde{m}} \rightarrow \mathbb{R}^{\widetilde{n}}$ returning a minimizer, i.e.~such that for every $(A,b) \in \mathbb{R}^{\widetilde{m} \times \widetilde{n}} \times \mathbb{R}^{\widetilde{m}}$ it holds that
	\begin{equation*}
		\Vert A \Upsilon(A,b) - b \Vert^2 = \min_{y \in \mathbb{R}^{\widetilde{n}}} \Vert A y - b \Vert^2.
	\end{equation*}
	Moreover, by using that $(\theta_n)_{n \in \mathbb{N}}: \Omega \rightarrow \Theta$ are by definition $\mathcal{F}_{\theta,V}/\mathcal{B}(\Theta)$-measurable, that $(V_j)_{j \in \mathbb{N}}: \Omega \rightarrow U$ are by definition $\mathcal{F}_{\theta,V}/\mathcal{B}(U)$-measurable, and that the feature maps $g_1,...,g_e: \Theta \rightarrow W^{k,2}(U,\mathcal{L}(U),w;\mathbb{R}^d)$ are by assumption $\mathcal{B}(\Theta)/\mathcal{B}(W^{k,2}(U,\mathcal{L}(U),w;\mathbb{R}^d))$-measurable, the $\mathbb{R}^{\widetilde{m} \times \widetilde{n}}$-valued random variable $\mathbf{G} = (\mathbf{G}_{(j,\alpha,i),(l,n)})_{(j,\alpha,i) \in \lbrace 1,...,J \rbrace \times \mathbb{N}^m_{0,k} \times \lbrace 1,...,d \rbrace}^{(l,n) \in \lbrace 1,...,e \rbrace \times \lbrace 1,...,N \rbrace}$ with $\mathbf{G}_{(j,\alpha,i),(l,n)} := c_\alpha \partial_\alpha g_{l,i}(\theta_n)(V_j)$, for $(j,\alpha,i) \in \lbrace 1,...,J \rbrace \times \mathbb{N}^m_{0,k} \times \lbrace 1,...,d \rbrace$ and $(l,n) \in \lbrace 1,...,e \rbrace \times \lbrace 1,...,N \rbrace$, is $\mathcal{F}_{\theta,V}/\mathcal{B}(\mathbb{R}^{\widetilde{m} \times \widetilde{n}})$-measurable. In addition, by using that $(V_j)_{j \in \mathbb{N}}: \Omega \rightarrow U$ are by definition $\mathcal{F}_{\theta,V}/\mathcal{B}(U)$-measurable and that $f: U \rightarrow \mathbb{R}^d$ is $k$-times weakly differentiable, the $\mathbb{R}^{\widetilde{m}}$-valued random variable $Z := (c_\alpha \partial_\alpha f_i(V_j))_{(j,\alpha,i) \in \lbrace 1,...,J \rbrace \times \mathbb{N}^m_{0,k} \times \lbrace 1,...,d \rbrace}$ is $\mathcal{F}_{\theta,V}/\mathcal{B}(\mathbb{R}^{\widetilde{m}})$-measurable. Thus, by combining this with the $\mathcal{B}((\mathbb{R}^{\widetilde{m} \times \widetilde{n}} \times \mathbb{R}^{\widetilde{m}}) \times \mathbb{R}^{\widetilde{n}})/\mathcal{B}(\mathbb{R}^{\widetilde{n}})$-measurable map $\Upsilon: \mathbb{R}^{\widetilde{m} \times \widetilde{n}} \times \mathbb{R}^{\widetilde{m}} \rightarrow \mathbb{R}^{\widetilde{n}}$, it follows that
	\begin{equation*}
		\Omega \ni \omega \quad \mapsto \quad \vec{y}^{(J)}(\omega) := \Upsilon(\mathbf{G}(\omega),Z(\omega)) \in \mathbb{R}^{e \cdot N}
	\end{equation*}
	is $\mathcal{F}_{\theta,V}/\mathcal{B}(\mathbb{R}^{\widetilde{n}})$-measurable, which shows that $y^{(J)} \in \mathcal{Y}_N$. Since $\vec{y}^{(J)}(\omega) = \Upsilon(\mathbf{G}(\omega),Z(\omega)) = \min_{y \in \mathbb{R}^{\widetilde{n}}} \Vert \mathbf{G}(\omega) y - Z(\omega) \Vert^2$ is by \cite[Theorem~1.1.2]{bjoerck96} equivalent to the normal equations $\mathbf{G}(\omega)^\top \mathbf{G}(\omega) \vec{y}^{(J)}(\omega) = \mathbf{G}(\omega)^\top Z(\omega)$ in Line~\ref{Alg5}, we obtain that the algorithm is correct.
	
	Finally, we compute the complexity of Algorithm~\ref{Alg}. In Line~\ref{Alg1}, we generate $N$ random variables $(\theta_n)_{n=1,...,N}$, which costs $N$ units. In Line~\ref{Alg2}, we generate $J$ random variables $(V_j)_{j=1,...,J} \sim w$, which requires $J$ units. In Line~\ref{Alg3}, we compute the $\mathbb{R}^{\widetilde{m} \times \widetilde{n}}$-valued random variable $\mathbf{G} = (\mathbf{G}_{(j,\alpha,i),(l,n)})_{(j,\alpha,i) \in \lbrace 1,...,J \rbrace \times \mathbb{N}^m_{0,k} \times \lbrace 1,...,d \rbrace}^{(l,n) \in \lbrace 1,...,e \rbrace \times \lbrace 1,...,N \rbrace}$ with $\mathbf{G}_{(j,\alpha,i),(l,n)} := c_\alpha \partial_\alpha g_l(\theta_n)_i(V_j)$, for $(j,\alpha,i) \in \lbrace 1,...,J \rbrace \times \mathbb{N}^m_{0,k} \times \lbrace 1,...,d \rbrace$ and $(l,n) \in \lbrace 1,...,e \rbrace \times \lbrace 1,...,N \rbrace$, which needs $2 J \big\vert \mathbb{N}^m_{0,k} \big\vert d e N$ units. In Line~\ref{Alg4}, we compute the $\mathbb{R}^{\widetilde{m}}$-valued random variable $Z := (c_\alpha \partial_\alpha f_i(V_j))_{(j,\alpha,i) \in \lbrace 1,...,J \rbrace \times \mathbb{N}^m_{0,k} \times \lbrace 1,...,d \rbrace}$, which requires $2 J \big\vert \mathbb{N}^m_{0,k} \big\vert d$ units. In Line~\ref{Alg5}, we solve the least squares problem via Cholesky decomposition and forward/backward substitution (see \cite[Section~2.2.2]{bjoerck96}), which needs
	\begin{equation*}
		\frac{1}{2} \widetilde{m} \widetilde{n}^2 + \frac{1}{6} \widetilde{n}^3 + \mathcal{O}(\widetilde{m} \widetilde{n}) = \frac{1}{2} \left( J \left\vert \mathbb{N}^m_{0,k} \right\vert d \right) (eN)^2 + \frac{1}{6} (eN)^3 + \mathcal{O}\left( J \left\vert \mathbb{N}^m_{0,k} \right\vert d e N \right)
	\end{equation*}
	units (see \cite[p.~45]{bjoerck96}). Hence, by summing the computational costs and by using that $\big\vert \mathbb{N}^m_{0,k} \big\vert = \sum_{j=0}^k m^j \leq 2 m^k$, the complexity of Algorithm~\ref{Alg} is of order
	\begin{equation*}
		\begin{aligned}
			& N + J + 2 J \left\vert \mathbb{N}^m_{0,k} \right\vert d e N + \frac{1}{2} \left( J \left\vert \mathbb{N}^m_{0,k} \right\vert d \right) (eN)^2 + \frac{1}{6} (eN)^3 + \mathcal{O}\left( J \left\vert \mathbb{N}^m_{0,k} \right\vert d e N \right) \\
			& \quad\quad \leq N + J + 4 J m^k d e N + 2 J m^k d (eN)^2 + \frac{1}{6} (eN)^3 + \mathcal{O}\left( J m^k d e N \right) \\
			& \quad\quad = \mathcal{O}\left( J m^k d (eN)^2 + (eN)^3 \right),
		\end{aligned}
	\end{equation*}
	which completes the proof.	
\end{proof}

\subsection{Proof of Theorem~\ref{ThmGenErr} and Corollary~\ref{CorGenErrRN}}
\label{SecProofsGenErr}

\begin{proof}[Proof of Theorem~\ref{ThmGenErr}]
	Fix some $J,N \in \mathbb{N}$, $L > 0$, and a function $f := (f_1,...,f_d)^\top \in \mathbb{B}^2_{\mathcal{G},\theta}(W^{k,2}(U,\mathcal{L}(U),w;\mathbb{R}^d))$ satisfying $\vert \partial_\alpha f_i(u) \vert \leq L$ for all $\alpha \in \mathbb{N}^m_{0,k}$, $i=1,...,d$, and $u \in U$. Then, we apply Algorithm~\ref{Alg} to obtain some $G^{y^{(J)}}_N \in \mathcal{RG}^V$ with $\mathbb{R}^{e \times N}$-valued random variable $y^{(J)} = \big( y^{(J)}_{l,n} \big)_{l=1,...,e}^{n=1,...,N} \in \mathcal{Y}_N$ solving \eqref{EqEmpMSE}. Moreover, by using that $(\theta_n)_{n \in \mathbb{N}}: \Omega \rightarrow \Theta$ are by definition $\mathcal{F}_{\theta,V}/\mathcal{B}(\Theta)$-measurable, that the feature maps $g_1,...,g_e: \Theta \rightarrow W^{k,2}(U,\mathcal{L}(U),w;\mathbb{R}^d)$ are by assumption $\mathcal{B}(\Theta)/\mathcal{B}(W^{k,2}(U,\mathcal{L}(U),w;\mathbb{R}^d))$-measurable, and that $y^{(J)} = \big( y^{(J)}_{l,n} \big)_{l=1,...,e}^{n=1,...,N} \in \mathcal{Y}_N$ is $\mathcal{F}_{\theta,V}/\mathcal{B}(\mathbb{R}^{e \times N})$-measurable, it follows that
	\begin{equation*}
		\begin{aligned}
			& \Omega \ni \omega \quad \mapsto \quad G^{y^{(J)}}_N(\omega) := \left( u \mapsto G^{y^{(J)}}_{N,i}(\omega)(u) \right)_{i=1,...,d}^\top := \sum_{n=1}^N \sum_{l=1}^e y^{(J)}_{l,N}(\omega) g_l(\theta_n(\omega)) \\
			& \quad\quad := \left( u \mapsto \sum_{n=1}^N \sum_{l=1}^e y^{(J)}_{l,N}(\omega) g_{l,i}(\theta_n(\omega))(u) \right)_{i=1,...,d}^\top \in W^{k,2}(U,\mathcal{L}(U),w;\mathbb{R}^d)
		\end{aligned}
	\end{equation*}
	is $\mathcal{F}_{\theta,V}/\mathcal{B}(W^{k,2}(U,\mathcal{L}(U),w;\mathbb{R}^d))$-measurable. Hence, by using \cite[Lemma~4.7]{neufeld24}, i.e.~that the Banach space $(W^{k,2}(U,\mathcal{L}(U),w;\mathbb{R}^d),\Vert \cdot \Vert_{W^{k,2}(U,\mathcal{L}(U),w;\mathbb{R}^d)})$ is separable, we can apply \cite[Theorem~1.1.6+1.1.20]{hytoenen16} to conclude that $G^{y^{(J)}}: \Omega \rightarrow W^{k,2}(U,\mathcal{L}(U),w;\mathbb{R}^d)$ is a $(\mathbb{P},\mathcal{F}_{\theta,V})$-strongly measurable map.
	
	In order to show \eqref{EqThmGenErr1}, we adapt the proof of \cite[Theorem~11.3]{gyoerfi02}. To this end, we define for every $\alpha \in \mathbb{N}^m_{0,k}$ and $i = 1,...,d$ the $L^2(U,\mathcal{L}(U),w)$-valued random variable
	\begin{equation*}
		\Omega \ni \omega \quad \mapsto \quad \Delta^{y^{(J)}}_{\alpha,i,L}(\omega) := \left( u \mapsto T_L\left(\partial_\alpha f_i(u) - \partial_\alpha G^{y^{(J)}}_{N,i}(\omega)(u)\right) \right) \in L^2(U,\mathcal{L}(U),w).
	\end{equation*}
	Moreover, we define for every fixed $\alpha \in \mathbb{N}^m_{0,k}$, $i = 1,...,d$, and $\vartheta := (\vartheta_1,...,\vartheta_N) \in \bigtimes_{n=1}^N \Theta$, the $L^2(U,\mathcal{L}(U),w)$-valued random variable
	\begin{equation*}
		\Omega \ni \omega \quad \mapsto \quad \Delta^{y^{(J)},\vartheta}_{\alpha,i,L}(\omega) := \left( u \mapsto T_L\left(\partial_\alpha f_i(u) - \partial_\alpha G^{y^{(J)},\vartheta}_{N,i}(\omega)(u)\right) \right) \in L^2(U,\mathcal{L}(U),w),
	\end{equation*}
	where $\Omega \ni \omega \mapsto G^{y^{(J)},\vartheta}_{N,i}(\omega) := \sum_{n=1}^N \sum_{l=1}^e y^{(J)}_{l,n}(\omega) g_{l,i}(\vartheta_n) \in L^2(U,\mathcal{L}(U),w)$. In addition, we define the corresponding (random) empirical mean squared error $\Vert \cdot \Vert_J$ of such $L^2(U,\mathcal{L}(U),w)$-valued random variables as
	\begin{equation*}
		\begin{aligned}
			\Omega \ni \omega \quad \mapsto \quad \left\Vert \Delta^{y^{(J)}}_{\alpha,i,L}(\omega) \right\Vert_J & := \left( \frac{1}{J} \sum_{j=1}^J \left\vert \Delta^{y^{(J)}}_{\alpha,i,L}(\omega)(V_j(\omega)) \right\vert^2 \right)^\frac{1}{2} \in \mathbb{R} \quad\quad \text{and} \\
			\Omega \ni \omega \quad \mapsto \quad \left\Vert \Delta^{y^{(J)},\vartheta}_{\alpha,i,L}(\omega) \right\Vert_J & := \left( \frac{1}{J} \sum_{j=1}^J \left\vert \Delta^{y^{(J)},\vartheta}_{\alpha,i,L}(\omega)(V_j(\omega)) \right\vert^2 \right)^\frac{1}{2} \in \mathbb{R}.
		\end{aligned}
	\end{equation*}
	Then, by using the inequality $(x+y)^2 \leq 2 \left( x^2 + y^2 \right)$ for any $x,y \geq 0$, it follows that
	\begin{equation*}
		\begin{aligned}
			& \mathbb{E}\left[ \sum_{\alpha \in \mathbb{N}^m_{0,k}} \int_U \left\Vert T_L\left(\partial_\alpha f(u) - \partial_\alpha G^{y^{(J)}}_N(\cdot)(u)\right) \right\Vert^2 w(u) du \right] \\
			& \quad\quad \leq \mathbb{E}\left[ \sum_{\alpha \in \mathbb{N}^m_{0,k}} \sum_{i=1}^d \int_U \left\vert T_L\left(\partial_\alpha f_i(u) - \partial_\alpha G^{y^{(J)}}_{N,i}(\cdot)(u)\right) \right\vert^2 w(u) du \right] \\
			& \quad\quad = \sum_{\alpha \in \mathbb{N}^m_{0,k}} \sum_{i=1}^d \mathbb{E}\left[ \left( \left\Vert \Delta^{y^{(J)}}_{\alpha,i,L} \right\Vert_{L^2(U,\mathcal{L}(U),w)} - 2 \left\Vert \Delta^{y^{(J)}}_{\alpha,i,L} \right\Vert_J + 2 \left\Vert \Delta^{y^{(J)}}_{\alpha,i,L} \right\Vert_J \right)^2 \right] \\
			& \quad\quad \leq 2 \sum_{\alpha \in \mathbb{N}^m_{0,k}} \sum_{i=1}^d \mathbb{E}\left[ \max\left( \left\Vert \Delta^{y^{(J)}}_{\alpha,i,L} \right\Vert_{L^2(U,\mathcal{L}(U),w)} - 2 \left\Vert \Delta^{y^{(J)}}_{\alpha,i,L} \right\Vert_J, 0 \right)^2 + 4 \left\Vert \Delta^{y^{(J)}}_{\alpha,i,L} \right\Vert_J^2 \right]. \\
		\end{aligned}
	\end{equation*}
	Hence, by conditioning on $\mathcal{F}_\theta$, by using that $\vert \mathbb{N}^m_{0,k} \vert = \sum_{j=0}^k m^j \leq 2m^k$, that the random variables $(V_j)_{j \in \mathbb{N}}$ are independent of $(\theta_n)_{n \in \mathbb{N}}$, and the notation $\theta := (\theta_n)_{n=1,...,N}$, we have
	\begin{equation}
		\label{EqThmGenProof1}
		\begin{aligned}
			& \mathbb{E}\left[ \sum_{\alpha \in \mathbb{N}^m_{0,k}} \int_U \left\Vert T_L\left(\partial_\alpha f(u) - \partial_\alpha G^{y^{(J)}}_N(\cdot)(u)\right) \right\Vert^2 w(u) du \right] \\
			& \quad\quad \leq 2 \left\vert \mathbb{N}^m_{0,k} \right\vert d \max_{\alpha \in \mathbb{N}^m_{0,k} \atop i=1,...,d} \mathbb{E}\left[ \mathbb{E}\left[ \max\left( \left\Vert \Delta^{y^{(J)}}_{\alpha,i,L} \right\Vert_{L^2(U,\mathcal{L}(U),w)} - 2 \left\Vert \Delta^{y^{(J)}}_{\alpha,i,L} \right\Vert_J, 0 \right)^2 \Bigg\vert \mathcal{F}_\theta \right] \right] \\
			& \quad\quad\quad\quad + 8 \mathbb{E}\left[ \sum_{\alpha \in \mathbb{N}^m_{0,k}} \sum_{i=1}^d \frac{1}{J} \sum_{j=1}^J \left\Vert \Delta^{y^{(J)}}_{\alpha,i,L}(\cdot)(V_j) \right\Vert^2 \right] \\
			& \quad\quad \leq 4 m^k d \max_{\alpha \in \mathbb{N}^m_{0,k} \atop i=1,...,d} \mathbb{E}\left[ \mathbb{E}\left[ \max\left( \left\Vert \Delta^{y^{(J)},\vartheta}_{\alpha,i,L} \right\Vert_{L^2(U,\mathcal{L}(U),w)} - 2 \left\Vert \Delta^{y^{(J)},\vartheta}_{\alpha,i,L} \right\Vert_J, 0 \right)^2 \right]\Bigg\vert_{\vartheta = \theta} \right] \\
			& \quad\quad\quad\quad + 8 \mathbb{E}\left[ \sum_{\alpha \in \mathbb{N}^m_{0,k}} \sum_{i=1}^d \frac{1}{J} \sum_{j=1}^J \left\Vert \Delta^{y^{(J)}}_{\alpha,i,L}(\cdot)(V_j) \right\Vert^2 \right].
		\end{aligned}
	\end{equation}
	Moreover, we define for every fixed $\alpha \in \mathbb{N}^m_{0,k}$, $i = 1,...,d$, and $\vartheta := (\vartheta_1,...,\vartheta_N) \in \bigtimes_{n=1}^N \Theta$ the vector space of random functions
	\begin{equation*}
		\mathcal{G}^\vartheta_{\alpha,i} := \left\lbrace \Omega \ni \omega \mapsto \sum_{n=1}^N \sum_{l=1}^e y_{l,n} \partial_\alpha g_{l,i}(\vartheta)(V_j(\omega)) \in L^2(U,\mathcal{L}(U),w): y = \left( y_{l,n} \right)_{l=1,...,e}^{n=1,...,N} \in \mathcal{Y}_N \right\rbrace.
	\end{equation*}
	Then, by following \cite[p.~193]{gyoerfi02}, i.e.~by using \cite[Theorem~11.2]{gyoerfi02} (with the set $T_L\big( \mathcal{G}^\vartheta_{\alpha,i} \big) := \big\lbrace \Omega \ni \omega \mapsto \left( u \mapsto T_L(G(\omega)(u)) \right) \in L^2(U,\mathcal{L}(U),w): G \in \mathcal{G}^\vartheta_{\alpha,i} \big\rbrace$ and where $\mathcal{G}^\vartheta_{\alpha,i}$ has for fixed $a \in \mathbb{R}^{N \times m}$, $b \in \mathbb{R}^N$, $\alpha \in \mathbb{N}^m_{0,k}$, and $i = 1,...,d$ the vector space dimension $N$ in the sense of \cite[Theorem~11.1]{gyoerfi02}) together with \cite[Lemma~9.2+9.4 \& Theorem~9.5]{gyoerfi02}, it follows for every $u > 576 L^2/J$ that
	\begin{equation}
		\label{EqThmGenProof2}
		\begin{aligned}
			& \mathbb{P}\left[ \max\left( \left\Vert \Delta^{y^{(J)},\vartheta}_{\alpha,i,L} \right\Vert_{L^2(U,\mathcal{L}(U),w)} - 2 \left\Vert \Delta^{y^{(J)},\vartheta}_{\alpha,i,L} \right\Vert_J, 0 \right)^2 > u \right] \\
			& \quad\quad \leq \mathbb{P}\left[ \exists g \in T_L\left( \mathcal{G}^\vartheta_{\alpha,i} \right): \Vert g \Vert_{L^2(U,\mathcal{L}(U),w)} - 2 \Vert g \Vert_J > \frac{\sqrt{u}}{2} \right] \\
			& \quad\quad \leq 9 (12eJ)^{2(N+1)} e^{-\frac{Ju}{2304 L^2}}.
		\end{aligned}
	\end{equation}
	Hence, by using the constant $v := \frac{2304 L^2}{J} \ln\left( 9 (12eJ)^{2(N+1)} \right) > 576 L^2/J$, the inequality \eqref{EqThmGenProof2}, and that $\ln(108e) \geq 1$ together with $2304 \leq 9216 \ln(108e)$, we conclude that
	\begin{equation}
		\label{EqThmGenProof3}
		\begin{aligned}
			& \mathbb{E}\left[ \max\left( \left\Vert \Delta^{y^{(J)},\vartheta}_{\alpha,i,L} \right\Vert_{L^2(U,\mathcal{L}(U),w)} - 2 \left\Vert \Delta^{y^{(J)},\theta}_{\alpha,i,L} \right\Vert_J, 0 \right)^2 \right] \\
			& \quad\quad = \int_0^\infty \mathbb{P}\left[ \max\left( \left\Vert \Delta^{y^{(J)},\vartheta}_{\alpha,i,L} \right\Vert_{L^2(U,\mathcal{L}(U),w)} - 2 \left\Vert \Delta^{y^{(J)},\theta}_{\alpha,i,L} \right\Vert_J, 0 \right)^2 > u \right] du \\
			& \quad\quad \leq v + \int_v^\infty \mathbb{P}\left[ \max\left( \left\Vert \Delta^{y^{(J)},\vartheta}_{\alpha,i,L} \right\Vert_{L^2(U,\mathcal{L}(U),w)} - 2 \left\Vert \Delta^{y^{(J)},\vartheta}_{\alpha,i,L} \right\Vert_J, 0 \right)^2 > u \right] du \\
			& \quad\quad \leq v + 9 (12eJ)^{2(N+1)} \int_v^\infty e^{-\frac{Ju}{2304 L^2}} du \\
			& \quad\quad = \frac{2304 L^2}{J} \underbrace{\ln\left( 9 (12eJ)^{2(N+1)} \right)}_{\leq 4N \ln\left( 108e J \right)} + \frac{2304 L^2}{J} e^{-\frac{J v}{2304 L^2}} \\
			& \quad\quad \leq \frac{2304 L^2}{J} 4N \left( \ln(108e) + \ln(J) \right) + \frac{2304 L^2}{J} \\
			& \quad\quad \leq 9216 \ln(108e) L^2 \frac{(\ln(J)+1)N}{J}.
		\end{aligned}
	\end{equation}
	On the other hand, for the second term on the right-hand side of \eqref{EqThmGenProof1}, we use that $\left\vert \partial_\alpha f_i(u) \right\vert \leq L$ for any $\alpha \in \mathbb{N}^m_{0,k}$, $i=1,...,d$, and $u \in U$, that $\Vert T_L(y) \Vert \leq \Vert y \Vert$ for any $y \in \mathbb{R}^d$, and that the $\mathbb{R}^{e \times N}$-valued random variable $y^{(J)} = \big( y^{(J)}_{l,n} \big)_{l=1,...,d}^{n=1,...,N}$ solves \eqref{EqEmpMSE}, to obtain that
	\begin{equation*}
		\begin{aligned}
			& \mathbb{E}\left[ \sum_{\alpha \in \mathbb{N}^m_{0,k}} \sum_{i=1}^d \frac{1}{J} \sum_{j=1}^J \left\Vert \Delta^{y^{(J)}}_{\alpha,i,L}(\cdot)(V_j) \right\Vert^2 \right]^\frac{1}{2} \\
			& \quad\quad = \mathbb{E}\left[ \frac{1}{J} \sum_{j=1}^J \sum_{\alpha \in \mathbb{N}^m_{0,k}} \left\Vert T_L\left(\partial_\alpha f(V_j) - \partial_\alpha G^{y^{(J)}}_N(\cdot)(V_j)\right) \right\Vert^2 \right]^\frac{1}{2} \\
			& \quad\quad \leq \frac{1}{\min_{\alpha \in \mathbb{N}^m_{0,k}} c_\alpha} \mathbb{E}\left[ \frac{1}{J} \sum_{j=1}^J \sum_{\alpha \in \mathbb{N}^m_{0,k}} c_\alpha^2 \left\Vert \partial_\alpha f(V_j) - \partial_\alpha G^{y^{(J)}}_N(\cdot)(V_j) \right\Vert^2 \right]^\frac{1}{2} \\
			& \quad\quad = \frac{1}{\min_{\alpha \in \mathbb{N}^m_{0,k}} c_\alpha} \mathbb{E}\left[ \min_{y \in \mathcal{Y}_N} \left( \frac{1}{J} \sum_{j=1}^J \sum_{\alpha \in \mathbb{N}^m_{0,k}} c_\alpha^2 \left\Vert \partial_\alpha f(V_j) - \partial_\alpha G^y_N(\cdot)(V_j) \right\Vert^2 \right) \right]^\frac{1}{2} \\
			& \quad\quad \leq \frac{1}{\min_{\alpha \in \mathbb{N}^m_{0,k}} c_\alpha} \inf_{y \in \mathcal{Y}_N} \mathbb{E}\left[ \frac{1}{J} \sum_{j=1}^J \sum_{\alpha \in \mathbb{N}^m_{0,k}} c_\alpha^2 \left\Vert \partial_\alpha f(V_j) - \partial_\alpha G^y_N(\cdot)(V_j) \right\Vert^2 \right]^\frac{1}{2} \\
			& \quad\quad \leq \frac{\max_{\alpha \in \mathbb{N}^m_{0,k}} c_\alpha}{\min_{\alpha \in \mathbb{N}^m_{0,k}} c_\alpha} \inf_{y \in \mathcal{Y}_N} \mathbb{E}\left[ \sum_{\alpha \in \mathbb{N}^m_{0,k}} \int_U \left\Vert \partial_\alpha f(u) - \partial_\alpha G^y_N(\cdot)(u) \right\Vert^2 w(u) du \right]^\frac{1}{2}.
		\end{aligned}
	\end{equation*}
	Hence, by using Theorem~\ref{CorARRN} (with constants $C_{2,2} > 0$ and $C_{W^{k,2}(U,\mathcal{L}(U),w;\mathbb{R}^d)} > 0$ independent of $f: U \rightarrow \mathbb{R}^d$ and $m,d \in \mathbb{N}$, see also \cite[Lemma~4.9]{neufeld24}) together with $\mathcal{F}_\theta \subseteq \mathcal{F}_{\theta,V}$ (with $G^f_N \in \mathcal{RG} \cap L^2(\Omega,\mathcal{F}_\theta,\mathbb{P};W^{k,2}(U,\mathcal{L}(U),w;\mathbb{R}^d)$ having $\mathcal{F}_\theta/\mathcal{B}(\mathbb{R}^{e \times N})$-measurable linear readout contained in $\mathcal{Y}_N$ as $\mathcal{F}_\theta \subseteq \mathcal{F}_{\theta,V}$), we conclude that
	\begin{equation}
		\label{EqThmGenProof4}
		\begin{aligned}
			& \mathbb{E}\left[ \sum_{\alpha \in \mathbb{N}^m_{0,k}} \sum_{i=1}^d \frac{1}{J} \sum_{j=1}^J \left\Vert \Delta^{y^{(J)}}_{\alpha,i,L}(\cdot)(V_j) \right\Vert^2 \right]^\frac{1}{2} \\
			& \quad\quad \leq \kappa(\mathbf{c}) \inf_{y \in \mathcal{Y}_N} \mathbb{E}\left[ \sum_{\alpha \in \mathbb{N}^m_{0,k}} \int_U \left\Vert \partial_\alpha f(u) - \partial_\alpha G^y_N(\cdot)(u) \right\Vert^2 w(u) du \right]^\frac{1}{2} \\
			& \quad\quad \leq \kappa(\mathbf{c}) \mathbb{E}\left[ \left\Vert f - G^f_N \right\Vert_{W^{k,2}(U,\mathcal{L}(U),w,\mathbb{R}^d)}^2 \right]^\frac{1}{2} \\
			& \quad\quad \leq \kappa(\mathbf{c}) C_{2,2} C_{W^{k,2}(U,\mathcal{L}(U),w;\mathbb{R}^d)} \frac{\Vert f \Vert_{\mathbb{B}^2_{\mathcal{G},\theta}(W^{k,2}(U,\mathcal{L}(U),w;\mathbb{R}^d))}}{\sqrt{N}}.
		\end{aligned}
	\end{equation}
	Thus, by inserting \eqref{EqThmGenProof3}+\eqref{EqThmGenProof4} into \eqref{EqThmGenProof1} with the inequality $\sqrt{x+y} \leq \sqrt{x} + \sqrt{y}$ for any $x,y \geq 0$, and by using the constant $C_4 := \max\big( 2 \sqrt{9216 \ln(108e)}, \sqrt{8} C_{2,2} C_{W^{k,2}(U,\mathcal{L}(U),w;\mathbb{R}^d)} \big) > 0$ (being independent of $f: U \rightarrow \mathbb{R}^d$ and $m,d \in \mathbb{N}$), it follows that
	\begin{equation*}
		\begin{aligned}
			& \mathbb{E}\left[ \sum_{\alpha \in \mathbb{N}^m_{0,k}} \int_U \left\Vert  T_L\left( \partial_\alpha f(u) - \partial_\alpha G^{y^{(J)}}_N(\cdot)(u)\right) \right\Vert^2 w(u) du \right]^\frac{1}{2} \\
			& \quad\quad \leq 2 m^\frac{k}{2} d^\frac{1}{2} \max_{\alpha \in \mathbb{N}^m_{0,k} \atop i=1,...,d} \mathbb{E}\left[ \mathbb{E}\bigg[ \max\bigg( \left\Vert \Delta^{y^{(J)},\vartheta}_{\alpha,i,L} \right\Vert_{L^2(U,\mathcal{L}(U),w)} - 2 \left\Vert \Delta^{y^{(J)},\vartheta}_{\alpha,i,L} \right\Vert_J, 0 \bigg)^2 \bigg]\bigg\vert_{\vartheta = \theta} \right]^\frac{1}{2} \\
			& \quad\quad\quad\quad + \sqrt{8} \mathbb{E}\left[ \sum_{\alpha \in \mathbb{N}^m_{0,k}} \sum_{i=1}^d \frac{1}{J} \sum_{j=1}^J \left\Vert \Delta^{y^{(J)}}_{\alpha,i,L}(\cdot)(V_j) \right\Vert^2 \right]^\frac{1}{2} \\
			& \quad\quad \leq 2 m^\frac{k}{2} d^\frac{1}{2} \sqrt{9216 \ln(108e)} L \sqrt{\frac{(\ln(J)+1) N}{J}} \\
			& \quad\quad\quad\quad + \sqrt{8} \kappa(\mathbf{c}) C_{2,2} C_{W^{k,2}(U,\mathcal{L}(U),w;\mathbb{R}^d)} \frac{\Vert f \Vert_{\mathbb{B}^2_{\mathcal{G},\theta}(W^{k,2}(U,\mathcal{L}(U),w;\mathbb{R}^d))}}{\sqrt{N}} \\
			& \quad\quad \leq C_4 L m^\frac{k}{2} d^\frac{1}{2} \sqrt{\frac{(\ln(J)+1) N}{J}} + C_4 \kappa(\mathbf{c}) \frac{\Vert f \Vert_{\mathbb{B}^2_{\mathcal{G},\theta}(W^{k,2}(U,\mathcal{L}(U),w;\mathbb{R}^d))}}{\sqrt{N}},
		\end{aligned}
	\end{equation*}
	which shows the inequality~\eqref{EqThmGenErr1}. For \eqref{EqThmGenErr2}, we insert $\vert T_L(s-t) \vert \leq \vert s-T_L(t) \vert$ for all $s \in [-L,L]$ into \eqref{EqThmGenErr1}, which completes the proof.
\end{proof}

\begin{proof}[Proof of Corollary~\ref{CorGenErrRN}]
	Fix some $J,N \in \mathbb{N}$, $L > 0$, and some $f := (f_1,...,f_d)^\top \in W^{k,2}(U,\mathcal{L}(U),w;\mathbb{R}^d) \cap \widetilde{\mathbb{B}}^{k,2,\gamma}_{\psi,a,b}(U;\mathbb{R}^d)$ satisfying $\vert \partial_\alpha f_i(u) \vert \leq L$ for all $\alpha \in \mathbb{N}^m_{0,k}$, $i=1,...,d$, and $u \in U$. Then, we observe that Algorithm~\ref{AlgRN} is the same as Algorithm~\ref{Alg} for the special case of random neural networks with feature maps $\Theta := \mathbb{R}^m \times \mathbb{R} \ni (\vartheta_1,\vartheta_2) \mapsto e_i \rho\big( \vartheta_1^\top \cdot - \vartheta_2 \big) \in W^{k,2}(U,\mathcal{L}(U),w;\mathbb{R}^d)$, $i = 1,...,d$, that are continuous and thus $\mathcal{B}(\Theta)/\mathcal{B}(W^{k,2}(U,\mathcal{L}(U),w;\mathbb{R}^d))$-measurable (see \cite[Lemma~4.10]{neufeld24}), where $e_i \in \mathbb{R}^d$ denotes the $i$-th unit vector of $\mathbb{R}^d$. Hence, we can apply Theorem~\ref{ThmGenErr} (with constant $C_4 > 0$ independent of $f: U \rightarrow \mathbb{R}^d$ and $m,d \in \mathbb{N}$) to conclude that Algorithm~\ref{AlgRN} returns a random neural network $G^{y^{(J)}}_N \in \mathcal{RN}^\rho_{U,d}$ with $N$ neurons being a strongly $(\mathbb{P},\mathcal{F}_{a,b,V})$-measurable map $G^{y^{(J)}}_N: \Omega \rightarrow W^{k,2}(U,\mathcal{L}(U),w;\mathbb{R}^d)$ such that
	\begin{equation*}
		\begin{aligned}
			& \mathbb{E}\left[ \sum_{\alpha \in \mathbb{N}^m_{0,k}} \int_U \left\Vert \partial_\alpha f(u) - T_L\left(\partial_\alpha G^{y^{(J)}}_N(\cdot)(u)\right) \right\Vert^2 w(u) du \right]^\frac{1}{2} \\
			& \quad\quad \leq C_4 L m^\frac{k}{2} d^\frac{1}{2} \sqrt{\frac{(\ln(J)+1) N}{J}} + C_4 \kappa(\mathbf{c}) \frac{\Vert f \Vert_{\mathbb{B}^2_{\mathcal{G},\theta}(W^{k,2}(U,\mathcal{L}(U),w;\mathbb{R}^d))}}{\sqrt{N}}.
		\end{aligned}
	\end{equation*}
	Thus, by using Lemma~\ref{LemmaIneq}, it follows that
	\begin{equation*}
		\begin{aligned}
			& \mathbb{E}\left[ \sum_{\alpha \in \mathbb{N}^m_{0,k}} \int_U \left\Vert \partial_\alpha f(u) - T_L\left(\partial_\alpha G^{y^{(J)}}_N(\cdot)(u)\right) \right\Vert^2 w(u) du \right]^\frac{1}{2} \\
			& \quad\quad \leq C_4 L m^\frac{k}{2} d^\frac{1}{2} \sqrt{\frac{(\ln(J)+1) N}{J}} + C_4 \kappa(\mathbf{c}) 2^{3+\frac{1}{2}} \Vert \rho \Vert_{C^k_{pol,\gamma}(\mathbb{R})} \frac{C^{(\gamma,2)}_{U,w} m^\frac{k}{2}}{\left\vert C^{(\psi,\rho)}_m \right\vert} \frac{\Vert f \Vert_{\widetilde{\mathbb{B}}^{k,2,\gamma}_{\psi,a,b}(U;\mathbb{R}^d)}}{\sqrt{N}}.
		\end{aligned}
	\end{equation*}
	Therefore, by defining the constant $C_5 := 2^{3+\frac{1}{2}} C_4 > 0$ (being independent of $f: U \rightarrow \mathbb{R}^d$ and $m,d \in \mathbb{N}$), we obtain the inequality~\eqref{EqCorGenErrRN1}. For \eqref{EqCorGenErrRN2}, we insert $\vert T_L(s-t) \vert \leq \vert s-T_L(t) \vert$ for all $s \in [-L,L]$ into \eqref{EqCorGenErrRN1}, which completes the proof.
\end{proof}

\section{Proof of results in Section~\ref{SecNE}}
\label{SecProofsNE}

\begin{lemma}
	\label{LemmaHeat}
	For $\lambda,t,\kappa_m,\sigma \in (0,\infty)$ and $f_0(u) := \frac{\kappa_m}{(2\pi \sigma^2)^{m/2}} \exp\left( -\Vert u \Vert^2/(2\sigma^2) \right)$, $u \in \mathbb{R}^m$, let $f(t,\cdot): \mathbb{R}^m \rightarrow \mathbb{R}$ be the solution of \eqref{EqDefHeat} at time $t$. Moreover, let $c \in \mathbb{N}_0$, $s \in [0,\infty)$, and $0 < \zeta_1 \leq \zeta_2 < \infty$. Then, $\widehat{f(t,\cdot)}: \mathbb{R}^m \rightarrow \mathbb{C}$ is smooth and there exists a constant $C_{22} > 0$ (being independent of $m \in \mathbb{N}$) such that
	\begin{equation}
		\label{EqLemmaHeat1}
		\begin{aligned}
			& \frac{\zeta_2^m \pi^\frac{m+1}{4}}{\zeta_1^\frac{m}{2} (2\pi)^m \Gamma\left( \frac{m+1}{2} \right)^\frac{1}{2}} \sum_{\beta \in \mathbb{N}^m_{0,c}} \left( \int_{\mathbb{R}^m} \big\vert \partial_\beta \widehat{f(t,\cdot)}(\xi) \big\vert^2 \left( 1 + \Vert \xi/\zeta_1 \Vert^2 \right)^\frac{m+s}{2} d\xi \right)^\frac{1}{2} \\
			& \quad\quad \leq C_{22} m^\frac{5c+s}{4} \kappa_m \left( \frac{(\zeta_2/\zeta_1)^2}{\pi \sigma} \right)^\frac{m}{2}.
		\end{aligned}
	\end{equation}
\end{lemma}
\begin{proof}
	Fix some $\lambda,t,\kappa,\sigma \in (0,\infty)$, $c \in \mathbb{N}_0$, $s \in [0,\infty)$, $0 < \zeta_1 < \zeta_2 < \infty$, and define $\mathbb{R}^m \ni u \mapsto f_0(u) := \frac{\kappa_m}{(2\pi \sigma^2)^{m/2}} \exp\left( -\Vert u \Vert^2/(2\sigma^2) \right) \in \mathbb{R}$. Then, by using that $f(t,\cdot) = \phi_{\lambda,t} * f_0$ with $\phi_{\lambda,t}(y) := (4\pi \lambda t)^{-m/2} \exp\left( -\Vert y \Vert^2/(4\lambda t) \right)$ (see \eqref{EqSolHeat}), \cite[Theorem~7.8~(d)]{folland92}, \cite[Equation~7.31]{folland92}, and the constant $\eta := \lambda t + \sigma^2/2 > 0$, it follows that
	\begin{equation}
		\label{EqLemmaHeatProof0}
		\widehat{f(t,\cdot)}(\xi) = (\widehat{\phi_{\lambda,t} * f_0})(\xi) = \widehat{\phi_{\lambda,t}}(\xi) \widehat{f_0}(\xi) = e^{-\lambda t \Vert \xi \Vert^2} \kappa_m e^{-\frac{\sigma^2}{2} \Vert \xi \Vert^2} = \kappa_m e^{-\eta \Vert \xi \Vert^2}.
	\end{equation}
	Hence, by using \cite[Table~7.2.9]{folland92}, the substitution $\zeta_l \mapsto \sqrt{2\eta} \xi_l$, and the Hermite polynomials $(h_n)_{n \in \mathbb{N}}$ in \cite[Equation~22.2.15]{abramowitz70}, we have for every $\beta := (\beta_1,...,\beta_m) \in \mathbb{N}^m_{0,c}$ and $\xi \in \mathbb{R}^m$ that
	\begin{equation}
		\label{EqLemmaHeatProof1}
		\begin{aligned}
			\partial_\beta \widehat{f(t,\cdot)}(\xi) & = \kappa_m \partial_\beta \left( \prod_{l=1}^m e^{-\eta \xi_l^2} \right) \\
			& = \kappa_m (2\eta)^\frac{\vert \beta \vert}{2} \prod_{l=1}^m \frac{\partial^{\beta_l}}{\partial \zeta_l^{\beta_l}} \left( e^{-\frac{\zeta_l^2}{2}} \right) \bigg\vert_{\zeta_l = \sqrt{2\eta} \xi_l} \\
			& = \kappa_m (2\eta)^\frac{\vert \beta \vert}{2} \prod_{l=1}^m (-1)^{\beta_l} h_{\beta_l}(\zeta_l) e^{-\frac{\zeta_l^2}{2}} \bigg\vert_{\zeta_l = \sqrt{2\eta} \xi_l} \\
			& = \kappa_m (-1)^{\vert \beta \vert} (2\eta)^\frac{\vert \beta \vert}{2} \left( \prod_{l=1}^m h_{\beta_l}\left( \sqrt{2\eta} \xi_l \right) \right) e^{-\eta \Vert \xi \Vert^2}.
		\end{aligned}
	\end{equation}
	Now, we use the explicit expression of the Hermite polynomials $(h_n)_{n \in \mathbb{N}}$ given in \cite[Equation~22.3.11]{abramowitz70}, that $\vert \zeta_l \vert^{\beta_l-2j_l} \leq \left( 1+\Vert \zeta \Vert^2 \right)^{(\beta_l-2j_l)/2} \leq \left( 1+\Vert \zeta \Vert^2 \right)^{\beta_l/2}$ for any $l = 1,...,m$, $\beta := (\beta_1,...,\beta_m) \in \mathbb{N}^m_0$, $j_l=0,...,\lceil \beta_l/2 \rceil$, and $\zeta \in \mathbb{R}^m$, that $\sum_{j_l=1}^{\lceil \beta_l/2 \rceil} \frac{\beta_l!}{2^{j_l} j_l!(\beta_l-2j_l)!} \leq \max_{j_l=1,...,\lceil \beta_l/2 \rceil} \frac{(2j_l)!}{j_l!} \sum_{j_l=1}^{\lceil \beta_l/2 \rceil} \frac{\beta_l!}{(2j_l)!(\beta_l-2j_l)!} \leq \beta_l! \sum_{k_l=1}^{\beta_l} \frac{\beta_l!}{k_l!(\beta_l-k_l)!} = 2^{\beta_l} \beta_l!$ for any $l = 1,...,m$ and $\beta := (\beta_1,...,\beta_m) \in \mathbb{N}^m_0$, and that $\prod_{l=1}^m \beta_l! = \beta! \leq \vert \beta \vert! \leq c!$ for any $\beta := (\beta_1,...,\beta_m) \in \mathbb{N}^m_{0,c}$ to obtain for every $\beta := (\beta_1,...,\beta_m) \in \mathbb{N}^m_{0,c}$ and $\zeta := (\zeta_1,...,\zeta_m) \in \mathbb{R}^m$ that
	\begin{equation}
		\label{EqLemmaHeatProof2}
		\begin{aligned}
			\prod_{l=1}^m \left\vert h_{\beta_l}(\zeta_l) \right\vert & \leq \prod_{l=1}^m \left( \sum_{j_l=1}^{\lceil \beta_l/2 \rceil} \frac{\beta_l! \vert \zeta_l \vert^{\beta_l-2j_l}}{2^{j_l} j_l!(\beta_l-2j_l)!} \right) \leq \prod_{l=1}^m \left( \left( 1 + \Vert \zeta \Vert^2 \right)^\frac{\beta_l}{2} \sum_{j_l=1}^{\lceil \beta_l/2 \rceil} \frac{\beta_l!}{2^{j_l} j_l!(\beta_l-2j_l)!} \right) \\
			& \leq \left( 1+\Vert \zeta \Vert^2 \right)^\frac{\vert \beta \vert}{2} \prod_{l=1}^m \left( 2^{\beta_l} \beta_l! \right) \leq 2^c c! \left( 1+\Vert \zeta \Vert^2 \right)^\frac{c}{2}.
		\end{aligned}
	\end{equation}
	Hence, by using the inequality \eqref{EqLemmaHeatProof2} together with \eqref{EqLemmaHeatProof1} and by using the constant $C_{31} := 2^c c! \max(1,2\eta)^c \max(1,\zeta_1)^c > 0$, we conclude for every $\beta \in \mathbb{N}^m_{0,c}$ and $\xi \in \mathbb{R}^m$ that
	\begin{equation}
		\label{EqLemmaHeatProof3}
		\begin{aligned}
			\left\vert \partial_\beta \widehat{f(t,\cdot)}(\xi) \right\vert & \leq \kappa_m (2\eta)^\frac{\vert \beta \vert}{2} 2^c c! \left( 1 + \big\Vert \sqrt{2\eta} \xi \big\Vert^2 \right)^\frac{c}{2} e^{-\eta \Vert \xi \Vert^2} \\
			& \leq \kappa_m C_{31} \left( 1 + \Vert \xi/\zeta_1 \Vert^2 \right)^\frac{c}{2} e^{-\eta \Vert \xi \Vert^2}.
		\end{aligned}
	\end{equation}
	Moreover, by using that $Y := \Vert Z \Vert^2$ of $Z \sim \mathcal{N}_m(0,I_m)$ follows a $\chi^2(m)$-distribution with probability density function $[0,\infty) \ni y \mapsto \frac{y^{m/2-1} \exp(-y/2)}{2^{m/2} \Gamma(m/2)} \in [0,\infty)$, the substitution $x \mapsto y/2$, and the definition of the Gamma function in \cite[Equation~6.1.1]{abramowitz70}, we obtain for every $b \in \mathbb{N}_0$ that
	\begin{equation}
		\label{EqLemmaHeatProof4}
		\begin{aligned}
			\int_{\mathbb{R}^m} \Vert z \Vert^b \frac{e^{-\frac{\Vert z \Vert^2}{2}}}{(2\pi)^\frac{m}{2}} dz & = \mathbb{E}\left[ \Vert Z \Vert^b \right] = \mathbb{E}\left[ Y^\frac{b}{2} \right] = \int_0^\infty y^\frac{b}{2} \frac{y^{\frac{m}{2}-1} e^{-\frac{y}{2}}}{2^\frac{m}{2} \Gamma\left( \frac{m}{2} \right)} dy \\
			& = \frac{2^\frac{b+m}{2}}{2^\frac{m}{2} \Gamma\left( \frac{m}{2} \right)} \int_0^\infty x^{\frac{b+m}{2}-1} e^{-x} dx = \frac{2^\frac{b}{2} \Gamma\left( \frac{m+b}{2} \right)}{\Gamma\left( \frac{m}{2} \right)}.
		\end{aligned}
	\end{equation}
	Now, we use the inequality \eqref{EqLemmaHeatProof3}, the exponent $c_s := 2c + 2\lceil s \rceil \in \mathbb{N}_0$, the inequality $(x+y)^r \leq 2^r \left( x^r + y^r \right)$ for any $x,y \geq 0$ and $r \geq 0$, the constant $C_{32} := 2^{c_s/2} \sqrt{\pi} > 0$, the substitution $z \mapsto \sqrt{4\eta} \xi$, the constant $C_{33} := C_{32} \big( \sqrt{4\eta} \zeta_1 \big)^{-c_s} > 0$, and the identity \eqref{EqLemmaHeatProof4} with $b := 0$ and $b := m+2c_s$ to observe that
	\begin{equation}
		\label{EqLemmaHeatProof5}
		\begin{aligned}
			& \frac{\zeta_2^{2m} \pi^\frac{m+1}{2}}{\zeta_1^m (2\pi)^{2m} \Gamma\left( \frac{m+1}{2} \right)} \int_{\mathbb{R}^m} \left( 1 + \Vert \xi/\zeta_1 \Vert^2 \right)^\frac{m+c_s}{2} e^{-2 \eta \Vert \xi \Vert^2} d\xi \\
			& \quad\quad \leq C_{32} \frac{\zeta_2^{2m} (2\pi)^\frac{m}{2}}{\zeta_1^m (2\pi)^{2m} \Gamma\left( \frac{m+1}{2} \right)} \left( \int_{\mathbb{R}^m} e^{-2 \eta \Vert \xi \Vert^2} d\xi + \int_{\mathbb{R}^m} \Vert \xi/\zeta_1 \Vert^{m+c_s} e^{-2 \eta \Vert \xi \Vert^2} d\xi \right) \\
			& \quad\quad = C_{32} \frac{\zeta_2^{2m}}{\zeta_1^m (2\pi)^m \Gamma\left( \frac{m+1}{2} \right) (4\eta)^\frac{m}{2}} \int_{\mathbb{R}^m} \frac{e^{-\frac{\Vert z \Vert^2}{2}}}{(2\pi)^\frac{m}{2}} dz \\
			& \quad\quad\quad\quad + C_{33} \frac{\zeta_2^{2m}}{\zeta_1^{2m} (2\pi)^m \Gamma\left( \frac{m+1}{2} \right) (4\eta)^\frac{m}{2}} \int_{\mathbb{R}^m} \Vert z \Vert^{m+c_s} \frac{e^{-\frac{\Vert z \Vert^2}{2}}}{(2\pi)^\frac{m}{2}} dz \\
			& \quad\quad = \frac{C_{32}}{\Gamma\left( \frac{m+1}{2} \right)} \left( \frac{\zeta_2^2/\zeta_1}{4\pi \sqrt{\eta}} \right)^m + C_{33} \left( \frac{(\zeta_2/\zeta_1)^2}{4\pi \sqrt{\eta}} \right)^m \frac{2^\frac{m+c_s}{2} \Gamma\left( \frac{2m+c_s}{2} \right)}{\Gamma\left( \frac{m+1}{2} \right) \Gamma\left( \frac{m}{2} \right)}.
		\end{aligned}
	\end{equation}
	For the first term on the right-hand side of \eqref{EqLemmaHeatProof5}, we use that $C_{34} := C_{32} \sup_{m \in \mathbb{N}} \frac{(\zeta_1/(2 \sqrt{2}))^m}{\Gamma((m+1)/2)} < \infty$ to conclude that
	\begin{equation}
		\label{EqLemmaHeatProof6}
		\frac{C_{32}}{\Gamma\left( \frac{m+1}{2} \right)} \left( \frac{\zeta_2^2/\zeta_1}{4\pi \sqrt{\eta}} \right)^m \leq \frac{C_{32}}{\Gamma\left( \frac{m+1}{2} \right)} \left( \frac{\kappa^2 \zeta_1}{2 \sqrt{2}} \right)^m \left( \frac{(\zeta_2/\zeta_1)^2}{\sqrt{2\eta} \pi} \right)^m \leq C_{34} \left( \frac{(\zeta_2/\zeta_1)^2}{\sqrt{2\eta} \pi} \right)^m.
	\end{equation}
	Moreover, for the second term on the right-hand side of \eqref{EqLemmaHeatProof5}, we use that $\sqrt{2\pi/x} (x/e)^x \leq \Gamma(x) \leq \sqrt{2\pi/x} (x/e)^x e^{1/(12x)} \leq \sqrt{4\pi/x} (x/e)^x$ for any $x \in [1/2,\infty)$ (see \cite[Lemma~2.4]{gonon19}), that $(2m+c_s)^{c_s/2} \leq m^{c_s/2} (2+c_s)^{c_s/2}$ and $(2+c_s/m)^m = 2^m (1+c_s/(2m))^m \leq 2^m e^{c_s/2}$ for any $m \in \mathbb{N}$, and the constant $C_{35} := C_{33} 2^{c_s/2} \sqrt{8\pi} (2e)^{(c_s-1)/2} (4\pi)^{-1} (3c_s)^{c_s/2} e^{c_s/2} > 0$, to obtain that
	\begin{equation}
		\label{EqLemmaHeatProof7}
		\begin{aligned}
			& C_{33} \left( \frac{(\zeta_2/\zeta_1)^2}{4\pi \sqrt{\eta}} \right)^m \frac{2^\frac{m+c_s}{2} \Gamma\left( \frac{2m+c_s}{2} \right)}{\Gamma\left( \frac{m+1}{2} \right) \Gamma\left( \frac{m}{2} \right)} \\
			& \quad \leq C_{33} \left( \frac{(\zeta_2/\zeta_1)^2}{4\pi \sqrt{\eta}} \right)^m \frac{2^\frac{m+c_s}{2} \sqrt{\frac{8\pi}{2m+c_s}} \left( \frac{2m+c_s}{2e} \right)^\frac{2m+c_s}{2}}{\sqrt{\frac{4\pi}{m+1}} \left( \frac{m+1}{2e} \right)^\frac{m+1}{2} \sqrt{\frac{4\pi}{m}} \left( \frac{m}{2e} \right)^\frac{m}{2}} \\
			& \quad \leq C_{33} \left( \frac{(\zeta_2/\zeta_1)^2}{4\pi \sqrt{\eta}} \right)^m \frac{2^\frac{m+c_s}{2} \sqrt{8\pi} (2e)^\frac{c_s-1}{2}}{4\pi} \underbrace{\frac{\sqrt{m(m+1)} (2m+c_s)^\frac{c_s}{2}}{\sqrt{2m+c_s} \sqrt{m+1}}}_{\leq (2m+c_s)^\frac{c_s}{2}} \bigg( \underbrace{\frac{2m+c_s}{m}}_{\leq 2+c_s/m} \bigg)^m  \\
			& \quad \leq C_{33} \left( \frac{(\zeta_2/\zeta_1)^2}{2\pi \sqrt{2\eta}} \right)^m \frac{2^\frac{c_s}{2} \sqrt{8\pi} (2e)^\frac{c_s-1}{2}}{4\pi} (2+c_s)^\frac{c_s}{2} m^\frac{c_s}{2} 2^m e^\frac{c_s}{2} \\
			& \quad = C_{35} \left( \frac{(\zeta_2/\zeta_1)^2}{\sqrt{2\eta} \pi} \right)^m m^\frac{c_s}{2}.
		\end{aligned}
	\end{equation}
	Hence, by inserting \eqref{EqLemmaHeatProof6}+\eqref{EqLemmaHeatProof7} into \eqref{EqLemmaHeatProof5}, it follows that
	\begin{equation}
		\label{EqLemmaHeatProof8}
		\begin{aligned}
			& \left( \frac{\zeta_2^{2m} \pi^\frac{m+1}{2}}{\zeta_1^m (2\pi)^{2m} \Gamma\left( \frac{m+1}{2} \right)} \int_{\mathbb{R}^m} \left( 1 + \Vert \xi/\zeta_1 \Vert^2 \right)^\frac{m+s}{2} e^{-2\eta \Vert \xi \Vert^2} d\xi \right)^\frac{1}{2} \\
			& \quad\quad \leq \left( \frac{C_{32}}{\Gamma\left( \frac{m+1}{2} \right)} \left( \frac{\zeta_2^2/\zeta_1}{4\pi \sqrt{\eta}} \right)^m + C_{33} \left( \frac{(\kappa\zeta_2/\zeta_1)^2}{4\pi \sqrt{\eta}} \right)^m \frac{2^\frac{m+c_s}{2} \Gamma\left( \frac{2m+c_s}{2} \right)}{\Gamma\left( \frac{m+1}{2} \right) \Gamma\left( \frac{m}{2} \right)} \right)^\frac{1}{2} \\
			& \quad\quad \leq \left( C_{34} \left( \frac{(\zeta_2/\zeta_1)^2}{\sqrt{2\eta} \pi} \right)^m + C_{35} \left( \frac{(\zeta_2/\zeta_1)^2}{\sqrt{2\eta} \pi} \right)^m m^\frac{c_s}{2} \right)^\frac{1}{2} \\
			& \quad\quad \leq \sqrt{C_{34}+C_{35}} \left( \frac{(\kappa\zeta_2/\zeta_1)^2}{\sqrt{2\eta} \pi} \right)^\frac{m}{2} m^\frac{c_s}{4}.
		\end{aligned}
	\end{equation}	
	Thus, by using that $\big\vert \mathbb{N}^m_{0,c} \big\vert = \sum_{j=0}^{c} m^j \leq 2m^c$, the inequality \eqref{EqLemmaHeatProof8}, the constant $C_{22} := 2 C_{31} \sqrt{C_{34}+C_{35}} > 0$, and that $\eta = \lambda t + \sigma^2/2 \geq \sigma^2/2$, we conclude that
	\begin{equation*}
		\begin{aligned}
			& \frac{\zeta_2^m \pi^\frac{m+1}{4}}{\zeta_1^\frac{m}{2} (2\pi)^m  \Gamma\left( \frac{m+1}{2} \right)^\frac{1}{2}} \sum_{\beta \in \mathbb{N}^m_{0,c}} \left( \int_{\mathbb{R}^m} \big\vert \partial_\beta \widehat{f(t,\cdot)}(\xi) \big\vert^2 \left( 1 + \Vert \xi/\zeta_1 \Vert^2 \right)^\frac{m+s}{2} d\xi \right)^\frac{1}{2} \\
			& \quad\quad \leq C_{31} \kappa_m \left\vert \mathbb{N}^m_{0,c} \right\vert \max_{\beta \in \mathbb{N}^m_{0,c}} \left( \frac{\zeta_2^{2m} \pi^\frac{m+1}{2}}{\zeta_1^m (2\pi)^{2m} \Gamma\left( \frac{m+1}{2} \right)} \int_{\mathbb{R}^m} \left( 1 + \Vert \xi/\zeta_1 \Vert^2 \right)^\frac{m+s}{2} e^{-2\eta \Vert \xi \Vert^2} d\xi \right)^\frac{1}{2} \\
			& \quad\quad \leq 2 C_{31} \sqrt{C_{34}+C_{35}} m^c \kappa_m \left( \frac{(\zeta_2/\zeta_1)^2}{\sqrt{2\eta} \pi} \right)^\frac{m}{2} m^\frac{c_s}{4} \leq C_{11} m^\frac{5c+s}{4} \kappa_m \left( \frac{(\zeta_2/\zeta_1)^2}{\pi \sigma} \right)^\frac{m}{2},
		\end{aligned}
	\end{equation*}
	which completes the proof.
\end{proof}

\begin{proof}[Proof of Corollary~\ref{CorHeat}]
	For $\lambda,t,\kappa_m,\sigma \in (0,\infty)$ and the initial condition $f_0(u) := \frac{\kappa_m}{(2\pi \sigma^2)^{m/2}} \exp\left( -\Vert u \Vert^2/(2\sigma^2) \right)$, let $f(t,\cdot): \mathbb{R}^m \rightarrow \mathbb{R}$ be the solution of \eqref{EqDefHeat} at time $t$. Moreover, let $p \in (1,\infty)$, $\gamma \in [0,\infty)$, and $w: \mathbb{R}^m \rightarrow [0,\infty)$ satisfy the conditions of Lemma~\ref{LemmaWeight}.
	
	For Part~\ref{CorHeat1}., we fix some $N \in \mathbb{N}$. Then, by using the probability density function $p_\theta: \mathbb{R}^m \rightarrow [0,\infty)$ of the Student's $t$-distributed i.i.d.~sequence $(\theta_n)_{n \in \mathbb{N}} \sim t_m$ and Lemma~\ref{LemmaHeat} (with $\zeta_1 := \zeta_2 := 1$, $c := 0$, $s := 1$, and constant $C_{22} > 0$ independent of $m \in \mathbb{N}$), we observe that
	\begin{equation}
		\label{EqCorHeatProof1}
		\begin{aligned}
			\frac{C_{f(t,\cdot)}}{(2\pi)^m} & := \frac{1}{(2\pi)^m} \left( \int_{\mathbb{R}^m} \frac{\big\vert \widehat{f(t,\cdot)}(\vartheta) \big\vert^2}{p_\theta(\vartheta)} d\vartheta \right)^\frac{1}{2} \\
			& = \frac{\pi^\frac{m+1}{4}}{(2\pi)^m \Gamma\left( \frac{m+1}{2} \right)^\frac{1}{2}} \left( \int_{\mathbb{R}^m} \big\vert \widehat{f(t,\cdot)}(\vartheta) \big\vert^2 \left( 1 + \Vert \vartheta \Vert^2 \right)^\frac{m+1}{2} d\vartheta \right)^\frac{1}{2} \\
			& \leq C_{22} m^\frac{1}{4} \kappa_m \left( \frac{(\zeta_2/\zeta_1)^2}{\pi \sigma} \right)^\frac{m}{2} < \infty.
		\end{aligned}
	\end{equation}
	Hence, we can apply Corollary~\ref{CorARTrigo} (with constant $C_{p,2} > 0$ depending only on $p \in (1,\infty)$) to obtain a random trigonometric feature model $G_N \in \mathcal{RT}_{\mathbb{R}^m,1}$ with $N$ features satisfying
	\begin{equation*}
		\mathbb{E}\left[ \Vert f(t,\cdot) - G_N \Vert_{L^p(\mathbb{R}^m,\mathcal{L}(\mathbb{R}^m),w)}^2 \right]^\frac{1}{2} \leq C_{p,2} \frac{w(\mathbb{R}^m)^\frac{1}{p}}{(2\pi)^m} \frac{C_{f(t,\cdot)}}{N^{1-\frac{1}{\min(2,p)}}}.
	\end{equation*}
	Thus, by using that $w: \mathbb{R}^m \rightarrow [0,\infty)$ is a weight satisfying the conditions of Lemma~\ref{LemmaWeight}, i.e.~that $\mathbb{R}^m \ni u \mapsto w(u) := \prod_{l=1}^m w_0(u_l) \in [0,\infty)$ for some $w_0: \mathbb{R} \rightarrow [0,\infty)$ satisfying $\int_{\mathbb{R}} w_0(s) ds = 1$ (implying that $w(\mathbb{R}^m) = \int_{\mathbb{R}^m} w(u) du = \prod_{l=1}^m \int_{\mathbb{R}} w_0(u_l) du_l = 1$), Lemma~\ref{LemmaHeat} (with constant $C_{22} > 0$ being independent of $m \in \mathbb{N}$), and the constant $C_6 := C_{p,2} C_{22} > 0$ and $C_7 := \frac{1}{4}$ (being independent of $m \in \mathbb{N}$), we have
	\begin{equation*}
		\begin{aligned}
			\mathbb{E}\left[ \Vert f(t,\cdot) - G_N \Vert_{L^p(\mathbb{R}^m,\mathcal{L}(\mathbb{R}^m),w)}^2 \right]^\frac{1}{2} & \leq C_{p,2} \frac{w(\mathbb{R}^m)^\frac{1}{p}}{(2\pi)^m} \frac{C_{f(t,\cdot)}}{N^{1-\frac{1}{\min(2,p)}}} \\
			& \leq C_6 m^{C_7} \frac{\kappa_m \left( \frac{(\zeta_2/\zeta_1)^2}{\pi \sigma} \right)^\frac{m}{2}}{N^{1-\frac{1}{\min(2,p)}}},
		\end{aligned}
	\end{equation*}
	which proves the inequality~\eqref{EqCorHeat1}. For \eqref{EqCorHeat2}, we use that there exists by assumption some $\widetilde{C} > 0$ such that $\kappa_m \left( (\zeta_2/\zeta_1)^2/ (\pi \sigma) \right)^\frac{m}{2} \leq \widetilde{C} m^\nu$. Hence, by defining the constants $C_{8} := (C_6 \widetilde{C})^{\min(2,p)/(\min(2,p)-1)} > 0$ and $C_{9} := (C_7+\nu) \frac{\min(2,p)}{\min(2,p)-1}$ (being independent of $m \in \mathbb{N}$), it holds for $N = \big\lceil C_8 m^{C_9} \varepsilon^{-\frac{\min(2,p)}{\min(2,p)-1}} \big\rceil$ that
	\begin{equation*}
		\mathbb{E}\left[ \Vert f(t,\cdot) - G_N \Vert_{L^p(\mathbb{R}^m,\mathcal{L}(\mathbb{R}^m),w)}^2 \right]^\frac{1}{2} \leq C_6 m^{C_7} \frac{\kappa_m \left( \frac{(\zeta_2/\zeta_1)^2}{\pi \sigma} \right)^\frac{m}{2}}{N^{1-\frac{1}{\min(2,p)}}} \leq \frac{C_6 \widetilde{C} m^{C_7+\nu}}{N^{1-\frac{1}{\min(2,p)}}} \leq \varepsilon,
	\end{equation*}
	which proves the inequality~\eqref{EqCorHeat2}.
	
	For Part~\ref{CorHeat2}., we fix some $N \in \mathbb{N}$. Then, by using \eqref{EqPropConst2} and Lemma~\ref{LemmaHeat} (with $0 < \zeta_1 < \zeta_2 < \infty$, $c := \lceil\gamma\rceil+2$, $s := 4\lceil\gamma\rceil+5$, and constant $C_{22} > 0$ independent of $m \in \mathbb{N}$), we have
	\begin{equation}
		\label{EqCorHeatProof3}
		\begin{aligned}
			& \frac{\zeta_2^m}{(2\pi)^m} \Vert f(t,\cdot) \Vert_{\widetilde{\mathbb{B}}^{0,2,\gamma}_{\psi,a,b}(\mathbb{R}^m)} \\
			& \quad\quad \leq C_1 \frac{\zeta_2^m \pi^\frac{m+1}{4}}{ \zeta_1^\frac{m}{2} (2\pi)^m \Gamma\left( \frac{m+1}{2} \right)} \sum_{\beta \in \mathbb{N}^m_{0,c}} \left( \int_{\mathbb{R}^m} \big\vert \partial_\beta \widehat{f(t,\cdot)}(\xi) \big\vert^2 \left( 1 + \Vert \xi/\zeta_1 \Vert^2 \right)^{2\lceil\gamma\rceil+\frac{m+5}{2}} d\xi \right)^\frac{1}{2} \\
			& \quad\quad \leq C_{22} m^\frac{9\lceil\gamma\rceil+15}{4} \kappa_m \left( \frac{(\zeta_2/\zeta_1)^2}{\pi \sigma} \right)^\frac{m}{2} < \infty.
		\end{aligned}
	\end{equation}
	Therefore, by using that
	\begin{equation*}
		\begin{aligned}
			\Vert f(t,\cdot) \Vert_{L^1(\mathbb{R}^m,\mathcal{L}(\mathbb{R}^m),du)} & = \Vert \phi_{\lambda,t} * f_0 \Vert_{L^1(\mathbb{R}^m,\mathcal{L}(\mathbb{R}^m),du)} \\
			& \leq \underbrace{\Vert \phi_{\lambda,t} \Vert_{L^1(\mathbb{R}^m,\mathcal{L}(\mathbb{R}^m),du)}}_{=1} \underbrace{\Vert f_0 \Vert_{L^1(\mathbb{R}^m,\mathcal{L}(\mathbb{R}^m),du)}}_{=\kappa_m} = \kappa_m < \infty,
		\end{aligned}
	\end{equation*}
	i.e.~$f(t,\cdot) \in L^1(\mathbb{R}^m,\mathcal{L}(\mathbb{R}^m),du)$, and that $\widehat{f(t,\cdot)} \in L^1(\mathbb{R}^m,\mathcal{L}(\mathbb{R}^m),du)$ (see \eqref{EqLemmaHeatProof1}), we conclude from the definition of $\widetilde{\mathbb{B}}^{0,2,\gamma}_{\psi,a,b}(\mathbb{R}^m)$ that $f(t,\cdot) \in \widetilde{\mathbb{B}}^{0,2,\gamma}_{\psi,a,b}(\mathbb{R}^m)$. Hence, we can use Corollary~\ref{CorARRN} (with constant $C_{p,2} > 0$ depending only on $p \in (1,\infty)$) to obtain a random neural network $G_N \in \mathcal{RN}^\rho_{\mathbb{R}^m,1}$ with $N$ neurons satisfying
	\begin{equation*}
		\mathbb{E}\left[ \Vert f(t,\cdot) - G_N \Vert_{L^p(\mathbb{R}^m,\mathcal{L}(\mathbb{R}^m),w)}^2 \right]^\frac{1}{2} \leq C_{p,2} \Vert \rho \Vert_{C^0_{pol,\gamma}(\mathbb{R})} \frac{C^{(\gamma,p)}_{U,w}}{\left\vert C^{(\psi,\rho)}_m \right\vert} \frac{\Vert f(t,\cdot) \Vert_{\widetilde{\mathbb{B}}^{0,2,\gamma}_{\psi,a,b}(\mathbb{R}^m)}}{N^{1-\frac{1}{\min(2,p)}}}.
	\end{equation*}
	Thus, by using Lemma~\ref{LemmaWeight} (with constant $C^{(\gamma,p)}_{\mathbb{R},w_0}$ depending only on $\gamma \in [0,\infty)$, $p \in (1,\infty)$, and $w_0: \mathbb{R} \rightarrow [0,\infty)$), Example~\ref{ExAdm} (with constant $C_{\psi,\rho} > 0$ depending only on $\psi \in \mathcal{S}_0(\mathbb{R};\mathbb{C})$ and $\rho \in C^0_{pol,\gamma}(\mathbb{R})$), Lemma~\ref{LemmaHeat} (with constant $C_{22} > 0$ being independent of $m \in \mathbb{N}$), and the constants $C_{10} := \frac{C_{p,2} C^{(\gamma,p)}_{\mathbb{R},w_0} C_{22}}{C_{\psi,\rho}} > 0$ and $C_{11} := \gamma + \frac{1}{p} + \frac{9\lceil\gamma\rceil+15}{4} > 0$ (being independent of $m \in \mathbb{N}$), it holds that
	\begin{equation*}
		\begin{aligned}
			\mathbb{E}\left[ \Vert f(t,\cdot) - G_N \Vert_{L^p(\mathbb{R}^m,\mathcal{L}(\mathbb{R}^m),w)}^2 \right]^\frac{1}{2} & \leq C_{p,2} \Vert \rho \Vert_{C^0_{pol,\gamma}(\mathbb{R})} \frac{C^{(\gamma,p)}_{U,w}}{\left\vert C^{(\psi,\rho)}_m \right\vert} \frac{\Vert f(t,\cdot) \Vert_{\widetilde{\mathbb{B}}^{0,2,\gamma}_{\psi,a,b}(\mathbb{R}^m)}}{N^{1-\frac{1}{\min(2,p)}}} \\
			& \leq \frac{C_{p,2} C^{(\gamma,p)}_{\mathbb{R},w_0}}{C_{\psi,\rho}} m^{\gamma+\frac{1}{p}} \frac{\zeta_2^m}{(2\pi)^m} \frac{\Vert f(t,\cdot) \Vert_{\widetilde{\mathbb{B}}^{0,2,\gamma}_{\psi,a,b}(\mathbb{R}^m)}}{N^{1-\frac{1}{\min(2,p)}}} \\
			& \leq C_{10} m^{C_{11}} \frac{\kappa_m \left( \frac{(\zeta_2/\zeta_1)^2}{\pi \sigma} \right)^\frac{m}{2}}{N^{1-\frac{1}{\min(2,p)}}},
		\end{aligned}
	\end{equation*}
	which proves the inequality \eqref{EqCorHeat3}. For \eqref{EqCorHeat4}, we use that $p \in (1,\infty)$ and that there exists by assumption some $\widetilde{C} > 0$ such that $\kappa_m \left( (\zeta_2/\zeta_1)^2/ (\pi \sigma) \right)^\frac{m}{2} \leq \widetilde{C} m^\nu$. Hence, by defining the constants $C_{12} := (C_{10} \widetilde{C})^{\min(2,p)/(\min(2,p)-1)} > 0$ and $C_{13} := (C_{11}+\nu) \frac{\min(2,p)}{\min(2,p)-1}$ (being independent of $m \in \mathbb{N}$), it holds for $N = \big\lceil C_{12} m^{C_{13}} \varepsilon^{-\frac{\min(2,p)}{\min(2,p)-1}} \big\rceil$ that
	\begin{equation*}
		\mathbb{E}\left[ \Vert f(t,\cdot) - G_N \Vert_{L^p(\mathbb{R}^m,\mathcal{L}(\mathbb{R}^m),w)}^2 \right]^\frac{1}{2} \leq C_{10} m^{C_{11}} \frac{\kappa_m \left( \frac{(\zeta_2/\zeta_1)^2}{\pi \sigma} \right)^\frac{m}{2}}{N^{1-\frac{1}{\min(2,p)}}} \leq \frac{C_{10} \widetilde{C} m^{C_{11}+\nu}}{N^{1-\frac{1}{\min(2,p)}}} \leq \varepsilon,
	\end{equation*}
	which proves the inequality~\eqref{EqCorHeat4}.
\end{proof}

\begin{lemma}
	\label{LemmaFokker}
	For $t \in (0,\infty)$, let $f(t,\cdot): \mathbb{R}^m \rightarrow \mathbb{R}$ be the solution of \eqref{EqDefFokker} at time $t$ given in \eqref{EqSolFokker}. Moreover, let $c \in \mathbb{N}_0$, $s \in [0,\infty)$, and $0 < \zeta_1 \leq \zeta_2 < \infty$. Then, $\widehat{f(t,\cdot)}: \mathbb{R}^m \rightarrow \mathbb{C}$ is smooth and there exists a constant $C_{23},C_{24} > 0$ (being independent of $m \in \mathbb{N}$) such that
	\begin{equation}
		\label{EqLemmaFokker1}
		\begin{aligned}
			& \frac{\zeta_2^m \pi^\frac{m+1}{4}}{\zeta_1^\frac{m}{2} (2\pi)^m \Gamma\left( \frac{m+1}{2} \right)^\frac{1}{2}} \sum_{\beta \in \mathbb{N}^m_{0,c}} \left( \int_{\mathbb{R}^m} \big\vert \partial_\beta \widehat{f(t,\cdot)}(\xi) \big\vert^2 \left( 1 + \Vert \xi/\zeta_1 \Vert^2 \right)^\frac{m+s}{2} d\xi \right)^\frac{1}{2} \\
			& \quad\quad \leq  C_{23} m^{C_{24}} \left( 1 + \Vert \Sigma_t \Vert + \Vert \mu_t \Vert \right)^c \left( \frac{\sqrt{2 \Vert \Sigma^{-1} \Vert} (\zeta_2/\zeta_1)^2}{\pi} \right)^\frac{m}{2}.
		\end{aligned}
	\end{equation}
\end{lemma}
\begin{proof}
	Fix some $t \in (0,\infty)$, $c \in \mathbb{N}_0$, $s \in [0,\infty)$, and $0 < \zeta_1 \leq \zeta_2 < \infty$. Then, by using that $f(t,\cdot): \mathbb{R}^m \rightarrow \mathbb{R}$ is Gaussian (see \eqref{EqSolFokker}), we conclude for every $\xi \in \mathbb{R}^m$ that
	\begin{equation*}
		\widehat{f(t,\cdot)}(\xi) = e^{-\mathbf{i} \mu_t^\top \xi - \frac{1}{2} \xi^\top \Sigma_t \xi}.
	\end{equation*}
	Now, for every fixed $\beta \in \mathbb{N}^m_{0,c}$, we apply the Fa\`a di Bruno formula and using the notation $\beta^{(l)} = e_{i_l}$ for some $i_l = 1,...,m$ (if $\vert \beta^{(l)} \vert = 1$) and $\beta^{(l)} = e_{i_l} + e_{j_l}$ for some $i_l,j_l = 1,...,m$ (if $\vert \beta^{(l)} \vert = 2$) to obtain for every $\xi \in \mathbb{R}^m$ that
	\begin{equation*}
		\partial_\beta \widehat{f(t,\cdot)}(\xi) = e^{-\mathbf{i} \mu_t^\top \xi - \frac{1}{2} \xi^\top \Sigma_t \xi} \sum_{j=1}^{\vert \beta \vert} \sum_{\beta^{(1)},...,\beta^{(j)} \in \mathbb{N}^m_0 \setminus \lbrace 0 \rbrace \atop \beta^{(1)}+...+\beta^{(j)}=\beta} \prod_{l=1 \atop \vert \beta^{(l)} \vert = 1}^j \! \left( -(\Sigma_t \xi + \mathbf{i} \mu_t)_{i_l} \right) \! \prod_{l=1 \atop \vert \beta^{(l)} \vert = 2}^j \! \left( -(\Sigma_t)_{i_l,j_l} \right).
	\end{equation*}
	Hence, by using the notation $\vert \beta^{(1:j)} \vert_k := \left\vert \left\lbrace \beta^{(l)}: j = 1,...,j, \, \vert \beta^{(l)} \vert = k \right\rbrace \right\vert$, $k \in \lbrace 1,2 \rbrace$, together with $\vert (\Sigma_t \xi + \mathbf{i} \mu_t)_{i_l} \vert \leq \Vert \Sigma_t \xi + \mathbf{i} \mu_t \Vert \leq 1+\Vert \Sigma_t \Vert \Vert \xi \Vert + \Vert \mu_t \Vert$ and $\vert (\Sigma_t)_{i_l,j_l} \vert \leq \Vert \Sigma_t \Vert \leq 1+\Vert \Sigma_t \Vert$, that $\sum_{j=1}^{\vert \beta \vert} \sum_{\beta^{(1)},...,\beta^{(j)} \in \mathbb{N}^m_0 \setminus \lbrace 0 \rbrace, \, \beta^{(1)}+...+\beta^{(j)}=\beta} 1 = 2^{\vert \beta \vert} \vert \beta \vert!$, that $(x+y)^2 \leq 2 \left( x^2+y^2 \right)$ for any $x,y \geq 0$, and that $e^{-0.5 \xi^\top \Sigma_t \xi} \leq e^{-0.5 \lambda_{\min} \Vert \xi \Vert^2}$ with $\lambda_{\min} := \lambda_{\min}(\Sigma_t)$, there exists a constant $C_{36} > 0$ (independent of $m \in \mathbb{N}$) such that for every $\xi \in \mathbb{R}^m$ it holds that
	\begin{equation*}
		\begin{aligned}
			\left\vert \partial_\beta \widehat{f(t,\cdot)}(\xi) \right\vert & \leq e^{-\frac{1}{2} \xi^\top \Sigma_t \xi} \sum_{j=1}^{\vert \beta \vert} \sum_{\beta^{(1)},...,\beta^{(j)} \in \mathbb{N}^m_0 \setminus \lbrace 0 \rbrace \atop \beta^{(1)}+...+\beta^{(j)}=\beta} \left( 1 + \Vert \Sigma_t \Vert \Vert \xi \Vert + \Vert \mu_t \Vert \right)^{\vert \beta^{(1:j)} \vert_1} \left( 1 + \Vert \Sigma_t \Vert \right)^{\vert \beta^{(1:j)} \vert_2} \\
			& \leq 2^{\vert\beta\vert} \vert \beta \vert! \left( 1 + \Vert \Sigma_t \Vert \Vert \xi \Vert + \Vert \mu_t \Vert \right)^{\vert\beta\vert} e^{-\frac{1}{2} \xi^\top \Sigma_t \xi} \\
			& \leq C_{36} \left( 1 + \Vert \Sigma_t \Vert + \Vert \mu_t \Vert \right)^c \left( 1 + \Vert \xi/\zeta_1 \Vert^2 \right)^\frac{c}{2} e^{-\frac{1}{2} \lambda_{\min} \Vert \xi \Vert^2}.
		\end{aligned}
	\end{equation*}
	Thus, by following the inequalities~\eqref{EqLemmaHeatProof5}-\eqref{EqLemmaHeatProof8} (with $\frac{1}{2} \lambda_{\min}$ instead of $2\lambda t$), there exist constants $C_{37}, C_{38} > 0$ (independent of $m \in \mathbb{N}$) such that for every $\xi \in \mathbb{R}^m$ it holds that
	\begin{equation*}
		\begin{aligned}
			& \left( \frac{\zeta_2^{2m} \pi^\frac{m+1}{2}}{\zeta_1^m (2\pi)^{2m} \Gamma\left( \frac{m+1}{2} \right)} \int_{\mathbb{R}^m} \big\vert \partial_\beta \widehat{f(t,\cdot)}(\xi) \big\vert^2 \left( 1 + \Vert \xi/\zeta_1 \Vert^2 \right)^\frac{m+s}{2} d\xi \right)^\frac{1}{2} \\
			& \leq C_{36} \left( 1 + \Vert \Sigma_t \Vert + \Vert \mu_t \Vert \right)^c \left( \frac{\zeta_2^{2m} \pi^\frac{m+1}{2}}{\zeta_1^m (2\pi)^{2m} \Gamma\left( \frac{m+1}{2} \right)} \int_{\mathbb{R}^m} \left( 1 + \Vert \xi/\zeta_1 \Vert^2 \right)^\frac{m+s+2c}{2} e^{-\frac{1}{2} \lambda_{\min} \Vert \xi \Vert^2} d\xi \right)^\frac{1}{2} \\
			& = C_{37} \left( 1 + \Vert \Sigma_t \Vert + \Vert \mu_t \Vert \right)^c \left( \frac{(\zeta_2/\zeta_1)^2}{\pi \sqrt{\lambda_{\min}/2}} \right)^\frac{m}{2} m^{C_{38}}.
		\end{aligned}
	\end{equation*}
	Finally, by using this, that $\vert \mathbb{N}^m_{0,c} \vert = \sum_{j=0}^c m^j \leq 2m^c$, and the constants $C_{23} := 2 C_{37} > 0$ and $C_{24} := c+C_{38} > 0$ (independent of $m \in \mathbb{N}$), we conclude that
	\begin{equation*}
		\begin{aligned}
			& \frac{\zeta_2^m \pi^\frac{m+1}{4}}{\zeta_1^\frac{m}{2} (2\pi)^m \Gamma\left( \frac{m+1}{2} \right)^\frac{1}{2}} \sum_{\beta \in \mathbb{N}^m_{0,c}} \left( \int_{\mathbb{R}^m} \big\vert \partial_\beta \widehat{f(t,\cdot)}(\xi) \big\vert^2 \left( 1 + \Vert \xi/\zeta_1 \Vert^2 \right)^\frac{m+s}{2} d\xi \right)^\frac{1}{2} \\
			& \leq \left\vert \mathbb{N}^m_{0,c} \right\vert \left( 1 \!+\! \Vert \Sigma_t \Vert \!+\! \Vert \mu_t \Vert \right)^c \max_{\beta \in \mathbb{N}^m_{0,c}} \left( \frac{\zeta_2^{2m} \pi^\frac{m+1}{2}}{\zeta_1^m (2\pi)^{2m} \Gamma\left( \frac{m+1}{2} \right)} \int_{\mathbb{R}^m} \!\! \big\vert \partial_\beta \widehat{f(t,\cdot)}(\xi) \big\vert^2 \left( 1 \!+\! \Vert \xi/\zeta_1 \Vert^2 \right)^\frac{m+s}{2} \! d\xi \right)^\frac{1}{2} \\
			& \leq 2 m^c \left( 1 + \Vert \Sigma_t \Vert + \Vert \mu_t \Vert \right)^c C_{37} \left( \frac{(\zeta_2/\zeta_1)^2}{\pi \sqrt{\lambda_{\min}/2}} \right)^\frac{m}{2} m^{C_{38}} \\
			& = C_{23} m^{C_{24}} \left( 1 + \Vert \Sigma_t \Vert + \Vert \mu_t \Vert \right)^c \left( \frac{\sqrt{2 \Vert \Sigma^{-1} \Vert} (\zeta_2/\zeta_1)^2}{\pi} \right)^\frac{m}{2},
		\end{aligned}
	\end{equation*}
	which completes the proof.
\end{proof}

\begin{proof}[Proof of Corollary~\ref{CorFokker}]
	The conclusion is obtained by following exactly the same steps as in the proof of Corollary~\ref{CorHeat}, but where now the inequality~\eqref{EqLemmaFokker1} in Lemma~\ref{LemmaFokker} instead of the inequality~\eqref{EqLemmaHeat1} in Lemma~\ref{LemmaHeat} is inserted.
\end{proof}

\bibliographystyle{plain}
\bibliography{mybib}

\end{document}